%% file: main_paper_neurips_2026.tex
\documentclass{article}

\PassOptionsToPackage{numbers, compress}{natbib}

\usepackage[preprint]{neurips_2026}

\usepackage[utf8]{inputenc} %
\usepackage[T1]{fontenc}    %
\usepackage{hyperref}       %
\usepackage{url}            %
\usepackage{booktabs}       %
\usepackage{amsfonts}       %
\usepackage{nicefrac}       %
\usepackage{microtype}      %
\usepackage{xcolor}         %

 \usepackage[normalem]{ulem}

\usepackage{algorithm}
\usepackage{algorithmic}
\usepackage{amsfonts}       %
\usepackage{amsmath}
\usepackage{amsthm}
\usepackage{amssymb}
\usepackage{graphicx}
\usepackage{subcaption} %
\usepackage{mathtools} %
\usepackage{dsfont} %
\usepackage{array} %
\usepackage{longtable} %

\include{pramble}

\title{Non-Asymptotic Best Policy Identification Guarantees in Online Reinforcement Learning}

\author{%
  Joseph Lazzaro
    \\
  Imperial College London\\
  \texttt{jl22718@ic.ac.uk} \\
  \And
  Alessio Russo \\
  Boston University\\
  \texttt{arusso2@bu.edu} \\
  \AND
  Aldo Pacchiano\\
Boston University\\
Broad Institute of MIT and Harvard\\
  \texttt{pacchian@bu.edu} \\
}

\begin{document}

\maketitle

\begin{abstract}
 In this work we study the Best Policy Identification (BPI) problem in online, tabular Reinforcement Learning. This is an active sequential hypothesis testing problem in which the learner's objective is to identify an optimal policy in a Markov Decision Process (MDP) with high confidence, while minimizing the expected sample complexity to do so. We consider an online setting with deterministic rewards, where the agent must strategically navigate through the MDP in order to effectively explore. Previous works in the literature have provided asymptotically optimal methods for BPI, such as the Navigate and Stop (NaS) algorithm and its variants, however existing analysis remains asymptotic. In this work, we fill that gap by providing the first non-asymptotic sample complexity guarantees for NaS, showing that its sample complexity depends not only on the characteristic time, but also on the connectivity of the underlying MDP, the curvature of the optimal characteristic time, and other instance-dependent quantities. We identify these additional attributes and make explicit their contributions to the overall sample complexity.  
\end{abstract}

\section{Introduction}
In Reinforcement Learning \citep{sutton2018reinforcement}, a fundamental question is how efficiently one can identify an optimal control strategy when interacting with an unknown stochastic environment. While most classical work focuses on regret minimization \citep{06a276ef-3e35-33d0-912a-626bc0841b5e,NEURIPS2018_d693d554,boone2025regret}, regret is not always the most relevant performance criterion. In many practical scenarios, including robotics prototyping, model selection for planning, and simulation-based decision support, the reward collected during learning is irrelevant; rather, the goal is to \emph{gather information as quickly as possible and stop once a correct decision can be made with high confidence}. This motivates the study of \emph{active pure exploration} in Markov decision processes (MDPs) and, in particular, the problem of \emph{Best Policy Identification} (BPI) \citep{al2021navigating}.

In BPI, the learner interacts with a discounted MDP and sequentially chooses actions, observes transitions and rewards, and aims to identify an optimal policy with probability at least $1 - \delta$ while minimizing the number of interaction steps. This problem was formalized in early PAC-RL work and has recently received renewed attention, culminating in the first instance-dependent lower bounds and algorithms for discounted MDPs under navigation constraints \citep{al2021navigating, russo2023model,russomulti,russo2025adaptive,taupin2023best}. Crucially, unlike the generative model setting, where an algorithm may sample arbitrary state--action pairs at will, the online navigation setting restricts sampling to a single evolving system trajectory, making the structure of optimal exploration policies substantially richer.

The paper of \citet{al2021navigating} investigates BPI under navigation constraints and provides:
(i) an instance-specific lower bound on sample complexity, formalized as a game between the algorithm and nature,
(ii) a sampling rule ensuring that the long-run empirical frequencies match an optimal oracle allocation, and
(iii) an asymptotic sample-complexity analysis in the limit $\delta \to 0$.
Their \emph{Navigate-and-Stop} (NaS) algorithm achieves asymptotic optimality up to constant factors and constitutes the first instance-specific guarantee for the online BPI setting. However, current results for online BPI remain entirely \emph{asymptotic}: the theory guarantees optimal scaling as $\delta \to 0$ but provides no understanding of finite-confidence behavior, the effect of stopping thresholds, or the transient instability of sampling distributions when data are scarce.

The need for non-asymptotic analysis has been recently established in pure exploration bandits. In the classical Best-Arm Identification (BAI) problem, Track-and-Stop (TAS) is known to be asymptotically optimal \citep{garivier2016optimal}. Yet, despite a decade of work, the finite-$\delta$ properties of TAS long remained elusive. Game theoretic approaches with optimistic extensions of TAS were proposed to provide non-asymptotic guarantees \citep{degenne2019non}. Very recent progress \citep{poiani2025non} has also illustrated that with careful technical approaches to the analysis of TaS, it is possible to provide non-asymptotic sample complexity guarantees without needing to modify the original algorithm. These developments in BAI raise a natural and important question for BPI:

\emph{Can we characterize, and control, the non-asymptotic sample complexity of the Navigate-and-Stop algorithm for MDPs?}

Indeed, while bandit pure-exploration methods now admit both asymptotic and finite-confidence guarantees \citep{poiani2025non, degenne2019non}, no such guarantees are known for BPI. The online MDP setting is far more challenging, combining bandit-style change-of-measure arguments with (i) navigation constraints, (ii) non-homogeneous Markov dynamics induced by adaptive policies, and (iii) the need to maintain forced exploration so as to avoid state-space collapse. As a result, controlling finite-time deviations of empirical transition kernels and stopping statistics is substantially more delicate than in bandits.

\paragraph{Contributions.}
In this paper, we close this gap by providing the \emph{first non-asymptotic sample-complexity guarantees for Best Policy Identification in discounted MDPs under navigation constraints}:

\textbf{1.} First, we prove the first finite-confidence sample-complexity upper bound for a NaS-type algorithm in single-trajectory discounted tabular MDPs. Under our assumptions, the algorithm is $\delta$-correct and satisfies an implicit non-asymptotic bound of the form
\[
\mathbb{E}[\tau] \leq \inf\{t:b(t,\delta)\leq (t-\sqrt{t})T(M)^{-1}-\ell(t,M)\}+O(1),
\]
where $\ell(t,M)=o(t)$ is an error term driven by the finite-confidence costs of estimating the transition probabilities, navigating the MDP under a single trajectory, and tracking the optimal state-action allocation induced by the estimated model.

\textbf{2.}  Under linear sharpness of the characteristic time, we obtain an explicit finite-confidence expansion
\[
\mathbb{E}[\tau]\leq B_\delta+\tilde O\left(\sqrt{B_\delta}+T(M) \sum_i A_i B_\delta^{r_i}\right),\quad B_\delta=T(M)\log(1/\delta),\quad r_i<1.
\]
The constants expose the dependence on connectivity, mixing, forced exploration, and allocation geometry.

\textbf{3.}  Third, we develop the technical ingredients needed for this result: finite-time tracking of empirical visitation frequencies under adaptive non-homogeneous Markov dynamics, stability of optimal-allocation selectors under transition perturbations, and a sharpness analysis showing that the objective can be arbitrarily flat in continuum alternative classes.

\section{Problem Setting}

In this section, we introduce the problem setting, concepts and notation used throughout the paper.

\textbf{Markov Decision Processes (MDPs).} 
We consider discounted MDPs denoted by $M=(\statespace, \actionspace,P,r,\gamma)$, where $\statespace$ is the finite state space (of size $S=|\statespace|$), $\actionspace$ is the finite action space (of size $A=|\actionspace|$), $P:\statespace\times \actionspace \to \Delta(\statespace)$ is the transition function, which maps state-action pairs to a distribution over states, $r: \statespace\times \actionspace\to\mathbb{R}$ defines a mapping from state-action pairs to deterministic real rewards,  and $\gamma\in(0,1)$ is the discount factor. 
For a  reward $r$, we define the discounted value of a Markov policy $\pi:\statespace \to \Delta(\actionspace)$ to be $V^\pi(s)=\mathbb{E}^\pi[\sum_{t\geq0}\gamma^t r(s_t,a_t)|s_0=s]$, where $s_{t+1}\sim P(\cdot|s_t,a_t)$ and $a_t\sim \pi(\cdot|s_t)$. 
We also let $V^\star(s) = \max_\pi V^\pi(s)$ be the optimal value in $s$ and, for MDPs with a unique optimal policy, we indicate it by  $\pi^\star_o(s) = \arg\max_\pi V^\pi(s)$. Similarly, we also define the action-value function $Q^\pi(s,a)=r(s,a)+\gamma \mathbb{E}_{s'\sim P(\cdot|s,a)}[V^\pi(s')]$. For an MDP $M$, we indicate by $\Pi^\star(M)=\{\pi \in \Pi_d: \|V^\star - V^\pi\|_\infty=0\}$ the set of optimal policies in $M$ and $\Pi_d$ is the set of deterministic policies.

\textbf{Best Policy Identification with Fixed Confidence}
Best Policy Identification consists of finding an optimal policy in an MDP using the least number of samples with a given confidence level for a specific reward function \citep{al2021adaptive,al2021navigating}.
 The BPI objective can be formalized in the $\delta$-Probably-Correct ($\delta$-PC) framework, where the learner interacts with the MDP until they can output an optimal policy with confidence $1-\delta$. Formally, an algorithm in our $\delta$-PC setting consists of: (i) an exploration strategy (a.k.a. sampling rule), (ii) a stopping rule $\tau$ and (iii) an estimated optimal policy $\hat\pi_{\tau}$. At each time step $t$, the algorithm observes a state $s_t$, it selects an action $a_t$, and then observes the next state $s_{t+1} \sim P(\cdot|s_t,a_t)$. We denote by $\mathbb{P}$ (resp. $\mathbb{E}$) the probability law (resp. expectation) of the process $(Z_t)_t$, where $Z_t=(s_t,a_t)$. We indicate by ${\cal F}_t=\sigma(\{Z_0,\dots, Z_t\})$ the $\sigma$-algebra generated by the random observations made under the algorithm up to time $t$ in the \emph{exploration} phase. For such algorithm, we define $\tau$ to be a stopping rule w.r.t. the filtration $({\cal F}_t)_{t\geq 1}$. This stopping rule $\tau$ simply states when to stop the exploration phase. At the stopping time $\tau$ the algorithm outputs an estimate $\hat\pi_{\tau}$ of the best using the collected data. We focus on $\delta$-PC algorithms, according to the following definition.

\begin{definition}[$\delta$-PC Algorithm]
    We  say that an algorithm  is $\delta$-PC if, for any MDP $M$, it outputs (in finite time) an optimal policy with probability at-least $1-\delta$, i.e.,  $\mathbb{P}[\tau<\infty,\; \hat\pi_{\tau}\in  \Pi^\star(M)]\geq 1-\delta$.
\end{definition}

Let $N_{t}(s,a)$ denote the number of visits to state-action pair $(s,a)$ up to time $t$. The expected sample complexity of an algorithm $\mathcal{A}$ can be written as $\mathbb{E}[\tau] = \sum_{s,a} \mathbb{E}[N_{\tau}(s,a)]$. It is this sample complexity that the learner hopes to minimize.

\subsection{Sample Complexity Lower Bound}\label{subsec: lower bound}
For BPI, the following lower bound Theorem \ref{thm:lowerbound} has been shown to hold for all $\delta$-PC algorithms in \citep{al2021navigating}, where they also provided asymptotic guarantees on their algorithm NaS. Before stating the bound, we first recall two structural constraints on the learning problem. These are not assumptions imposed by our analysis, but rather necessary conditions inherited from the information-theoretic lower bound that any $\delta$-PC algorithm must satisfy. The \emph{information constraints} arise from change-of-measure arguments: to guarantee correctness, the algorithm must collect enough statistical evidence to distinguish the true MDP from every alternative with a different optimal policy. The \emph{navigation constraints} are specific to the online setting: since observations are generated along a single trajectory, the expected visitation counts must satisfy the flow conservation conditions of the induced Markov chain. Together, these two constraints characterize the feasible set of sampling allocations for any $\delta$-PC algorithm.

\textbf{(i) Information constraints. }
The algorithm must collect enough evidence to distinguish the true MDP $M$ from any \emph{alternative} MDP $M'$ with a different optimal policy. Define the alternative set
\begin{equation}\label{eqn_alt_set_definition}
    \mathrm{Alt}(M) = \{ M' \in \mathcal{M} : M \ll M',  \Pi^\star(M) \cap \Pi^\star(M') = \emptyset \},
\end{equation}
 where $\Pi^\star(M)$ is the set of optimal policies for MDP $M$. Then, for any $\delta$-PC algorithm and any $M' \in \mathrm{Alt}(M)$,
\[
\sum_{(s,a)} \mathbb{E}_M[N_{s,a}(\tau_\delta)] \,
\KL_{M|M'}(s,a)
\ge \mathrm{kl}(\delta, 1-\delta),
\]
where we define these KL terms to be the KL divergence between the two respective transition functions at each state action pair, $\KL_{M|M'}(s,a) =\KL(P_M(\cdot|s,a)||P_{M'}(\cdot|s,a))$  
and $\mathrm{kl}$ is the binary divergence. We also define the KL divergence between two transitions as the sum of the KL divergences of the transitions at each state-action pair $\KL(M,M') = \sum_{s,a}\KL_{M|M'}(s,a)$.

\textbf{(ii) Navigation constraints. }
Since observations arise from a single controlled trajectory, expected visitation counts must satisfy stationarity conditions of the induced Markov chain.  
Specifically, for all $s \in \mathcal{S}$,
\[
\bigg| \mathbb{E}_M[N_{\tau_\delta}(s)] - \sum_{s',a'} p_M(s | s',a') \, \mathbb{E}_M[N_{\tau_\delta}(s',a')] \bigg| \le 1.
\]
As $\delta \to 0$, the normalized counts converge to a stationary distribution satisfying exact equality.

We can now state the asymptotic lower bound on the sample complexity of any $\delta$-PC algorithm for the BPI problem. The construction follows the classical bandit lower bound of \citet{garivier2016optimal}, adapted to structured settings. We begin by defining the set of \emph{navigation-constrained allocations}:
\[
\Omega(M) = \bigg\{ \omega \in \Delta(S \times A) :
\omega(s) = \sum_{s',a'} p_M(s| s',a') \omega(s',a'),\;\forall s\in \statespace \bigg\},
\]
where $\omega(s)=\sum_a \omega(s,a)$. We can then state the lower bound as follows.

\begin{theorem}[Asymptotic Lower Bound]
\label{thm:lowerbound}
For any $\delta$-PC algorithm $\mathcal{A}$ and any $M \in \mathcal{M}$,
\[
\liminf_{\delta \to 0}
\frac{\mathbb{E}_M[\tau_\delta]}{\log(1/\delta)} 
\ge T(M),
\]
where
\[
\frac{1}{T(M)} 
= 
\sup_{\omega \in \Omega(M)}
\inf_{M' \in \mathrm{Alt}(M)}
\sum_{(s,a)} \omega(s,a)\,
\KL_{M|M'}(s,a).
\]
\end{theorem}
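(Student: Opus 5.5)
The plan is the classical change-of-measure argument of \citet{garivier2016optimal}, combined with the two structural constraints (i) and (ii) stated above. Fix a $\delta$-PC algorithm and assume without loss of generality that $\mathbb{E}_M[\tau_\delta]<\infty$ (otherwise there is nothing to prove). Set
\[
t_\delta=\mathbb{E}_M[\tau_\delta],\qquad \omega_\delta(s,a)=\mathbb{E}_M[N_{s,a}(\tau_\delta)]/t_\delta ,
\]
so that $\omega_\delta\in\Delta(S\times A)$.

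\textbf{Step 1 (information constraint).} For each $M'\in\mathrm{Alt}(M)$, apply the data-processing inequality to the event $E=\{\hat\pi_{\tau_\delta}\in\Pi^\star(M)\}$. Since $\Pi^\star(M)\cap\Pi^\star(M')=\emptyset$, we have $\mathbb{P}_M(E)\ge 1-\delta$ and $\mathbb{P}_{M'}(E)\le\delta$. The log-likelihood ratio of the trajectory up to $\tau_\delta$ has expectation $\sum_{s,a}\mathbb{E}_M[N_{s,a}(\tau_\delta)]\KL_{M|M'}(s,a)$; this follows from Wald's identity, using $M\ll M'$ and the finiteness of $\mathbb{E}_M[\tau_\delta]$. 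This gives exactly (i). Dividing by $t_\delta$ yields
\[
t_\delta\,\inf_{M'\in\mathrm{Alt}(M)}\sum_{s,a}\omega_\delta(s,a)\KL_{M|M'}(s,a)\ \ge\ \mathrm{kl}(\delta,1-\delta).
\]
We also use $\mathrm{kl}(\delta,1-\delta)\ge\log(1/(2.4\delta))$.

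\textbf{Step 2 (navigation constraint).} Constraint (ii) says $\omega_\delta$ satisfies the flow equations up to an error of $1/t_\delta$ in each coordinate. It follows by a telescoping argument: $N_{\tau}(s)$ counts arrivals into $s$, and each arrival at time $k+1$ has conditional probability $p_M(s\mid s_k,a_k)$ given $\mathcal F_k$. Optional stopping then gives the bound, with a $\pm1$ coming from the initial state. Hence $\mathrm{dist}(\omega_\delta,\Omega(M))\to0$ whenever $t_\delta\to\infty$. If $t_\delta$ does not diverge, the bound from Step 1 is immediate, since the right-hand side grows like $\log(1/\delta)$ while the infimum is bounded by a finite constant.

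\textbf{Step 3 (passing to the limit).} Take a subsequence $\delta_n\to0$ achieving the $\liminf$ of $t_\delta/\log(1/\delta)$. By compactness of the simplex, extract a further subsequence with $\omega_{\delta_n}\to\bar\omega$; by Step 2, $\bar\omega\in\Omega(M)$. The map
\[
\omega\mapsto F(\omega)=\inf_{M'\in\mathrm{Alt}(M)}\sum_{s,a}\omega(s,a)\KL_{M|M'}(s,a)
\]
is an infimum of linear functions, hence concave and upper semicontinuous. Thus $\limsup_n F(\omega_{\delta_n})\le F(\bar\omega)\le 1/T(M)$. Combining with Step 1,
\[
\liminf_{n}\frac{t_{\delta_n}}{\log(1/\delta_n)}\ \ge\ \frac{1}{\limsup_n F(\omega_{\delta_n})}\ \ge\ T(M).
\]

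\textbf{Main obstacle.} The main difficulty lies in Step 3. Upper semicontinuity of $F$ needs care, because $F$ is an infimum over a possibly non-compact family in which the $\KL$ terms can be unbounded. Here the infimum representation is the tool: each linear map $\omega\mapsto\sum\omega\KL_{M|M'}$ is continuous whenever finite, so the infimum is u.s.c. on the simplex. It only remains to check that $F$ is finite, which follows by exhibiting a single alternative $M'$ with all $\KL_{M|M'}(s,a)<\infty$, obtained by perturbing one transition while keeping $M\ll M'$. A secondary point is that Step 2 only gives approximate stationarity. This is handled because the limit point $\bar\omega$ satisfies the flow equations exactly, so the supremum over $\Omega(M)$, and not over a relaxed set, is what appears in the bound.
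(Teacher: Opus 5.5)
Your proposal is correct and follows the argument the paper relies on. The paper does not reprove this bound; it defers to \citet{al2021navigating}, combining the change-of-measure constraint (i) with the navigation constraint (ii) in the template of \citet{garivier2016optimal}. Your compactness plus upper-semicontinuity step is a rigorous version of the paper's remark that the normalized counts converge to an exactly stationary allocation as $\delta\to 0$.

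One small correction to your ``main obstacle'' paragraph. Every $\KL_{M|M'}(s,a)$ is already finite, because $M\ll M'$ is built into $\mathrm{Alt}(M)$. Also, an alternative cannot always be produced by perturbing a single transition; guaranteeing this is what Assumption~\ref{assumption_w_0_condition} is for. You do not need one, though: if $\mathrm{Alt}(M)=\emptyset$ then $1/T(M)=+\infty$, so $T(M)=0$ and the bound is trivial.
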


This lower bound can be interpreted as a two-player zero-sum game: the learner selects an allocation $\omega \in \Omega(M)$, while nature chooses the hardest alternative $M'$. Compared to the classical bandit setting, the key distinction is that $\Omega(M)$ enforces feasibility via the MDP dynamics, restricting allocations to stationary distributions. Consequently, the achievable rate is no better, and typically strictly worse, than in the generative model setting, where arbitrary sampling is allowed.

For convenience, define
\[
T(\omega, M)^{-1} = \inf_{M' \in \mathrm{Alt}(M)}\sum_{(s,a)} \omega(s,a)\,
\KL_{M|M'}(s,a).
\]
Let $\Omega^\star(M) = \argmin_{\omega \in \Omega(M)} T(\omega, M)$ denote the set of optimal allocations, and define a selector
\[
\omega^*(M) = \argmin_{\omega \in \Omega^\star(M)} \|\omega\|_2^2.
\]
We omit dependence on $M$ when clear. Note that $\Omega^*(M)$ is a convex set (this is proven in Lemma~\ref{aux_lemma_optimising_set_convex}) and thus $\omega^*(M)$ is well defined.
\begin{remark}[Non-uniqueness]
The optimal allocation set $\Omega^\star(M)$ need not be a singleton. Prior work often avoids this issue through convex relaxations or approximations, which can yield a unique target allocation but may sacrifice optimality \citep[e.g.][]{al2021navigating,russomulti,russo2025adaptive}. We instead work directly with the navigation-constrained allocation problem, allowing $\Omega^\star(M)$ to be set-valued. This affects both the analysis of NaS and the sampling rule used to track an optimal allocation. Non-singleton examples are discussed in Theorem~\ref{theorem: arbitrary_sharpness}.
\end{remark}

\subsection{Assumptions}\label{subsection:assumptions}
In this work, we study the behavior of the sample complexity $\mathbb{E}_M[\tau_\delta]$ and, in particular, provide non-asymptotic sample-complexity guarantees for a modified version of Navigate and Stop \citep{al2021navigating}. To do so, we impose the following assumptions and discuss them below.

\begin{assumption} \label{assumption_comminicating_aperiodic}
(I) $M\in {\cal M}$, where ${\cal M}$ is a set of communicating MDPs \citep{puterman2014markov} with a unique optimal policy ; (II) $M$ is aperiodic under a policy that assigns a positive probability to all actions in every state.
\end{assumption}

\begin{assumption} \label{assumption_P_in_G_bounded_ratio}
    Assume that there exists a known reference measure $\mu$ and constant $L>1$ such that the true transition measure is in a set $P\in G_\mu(L)$,
    where we define
    \begin{equation}
        G_\mu (L) = \left\{Q \in \Delta(S)^{S\times A}  \; : \; \mu(\cdot|s,a) \ll Q(\cdot|s,a), \quad \frac{\mu(\cdot|s,a)}{Q(\cdot|s,a)} \in [1/L,L] \quad \forall (s,a) \in S\times A \right\}.
    \end{equation}
\end{assumption}

\begin{assumption} \label{assumption_w_0_condition}
    Consider $d(s,a)=V_{P}^{\pi^\star}(s)-Q_{P}^{\pi^\star}(s,a)$. For all $s,a\neq \pi^\star(s)$ we have  \[
   \gamma\left(\|V_P^{\pi^\star}\|_\infty- \mathbb{E}_{s'\sim P(s,a)}[V^{\pi^\star}(s')]\right) > d(s,a),
    \]
    and for all $s$ we have that
    \[
     \gamma (1-\gamma)\left( \mathbb{E}_{s'\sim P(s,\pi^\star(s))}[V^{\pi^\star}(s')]- \min_{s''}V^{\pi^\star}(s'')\right)> \min_{a\neq\pi^\star(s)}d(s,a),
    \]
\end{assumption}

Assumption \ref{assumption_comminicating_aperiodic} is standard in the BPI literature \citep{al2021navigating,russo2023model,russomulti,russo2025adaptive}. The communicating requirement is essentially unavoidable: if some states are unreachable from others, the learner may be unable to gather the evidence needed to distinguish the true MDP from alternatives with a different optimal policy, regardless of the exploration strategy employed. Aperiodicity under the uniform-action policy is a mild technical condition that ensures the Markov chain mixes at a geometric rate, which is needed for concentration of empirical visitation frequencies along a single trajectory. The unique optimal policy requirement is also not restrictive, in particular we prove in Appendix \ref{lebesgue_measure_zero_appendix_section} that the set of MDPs with multiple optimal policies has Lebesgue measure 0 under a prior over ${\cal M}$ that is absolutely continuous w.r.t. the Lebesgue measure. We believe this new result may be of independent interest in other works, particularly Bayesian sequential testing problems. 

Assumption \ref{assumption_P_in_G_bounded_ratio} is analogous to boundedness assumptions commonly made in the bandit best-arm identification literature \citep[e.g.][]{degenne2019non,poiani2025non}, where they are typically introduced to ensure that key quantities such as divergence measures remain uniformly bounded and well-behaved, which in turn enables the use of concentration arguments and leads to finite-confidence guarantees in the bandit setting. Assumption \ref{assumption_P_in_G_bounded_ratio} provides the learner with knowledge of which transitions have zero probability (via the reference measure $\mu$) and imposes a bounded density ratio on the remaining transitions. Indeed, this is similarly important in the BPI setting as ensuring that functions like $M \mapsto \KL_{M|M'}(s,a)$ are well-behaved allow us to control the non-asymptotic sample complexity. Note that the set $G_\mu(L)$ is convex (see Lemma \ref{lemma_G_mu_is_convex}). 

Finally, the last Assumption 3 is introduced to avoid the awkward case where the set of alternatives may be empty for some state-action pairs, which leads to optimal visitation strategies that may not assign positive mass to some pairs. This issue can be avoided by considering stochastic rewards, as in \citep{al2021navigating}, or by using Assumption ~\ref{assumption_w_0_condition}, which is sufficient to avoid these degenerate cases, and we prove this fact  in Lemma ~\ref{lemma_auxiliary_positive_omegas}. Therefore, we use this assumption to avoid notation cluttering, as our main goal is to focus on non-asymptotic guarantees that depend on the underlying dynamics of the system.

\section{Navigate-and-Stop Algorithm}\label{section_algorithm}

In this section we describe the Navigate-and-Stop (NaS) algorithm \citep{al2021navigating} for which we provide non-asymptotic analysis. The algorithm is motivated by the instance-specific lower bound on sample complexity, which we present in Section~\ref{subsec: lower bound}. This lower bound reveals that for any MDP $M$, there exists an optimal state-action visitation frequency that minimizes the sample complexity over all $\delta$-PC algorithms. Since this frequency depends on the unknown model $M$, NaS follows an estimate-then-track strategy: at each round, it estimates the transition kernel from data, computes the optimal visitation frequency with respect to the current estimate, and plays according to a policy whose stationary distribution matches this frequency. To ensure sufficient coverage for consistent estimation, this policy is mixed with a forcing component that guarantees every state-action pair is visited. This type of strategy is known to yield asymptotically optimal sample complexity; in this paper we go further by establishing non-asymptotic guarantees, as detailed in Section~\ref{section_upper_bounds}. Our presentation follows \citep{al2021navigating} with two modifications to Algorithm~\ref{algo:nas}: we handle potential non-uniqueness of optimal allocations explicitly, and we adjust the forcing schedule for reasons arising in the finite-time analysis.

\subsection{Algorithm}
NaS is made of three components: (1) a sampling rule, that dictates how to choose the next action; (2) a stopping rule, that dictates when to stop the data collection process; (3) a recommender rule, that outputs the estimated best policy. All of these components in NaS require the algorithm to maintain an empirical estimate of the model $M_t$ (i.e., an estimate of the transition function). We write out NaS in Algorithm \ref{algo:nas} and go through the details and modifications from  \citep{al2021navigating} below.

\begin{algorithm}[t]
	\caption{\nas{} (Navigate-and-Stop Algorithm)}
    \label{algo:nas}
    \small
	\begin{algorithmic}[1]
		\REQUIRE Confidence $\delta$; exploration terms $(\alpha,\beta,\nu)$, reference measure $\mu$
		\STATE Initialize  counter $t\gets 1$, $N_t(s,a)\gets 0$ for all $(s,a)\in \statespace\times \actionspace$.
        \STATE Set exploration term $\epsilon_t = 1/\max\{1+\nu,N_t(s_t)\}^\alpha$ and observe $s_1$. 
        \WHILE{$tT( N_t/t; M_t)^{-1} < b(N_t,\delta)$} \label{lst:line:stopping_rule}
            \STATE Compute $\Omega_t^\star(M_t)= \argmin_{\omega \in\Omega(M_t)} T(\omega; M_t)$ and let $\omega_t^\star= \argmin_{\omega \in\Omega^*_t(M_t)} \|\omega\|_2$. \label{lst:line:computation_omega}
            \STATE Set $\pi_t(a|s_t) = (1-\epsilon_t)\pi_t^\star(a|s_t) + \epsilon_t \pi_{f,t}(a|s_t)$,  where $\pi_t^\star(a|s) = \omega_t^\star(s,a)/\sum_{a'}\omega_t^\star(s,a')$. \label{lst:line:final_policy}
            \STATE Play $a_t\sim \pi_t(\cdot|s_t)$ and observe $s_{t+1}\sim P(\cdot|s_t,a_t)$.
            \STATE Update $N_t(s_t,a_t), M_t$ and set $t\gets t+1$.
        \ENDWHILE
        \ENSURE an optimal policy $\hat \pi_{r}^\star$. 
	\end{algorithmic}
\end{algorithm}

\textbf{Empirical estimates of the MDP. }
As the learner explores the MDP, we will maintain an estimate $M_t$ for the MDP in every round $t \in \mathbb{N}_{+}$ as follows. Define the counts $N_t(s,a,s')=\sum_{j=1}^t \mathds{1}\{(s_j,a_j,s_{j+1})=(s,a,s')\}$ and $N_t(s,a) = \sum_{s'\in \mathcal{S}} N_t(s,a,s')$. We define the empirical transition function $P_t(s'|s,a) = \frac{N_t(s,a,s')}{N_t(s,a)}$
if $N_t(s,a)>0$, otherwise we define $P_t(s'|s,a) \equiv 1/S$.
 We initialize as in line 1 and update in every round as in line 7 of Algorithm \ref{algo:nas}. We also define the projection of this estimator onto the space $G_\mu(L)$ from Assumption \ref{assumption_P_in_G_bounded_ratio} as
\begin{equation} \label{eqn_projection_defintiion}
        P_t^\Pi = \argmin_{Q\in G_\mu(L)} \sum_{s,a}\|P_t(\cdot|s,a) - Q(\cdot|s,a)\|_1,
    \end{equation}
     letting $M_t$ be the approximated MDP with this projected transition matrix. We will oftentimes use the notation for the MDP and the transition matrices interchangeably throughout analysis, as we are assuming rewards are known and deterministic.

\textbf{Sampling Rule. }
As outlined by the lower bounds discussed in Section \ref{subsec: lower bound}, the optimal policy should try and follow a state-action allocation in $\Omega^*(M)$,
as this will yield a sample-efficient exploration policy. However, as the model \(M\) is unknown to the learner, we cannot directly compute an optimal allocation in \(\Omega^\star(M)\). Instead, we use the current estimate of the model \(M_t\) in place of the true model \(M\) to compute the approximated set of optimal allocations and denote $\omega^*_t$ as the approximated optimal allocation with the smallest 2-norm, namely
\[
\Omega_t^\star (M_t) = \arginf_{\omega \in \Omega(M_t)} T(\omega; M_t), \qquad \omega_t^\star= \argmin_{\omega \in\Omega^*_t(M_t)} \|\omega\|_2.
\]
We then choose our actions using $\pi_t^*(a|s)$, a policy which has stationary distribution equal to $\omega^*_t$,
$$\pi_t^\star(a|s) = \frac{\omega_t^\star(s,a)}{\sum_{a'}\omega_t^\star(s,a')}.$$
However, if we were to play actions which enforce this allocation, this would only be efficient with respect to the estimated model, $M_t$. Hence, we would like to construct our sampling strategy to ensure that the estimated model $M_t$ converges to the true $M$. To that aim, we mix the computed allocation with a \emph{forcing policy} $\pi_f$, ensuring that each state-action pair is sampled sufficiently (and infinitely often in the limit). This is done in lines 4-6 of algorithm \ref{algo:nas}. Motivated by \citep{russomulti}, we use the forcing policy $\pi_{f,t}(\cdot | s) = \mathrm{softmax}(-\beta_t(s) N_t(s, \cdot))$
with
\[
\beta_t(s) = \frac{\beta \log(N_t(s))}{\max_{a}[N_t(s,a)] - \min_{b}[N_t(s,b)]}, \qquad \beta \in (0,1)
\]
and $\mathrm{softmax}(x_i) = e^{x_i} / \sum_j e^{x_j}$ for a vector $x$.
This choice encourages selecting under-sampled actions for $\beta \gg 0$ while for $\beta \rightarrow 0$ we obtain a uniform forcing policy $\pi_f(a \mid s) = 1/A$.  
Motivated by \citep{russomulti}, we avoid estimating the MDP connectivity required in \citep{al2021navigating} and instead use
$\varepsilon_t = 1 / \max(1+\nu, N_t(s_t))^\alpha,$
with $N_t(s) = \sum_a N_t(s,a)$ and $\alpha \in (0,1)$, which only depends on the number of state visits.  To ensure convergence of $\omega_t^\star \to \omega^\star$, we only enforce that $\alpha + \beta < 1$ and $\nu \in (0,1)$.

\textbf{Stopping rule. }
To decide when to stop, we employ a Generalized Likelihood Ratio (GLR) test.
Let $b(t,\delta)$ denote the corresponding confidence threshold for transitions (logarithmic in $t$ and $1/\delta$)
\[
b(t,\delta) = \log(1/\delta)
    + (S-1)\sum_{(s,a)\in S\times A} \log\!\left( e\,\left[1 + \frac{N_t(s,a)}{S-1}\right] \right)
\]

The stopping time is defined as
\[
\tau_\delta = \inf\left\{ t \ge 1 :
t \, T\big(N(t)/t,M_t\big)^{-1}
\ge b(t, \delta)
\right\}.
\]
Upon stopping, the algorithm outputs the empirical optimal policy $\widehat{\pi}_{\tau_\delta}^\star$, calculated via policy iteration on the final estimated MDP, as in the last line of Algorithm \ref{algo:nas}. Again, as outlined in \citep{al2021navigating}, this stopping time will ensure that the policy is $\delta$-PC. However, we will go into further details of this later. The $\delta$ subscript is dropped when clear.

\section{Non-Asymptotic Upper Bound for NaS} \label{section_upper_bounds}

\subsection{Sharpness of the Objective Function}\label{main_paper_sharpness_subsection}

Before presenting formal sample-complexity bounds for Algorithm~\ref{algo:nas}, we isolate a key finite-confidence cost: the rate at which the empirical allocations induced by the estimated oracle policies $\pi_t^\star$ approach an optimal allocation. This rate depends on the local geometry of $T(\omega,M)^{-1}$ near its maximizers. When the objective is flat, small model-estimation errors can induce large allocation errors, delaying the stopping rule. We capture this effect through a \emph{sharpness function}.

\textbf{Sharpness Function.}
Let $J_M(\omega) := T(\omega,M)^{-1}$ and define $J^\star_M := \max_{\omega \in \Omega(M)} J_M(\omega)$ with optimal set $\Omega^\star(M) = \arg\max_{\omega \in \Omega(M)} J_M(\omega)$. Let $\mathrm{dist}_\infty(\omega,S(M))
:=
\inf_{\omega'\in S(M)} \|\omega-\omega'\|_\infty$. Let $\varphi$ be a non-decreasing function with $\varphi(0)=0$. Then $\varphi$ is a sharpness function under $M$ if we have $\varphi(J_M^\star-J_M(\omega)) \geq \mathrm{dist}_\infty(\omega,\Omega^\star(M))$. We say that $J_M$ has \emph{polynomial sharpness of order $p>0$} if there exist a constant $c > 0$ such that  $ \varphi(t) =  c \, t^p$. It is sufficient for our analysis that this sharpness condition hold locally in a neighborhood of $\Omega^\star(M)$.

Theorem~\ref{theorem: arbitrary_sharpness} shows that this effect can be arbitrarily severe: for any $m>n\geq 1$, there is an instance with local polynomial sharpness exponent $p=(m-n)/m$. Figure~\ref{fig:main_figure} verifies these rates numerically; details are deferred to Appendix~\ref{appendix_sharpness}.
We visualize the simple family of MDPs used in $\mathcal{M}$ in Figure \ref{fig:MDP} and provide more details of the theorem's proof and construction in Appendix \ref{appendix_sharpness}.

\begin{theorem}\label{theorem: arbitrary_sharpness}
Fix integers $m>n \geq 1$. There exists an MDP $M$ in a family of MDPs
$\mathcal{M}$ and, for all sufficiently small $d>0$, an allocation
$\omega^{(d)} \in \Omega(M)\setminus \Omega^\star(M)$ such that with
$\mathrm{dist}_{\infty}(\omega^{(d)}, \Omega^\star(M)) = d$,
and positive constant $c_{m,n}>0$
\[
J^\star_M-J_M(\omega^{(d)})
=
c_{m,n}d^{\frac{m}{m-n}}
+
o\!\left(d^{\frac{m}{m-n}}\right),
\qquad d\downarrow 0.
\]
\end{theorem}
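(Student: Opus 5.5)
We will prove the statement with an explicit construction: a small MDP family in which the objective $J_M$ is flat along a one-dimensional face of $\Omega(M)$ through $\Omega^\star(M)$, with the flatness exponent tunable by $m$ and $n$. Everything reduces to a scalar computation along a curve $\omega^{(d)}$.

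\textbf{Construction.} We take a few states and actions with known deterministic rewards, and a transition kernel $P$ whose unique optimal policy is fixed by the rewards. The alternative set $\mathrm{Alt}(M)$ is parametrized by a continuum: one scalar parameter $\theta$ perturbs $P(\cdot|s_1,a_1)$, and a second perturbs a transition at a different pair $(s_2,a_2)$. We will choose the rewards so that the infimum over $\mathrm{Alt}(M)$ becomes a one-parameter minimization of the form
\[
J_M(\omega) = \inf_{\theta \in [0,1]} \bigl[\omega_1 f(\theta) + \omega_2 g(\theta)\bigr],
\]
with $\omega_1 = \omega(s_1,a_1)$ and $\omega_2 = \omega(s_2,a_2)$. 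The KL costs are arranged so that near the active endpoint $\theta \to 0$ they behave like $f(\theta) \asymp \theta^{m}$ and $g(\theta) \asymp \theta^{n}$, up to an additive constant. Such profiles come from choosing the alternative's transition probabilities as polynomial curves in $\theta$, since the KL divergence is locally quadratic in perturbations. For example, $p'(\theta) - p \propto \theta^{m/2}$ gives $\KL \propto \theta^m$, while the optimality constraint of $\mathrm{Alt}(M)$ remains linear in $\theta$.

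\textbf{Structure of the optimum.} The navigation constraints defining $\Omega(M)$ are linear, so the feasible allocations along the relevant direction form a segment. We parametrize this segment by a single coordinate $x$, so that $\omega_1, \omega_2$ are affine in $x$, and $J_M$ becomes a concave function of $x$, since it is an infimum of linear functions.

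We design the instance so that the optimum $\Omega^\star(M)$ is attained at an endpoint or a kink where the minimizing $\theta^\star(x)$ transitions from an interior value to the boundary. Moving inward a distance $d$ then changes the balance between the $\theta^m$ and $\theta^n$ terms. By the envelope theorem, the inner minimizer scales like $\theta^\star \sim d^{1/(m-n)}$. Substituting back, the loss scales as
\[
J^\star_M - J_M(\omega^{(d)}) \asymp d \cdot (\theta^\star)^{n} \cdot (\text{const}) \sim d^{1 + n/(m-n)} = d^{m/(m-n)}.
\]
We will make this rigorous with a first-order expansion of the inner minimization: write the optimality condition $m\,a\,\theta^{m-1} = n\,b\,d\,\theta^{n-1}$ plus higher-order terms, solve it for $\theta^\star(d)$, and then use Taylor remainders to obtain the $o(d^{m/(m-n)})$ term.

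\textbf{Remaining checks.} We must verify that $\mathrm{dist}_\infty(\omega^{(d)}, \Omega^\star(M)) = d$ exactly. This holds by choosing $\Omega^\star(M)$ to be a single point, or a face orthogonal to the segment, and moving along the coordinate with the largest change. We must also confirm that the constructed $M$ is communicating and aperiodic, has a unique optimal policy, and satisfies Assumptions~\ref{assumption_P_in_G_bounded_ratio} and \ref{assumption_w_0_condition}. Each of these should be routine once the transition probabilities are bounded away from $0$.

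\textbf{Main obstacle.} The hardest step is engineering the inner infimum over $\mathrm{Alt}(M)$ so that it truly reduces to the one-parameter family with the prescribed exponents. We must show that no other alternative MDP attains a smaller value, so the minimizing alternatives lie on our curve. The first approach we will try is to restrict the free transitions to two state-action pairs, fixing all other pairs via the reference measure $\mu$ so that changing them would violate $M \ll M'$. The remaining set of alternatives is then low-dimensional, and we can solve the KL projection onto the boundary of the optimality region explicitly.
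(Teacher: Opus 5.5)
Your overall mechanism is the one the paper uses in Proposition~\ref{prop:counterexample}. It has a one-parameter curve of alternatives acting on two state-action pairs, an optimum on the face where the second pair carries no mass, and a mass-transfer direction $h$. Balancing $a\theta^m$ against $b\,d\,\theta^n$ then gives $\theta^\star\asymp d^{1/(m-n)}$ and a gap $\asymp d^{m/(m-n)}$. However, two steps would not go through as you describe them.

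\textbf{Reducing $\mathrm{Alt}(M)$ to a curve.} You flag this as the main obstacle, and it cannot be done the way you propose. The condition $M\ll M'$ only requires $\mathrm{supp}\,P(\cdot|s,a)\subseteq\mathrm{supp}\,P'(\cdot|s,a)$, so it pins no row; $G_\mu(L)$ can pin a row only if that row is deterministic. Even if every other row were frozen, the two free rows would still range over full simplices. The inner infimum would then be a KL projection onto a full-dimensional confusing region, not $\inf_\theta[\omega_1 f(\theta)+\omega_2 g(\theta)]$ with your prescribed profiles, and nothing in your plan explains why the exponent $m/(m-n)$ would survive. The missing observation is that the statement lets you choose the model class. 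The paper takes $\mathcal{M}=\{M\}\cup\{M_u:u\in[0,\bar u]\}$, so $\mathrm{Alt}(M)$ is exactly the curve once one checks that the alternative policy is strictly optimal in every $M_u$. This is verified at $u=0$, where $Q(d,b)-Q(d,a)=3/40$, and extended by continuity. The alternatives therefore sit strictly inside the confusing region, and no projection onto its boundary is ever solved.

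\textbf{The cost profiles.} Your example $p'(\theta)-p\propto\theta^{m/2}$, giving $\KL\propto\theta^m$, centres the cost at $P$, so it vanishes at $\theta=0$ and can only increase. The mechanism instead needs $f(0)=g(0)>0$, with $f$ increasing like $a\theta^m$ and $g$ decreasing like $b\theta^n$. With your profiles, $\theta=0$ minimises the inner function at $\omega^\star$ only if $\omega^\star(s_2,a_2)=0$, since $n<m$. Danskin's theorem then gives the one-sided derivative of $x\mapsto J_M(\omega^\star+xh)$ at $0$ as $g(0)-f(0)$:
\begin{itemize}
\item if it is positive, $\omega^\star$ is not optimal;
\item if it is negative, $J^\star_M-J_M(\omega^{(d)})\ge (f(0)-g(0))d$ is linear and the claimed expansion is false.
\end{itemize}
Your heuristic $d\,(\theta^\star)^n$ tacitly assumes this tie. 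The case $f(0)=g(0)=0$ is excluded, because then the $\theta=0$ alternative coincides with $M$.

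The paper obtains the tie from first-order (not quadratic) KL variation around mirror-image offsets. All rows of $P$ equal the uniform $p$, and only the two rows at one state $d$ are perturbed:
\[
q_a(u)=\bigl(\tfrac38+u^m,\tfrac14,\tfrac14,\tfrac18-u^m\bigr),\qquad q_b(u)=\bigl(\tfrac18+u^n,\tfrac14,\tfrac14,\tfrac38-u^n\bigr).
\]
This gives $K_m(u)=A+\tfrac43u^m+O(u^{2m})$ and $L_n(u)=A-\tfrac43u^n+O(u^{2n})$ with the same $A=\tfrac14\log\tfrac43$. The $A$ terms then cancel exactly in $(\tfrac14-t)K_m(u)+tL_n(u)$. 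Putting both pairs at one state with action-independent dynamics also makes $h=e_{(d,b)}-e_{(d,a)}$ automatically feasible. Since $\Omega^\star(M)=\{\omega\in\Omega(M):\omega(d,b)=0\}$, it also gives $\mathrm{dist}_\infty(\omega^\star+th,\Omega^\star(M))=t$ directly.

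\textbf{Smaller points.}
\begin{itemize}
\item The envelope theorem does not give the scaling of $\theta^\star$. You need a localisation step showing every minimiser over the whole parameter interval is $O(d^{1/(m-n)})$ before expanding; this is the paper's Step 8.
\item Drop the plan to verify Assumption~\ref{assumption_w_0_condition}. The statement does not require it. For the unrestricted alternative class, Lemma~\ref{lemma_auxiliary_positive_omegas} would force full-support optima, which is incompatible with an optimum on the face $\omega(s_2,a_2)=0$.
\end{itemize}
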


It may be of practical relevance to restrict the set of MDPs to be within some discrete set, for example due to some limitations of precision. In this case, we can provide some simplifications, showing that the sharpness becomes linear. The proof can be found in Appendix \ref{sec:appendix:discretization}.
\begin{proposition}
       If the set of MDPs $\mathcal{M}$ is finite, then the sharpness function $\varphi$ is linear.
\end{proposition}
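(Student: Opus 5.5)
The idea is that when $\mathcal{M}$ is finite, $\mathrm{Alt}(M)$ is finite. Then $J_M$ is a pointwise minimum of finitely many linear functions of $\omega$, and the problem reduces to a linear program, whose optimal sets are polyhedral and exhibit linear (error-bound) sharpness.

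\textbf{Step 1 (piecewise-linear structure).} Since $\mathcal{M}$ is finite, $\mathrm{Alt}(M)=\{M'_1,\dots,M'_K\}$ is finite, and the infimum in the definition of $T(\omega,M)^{-1}$ is a minimum. Writing $k_i(s,a)=\KL_{M|M'_i}(s,a)$, which is finite because $M\ll M'_i$, we get
\[
J_M(\omega)=\min_{1\le i\le K}\langle k_i,\omega\rangle .
\]
This is concave and piecewise linear on $\Omega(M)$. The set $\Omega(M)$ is a polytope, defined by the simplex constraints and the linear flow-conservation equalities.

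\textbf{Step 2 (LP reformulation).} Consider the polyhedron
\[
E=\{(\omega,z):\ \omega\in\Omega(M),\ z\le\langle k_i,\omega\rangle\ \forall i\}.
\]
Maximizing $J_M$ over $\Omega(M)$ is the LP $\max\{z:(\omega,z)\in E\}$, with optimal value $J^\star_M$ and optimal face $F=\{(\omega,J^\star_M):\omega\in\Omega^\star(M)\}$. In particular $\Omega^\star(M)$ is a polytope, which is consistent with Lemma~\ref{aux_lemma_optimising_set_convex}.

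\textbf{Step 3 (error bound).} This is the main step. I would invoke Hoffman's lemma for the linear system
\[
(\omega,z)\in E,\qquad z\ge J^\star_M .
\]
Take $\omega\in\Omega(M)$ and set $z=J_M(\omega)$. Then $(\omega,z)\in E$, and the only violated constraint is $z\ge J^\star_M$, violated by exactly $J^\star_M-J_M(\omega)$. Hoffman's lemma gives a constant $H$, depending only on the constraint matrix (hence on $M$ and the finite family), and a point $(\omega',z')$ in the solution set $F$ with
\[
\|\omega-\omega'\|_\infty+|z-z'|\le H\,(J^\star_M-J_M(\omega)).
\]
Hence $\mathrm{dist}_\infty(\omega,\Omega^\star(M))\le H\,(J^\star_M-J_M(\omega))$, so $\varphi(t)=Ht$ is a sharpness function.

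As an alternative to citing Hoffman's lemma, one can argue directly. Compare $J^\star_M-J_M$ with $\mathrm{dist}_\infty(\cdot,\Omega^\star)$ over the vertices of the finitely many polytope cells on which $J_M$ is affine. On each cell the gap is affine and vanishes only on the cell's intersection with $\Omega^\star$, while the distance is convex. So the ratio of distance to gap is maximized at vertices not in $\Omega^\star$ and is therefore finite; the constant is the maximum of these finitely many ratios.

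The main subtlety is making sure that finiteness of $\mathcal{M}$ really yields a finite family of linear pieces. This needs each KL term to be finite, which is guaranteed by $M\ll M'$ in \eqref{eqn_alt_set_definition}, and a nonempty $\mathrm{Alt}(M)$, which follows from Assumption~\ref{assumption_w_0_condition}. The second point is that Hoffman's constant is global, so linearity holds on all of $\Omega(M)$ and not merely locally.
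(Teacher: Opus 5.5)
Your proposal is correct and matches the paper's proof. The paper also lifts to the polyhedron $\{(\omega,t):\omega\in\Omega(M),\ t\le\langle\omega,k_i\rangle\ \forall i\}$, applies Hoffman's bound to the optimal face (packaged as its LP error-bound lemma), evaluates at $t=J_M(\omega)$ and projects onto the $\omega$-coordinates, while your vertex-ratio alternative is also valid but not needed. One small inaccuracy: Assumption~\ref{assumption_w_0_condition} does not guarantee $\mathrm{Alt}(M)\neq\emptyset$ inside a finite $\mathcal{M}$, because the witnesses in Lemma~\ref{lemma_auxiliary_positive_omegas} are constructed perturbations that need not lie in the family; however, the empty case is degenerate ($\Omega^\star(M)=\Omega(M)$), and the paper simply takes a nonempty finite set of alternatives.
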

We note that the definition of sharpness in this section, which  appears later in the sample complexity upper bound is representative of the worst case sharpness of $J_M(\omega)$ approaching the set of optimal allocations from any direction. However, it would be interesting future work to see if there was an algorithmic formulation which could exploit more approachable paths to the optimal set $\Omega^*(M)$ for improved sample complexity.

\begin{figure}
    \centering
    \includegraphics[width=0.4\linewidth]{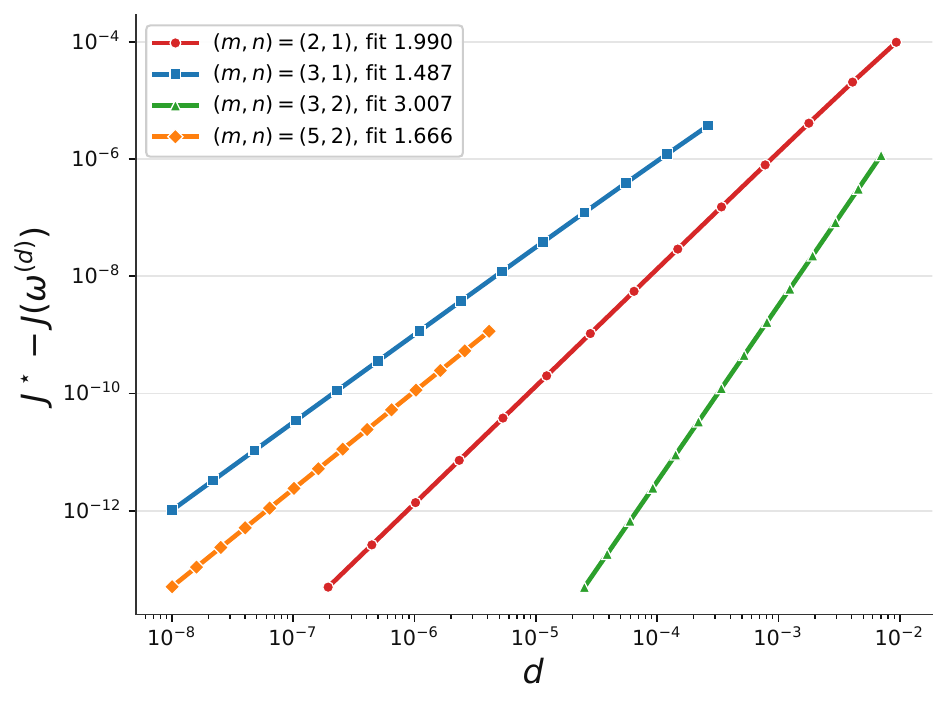}
    \caption{Empirical sharpness scaling for the MDP family of Theorem~\ref{theorem: arbitrary_sharpness}.
For each choice of integers $m>n\geq 1$, we plot the objective gap
$J^\star - J(\omega^{(d)})$ with different values of $d$.
On log--log axes, the slope of each curve estimates the gap exponent $m/(m-n)$, and hence the reciprocal slope estimates the sharpness exponent of the function $\varphi$, namely $(m-n)/m$.
The different choices of $(m,n)$ therefore realize different polynomial sharpness exponents.}
    \label{fig:main_figure}
\end{figure}

\subsection{Implicit and Explicit Upper Bounds}

We can now provide non-asymptotic upper bounds for the sample complexity of the NaS algorithm described in Section \ref{section_algorithm}. We provide an overview of the techniques used for the analysis in Appendix \ref{app_proof_overview} and the full proof is shown in the rest of Appendix \ref{full_app_upper_bound_proof}. Here we display an implicit form of the upper bound and we provide interpretation for the problem dependent terms below\footnote{We also include a table of notation in Appendix \ref{table_notation_appendix} which defines common notation used throughout the paper.}.

\begin{theorem} \label{theorem_non_asymp_upper}
Under Assumptions \ref{assumption_comminicating_aperiodic}-\ref{assumption_w_0_condition},  
NaS in Algorithm \ref{algo:nas}, we can provide the following non-asymptotic upper bound for the sample complexity at confidence requirement $\delta\in(0,1)$.
    \begin{equation*}
        \mathbb{E}[\tau_\delta] \leq \inf \left\{t\in\mathbb{N}: b(t,\delta) < (t-\sqrt{t}) T(M)^{-1} - \ell_0(t,M)\right\} + 7
    \end{equation*}
    where 
        \begin{align*}
            & \ell_0(t,M) = \tilde{\mathcal{O}}\bigg( S^{\frac{5}{2}}A\log(L)m^{-\frac{1}{2}} \eta_m^{-\frac{1}{2}}t^{\frac{1+\alpha+\beta}{2}}\\
            +& t\log(L)(\tilde{L}+\kappa_M)\tilde{L}\left[t^{-\frac{1}{4}} + \frac{A}{\underline{\rho}_M}\varphi_M^{\frac{1}{2}}\left(\frac{S^{\frac{3}{2}}m^{\frac{1}{2}}A\log(L)(S^2+\kappa_\Omega)}{\eta_m^{\frac{1}{2}}t^{\frac{1-(\alpha+\beta)}{4}}}\right) +  \frac{m^\alpha}{t^{\alpha(1-(\alpha+\beta))/2}A^\alpha\eta_m^\alpha}\right]\bigg)
    \end{align*}
\end{theorem}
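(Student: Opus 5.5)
The plan follows the template of \citet{poiani2025non} for Track-and-Stop: write $\mathbb{E}[\tau_\delta]=\sum_{t\ge0}\mathbb{P}(\tau_\delta>t)$ and, for each $t$, construct a good event $\mathcal{E}_t$ such that on $\mathcal{E}_t$ the stopping statistic satisfies
\[
t\,T(N_t/t,M_t)^{-1}\;\ge\;(t-\sqrt t)\,T(M)^{-1}-\ell_0(t,M).
\]
Then, for every $t\ge t_0(\delta):=\inf\{t: b(t,\delta)<(t-\sqrt t)T(M)^{-1}-\ell_0(t,M)\}$, the event $\{\tau_\delta>t\}$ is contained in $\mathcal{E}_t^c$. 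This gives
\[
\mathbb{E}[\tau_\delta]\le t_0(\delta)+\sum_t\mathbb{P}(\mathcal{E}_t^c).
\]
It remains to make $\mathbb{P}(\mathcal{E}_t^c)$ summable with a total of at most $7$. I would obtain this by defining $\mathcal{E}_t$ through roughly seven concentration events, each with failure probability $O(1/t^2)$ and a normalized constant. $\delta$-correctness is handled separately in the standard way, via the GLR threshold $b$ and a mixture-martingale deviation bound for transition KLs.

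The event $\mathcal{E}_t$ has three layers.

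\emph{Layer 1: forced exploration.} Since $\epsilon_t=\max(1+\nu,N_t(s_t))^{-\alpha}$ and the softmax forcing policy puts mass of order $N_t(s)^{-\beta}$ on under-sampled actions, communicating and aperiodicity (Assumption~\ref{assumption_comminicating_aperiodic}) give a lower bound $N_t(s,a)\gtrsim (\eta_m/m)\,t^{1-(\alpha+\beta)}/A$ with high probability. Here $m$ is a connectivity/mixing length and $\eta_m$ the minimal $m$-step reachability probability under the uniformly exploring kernel. The argument splits the horizon into blocks of length $m$ and applies Azuma to the indicator of reaching each pair within a block.

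\emph{Layer 2: model concentration.} On layer 1, per-pair Bernstein/$L^1$ concentration gives $\|P_t-P\|_1\lesssim S^{3/2}\sqrt{m\log t/(\eta_m t^{1-(\alpha+\beta)})}$. Projecting onto $G_\mu(L)$ at most doubles this error. Assumption~\ref{assumption_P_in_G_bounded_ratio} then makes $M\mapsto \KL_{M|M'}$ Lipschitz with constant of order $\log(L)\tilde L$, so that $|T(\omega,M_t)^{-1}-T(\omega,M)^{-1}|$ is controlled uniformly in $\omega$.

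\emph{Layer 3: tracking.} This is the main obstacle, and has two parts.

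First, stability of the selector: I need to show that $\omega_t^\star$ is close to $\Omega^\star(M)$. Because $J_{M_t}$ is uniformly close to $J_M$, and $\Omega(M_t)$ is close to $\Omega(M)$ in Hausdorff distance (with perturbation constant $\kappa_\Omega$ for the flow constraints), $\omega_t^\star$ is near-optimal for $J_M$ up to an error $\varepsilon_t$. The sharpness function then gives
\[
\mathrm{dist}_\infty(\omega_t^\star,\Omega^\star(M))\le\varphi_M(\varepsilon_t).
\]
This is the source of the $\varphi_M^{1/2}$ term. The square root arises because I need a $2$-norm/min-norm argument to pin the selector: the min-norm element is strongly-convex-stable, so distances between sets translate into square-root distances between selectors.

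Second, convergence of empirical frequencies: I need to show that $N_t/t$ tracks the running average of the stationary distributions of $\pi_t$. The plan is to use a Poisson-equation decomposition for the non-homogeneous chain. This yields a martingale term of order $\sqrt t$, which becomes the $t^{-1/4}$ contribution after balancing. It also yields drift terms bounded by the variation of consecutive policies, which requires that the mixing constant $\kappa_M$ and the minimal policy mass $\underline\rho_M$ (guaranteed positive by Lemma~\ref{lemma_auxiliary_positive_omegas}) control the sensitivity of the stationary distribution to policy changes. Finally, the forcing mixture itself contributes a bias of order $\epsilon_t$, which gives the $m^\alpha/(t^{\alpha(1-(\alpha+\beta))/2}A^\alpha\eta_m^\alpha)$ term.

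Assembling the layers, $\|N_t/t-\omega^\star\|_\infty$ is bounded by the bracket appearing in $\ell_0$. The Lipschitzness of $\omega\mapsto T(\omega,M)^{-1}$, with constant $\log(L)(\tilde L+\kappa_M)\tilde L$, converts this into the loss $\ell_0(t,M)$. The early-time $\sqrt t$ slack and the first term of $\ell_0$ absorb the initial-phase and estimation error.
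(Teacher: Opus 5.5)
Your architecture uses the same ingredients as the paper: a forced-exploration event, KL concentration with Pinsker, projection onto $G_\mu(L)$, KL Lipschitzness on $G_\mu(L)$, min-norm selector stability (the source of the square root), Poisson-equation tracking, and summable failure probabilities. The difference is in the final conversion. You bound $\|N_t/t-\omega^\star\|_\infty$ and then use Lipschitz continuity of $J_M$. The paper instead does the following:
\begin{itemize}
\item it tracks $N_t\approx\sum_k\omega^\star_{k-1}$ and uses $\inf_{M'}\sum_j\ge\sum_j\inf_{M'}$;
\item it swaps $\alt(M_t)$ for $\alt(M_j)$ and uses that $\omega^\star_j$ is optimal for $M_j$, so each summand equals $J^\star_{M_j}$;
\item it compares $J^\star_{M_j}$ with $J^\star_M$ through $\sum_j\|M_j-M\|_1$, which is the source of the $t^{(1+\alpha+\beta)/2}$ term.
\end{itemize}
Your route is viable. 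That term reappears as $t\,S(1+2\log L)\|P_t-P\|_{\infty,1}$. For the final step the one-sided bounds $\mathrm{dist}_\infty(\omega^\star_k,\Omega^\star(M))$ suffice, because $\Omega^\star(M)$ is convex. However, two steps fail as written.

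\textbf{Layer 1: Azuma is too weak.} For block indicators $X_k$ with conditional means $Q_k$, Azuma gives $\sum_kX_k\ge\sum_kQ_k-\sqrt{2(t/m)\log(1/\lambda)}$. The forcing only guarantees $\sum_kQ_k\gtrsim\eta_m m^{-(\alpha+\beta)}(t/m)^{1-(\alpha+\beta)}$. For $\alpha+\beta\ge 1/2$ (allowed, since only $\alpha+\beta<1$ is required) and $\lambda=1/t^2$, the deviation dominates, so you get no lower bound on $N_t(s,a)$ and Layers 2 and 3 have nothing to stand on. You need a variance-adaptive bound for Bernoulli sums with small conditional means. The paper uses the supermartingale $\exp\bigl(\sum_k(Q_k/2-X_k)\bigr)$ with Doob's inequality, which gives $\sum_kX_k\ge\frac12\sum_kQ_k-W$ uniformly in time with probability $1-e^{-W}$. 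A Freedman-type inequality would also work.

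\textbf{Layer 3: the selector step is one-sided, but tracking needs the selector pinned.} The Poisson drift term needs $\|P_k-P_{k-1}\|_\infty$ to be small, and the paper gets this from $\|\pi^\star_k-\pi^\star\|_\infty\le\xi$. Set $x=\omega^\dagger(M)$ and $y=\omega^\dagger(M_t)$. The projection argument needs two things:
\begin{itemize}
\item some $\bar y\in\Omega^\star(M)$ close to $y$, which is what you proved;
\item some $\bar x\in\Omega^\star(M_t)$ close to $x$.
\end{itemize}
The second is lower hemicontinuity of the argmax. It fails in general when $\Omega^\star(M)$ is not a singleton: a perturbation can shrink the optimal set to one end of a face, and $\omega^\dagger(M_t)$ can then jump back and forth as $M_t$ fluctuates. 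The paper obtains the reverse inclusion by applying sharpness at the perturbed model, i.e., condition (A4) of Lemma~\ref{lem:corrected_selector_stability} with $N=M_t$ and the same $\varphi_M$ uniformly near $M$. Your plan needs this hypothesis, or an equivalent one, stated explicitly.

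\textbf{Constant bookkeeping.} Under Assumption~\ref{assumption_P_in_G_bounded_ratio}:
\begin{itemize}
\item the Lipschitz constant of $\omega\mapsto T(\omega,M)^{-1}$ is $2SA\log L$, and that of the KL in the model is $S(1+2\log L)$; neither involves $\tilde L$;
\item the factor $(\tilde L+\kappa_M)\tilde L$ arises inside the tracking radius, from the bound on the Poisson solutions and the stationary-distribution perturbation;
\item the factor $A/\underline{\rho}_M$ comes from converting allocation errors into policy errors via $\|\pi_{\omega_1}-\pi_{\omega_2}\|_\infty\le\frac{A+1}{\rho(M)}\|\omega_1-\omega_2\|_\infty$.
\end{itemize}
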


\begin{remark}
    Extending our analysis to stochastic rewards would primarily add reward-estimation terms to the first component of $\ell_0$ and to the argument of $\varphi$, with dependence on the reward variances. The navigation analysis should remain largely unchanged, so we focus here on the main difficulty: controlling the transition dynamics.
\end{remark}
The error term $\ell_0$ captures two distinct sources of finite-time cost. The first quantity reflects the time required for the forcing policy $\pi_f$ to ensure sufficient coverage, so that the empirical transition estimates concentrate around the true transition kernel (see Lemma~\ref{lemma_bounding_sum_of_M_estimate_errors}). The second quantity measures how quickly the empirical visitation frequencies converge to the stationary distributions induced by the sequence of policies $(\pi_t)_{t \geq 1}$, and arises as the radius term in our main concentration result, Proposition~\ref{prop_concentration_Nt_vs_SumOmega}.
The constant $L$ is from our Assumption \ref{assumption_P_in_G_bounded_ratio} on the boundedness of the transitions, the sharpness function $\varphi$ is  outlined in Section \ref{main_paper_sharpness_subsection}, and $\alpha$ and $\beta$ are hyperparameters of the NaS Algorithm \ref{algo:nas}. The remaining instance-dependent constants admit natural interpretations as follows, although formal definitions are given in the appendix and a table of notation is provided in Appendix \ref{table_notation_appendix} for convenience.
\begin{itemize}
    \item The terms $m$, $\kappa_M$, and $1/\eta_m$ quantify the connectivity of the underlying MDP: smaller values correspond to faster mixing and tighter concentration. 
\item The term $\tilde{L}$
 is a measure of how well the $\omega^*$ policy-induced Markov chain mixes, being bounded tightly when there is sufficient ergodicity and coverage (i.e., the policy stays close enough to uniform and avoids concentrating on a small subset of states/actions, see Appendix~\ref{subsection_bounding_L_tilde}). 
 \item The constant $\kappa_\Omega$ measures the sensitivity of the feasible allocation set $\Omega(M)$ to perturbations of the transition kernel (see Lemma \ref{lem:feasible_set_stability} for details).
 \item Finally, $\underline{\rho}_M$ is the smallest entry of the optimal allocation $\omega^*$ \footnote{A similar multiplicative term appears in the related non-asymptotic lower bounds found in \citep[Theorem 3.3]{sethi2026asymptotically}, however more work is required to identify if such a dependence is unavoidable in our setting as well.}. 
\end{itemize}

\begin{remark}
     Note that the error term $\ell_0$ is sublinear in $t$ and also $b(t,\delta)= \mathcal{O}(\log(1/\delta)+\log(t))$, hence Theorem \ref{theorem_non_asymp_upper} recovers the asymptotic optimality of NaS from \citep{al2021navigating} as desired. In particular, NaS achieves the exactly correct coefficient for $\log(1/\delta)$ in the limit $\delta\rightarrow 0$. This is stated in Theorem \ref{theorem_asymptotic_nas_upper_bound}. Note that we get exact optimality in the limit because we are considering the setting with deterministic rewards, whereas in the original NaS analysis by \citet{al2021navigating} they are suboptimal by a constant of 2. This suboptimality is due to their stopping time taking into account both transition and rewards separately. However, we suspect that this factor of 2 can be avoided regardless with a careful modification of their stopping rule, dealing with these complexities simultaneously.
\end{remark}

\begin{theorem}[Asymptotic Sample Complexity Upper Bound]
\label{theorem_asymptotic_nas_upper_bound}
Under Assumptions \ref{assumption_comminicating_aperiodic}-\ref{assumption_w_0_condition},  
NaS in Algorithm \ref{algo:nas} is $\delta$-PC and satisfies:
\begin{align*}
\limsup_{\delta \to 0} 
\frac{\mathbb{E}_M[\tau_\delta]}{\log(1/\delta)}
&\le \, T(M).
\end{align*}
\end{theorem}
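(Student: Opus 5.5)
The statement has two parts: $\delta$-PC correctness and the asymptotic rate. I will prove them separately. The rate will be read off from the non-asymptotic bound of Theorem~\ref{theorem_non_asymp_upper}.

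\textbf{$\delta$-PC.} I would argue as in \citet{al2021navigating}, adapted to deterministic rewards. The first step is to show that on the event $\{\tau<\infty,\ \hat\pi_\tau\notin\Pi^\star(M)\}$ the true model $M$ belongs to $\mathrm{Alt}(M_\tau)$. This holds because $\hat\pi_\tau$ is the optimal policy of $M_\tau$. The uniqueness in Assumption~\ref{assumption_comminicating_aperiodic} handles ties, and $P\in G_\mu(L)$ together with projection onto $G_\mu(L)$ guarantees $M_\tau\ll M$ where needed. By the stopping rule,
\[
\sum_{s,a}N_\tau(s,a)\,\KL_{M_\tau|M}(s,a)\ \ge\ \tau\,T(N_\tau/\tau,M_\tau)^{-1}\ \ge\ b(\tau,\delta).
\]
Two technical points arise here. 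First, the projection $P^\Pi_t$ is not the empirical kernel. I would handle this either by noting that the KL of the empirical kernel to $M$ dominates the relevant quantity, or by bounding the sum for the unprojected empirical kernel through the minimality of the projection. Second, the event must be controlled uniformly in time. For this I would use the standard time-uniform deviation bound for empirical categorical distributions (the mixture/Laplace method of Jonsson et al.), which gives
\[
\mathbb{P}\Big(\exists t:\ \sum_{s,a}N_t(s,a)\,\KL\big(P_t(\cdot|s,a)\,\|\,P(\cdot|s,a)\big)>b(t,\delta)\Big)\le\delta .
\]
This inequality is exactly what the threshold $b$ was designed for. Combining the two displays gives error probability at most $\delta$. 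Finiteness of $\tau$ almost surely, or at least in expectation, follows from the expectation bound below.

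\textbf{Rate.} Write $t_\delta:=\inf\{t: b(t,\delta)<(t-\sqrt t)T(M)^{-1}-\ell_0(t,M)\}$. Theorem~\ref{theorem_non_asymp_upper} gives $\mathbb{E}[\tau_\delta]\le t_\delta+7$, so it suffices to show $\limsup_{\delta\to0} t_\delta/\log(1/\delta)\le T(M)$. Fix $\epsilon>0$. Since each $N_t(s,a)\le t$, we have $b(t,\delta)\le\log(1/\delta)+C\log t$ with $C$ depending only on $S,A$. The terms $\ell_0(t,M)$ and $\sqrt t$ are both $o(t)$. This uses $\alpha+\beta<1$, all exponents of $t$ in $\ell_0$ being strictly below $1$, and $\varphi_M(x)\to0$ as $x\to0$. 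The last point should be justified: take $\varphi$ continuous at $0$, which holds locally by compactness and continuity of $J_M$.

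Hence there is $t_0(\epsilon)$, independent of $\delta$, such that for all $t\ge t_0$,
\[
(t-\sqrt t)T(M)^{-1}-\ell_0(t,M)-C\log t\ \ge\ \frac{t}{(1+\epsilon)T(M)}.
\]
Now set $t=\max\{t_0,\lceil(1+\epsilon)T(M)\log(1/\delta)\rceil+1\}$. For this $t$ the defining condition of $t_\delta$ holds, so $t_\delta\le t$. Dividing by $\log(1/\delta)$ and letting $\delta\to0$, then $\epsilon\to0$, gives the claim.

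\textbf{Main obstacle.} The delicate step is the correctness argument. It requires reconciling the projected estimate $M_\tau$, which the stopping statistic uses, with the empirical kernel, which the concentration inequality controls. It also requires a careful proof that $M\in\mathrm{Alt}(M_\tau)$, including the absolute continuity requirement. I would first try to show that $\inf_{M'\in\mathrm{Alt}(M_t)}\sum_{s,a} N_t\KL_{M_t|M'}$ can only decrease when $M_t$ is replaced by the empirical kernel, or else absorb the difference into $b$ using the $L$-bounded density ratios of Assumption~\ref{assumption_P_in_G_bounded_ratio}.
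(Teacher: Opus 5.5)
Your rate argument is the paper's own. The paper gets the $\limsup$ bound only by remarking that $\ell_0(t,M)=o(t)$ and $b(t,\delta)=O(\log(1/\delta)+\log t)$ in Theorem~\ref{theorem_non_asymp_upper}. Your version adds details the paper leaves implicit: a deterministic bound on the data-dependent threshold, the fact that $\varphi_M(x)\to0$ as $x\downarrow0$ via the compactness modulus behind (A4), and a choice of $t$ at which the defining inequality also holds for all later times.

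For $\delta$-PC the paper gives nothing beyond a pointer to \citet{al2021navigating}. Here your proposal has a genuine gap, exactly at the step you flag. On the error event you get $\sum_{s,a}N_\tau(s,a)\,\KL\bigl(P^\Pi_\tau(\cdot|s,a)\,\|\,P(\cdot|s,a)\bigr)\ge b(\tau,\delta)$. The mixture-martingale bound, however, controls this sum for the unprojected $P_\tau$. The cited argument never meets this issue because its statistic is evaluated at the empirical model.

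Your first fix is false: $\KL(\cdot\,\|\,P)$ need not decrease under the $\ell_1$-projection onto $G_\mu(L)$. For a single pair take $S=3$, $\mu$ uniform and $L=100/3$, so the constraint is $Q(i)\ge 0.01$. Let $P=(0.01,0.5,0.49)$ and $P_t=(0,0.02,0.98)$ (counts $0,1,49$). Every $Q=(0.01,q,0.99-q)$ with $q\in[0.01,0.02]$ is an $\ell_1$-projection. Taking $q=0.01$ gives $\KL(Q\|P)\approx0.640>0.615\approx\KL(P_t\|P)$.

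The natural implementations of your other two fixes lose a multiplicative constant. Minimality of the projection only yields $\|P^\Pi_t-P\|_1\le2\|P_t-P\|_1$. Converting back to KL with the bounded density ratios (a $\chi^2$ bound plus Pinsker) then gives $\KL(P^\Pi_t\|P)\le c\,\KL(P_t\|P)$ with $c$ of order $L/\min\mu>1$. Closing the argument this way needs a threshold of order $c\log(1/\delta)$. That is incompatible with the leading constant $T(M)$ you prove using the same $b$.

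So you need a new ingredient. One option is a time-uniform deviation bound for the projected estimator with the same $\log(1/\delta)$ term. Another, which changes the algorithm, is to evaluate the stopping statistic at the unprojected model. The paper does not supply this either; its proof of the lemma establishing $\mathcal E^3_T$ treats the two KLs interchangeably.

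Separately, your claim that uniqueness in Assumption~\ref{assumption_comminicating_aperiodic} handles ties is not right. Uniqueness is assumed for $M$, not for $M_\tau$. Suppose $\pi^\star\in\Pi^\star(M_\tau)$ but policy iteration returns another element of $\Pi^\star(M_\tau)$. Then $M\notin\alt(M_\tau)$, yet the stopping rule can still fire, since models near $M_\tau$ have optimal sets inside $\Pi^\star(M_\tau)$. You need either a recommendation rule consistent with the alternative set, or $\alt$ defined relative to $\hat\pi_\tau$.
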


We can now finally provide an explicit upper bound for the sample complexity. The form of this final upper bound is dependent on the problem's sharpness function $\varphi$. We discussed in the previous section that one appropriate form of $\varphi$ is linear. In such  a case, we can simplify the implicit upper bound above to the explicit upper bound here. See Corollary \ref{cor:polynomial_sharpness} in the Appendix for the proof.

\begin{corollary}[Linear Sharpness]
Suppose there exists $C_\varphi>0$ such that
$\varphi_M(x)\le C_\varphi x $ for all relevant $ x\ge 0$. Let $L_\delta:=\log(1/\delta)$ and define
\[
B_\delta
:=
T(M)\Bigl(
L_\delta
+
S^2 A \log\!\bigl(e T(M)(1+L_\delta)\bigr)
+
2S^2 A \log(2e)
\Bigr).
\]
Define constants
\[
A_1:=S^{7/2}A\log(L)m^{-1/2}\eta_m^{-1/2},\quad
A_2:=\log(L)(\widetilde L+\kappa_M)\widetilde L,
\]
\[
A_3:=A_2\,\frac{A}{\underline{\rho}_M}\,C_\varphi\,
\left[S^{3/2}m^{1/2}A\log(L)(S^2+\kappa_\Omega)\right]^{\frac{1}{2}}
\eta_m^{-1/4},\quad
A_4:=A_2\,\frac{m^\alpha}{A^\alpha\eta_m^\alpha}.
\]
Let $
r_1=\tfrac{1+\alpha+\beta}{2},
r_2=\tfrac{3}{4},
r_3=\tfrac{7+\alpha+\beta}{8},
r_4=1-\tfrac{\alpha(1-(\alpha+\beta))}{2}.
$
Then $r_i<1$ for all $i\in\{1,2,3,4\}$, and

\begin{equation}\label{eqn_main_paper_explicit_final}
    \mathbb E_M[\tau_\delta]
\le
B_\delta
+
\widetilde{\mathcal O}\!\left(
\sqrt{B_\delta}
+
T(M)\sum_{i=1}^4 A_i B_\delta^{r_i}
\right)
.
\end{equation}
\end{corollary}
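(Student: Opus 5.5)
The plan is to start from the implicit bound of Theorem~\ref{theorem_non_asymp_upper} and invert it. Write $J^\star = T(M)^{-1}$. Any $t$ that satisfies
\[
b(t,\delta) + \ell_0(t,M) + \sqrt{t}\,J^\star < t J^\star
\]
gives an upper bound on $\mathbb{E}[\tau_\delta]-7$. So it suffices to exhibit an explicit $t^\star$ satisfying this inequality.

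\textbf{Step 1: simplify $\ell_0$.} With $\varphi_M(x)\le C_\varphi x$, we get $\varphi_M^{1/2}(y)\le C_\varphi^{1/2} y^{1/2}$. Substituting $y\propto t^{-(1-(\alpha+\beta))/4}$ turns the $\varphi$-term into a constant times $t^{-(1-(\alpha+\beta))/8}$. Multiplying by the prefactor $t$ produces $t^{(7+\alpha+\beta)/8}=t^{r_3}$. The other pieces are handled the same way: the first summand is already $t^{r_1}$, the $t^{-1/4}$ piece gives $t^{3/4}=t^{r_2}$, and the last piece gives $t^{r_4}$. This yields $\ell_0(t,M)\le \tilde{\mathcal O}(\sum_i A_i t^{r_i})$, where the constants are collected into $A_1,\dots,A_4$. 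Absorbing $C_\varphi$ versus $C_\varphi^{1/2}$, and matching the extra powers of $S$, are bookkeeping handled inside $\tilde{\mathcal O}$ and the definitions of $A_i$. Next, $r_i<1$ holds because $\alpha+\beta<1$ and $\alpha\in(0,1)$, so each exponent is checked directly.

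\textbf{Step 2: bound the threshold.} For $t$ up to a given size, $N_t(s,a)\le t$, so $b(t,\delta)\le L_\delta+S^2A\log(e(1+t))$. The aim is to show that, for $t\ge B_\delta$ in the relevant range, $T(M)\,b(t,\delta)\le B_\delta$ up to lower-order terms. This uses the elementary lemma that $t\ge c_1+c_2\log t$ holds whenever $t\ge c_1+2c_2\log(2c_2)+c_2\log(\ldots)$. The lemma is what produces the exact form of $B_\delta$, with its $S^2A\log(eT(M)(1+L_\delta))$ and $2S^2A\log(2e)$ terms.

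\textbf{Step 3: choose $t^\star$.} Set
\[
t^\star = B_\delta + C\sqrt{B_\delta} + C\,T(M)\sum_i A_i B_\delta^{r_i}
\]
for a suitable logarithmic factor $C$. Verifying the displayed inequality then reduces to checking that the perturbation terms evaluated at $t^\star$ are dominated by their values at $B_\delta$, up to constants. For the $t^{r_i}$ terms this follows from $r_i<1$ and $(B+x)^{r}\le B^{r}+x^{r}$. For the $\sqrt{t}\,J^\star$ term, concavity gives the $\sqrt{B_\delta}$ contribution.

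\textbf{Main obstacle.} The delicate step is this self-consistency. The correction terms are evaluated at $t^\star$, not at $B_\delta$, so one must show that a fixed-point-style inequality $t\ge B+\sum_i a_i t^{r_i}$ is solved by $t=B+O(\sum_i a_i B^{r_i})$. If $a_i$ is large relative to $B^{1-r_i}$ this fails, and extra terms of order $a_i^{1/(1-r_i)}$ appear. The first thing to try is to treat those terms as $\delta$-independent and absorb them into $\tilde{\mathcal O}$ (equivalently, restrict to $\delta$ small enough that $B_\delta^{1-r_i}\gtrsim T(M)A_i$). This would be stated explicitly, and the $+7$ would be absorbed into the $O(1)$.
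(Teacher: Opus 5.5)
Your proposal follows essentially the same route as the paper's proof: reduce $\ell_0$ to $\widetilde{\mathcal O}(\sum_i A_i t^{r_i})$ using $\varphi_M^{1/2}(y)\le C_\varphi^{1/2}y^{1/2}$, bound $b(t,\delta)\le L_\delta+S^2A\log(2et)$ via $N_t(s,a)\le t$, and invert the implicit condition of Theorem~\ref{theorem_non_asymp_upper}. Exhibiting one admissible $t^\star$ is equivalent to the paper's step of bounding the largest $t<\tau^{\mathrm{ub}}_\delta$. The self-consistency issue you flag is real, and the paper skips it by simply evaluating the sublinear terms ``at the leading scale $t\asymp T(M)\log(1/\delta)$''. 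Because $t\le B+a t^{r}$ only forces $t\le B+O(aB^{r})$ when $a\lesssim B^{1-r}$, the displayed bound needs either your small-$\delta$ condition $T(M)A_i\lesssim B_\delta^{1-r_i}$ or the extra $\delta$-independent terms $(T(M)A_i)^{1/(1-r_i)}$.
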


We note that this final upper bound is much richer in information that previous works. In particular, the right hand side of this explicit upper bound does not require asymptotically small $\delta$. Hence, with this non-asymptotic upper bound, we are able to see more sources of complexity before the $T(M)\log(1/\delta)$ term dominates in the limit. For example, from the sum in \eqref{eqn_main_paper_explicit_final}, we see how terms contributing to the connectivity of the MDP (e.g. $m$ and $\kappa_M$) appear in the coefficients $A_i$ with different exponents $r_i$. Indeed, we also see that the hyperparameters used in NaS (i.e. $\alpha$ and $\beta$) affect these exponents and the sample complexity. Overall, this is an informative explicit non-asymptotic upper bound and it would be interesting to analyse this further to understand what terms are unavoidable in the sample complexity and what terms can be improved with different algorithmic choices. We discuss this further in the next section.

\section{Related Work \& Discussion}

Many ideas in BPI are inherited from best-arm identification and sequential testing in bandits, most notably Track-and-Stop \citep{garivier2016optimal}, which combines an allocation-tracking sampling rule with a likelihood-based stopping rule. This perspective has since been extended to settings with multiple correct answers \citep[e.g.][]{degenne2019pure,poiani2025pure}, alternative objectives \citep[e.g.][]{pmlr-v267-lazzaro25a,jourdan2023varepsilon,tuynman2025batch,jourdan2025optimal,garivier2017thresholding}, and richer feedback structures \citep[e.g.][]{russo2025pure}. More recently, non-asymptotic analyses have been developed for game-theoretic variants of Track-and-Stop \citep{degenne2019non} and for the original Track-and-Stop algorithm itself \citep{poiani2025non}, with techniques that motivate our analysis. In RL, much of the classical literature focuses on regret minimization \citep[e.g.][]{boone2025regret,NEURIPS2018_d693d554,06a276ef-3e35-33d0-912a-626bc0841b5e}, whereas BPI is a fixed-confidence sequential testing problem whose goal is to identify an optimal policy as quickly as possible. Existing BPI methods include generative-model approaches \citep[e.g.][]{al2021adaptive,taupin2023best} and online single-trajectory methods, most prominently Navigate-and-Stop \citep{al2021navigating}, with extensions to multi-reward settings \citep{russomulti,russo2025adaptive}. However, in the online NaS setting, existing guarantees are asymptotic (see Appendix \ref{appendix_related_work} for a more detailed overview).

A main takeaway of this work is that asymptotic optimality alone does not fully characterize the difficulty of online BPI. Even when an algorithm attains the optimal leading coefficient $T(M)$, its finite-confidence performance can be governed by additional instance-dependent effects, including the cost of transition coverage, the connectivity and mixing properties of the MDP, and the sharpness of the allocation objective. Our bounds make these effects explicit, suggesting concrete directions for improved exploration strategies that preserve asymptotic optimality while reducing finite-time costs.

A natural next step is to understand whether the dependencies in our non-asymptotic upper bound can be improved, ideally while preserving the asymptotic optimality of the algorithm. More broadly, we believe that our bounds help identify the aspects of problem instances that must be controlled in the development of new BPI methods. To complement these upper bounds, it would also be interesting to study tight, non-asymptotic lower bounds for BPI. Concurrent work has considered such lower bounds for related problems in sequential testing, but significant work would be required to obtain tight, expressive, non-asymptotic lower bounds in our setting 
\citep[Appendix~A.4, ][]{sethi2026asymptotically}. Another important direction for future work is the development of methods that are not only asymptotically optimal, but also computationally efficient and practically deployable. This is a current limitation of NaS-style strategies. A further promising avenue is to extend these non-asymptotic guarantees to the linear reinforcement learning setting, where richer function approximation classes are considered. We conclude by noting that this work fills a fundamental gap in the BPI literature by establishing finite-confidence guarantees for instance-optimal methods, and we hope it serves as a foundation for further advances in both theory and practice.

\begin{ack}
Joseph Lazzaro was supported by the Roth Scholarship through the Department of Mathematics at Imperial College London. 
\end{ack}

\bibliographystyle{abbrvnat}
\bibliography{bibliography}

\newpage
\appendix

\section*{Appendix}
\tableofcontents

\newpage
\section{Further Related Work}\label{appendix_related_work}

\paragraph{Best Arm Identification \& Sequential Testing in Bandits} Methods for BPI are built upon ideas constructed in for optimal Best Arm Identification (BAI) and sequential testing in bandits. In particular, much of the work discussed here is a consequence of the seminal paper by \citet{garivier2016optimal} which introduces the first asymptotically optimal method, Track-and-Stop. The main idea being to track the proportion of actions played across the action space with estimated optimal allocations (and then stop when sufficient evidence has been gathered). This blossomed into research studying problems with multiple correct answers \citep[e.g. ][]{degenne2019pure, poiani2025pure}, different objectives \citep[e.g. ][]{pmlr-v267-lazzaro25a,jourdan2023varepsilon, tuynman2025batch, jourdan2025optimal, garivier2017thresholding}, and different types of feedback \citep[e.g.][]{russo2025pure}.
There has also been a line of work providing non-asymptotic analysis to these optimal methods. Notably \citet{degenne2019non} consider game-theoretic variants of Track-and-Stop and provided complementary non-asymptotic bounds using the optimistic structure of the decisions. More recently, \citet{poiani2025non} provide non-asymptotic analysis to the original Track-and-Stop algorithm, without making any optimistic modifications to the algorithm. Their theoretical techniques are what originally motivated the work in this paper on BPI. 
We note that the potential non-uniqueness of the true optimal allocations are well-handled in the bandit setting (both in the existing non-asymptotic and asymptotic analysis) due to 1-state nature of bandit problems and through the use of C-tracking. Whereas in the BPI setting of this paper, we have to be much more careful, this is discussed further in Section \ref{section_algorithm}. In this work, we study the more broad RL setting of BPI.

\paragraph{Best Policy Identification} In the full reinforcement learning setting we have both a state-space and action-space on which the agent needs to learn the dynamics of the MDP. Hence, although the existing algorithms for optimal BPI are strongly motivated by the existing work in bandits, the analysis can be significantly more involved and nuanced. We first note that there have been some optimal methods proposed in the setting under which  the agent has access to a generative model which can provide observations from \emph{any} state-action in every round \citep[e.g.][]{al2021adaptive,taupin2023best}. An alternative approach, which we consider in this work, is to consider settings in which the agent only has access to a single trajectory through which they must navigate through the MDP. In \citet{al2021navigating}, they provide the first asymptotically optimal algorithm for BPI with a single trajectory, called Navigate-and-Stop \citep[as it is motivated by the Track-and-Stop algorithm][]{pmlr-v132-kaufmann21a}. Subsequently there have been optimal methods proposed for various other sequential testing problems, such as multi-reward settings \citep{russomulti, russo2025adaptive}. However, our work is the first to provide non-asymptotic guarantees for an optimal BPI method. In particular, we provide non-asymptotic guarantees for the original Navigate-and-Stop algorithm. 

\paragraph{Other related works.}
Our work focuses on active sequential testing for BPI in online RL. A related but distinct line of work studies \emph{policy testing}, where the goal is to decide whether a fixed policy has value above a prescribed threshold, while minimizing the sample complexity of the test \citep[e.g.][]{ariu2025policy}. Concurrent work has also established non-asymptotic lower bounds for a broad class of sequential testing problems with Markovian data \citep{sethi2026asymptotically}. Their results, however, are derived for passive data-collection strategies, whereas BPI in our setting requires active exploration under navigation constraints; we discuss possible connections in later sections. Finally, regret-minimization objectives in RL have been extensively studied \citep[e.g.][]{boone2025regret,NEURIPS2018_d693d554,06a276ef-3e35-33d0-912a-626bc0841b5e}, but they address a fundamentally different criterion from fixed-confidence identification, where the objective is to stop as quickly as possible while returning an optimal policy.

\newpage
\section{Table of Notation}\label{table_notation_appendix}

\renewcommand{\arraystretch}{1.15}
\begin{longtable}{|p{0.1\textwidth}|p{0.85\textwidth}|}
  \hline
  \textbf{Notation} & \textbf{Definition}  \\ 
  \hline
$\mathcal{Z}$ & State-action space, $\mathcal{Z} = \mathcal{S} \times \mathcal{A}$. \\
\hline
$P_\pi$ & Transition induced by $\pi$, that is  $P_\pi((s', a') | (s, a)) = P(s'|s, a)\pi(a'|s')$. \\
\hline
$P_u$ & Transition induced by uniform policy. \\
\hline
$P_t$ & Transition induced by the policy $\pi_t$ of MR-NaS. \\
\hline
$M_t$ & Estimated MDP at time step $t$. \\
\hline
$\omega_t$ & Stationary distribution induced by the policy $\pi_t$. \\
\hline
$\omega^{*}_{\pi_t}$ & Stationary distribution induced by the policy $\pi^{*}_{\pi_t}$. \\
\hline
$\bar{\omega}^{*}_{t}$ & Averaged stationary distribution $\bar{\omega}^{*}_{t} := (1/t)\sum_{i=1}^{t}\omega^{*}_{\pi_i}$. \\
\hline
$\omega_u$ & Stationary distribution induced by the uniform policy. \\
\hline
$\omega^*$ & Optimal allocation with minimum norm $\omega^* = \argmin_{\omega \in \Omega^\star(M)} \|\omega\|_2^2$. \\
\hline
$\pi^*(a|s)$ & Oracle exploration policy, defined as $\pi^*(a|s) := \omega^*(a|s)/\sum_b \omega^*(b|s)$. \\
\hline
$\hat{\pi}_t(a|s)$ & Exploration policy at time step $t$, defined as $\hat{\pi}_t(a|s) := \omega^{*}_t(a|s)/\sum_b \omega^{*}_t(b|s)$. \\
\hline
$\mu_t$ & Defined as $\mu_t := 1/N_t(s)^{\alpha + 1}$. \\
\hline
$m_0$ & Maximum number of steps to move between any pair of states $(s, s')$. \\
\hline
$m$ & Maximum number of steps to move between any pair $(s, a), (s', a')$. \\
\hline
$n$ & Equal to $m = m_0 + 1$. \\
\hline
$\eta_k$ & Minimal probability of reaching $z'$ from $z$ in $n \leq k$ transitions using a \\
& uniform random policy, defined as \\
& $\eta_k := \min\{P^n_u(z'|z) | z, z' \in \mathcal{Z}, n \in \{1, \dots, k\}, P^n_u(z'|z) > 0\}$. \\
\hline
$\eta$ & Defined as $\eta := \frac{1}{n} \eta_{mn}$. \\
\hline
$B_\rho(P)$ or $B_\rho(M)$ & Ball of radius $\rho$ around a transition function $P$, defined as \\
& $\{P' : \max_{s, a} |P(\cdot | s, a) - P'(\cdot | s, a)| \leq \rho \}$. \\
\hline
$r_0$ & Ergodicity constant such that $P^r(z) \geq \nu_0$ for all $r \geq r_0$. \\
\hline
$\sigma_u$ & Constant, defined as $\sigma_u := \min_{z',z \in \mathcal{Z}} P^u(z'|z) \nu_u(z')$. \\
\hline
$\sigma(\epsilon, \pi)$ & Defined as $\sigma(\epsilon, \pi) := (1 - \epsilon) A \min_{s,a} \pi(a|s)^{\frac{r_0(n+1)}{\epsilon + r_0}} \sigma_u$. \\
\hline
$\theta(\epsilon, \pi)$ & $\theta(\epsilon, \pi) := 1 - \sigma(\epsilon, \pi)$. \\
\hline
$C(\epsilon, \pi)$ & $C(\epsilon, \pi) := 2/\theta(\epsilon, \pi)$. \\
\hline
$\rho(\epsilon, \pi)$ & $\rho(\epsilon, \pi) := \theta(\epsilon, \pi)^{1/r_0}$. \\
\hline
$L(\epsilon, \pi)$ & $L(\epsilon, \pi) := (1 - \rho(\epsilon, \pi))^{-1} C(\epsilon, \pi)$. \\
\hline
$\Tilde{L}$ & $\Tilde{L} = 2r_0\sigma_u[A\min_{a,s} \pi^*(a|s)]$\\
\hline
$\sigma_t$, $\theta_t$, $C_t$, $\rho_t$, $L_t$ & Respectively defined as $\sigma_t := \sigma(\epsilon_t, \pi_t)$, $\theta_t := \theta(\epsilon_t, \pi_t)$, etc. \\
\hline
$\phi(t,\lambda)$ & $\phi(t,\lambda)= \frac{1}{2}\sum_{k=1}^{\lfloor\frac{t}{m}\rfloor} \frac{\eta_m}{(km)^{\alpha+\beta}} - \log\frac{SA}{\lambda}$\\
  \hline
$\beta_1(n,\lambda)$ &  $\beta_1(n,\lambda) = (S-1) \log\!\left(e\!\left(1 + \frac{n}{S-1}\right)\!\right)
+ \log(SA/\lambda)$\\
\hline
$\psi(t,\lambda)$ & $\psi(t,\lambda)=2\sqrt{\frac{\beta_1(\phi(t,\lambda),\lambda)}{2\phi(t,\lambda)}}$\\
\hline
$\mathcal{E}^1_T(\lambda)$
&
\begin{equation*}
    \mathcal{E}^1_T(\lambda) = \left\{ \forall t \in [1,T], \forall (s,a):  \KL_{\hat{M}_t|M}(s,a) \leq  \frac{\beta_1(N_t(s,a),\lambda)}{N_t(s,a)} \right\}
\end{equation*}\\
\hline
$\mathcal{E}^2_T(\lambda)$
& 
\[
\mathcal{E}^2_T(\lambda)
= \bigg\{
\forall t \in [1,T], (s,a) \in \mathcal{S} \times \mathcal{A}, \,
N_t(s,a) \ge \frac{1}{2} \sum_{k=1}^{\lfloor t/m \rfloor} \frac{\eta_m}{(km)^{\alpha + \beta}}
- \log \left( \frac{SA}{\lambda} \right)
\bigg\}
\]\\
\hline
$\mathcal{E}_t^5(\lambda)$
&
\begin{equation*}
    \mathcal{E}_t^5(\lambda) = \left\{\forall (s,a) \in \mathcal{Z}, \, \bigg|\frac{N_t(s,a) - \sum_{k=1}^t \omega^*_{k-1}(s,a)}{t}\bigg| \leq \ell(t)\right\}.
\end{equation*}\\
\hline
$\underline{\rho}_M$ & $\underline{\rho}_M = \min_{s,a} \omega^*(s,a)$\\
\hline
$\kappa_{\Omega}$ & Stability of the feasible set $d_H\bigl(\Omega(N),\Omega(M)\bigr)
        \le
        \kappa_\Omega(M)\|P_N-P_M\|_{\infty,1} $\\
\hline
$\kappa_M$ & $\kappa_M = \|\left(I-P_{\pi^*} + 1(\omega^*)^\top\right)^{-1}\|_{\infty}$\\
\hline
$\Tilde{L}$ & $\Tilde{L} = 2r_0\sigma_u[A\min_{a,s} \pi^*(a|s)]$\\
\hline
\end{longtable}

\newpage
\section{Proof for non-asymptotic upper bound}\label{full_app_upper_bound_proof}
\subsection{Overview of proof}\label{app_proof_overview}

\paragraph{TL;DR} The main idea of the proof is to show that with high probability our estimates for the transitions are well-behaved and our search of the state-action space is sufficiently well-spread, ensuring that our empirical visitation frequency concentrates well around the (sum of) estimated optimal allocations non-asymptotically due to the navigation rule used. Hence, since we know that the approximated optimal allocation is a solution to an optimization problem involving the estimated transitions, we can carefully manipulate our stopping-time criteria to provide a non-asymptotic high-probability upper bound for the stopping time. This can then be converted to an upper bound on the expected stopping time, as desired, since all well-behaved events hold with high probability.

\paragraph{Good Events:} Sections \ref{subsection_good_events}-\ref{subsection_main_concentration} are focused on showing that three `good' events ($\mathcal{E}^1_T(\lambda)$, $\mathcal{E}^2_T(\lambda)$ and $\mathcal{E}^3_T$) hold with high probability. Our first good event is that the empirical estimates for the MDP are well-behaved. In particular, in Lemma \ref{E1_high_prob_lemma}, we show that the KL-divergence between the estimate for the transitions and the true transitions is bounded from above up to time $T$ with high probability. Namely,
    \begin{equation*}
    \mathcal{E}^1_T(\lambda) = \left\{ \forall t \in [1,T], \forall (s,a):  \KL_{\hat{M}_t|M}(s,a) \leq  \frac{\beta_1(N_t(s,a),\lambda)}{N_t(s,a)} \right\}
\end{equation*}
holds with probability greater than $1-\lambda$.\\

Our second good event is that the empirical visitation frequency $N_t(s,a)$ is sufficiently well-spread, such that the number of times we have played each state-action pair up to time $T$ can be lower bounded. Namely, due to the mixture with our forcing policy $\pi_{f,t}$ in the NaS algorithm, we show in Lemma \ref{E2_high_prob_lemma} that 
\[
\mathcal{E}^2_T(\lambda)
= \left\{
\forall t \in [1,T], \, \forall (s,a) \in \mathcal{S} \times \mathcal{A}, \,
N_t(s,a) \ge \frac{1}{2} \sum_{k=1}^{\lfloor t/m \rfloor} \frac{\eta_m}{(km)^{\alpha + \beta}}
- \log \left( \frac{SA}{\lambda} \right)
\right\}.
\]
holds with probability greater than $1-\lambda$. In Section \ref{subsection_good_events_concequences} we show a variety of useful consequences of the first two good events that will be useful for the remainder of the proof.\\

Section \ref{subsection_main_concentration} is devoted to proving that the empirical visitation frequency $N_t(s,a)$ concentrates well around the sum of approximated optimal allocations so far. This intuitively makes sense, since our sampling rule used in the NaS algorithm is forcing these terms to be close due to the $\pi_t^*$ component of our policy $\pi_t$ which has induced stationary distribution $\sum_{t=1}^T\omega_t^*/T$. In particular we show that the following good event
\begin{equation*}
    \mathcal{E}_t^5(\lambda) = \left\{\forall (s,a) \in \mathcal{Z}, \, \bigg|\frac{N_t(s,a) - \sum_{k=1}^t \omega^*_{k-1}(s,a)}{t}\bigg| \leq \ell(t)\right\}.
\end{equation*}
conditioned on events $\mathcal{E}^1_T(\lambda)$ and $\mathcal{E}^2_T(\lambda)$, holds with high probability $2SA\exp\left(-t^{\frac{1}{2}}\right)$ in Proposition~\ref{prop_concentration_Nt_vs_SumOmega}. Note that, unlike previous BPI works, this concentration is not asymptotic and allows us to provide a non-asymptotic upper bound for the sample complexity in the remainder of the proof.

\paragraph{Bounding the Stopping Time:} In Section \ref{subsection_main_sequence} we show that, when these three good events hold, we can upper bound the stopping time almost surely. To do so, we first observe that by definition of our stopping time, at any time $t$ before stopping we have
\begin{equation*}
    b(t,\delta) \geq tT(M_t,N_t/t)^{-1} =  \inf_{M'\in \text{Alt}(M_t)} \sum_{s,a} N_t(s,a) \KL_{M_t|M'}(s,a)
\end{equation*}
Then, under event $\mathcal{E}^5_t$ we can replace $N_t(s,a)$ with the sum of $\omega^*_j$, up to some error. Namely
\begin{equation*}
    b(t,\delta) \gtrsim \sum_{j=1}^t  \inf_{M'\in \text{Alt}(M_t)} \sum_{s,a} \omega_j^* (s,a) \KL_{M_t|M'}(s,a).
\end{equation*}
Then, since under events $\mathcal{E}^1_t,\mathcal{E}^2_t$ we have that $M_t \approx M$ and since by definition we can write $\omega_j^*(s,a)=\argmax_{\Tilde{\omega}\in \Omega(M_j)}\inf_{M'\in \text{Alt}(M_j)} \sum_{s,a} \Tilde{\omega}(s,a) \KL_{M_j|M'}(s,a)$, we can use a careful sequence of inequalities to manipulate the sums, infimums and supremums appearing to show that 
\begin{align*}
    b(t,\delta) &\gtrsim \sum_{j=1}^t \sup_{\Tilde{\omega}\in \Omega(M)} \inf_{M'\in \text{Alt}(M)} \sum_{s,a} \Tilde{\omega}_j(s,a) KL_{M|M'}(s,a)\\
    &\approx tT(M)^{-1}
\end{align*}
This section of the proof required quite delicate analysis to ensure each step of the approach is valid and that we attain the final bound and to control the errors in the approximations used in each step. This is formalized in Theorem \ref{lemma_main_sequence}. Furthermore, since the three good events hold with high probability, we can bound the expected stopping time to attain our desired non-asymptotic upper bound, which we discuss in Appendix \ref{subsection_putting_everything_together}.

\newpage
\subsection{Good Events}\label{subsection_good_events}

We start the proof by introducing two high probability events $\mathcal{E}^1_T(\lambda)$ and $\mathcal{E}^2_T(\lambda)$. The first event, $\mathcal{E}^1_T(\lambda)$, ensures that the KL-divergence between the empirical transition probabilities and the true transition probabilities is upper bounded up to time $T$ as follows in Lemma \ref{E1_high_prob_lemma} (with probability greater than $1-\lambda$). The form here is motivated by \citep[Lemma 10, ][]{pmlr-v132-kaufmann21a}. The main step of the proof is to modify \citep[Proposition 1, ][]{jonsson2020planning} to use Doob's inequality, as done in Lemma \ref{lemma:modified_empirical_probability_jonsson} in the supplementary results. We write the proof here for completion.
\begin{lemma}\label{E1_high_prob_lemma}
Let 
\begin{equation*}
    \beta_1(n,\lambda) = (S-1) \log\!\left(e\!\left(1 + \frac{n}{S-1}\right)\!\right)
+ \log(SA/\lambda)
\end{equation*}
and let
    \begin{equation*}
    \mathcal{E}^1_T(\lambda) = \left\{ \forall t \in [1,T], \forall (s,a):  \KL_{\hat{M}_t|M}(s,a) \leq  \frac{\beta_1(N_t(s,a),\lambda)}{N_t(s,a)} \right\}
\end{equation*}
Then event $\mathcal{E}^1_T(\lambda)$ holds with probability greater than $1-\lambda$
\end{lemma}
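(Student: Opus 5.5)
We will prove the lemma by a per-pair time-uniform KL concentration inequality, combined with a union bound over the $SA$ state-action pairs.

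\textbf{Step 1: reduction to i.i.d.\ samples.} Fix $(s,a)$. Because the trajectory is adaptively generated, we use the standard "stack of samples" (skeleton) construction. Before the interaction starts, draw for each $(s,a)$ an i.i.d.\ sequence $X_1^{s,a},X_2^{s,a},\dots\sim P(\cdot|s,a)$. The $k$-th time the trajectory visits $(s,a)$, the next state is $X_k^{s,a}$. This leaves the law of the process unchanged. By construction, $\hat M_t(\cdot|s,a)$ equals the empirical distribution $\hat p_n$ of the first $n=N_t(s,a)$ samples of that stack. It therefore suffices to show
\[
\mathbb{P}\big(\exists n\ge 1:\ n\,\KL(\hat p_n\|P(\cdot|s,a)) > \beta_1(n,\lambda)\big)\le \lambda/(SA),
\]
uniformly in $n$. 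Since $N_t(s,a)$ is random, uniformity over $n$ is exactly what makes the bound hold at every $t\le T$. When $N_t(s,a)=0$, the right-hand side is $+\infty$ (or we adopt that convention), so the event is trivially satisfied.

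\textbf{Step 2: time-uniform categorical KL bound.} This is the bound of \citep[Proposition 1]{jonsson2020planning}, in the form used in \citep[Lemma 10]{pmlr-v132-kaufmann21a}, with threshold $\log(1/\lambda')+(S-1)\log(e(1+n/(S-1)))$ where $\lambda'=\lambda/(SA)$. The argument runs as follows.
\begin{itemize}
\item Use the variational representation $n\KL(\hat p_n\|p)=\sup_{\theta}\{\langle\theta,n\hat p_n\rangle - n\log\langle p,e^\theta\rangle\}$.
\item Build the mixture martingale $W_n=\int \exp(\langle\theta,n\hat p_n\rangle-n\log\langle p,e^{\theta}\rangle)\,d\pi(\theta)$, taking $\pi$ to be a Dirichlet prior in the mean parametrisation. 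This makes $W_n$ a ratio of a Dirichlet-multinomial likelihood to the true multinomial likelihood.
\item $W_n$ is a nonnegative martingale with $\mathbb{E}W_n=1$. Doob's (Ville's) maximal inequality then gives $\mathbb{P}(\exists n: W_n\ge 1/\lambda')\le\lambda'$. This is the modification recorded in Lemma~\ref{lemma:modified_empirical_probability_jonsson}.
\item Lower-bound $\log W_n$ by $n\KL(\hat p_n\|p)-(S-1)\log(e(1+n/(S-1)))$ with a Laplace/Stirling-type estimate of the Dirichlet integral around $\hat p_n$.
\end{itemize}

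\textbf{Step 3: union bound.} Summing over the $SA$ pairs gives total failure probability at most $\lambda$. Since each pair's bound holds for all $n$ simultaneously, the event holds for all $t\in[1,T]$, which yields $\mathcal{E}^1_T(\lambda)$ with probability at least $1-\lambda$.

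The main obstacle is the lower bound in Step 2. It requires showing that the Dirichlet mixture integral loses at most $(S-1)\log(e(1+n/(S-1)))$ in the exponent uniformly over all empirical types, including types on the boundary of the simplex. For that step we plan to rely directly on the cited Lemma~\ref{lemma:modified_empirical_probability_jonsson}, so the only genuinely new work is the skeleton coupling argument for the adaptive counts.
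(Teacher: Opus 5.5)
Your proposal is correct and is essentially the paper's argument: a union bound over the $SA$ pairs, with each pair controlled at level $\lambda/(SA)$ by the Doob-modified time-uniform Dirichlet-mixture bound of Lemma~\ref{lemma:modified_empirical_probability_jonsson} (with $m=S$). You add two details the paper leaves implicit, namely the stack-of-samples coupling that justifies applying an i.i.d.\ lemma to adaptively collected transitions and the convention for $N_t(s,a)=0$. Your ``main obstacle'' is also not really one: the cited lemma evaluates the uniform-Dirichlet integral exactly and bounds the multinomial coefficients by entropies, so boundary types need no separate Laplace-type treatment.
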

\begin{proof}
    We simply do a union bound and then plug in our bound from Lemma \ref{lemma:modified_empirical_probability_jonsson}.
    \begin{align*}
        &\mathbb{P}(\{\mathcal{E}^1_T(\lambda)\}^C) \\
        &\leq \sum_{s,a} \mathbb{P}\left(\exists t \in [1,T], :  \KL_{\hat{M}_t|M}(s,a) >  \frac{\beta_1(N_t(s,a),\lambda)}{N_t(s,a)} \right)\\
        &=\sum_{s,a} \mathbb{P}\left(\exists t \in [1,T], :  \KL_{\hat{M}_t|M}(s,a)N_t(s,a) >   (S-1) \log\!\left(e\!\left(1 + \frac{N_t(s,a)}{S-1}\right)\!\right)
+ \log(SA/\delta) \right)\\
&\leq \sum_{s,a} \frac{\lambda}{SA}\\
&= \lambda
    \end{align*}
    The proof is then complete.
\end{proof}

We then introduce the second high probability event, $\mathcal{E}^2_T(\lambda)$ as follows in Lemma \ref{E2_high_prob_lemma}. Here $\mathcal{E}^2_T(\lambda)$ ensures that each state action pair has been visited a sufficient number of times with high probability. This is important as we are assuming that the learner does not have access to a generative model, hence the learner must actively navigate through the state-action space. This Lemma \ref{E2_high_prob_lemma} guarantees a level of exploration, which will be useful later. The proof is a modification of \citep[Lemma C.3, ][]{russomulti}, where we modify the Bernoulli result from \citep[Lemma F.4, ][]{dann2017unifying} by again using Doob's inequality instead of Markov's inequality. We write out the full proof for completion and describe the modification of the Bernoulli result in Lemma \ref{lemma:modified_bernoulli_bounded_t_event} in the supplementary results section. We denote here $m_0$ as the maximum number of steps to move between any pair of states $(s, s')$. 

\begin{lemma}\label{E2_high_prob_lemma}
For all $\lambda \in (0,1)$, $m := m_0 + 1$, define the event
\[
\mathcal{E}^2_T(\lambda)
= \left\{
\forall t \in [1,T], \, \forall (s,a) \in \mathcal{S} \times \mathcal{A}, \,
N_t(s,a) \ge \frac{1}{2} \sum_{k=1}^{\lfloor t/m \rfloor} \frac{\eta_m}{(km)^{\alpha + \beta}}
- \log \left( \frac{SA}{\lambda} \right)
\right\}.
\]
Then, for $\text{NaS}$ with $\alpha + \beta \le 1$ we have that event $\mathcal{E}^2_T(\lambda)$ holds with probability greater than $1-\lambda$.
\end{lemma}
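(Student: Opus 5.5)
The plan follows \citep[Lemma C.3]{russomulti}. The forcing component of $\pi_t$ makes every state-action pair reachable with a probability, over blocks of $m$ steps, that decays only polynomially in time. A Bernoulli concentration argument, upgraded by a Doob-type maximal inequality, then converts these per-block success probabilities into a lower bound on $N_t(s,a)$ that holds uniformly over $t\le T$.

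\textbf{Step 1 (per-step lower bound on the action probability).} Fix $(s,a)$ and a time $j$. Since $\pi_j(a|s)\ge \epsilon_j\pi_{f,j}(a|s)$, we need a lower bound on each factor.
\begin{itemize}
\item For the exploration rate, $\epsilon_j = \max\{1+\nu,N_j(s)\}^{-\alpha}\ge j^{-\alpha}$, since $N_j(s)\le j$ and $1+\nu<2\le j$ for $j\ge 2$.
\item For the softmax, every exponent $-\beta_j(s)N_j(s,b)$ lies in an interval of length $\beta_j(s)\,(\max_b N_j(s,b)-\min_b N_j(s,b)) = \beta\log N_j(s)$. Hence the ratio of any two entries is at least $N_j(s)^{-\beta}$, so $\pi_{f,j}(a|s)\ge \frac{1}{A}N_j(s)^{-\beta}\ge \frac{1}{A}j^{-\beta}$.
\end{itemize}
Combining the two gives $\pi_j(a|s)\ge \frac{1}{A} j^{-(\alpha+\beta)}$. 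The degenerate case $\max=\min$ is the uniform policy, for which the bound holds trivially.

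\textbf{Step 2 (blocks of length $m$).} Split time into blocks $[(k-1)m,km)$. From any starting pair $z$, the definition of $m$ and $\eta_m$ gives a path of length at most $m$ to $(s,a)$ whose probability under the uniform policy is at least $\eta_m$. Along this path, each action is chosen by $\pi_j$ with probability at least $A\cdot\frac1A j^{-(\alpha+\beta)}$ relative to the uniform choice; the $A$ factor is already absorbed in the paper's definition of $\eta_m$ via $P_u$. Since $j\le km$ within block $k$, the conditional probability, given $\mathcal F_{(k-1)m}$, that $(s,a)$ is visited during block $k$ is at least $p_k:=\eta_m/(km)^{\alpha+\beta}$. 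The main obstacle is here: the per-step factors compound over up to $m$ steps into $(km)^{-m(\alpha+\beta)}$ rather than $(km)^{-(\alpha+\beta)}$. I would try to follow \citep{russomulti} in showing that the product of the softmax and $\epsilon$ factors along the path is dominated by the single worst step, or that the remaining steps are absorbed into $\eta_m$ through the uniform-path probability. If that fails, the natural fix is to redefine $\eta_m$ so that it absorbs the compounding, which is consistent with how the lemma is stated.

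\textbf{Step 3 (concentration and union bound).} Let $X_k$ indicate a visit to $(s,a)$ in block $k$. Then $N_t(s,a)\ge\sum_{k\le\lfloor t/m\rfloor}X_k$, and the $X_k$ are adapted with conditional means at least $p_k$. Apply Lemma \ref{lemma:modified_bernoulli_bounded_t_event}, the Doob-maximal version of \citep[Lemma F.4]{dann2017unifying}. It gives that, with probability at least $1-\lambda'$, simultaneously for all $t\le T$,
\[
\sum_{k\le \lfloor t/m\rfloor} X_k\ge \frac12\sum_{k\le\lfloor t/m\rfloor}p_k-\log(1/\lambda').
\]
The maximal inequality works because $\exp\bigl(-\sum_k X_k+\frac12\sum_k p_k\bigr)$ is a supermartingale, which uses $1-p+pe^{-1}\le e^{-p/2}$. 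Setting $\lambda'=\lambda/(SA)$ and taking a union bound over the $SA$ pairs yields $\mathcal E^2_T(\lambda)$ with probability at least $1-\lambda$. The condition $\alpha+\beta\le 1$ is used only to guarantee that $\sum_k p_k$ diverges, so the bound is non-trivial.
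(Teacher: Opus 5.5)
Your Steps 1 and 3 are correct and use the same ingredients as the paper. Step 1 is the one-step domination $\pi_j(a\mid s)\ge j^{-(\alpha+\beta)}/A$, which follows from $\epsilon_j\ge j^{-\alpha}$ and the softmax spread $\beta\log N_j(s)$. Step 3 is Lemma~\ref{lemma:modified_bernoulli_bounded_t_event} with $W=\log(SA/\lambda)$ plus a union bound over $(s,a)$. The gap is Step 2, which you flag but do not close, and neither proposed repair works. The one-step factors multiply. Given the history up to time $(k-1)m$ with $z_{(k-1)m}=z_0$, the only available bound is $\mathbb{P}(z_{(k-1)m+n}=z)\ge (km)^{-n(\alpha+\beta)}P_u^n(z\mid z_0)$. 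Nothing makes this product ``dominated by the single worst step''. Redefining $\eta_m$ does not help either: the surplus factor $(km)^{-(n-1)(\alpha+\beta)}$ depends on the block index $k$ and tends to $0$, whereas $\eta_m$ must be a constant. From an arbitrary $z_0$ the target may only be reachable in exactly $m$ steps. So what this route proves is $Q_k\ge \eta_m/(km)^{m(\alpha+\beta)}$, i.e.\ the event with exponent $m(\alpha+\beta)$ in place of $\alpha+\beta$, and the lower bound diverges only when $m(\alpha+\beta)\le 1$. The loss is not just crude bounding: if reaching $(s,a)$ requires several consecutive actions that $\pi^\star_t$ never selects, each of them must come from the forcing component.

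The paper's proof does not supply the missing idea either. It replaces $P_u^n(z\mid z_0)$ by $\eta_n\ge\eta_m$ for every $n\le m$ inside $\max_{1\le n\le m}(km)^{-n(\alpha+\beta)}P_u^n(z\mid z_0)$. It then evaluates the maximum at $n=1$ to get $\eta_m/(km)^{\alpha+\beta}$. But $\eta_n$ lower-bounds $P_u^n(z\mid z_0)$ only when that probability is positive. Choosing $n=1$ therefore silently assumes $P_u(z\mid z_0)>0$ for every $z_0$, i.e.\ that the target is reachable in one step from every pair. Your diagnosis of where the difficulty lies is correct, and no rearrangement of these estimates yields the rate stated in the lemma. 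A rigorous version needs the weaker exponent $m(\alpha+\beta)$. The downstream quantities $\phi(t,\lambda)$, $C$ and $\kappa_t(\lambda)$, and the constraint on $\alpha+\beta$, would then need adjusting.
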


\begin{proof}
Fix a state-action pair $z = (s,a)$ and let $Y_t = \mathds{1}_{z_t = z}$ and
$X_k = \mathds{1}_{Y_{(k-1)m+1} + \cdots + Y_{km} > 0}$ for
$k = 1, \ldots, \lfloor t/m \rfloor$.
Define $\tilde{N}_{\lfloor t/m \rfloor}(s,a) = \sum_{k=1}^{\lfloor t/m \rfloor} X_k$, and note that
$N_t(s,a) \ge \tilde{N}_{\lfloor t/m \rfloor}(s,a)$ for all $t \ge 1$.

Let $Q_k = \mathbb{P}(X_k = 1 \mid \mathcal{F}_{k-1})$, where $\mathcal{F}_k = \sigma(Y_1, \ldots, Y_{km})$.
Note that we can also write $Q_k$ as
$Q_k = \mathbb{P}(\exists n \in \{1, \ldots, m\} : Y_{(k-1)m+n} = 1 \mid \mathcal{F}_{k-1})$.
Then
\[
\begin{aligned}
Q_k
&= \mathbb{E}_{(\pi_t)}\big[ \mathbb{P}(\exists n \in \{1, \ldots, m\} : Y_{(k-1)m+n} = 1 \mid \mathcal{F}_{k-1}, (\pi_t)_{t}) \big] \\
&\ge \mathbb{E}_{(\pi_t)} \Big[ \max_{1 \le n \le m} \mathbb{P}(Y_{(k-1)m+n} = 1 \mid \mathcal{F}_{k-1}, (\pi_t)_t) \Big] \\
&\ge \mathbb{E}_{(\pi_t), z_0} \Big[ \max_{1 \le n \le m} \mathbb{P}(z_{(k-1)m+n} = z \mid z_{(k-1)m} = z_0, \mathcal{F}_{k-1}, (\pi_t)_t) \Big] \\
&= \mathbb{E}_{(\pi_t), z_0} \Big[ \max_{1 \le n \le m} \mathbb{E}_{(z_j)_{j=1}^{n-1}} \Big[ \prod_{j=0}^{n-1} \mathbb{P}(z_{(k-1)m+j+1} = z_{j+1} \mid z_{(k-1)m+j} = z_j) \big| z_1, \ldots, z_n = z \Big] \Big| z_0 \Big] \\
&\ge \mathbb{E}_{(\pi_t), z_0} \Big[ \max_{1 \le n \le m} \mathbb{E}_{(z_j)_{j=1}^{n-1}} \Big[ \prod_{j=0}^{n-1} \frac{1}{(km)^{\alpha+\beta}} P^{n}_{u}(z_{(k-1)m+j+1} = z_{j+1} \mid z_{(k-1)m+j} = z_j) \Big| z_1, \ldots, z_n = z \Big] \Big| z_0 \Big] \\
&= \mathbb{E}_{(\pi_t), z_0} \Big[ \max_{1 \le n \le m} \frac{1}{(km)^{n(\alpha+\beta)}} P^{n}_{u}(z_{(k-1)m+n} = z \mid z_{(k-1)m} = z_0) \Big| z_0 \Big] \\
&\geq \mathbb{E}_{(\pi_t), z_0} \Big[ \max_{1 \le n \le m} \frac{1}{(km)^{n(\alpha+\beta)}} \eta_n \Big| z_0 \Big] \\
&\geq \mathbb{E}_{(\pi_t), z_0} \Big[ \max_{1 \le n \le m} \frac{1}{(km)^{n(\alpha+\beta)}} \eta_m \Big| z_0 \Big] \\
&\ge \frac{\eta_m}{(km)^{\alpha+\beta}}.
\end{aligned}
\]

Hence, for $\lambda' \ge 0$ due to Lemma \ref{lemma:modified_bernoulli_bounded_t_event} we also have
\[
\mathbb{P}\left( \exists t \in [1,T] : N_t(s,a) \le \frac{1}{2} \sum_{k=1}^{\lfloor t/m \rfloor} \frac{\eta_m}{(km)^{\alpha+\beta}} - \lambda' \right)
\le \mathbb{P}\left( \exists t \in [1,T]: \tilde{N}_{\lfloor t/m \rfloor}(s,a) \le \frac{1}{2} \sum_{k=1}^{\lfloor t/m \rfloor} Q_k - \lambda' \right)
\le e^{-\lambda'}.
\]

Consequently, choosing $\lambda' = \log(SA / \lambda)$, we get that
\[
\mathbb{P}\big( \overline{\mathcal{E}_{\text{cnt}}(\lambda)} \big)
\le \sum_{s,a} \mathbb{P}\left( \exists t \in [1,T]: N_t(s,a) \le \frac{1}{2} \sum_{k=1}^{\lfloor t/m \rfloor} Q_k(s,a) - \log\left( \frac{SA}{\lambda} \right) \right)
\le \sum_{s,a} \frac{\lambda}{SA} = \lambda.
\]
\end{proof}

Subsequently, unless specified otherwise, we will fix $\lambda=1/T^2$ and drop the $\lambda$ notation in the high probability events throughout the appendix.

\begin{definition}
    Henceforth we will consider $\lambda=1/T^2$ and define $\mathcal{E}^1_T = \mathcal{E}^1_T(1/T^2)$ and $\mathcal{E}^2_T = \mathcal{E}^2_T(1/T^2)$.
\end{definition}

\newpage
\subsection{Consequences of Good Events}\label{subsection_good_events_concequences}

We show that under both good events, we have that the $1$-norm between the true transition and the empirically observed transitions (we denote the MDP with these empirical transitions $P_t$ as $\hat{M}_t$) can be upper bounded. In particular, $\mathcal{E}^3_T(\lambda)$ holds as described in Lemma \ref{E3_high_probability_event} below. Then, by the definition of our projection in equation \eqref{eqn_projection_defintiion} and the triangle inequality that, for all $(s,a)$,
\begin{equation*}
    \|
P_t^\Pi(\cdot\mid s,a)-P(\cdot\mid s,a)
\|_1
\le
2
\|
P_t(\cdot\mid s,a)-P(\cdot\mid s,a)
\|_1 .
\end{equation*}
Hence, if the estimate of the MDP transitions $P_t$ is in some ball of radius $\psi(t)/2$ from the true transitions, then we can guarantee that the projected estimate $P_t^\Pi$ (into $G_\mu(L)$) is in some ball of radius $\psi(t)$ from the true transitions. This is stated in Lemma \ref{lemma_projection_consequence}. Note again that, unless otherwise stated, we will drop the $\lambda$ notation to mean $\mathcal{E}^4_T = \mathcal{E}^4_T(1/T^2)$.

\begin{lemma}\label{E3_high_probability_event}
Consider the event
\begin{equation*}
    \mathcal{E}^3_T(\lambda) = \left\{ \forall t \in [C,T]:  \hat{M}_t \in B_{\psi(t)/2}(M)\right\},
\end{equation*}
Namely, the (projected) estimated MDP is in some $L_1$ ball of radius $\psi(t)/2$ around the true MDP, where we define
\begin{align}
B_{\psi}(M)  &= \{M': \max_{s,a}\|P(s,a)-P'(s,a)\|_1\leq \psi\}\nonumber\\
\psi(t,\lambda)&=2\sqrt{\frac{\beta_1(\phi(t,\lambda),\lambda)}{2\phi(t,\lambda)}}\label{eqn_psi_definition}\\
    \phi(t,\lambda)&= \frac{1}{2}\sum_{k=1}^{\lfloor\frac{t}{m}\rfloor} \frac{\eta_m}{(km)^{\alpha+\beta}} - \log\frac{SA}{\lambda}\nonumber\\
    C &= m\left[2m^{\alpha+\beta}\left(1-(\alpha+\beta)\right)\left(C' + \log\frac{SA}{\lambda}\right)\right]^{\frac{1}{1-(\alpha+\beta)}}+m\nonumber\\
    C' &= \max\left\{1,\frac{S-1}{e}\exp\left(\frac{1}{S} - \frac{1}{S-1}\log(SA/\lambda)\right)-e\right\}.\nonumber
\end{align}
Then, under ${\cal E}^1_T (\lambda)\cap {\cal E}^2_T(\lambda)$, we have that $\mathcal{E}^3_T(\lambda)$ holds.

\end{lemma}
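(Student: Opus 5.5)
The plan is to combine the two good events pointwise for each $t\in[C,T]$ and each pair $(s,a)$, passing from KL control to $L_1$ control with Pinsker's inequality. Fix $t\in[C,T]$ and $(s,a)$. On $\mathcal{E}^1_T(\lambda)$ we have $\KL_{\hat M_t|M}(s,a)\le \beta_1(N_t(s,a),\lambda)/N_t(s,a)$. Pinsker's inequality, $\|P_t(\cdot|s,a)-P(\cdot|s,a)\|_1\le \sqrt{2\,\KL(P_t(\cdot|s,a)\|P(\cdot|s,a))}$, then gives
\[
\|P_t(\cdot|s,a)-P(\cdot|s,a)\|_1 \le \sqrt{\frac{2\beta_1(N_t(s,a),\lambda)}{N_t(s,a)}} = 2\sqrt{\frac{\beta_1(N_t(s,a),\lambda)}{2N_t(s,a)}}.
\]
The right-hand side is exactly $\psi$ evaluated at $N_t(s,a)$ in place of $\phi(t,\lambda)$. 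Note that $\psi$ as displayed in \eqref{eqn_psi_definition} already includes the factor $2$. I will therefore check whether the intended radius is $\psi(t)/2$ or $\psi(t)$, and adjust the Pinsker constant if needed, for instance by working with the $\ell_1$ versus total-variation convention. The remaining steps do not depend on this constant.

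Next I use $\mathcal{E}^2_T(\lambda)$, which gives $N_t(s,a)\ge \phi(t,\lambda)$ for all $t\le T$. I then need monotonicity: the map $n\mapsto \beta_1(n,\lambda)/n$ should be non-increasing on $[\phi(t,\lambda),\infty)$. Differentiating $g(n)=\big((S-1)\log(e(1+n/(S-1)))+\log(SA/\lambda)\big)/n$ gives
\[
n^2 g'(n) = \frac{(S-1)n}{S-1+n} - (S-1)\log\!\Big(e\big(1+\tfrac{n}{S-1}\big)\Big) - \log(SA/\lambda).
\]
The first term is at most $S-1$, while the log term is at least $S-1$ because $\log e=1$. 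Hence $g'(n)\le 0$ for all $n>0$. This means we only need $\phi(t,\lambda)>0$, so that substituting the lower bound $N_t(s,a)\ge\phi(t,\lambda)$ is legitimate and $\psi(t,\lambda)$ is well defined and finite.

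The main obstacle is showing that $t\ge C$ guarantees $\phi(t,\lambda)\ge C'>0$; the constant $C'$ suggests the authors also use $\phi\ge C'$ for a secondary condition. I would lower bound the sum by an integral. Since $x\mapsto (xm)^{-(\alpha+\beta)}$ is decreasing,
\[
\sum_{k=1}^{\lfloor t/m\rfloor}\frac{1}{(km)^{\alpha+\beta}} \ge \int_1^{\lfloor t/m\rfloor+1}\frac{dx}{(xm)^{\alpha+\beta}} \ge \frac{(t/m)^{1-(\alpha+\beta)}-1}{m^{\alpha+\beta}(1-(\alpha+\beta))}.
\]
I would then solve $\tfrac{\eta_m}{2}(\cdot)-\log(SA/\lambda)\ge C'$ for $t$. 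The stated $C$, of the form $m[\,2m^{\alpha+\beta}(1-(\alpha+\beta))(C'+\log(SA/\lambda))\,]^{1/(1-(\alpha+\beta))}+m$, matches this inversion up to the placement of $\eta_m$ and of the $(1-(\alpha+\beta))$ factor, and the additive $m$ absorbs the floor. I will track these constants carefully, possibly assuming $\eta_m\le 1$ or adjusting the factors. Finally, taking the maximum over $(s,a)$ yields $\hat M_t\in B_{\psi(t)/2}(M)$ for every $t\in[C,T]$, which is the claim.
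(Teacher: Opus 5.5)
Your route is the paper's route: Pinsker on $\mathcal{E}^1_T(\lambda)$, the count bound $N_t(s,a)\ge\phi(t,\lambda)$ from $\mathcal{E}^2_T(\lambda)$, monotonicity of $n\mapsto\beta_1(n,\lambda)/n$, and inversion of an integral lower bound on $\phi$. Your derivative computation is actually cleaner than the paper's claim. It gives monotonicity on all of $(0,\infty)$ (using $S\ge 2$ and $\lambda<1\le SA$), so $C'$ is superfluous and only $\phi(t,\lambda)>0$ is needed. However, the two points you leave open are exactly where the argument fails for the statement as written, so the proposal does not prove it. Your suspicion about both is justified.

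First, the correct Pinsker inequality $\|p-q\|_1\le\sqrt{2\KL(p\|q)}$ gives $\|P_t(\cdot|s,a)-P(\cdot|s,a)\|_1\le\psi(t,\lambda)$, not $\psi(t,\lambda)/2$. No change of convention repairs this. Pinsker is tight to first order: $\KL(\mathrm{Ber}(\tfrac12+\epsilon)\|\mathrm{Ber}(\tfrac12))=2\epsilon^2+O(\epsilon^4)$ while the $\ell_1$ distance is $2\epsilon$. So a KL bound of $\beta_1/N_t$ cannot give an $\ell_1$ radius below $\sqrt{2\beta_1/N_t}$, and $B_\psi(M)$ is explicitly defined with the $\ell_1$ norm. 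The paper reaches $\psi/2$ only by writing $\|P^\Pi_t(s,a)-P(s,a)\|_1^2\le\beta_1/(2N_t)$. That is the total-variation form of Pinsker applied to the $\ell_1$ norm, and the paper's final display then writes $\psi$ without the factor $2$.

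Second, $C$ does not depend on $\eta_m$, so $t\ge C$ cannot force $\phi(t,\lambda)>0$, let alone $\phi(t,\lambda)\ge C'$. Since $\phi(t,\lambda)\le\frac{\eta_m}{2}\lfloor t/m\rfloor-\log(SA/\lambda)$ with $\log(SA/\lambda)>0$, an instance with the same $S$, $A$, $m$ but small enough $\eta_m$ has $\phi(\lceil C\rceil,\lambda)<0$, where $\psi$ is not even defined. Assuming $\eta_m\le 1$ does not help, because it always holds and points the wrong way. Inverting your integral bound requires $(t/m)^{1-(\alpha+\beta)}\ge 1+\frac{2m^{\alpha+\beta}(1-(\alpha+\beta))}{\eta_m}\bigl(C'+\log\frac{SA}{\lambda}\bigr)$, which the stated $C$ does not guarantee because it lacks the factor $1/\eta_m$. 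The paper's appeal to Lemma~\ref{lemma_phi_upper_bound_lower_bound} hides the same slip.

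What your argument does prove, and what you should state and commit to, is the corrected claim: under $\mathcal{E}^1_T(\lambda)\cap\mathcal{E}^2_T(\lambda)$, $\hat M_t\in B_{\psi(t)}(M)$ for every $t\le T$ satisfying the last inequality (which yields $\phi(t,\lambda)\ge C'\ge 1$). The factor $2$ then propagates to Lemma~\ref{lemma_projection_consequence}, whose radius for the projected estimate $M_t$ becomes $2\psi(t)$.
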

\begin{proof}
By definition ${\rm KL}_{M_t|M}(s,a)={\rm KL}( P_t^\Pi(s,a), P(s,a))$. Therefore,
    under ${\cal E}_T^1$, an application of Pinsker's inequality yields
    \[
    \| P_t^\Pi(s,a)-P(s,a)\|_1^2 \leq \frac{\beta_1(N_t(s,a),\lambda)}{2N_{t}(s,a)}\quad \forall(s,a).
    \]
    Therefore
    \[
     M_t\in B_{\tilde \psi(\lambda,t)}(M),
    \]
    where $\tilde \psi(\lambda,t)$ is defined as 
    \[
    \tilde \psi(\lambda,t)=\sqrt{\frac{\beta_1(N_t(s,a),\lambda)}{2N_{t}(s,a)}}.
    \]
    Now, using also the other event ${\cal E}_{T}^2$, we can also use the fact that $N_t(s,a)$ is lower bounded by $\phi(t,\lambda)= \frac{1}{2}\sum_{k=1}^{\lfloor\frac{t}{m}\rfloor} \frac{\eta_m}{(km)^{\alpha+\beta}} - \log\frac{SA}{\lambda}$. Additionally, we note that $\tilde \psi(\lambda,t)$ is decreasing with respect to $N_t$ for 
    $$N_t(s,a) \geq \max\left\{1,\frac{S-1}{e}\exp\left(\frac{1}{S} - \frac{1}{S-1}\log(SA/\lambda)\right)-e\right\}.$$
    Combining these two facts with Lemma \ref{lemma_phi_upper_bound_lower_bound}, we have that for $t\geq C$
    \[
    \tilde \psi(\lambda,t)\leq \psi(t,\lambda)=\sqrt{\frac{\beta_1(\phi(t,\lambda),\lambda)}{2\phi(t,\lambda)}},
    \]
    and thus $M_t\in B_{\tilde \psi(\lambda,t)}(M)\subset B_{\psi(t,\lambda)}(M)$.
\end{proof}

\begin{lemma}\label{lemma_projection_consequence}
Consider the event
\begin{equation*}
    \mathcal{E}^4_T(\lambda) = \left\{ \forall t \in [C,T]:  M_t \in B_{\psi(t)}(M)\right\},
\end{equation*}

then, under ${\cal E}^1_T (\lambda)\cap {\cal E}^2_T(\lambda)$, we have that $\mathcal{E}^4_T(\lambda)$ holds.

\end{lemma}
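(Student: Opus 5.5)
The plan is to derive $\mathcal{E}^4_T(\lambda)$ directly from $\mathcal{E}^3_T(\lambda)$ using a triangle inequality together with the optimality of the projection in \eqref{eqn_projection_defintiion}. By Lemma~\ref{E3_high_probability_event}, on ${\cal E}^1_T(\lambda)\cap{\cal E}^2_T(\lambda)$ we have, for every $t\in[C,T]$ and every $(s,a)$,
\[
\|P_t(\cdot\mid s,a)-P(\cdot\mid s,a)\|_1\le \psi(t)/2 ,
\]
where $P_t$ denotes the unprojected empirical kernel of $\hat M_t$. What remains is to pass from $P_t$ to the projected kernel $P_t^\Pi$ that defines $M_t$, at the price of a factor $2$.

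The key step uses Assumption~\ref{assumption_P_in_G_bounded_ratio}, which gives $P\in G_\mu(L)$. Hence $P$ is a feasible point of the minimisation defining $P_t^\Pi$, and so
\[
\sum_{s,a}\|P_t(\cdot\mid s,a)-P_t^\Pi(\cdot\mid s,a)\|_1\le \sum_{s,a}\|P_t(\cdot\mid s,a)-P(\cdot\mid s,a)\|_1 .
\]
The difficulty is that the projection minimises a \emph{sum} over $(s,a)$, whereas the ball $B_\psi$ is defined through a \emph{maximum} over $(s,a)$. I would remove this mismatch by noting that both the constraint set and the objective decouple across state-action pairs. The set $G_\mu(L)$ is a product over $(s,a)$ of the per-pair sets $\{Q(\cdot\mid s,a):\mu(\cdot\mid s,a)/Q(\cdot\mid s,a)\in[1/L,L]\}$, and the objective is a sum of per-pair terms. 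So any minimiser can be taken componentwise: $P_t^\Pi(\cdot\mid s,a)$ minimises $\|P_t(\cdot\mid s,a)-Q\|_1$ over the per-pair set. If the minimiser is not unique, any minimiser of the sum must also minimise every component, since otherwise we could improve one coordinate and strictly decrease the sum. Because $P(\cdot\mid s,a)$ belongs to this per-pair set, we obtain for every $(s,a)$
\[
\|P_t(\cdot\mid s,a)-P_t^\Pi(\cdot\mid s,a)\|_1\le\|P_t(\cdot\mid s,a)-P(\cdot\mid s,a)\|_1 .
\]

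Applying the triangle inequality then gives
\[
\|P_t^\Pi(\cdot\mid s,a)-P(\cdot\mid s,a)\|_1\le 2\|P_t(\cdot\mid s,a)-P(\cdot\mid s,a)\|_1\le\psi(t),
\]
for all $(s,a)$ and all $t\in[C,T]$. This is exactly the statement $M_t\in B_{\psi(t)}(M)$, so $\mathcal{E}^4_T(\lambda)$ holds.

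I expect the main (mild) obstacle to be the sum-versus-max issue just discussed, which the product-structure argument resolves. A further subtlety is that the proof of Lemma~\ref{E3_high_probability_event} invokes the KL bound of $\mathcal{E}^1_T$. I would read that bound as a bound on the empirical kernel $P_t$, which is the object controlled by Lemma~\ref{E1_high_prob_lemma}. This keeps $\mathcal{E}^3_T$ as a statement about $\hat M_t$ and avoids any circularity in the argument.
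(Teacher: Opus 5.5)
Your proposal is correct and follows the paper's own argument. You combine Lemma~\ref{E3_high_probability_event} (radius $\psi(t)/2$ for the unprojected $\hat M_t$) with the optimality of the projection onto $G_\mu(L)\ni P$ and the triangle inequality, giving $\|P_t^\Pi(\cdot\mid s,a)-P(\cdot\mid s,a)\|_1\le 2\|P_t(\cdot\mid s,a)-P(\cdot\mid s,a)\|_1\le\psi(t)$. You also note that $G_\mu(L)$ and the projection objective decouple across $(s,a)$, so the summed projection yields the per-pair inequality needed for the max-defined ball; the paper only asserts this step, and you make it explicit.
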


We can now provide two results, Lemma \ref{lemma_xi_def} and Lemma \ref{cor_ball_event_concequences}, which allow us to concentrate various important quantities. In Lemma \ref{lemma_xi_def} we upper bound the distance between the exploration policy guided by the estimated optimal allocations and the true optimal policy. In Lemma \ref{cor_ball_event_concequences} we upper bound \emph{(i)} the difference between the stationary distribution over the state-actions of the current policy and the optimal allocation, \emph{(ii)} the distance between the transitions induced by the previous policy and the current policy, and \emph{(iii)} the distance between the exploration policy induced by the average estimated optimal allocations and the optimal policy. These distances are upper bounded in terms of the quantities $\xi_t$ and $\epsilon_t$. Here $\epsilon_t$ is the mixture parameter for the policy used in round $t$ of the NaS algorithm. The quantity $\xi_t$ is defined in \eqref{eqn_xi_definition}. Note that, as $t\rightarrow\infty$, $\xi_t \rightarrow 0$. Note also $\xi_t$ is a decreasing sequence for $t\geq C$. 

Lemma \ref{lemma_xi_def} is quite notation dense, so we provide some intuition here. Recall $\psi(t,\lambda)$ is defined in the definition of the third good event which is being conditioned in this lemma. The rest are MDP-dependent constants/functions which are defined in more detail in Appendix \ref{appendix_sharpness}, namely: $\underline\rho(M)$ is the smallest of the optimal state-action allocations for $M$, $c_{\text{sel}}$ describes the stability of the minimal 2-norm selection from the set of estimated optimal allocations in our algorithm, $\kappa_\Omega(M)$ is a conditioning constant that controls how sensitive the stationary distribution is to changes in the transition kernel, and the function $\varphi_M$ which describes the sharpness of $T(\cdot, M)$ around the set of optimal allocations.

\begin{lemma}\label{lemma_xi_def}
Under event $\mathcal{E}^3_T(\lambda)$, 
there exists $\lambda_0(M)>0$ such that on for all $\lambda\le\lambda_0(M)$
and all $t\le T$,
     \begin{equation*}
        ||\hat{\pi}_t^* - \pi^*||_\infty \leq \xi_t
    \end{equation*}
    where 
\begin{equation}\label{eqn_xi_definition}
\xi_t = 
\frac{(A+1)}{\rho(M)}
    \sqrt{
        2\left[
            \kappa_\Omega(M)\psi(t,\lambda)
            +
            \varphi_M\!\left(C_T(M)\psi(t,\lambda)\right)
        \right]}
\end{equation}
with
\[
C_T(M):=2S^3A(1+2\log(L))+2SA\log(L)\kappa_\Omega(M).
\]
\end{lemma}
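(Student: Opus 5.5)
The plan is to pass from the model error $\|P_t^\Pi-P\|$, controlled by $\psi(t,\lambda)$ on $\mathcal{E}^3_T(\lambda)$ (via Lemma~\ref{lemma_projection_consequence}), to an allocation error $\|\omega_t^\star-\omega^*\|_\infty$, and finally to a policy error. The square root and the factor $(A+1)/\underline\rho(M)$ in \eqref{eqn_xi_definition} indicate the order of the steps.

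First, I would show that $J$ is Lipschitz in the model. Under Assumption~\ref{assumption_P_in_G_bounded_ratio}, both $P$ and $P_t^\Pi$ lie in $G_\mu(L)$. Hence every density ratio is in $[1/L^2,L^2]$, and the map $Q\mapsto \KL(Q(\cdot|s,a)\|P'(\cdot|s,a))$ is Lipschitz in $\ell_1$ with constant $O(1+\log L)$. Using this, I would compare $J_{M_t}(\omega)$ with $J_M(\omega)$ uniformly over $\omega\in\Delta$. The infimum over $\mathrm{Alt}$ changes with the model, so I would map alternatives of $M_t$ to alternatives of $M$ by a perturbation of the same size and pay an $S$-dependent constant. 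Second, by Lemma~\ref{lem:feasible_set_stability}, $d_H(\Omega(M_t),\Omega(M))\le\kappa_\Omega\psi$. Combining the two steps, $\omega_t^\star$ can be projected to a point $\tilde\omega\in\Omega(M)$ with $\|\omega_t^\star-\tilde\omega\|\le\kappa_\Omega\psi$. Moreover, $J^\star_M-J_M(\tilde\omega)\le C_T(M)\psi$: the $S^3A(1+\log L)$ part comes from the KL perturbation, and the $SA\log(L)\kappa_\Omega$ part comes from moving between the feasible sets. The definition of $\varphi_M$ then gives $\mathrm{dist}_\infty(\tilde\omega,\Omega^\star(M))\le\varphi_M(C_T\psi)$. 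Here $\lambda\le\lambda_0(M)$ is used to make $\psi$ small enough that the sharpness holds locally.

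Third, I need the selected point, not just some optimal point, to be close. Knowing that $\omega_t^\star$ lies within $\kappa_\Omega\psi+\varphi_M(C_T\psi)=:r$ of $\Omega^\star(M)$ does not pin it near $\omega^*$. Here I would use the min-norm selection and convexity of $\Omega^\star(M)$ (Lemma~\ref{aux_lemma_optimising_set_convex}). By the symmetric argument, $\omega^*$ is within $r$ of $\Omega^\star_t(M_t)$. Strong convexity of $\|\cdot\|_2^2$ then gives the following: if $\omega$ is in a convex set $K$ with $\|\omega\|^2\le\|\omega^*\|^2+O(r)$, then $\|\omega-\omega^*\|^2\le O(r)$. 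This is the source of the square root $\sqrt{2[\cdot]}$, since norms on the simplex are bounded by 1, so $\|\omega\|^2$ perturbs by at most $2r$. The obstacle is that $\omega_t^\star$ and $\omega^*$ minimize over different convex sets. I would handle this by projecting each minimizer into the other set and using the first-order optimality inequality $\langle\omega^*,\omega-\omega^*\rangle\ge0$ on $\Omega^\star(M)$. Doing this cleanly is the main difficulty and may require the selector stability constant $c_{\rm sel}$.

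Finally, I would convert the allocation error into a policy error. Write $\pi(a|s)=\omega(s,a)/\omega(s)$ with $\omega^*(s)\ge\underline\rho$. Then $|\hat\pi_t^*-\pi^*|\le \big(|\omega_t^\star(s,a)-\omega^*(s,a)|+\pi^*(a|s)\sum_{b}|\omega_t^\star(s,b)-\omega^*(s,b)|\big)/\omega^*(s)\le (A+1)\|\omega_t^\star-\omega^*\|_\infty/\underline\rho$. This gives $\xi_t$.
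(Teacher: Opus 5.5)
Your plan is essentially the paper's proof (Lemma~\ref{lem:corrected_selector_stability}). It goes through two-sided Hausdorff closeness $h=\kappa_\Omega(M)\psi+\varphi_M(C_T(M)\psi)$ of $\Omega_t^\star(M_t)$ and $\Omega^\star(M)$, then the min-norm comparison, and then the ratio step with factor $(A+1)/\rho(M)$. The min-norm step you flag as the main difficulty closes with no $c_{\rm sel}$: add $\langle\omega^*,\bar\omega-\omega^*\rangle\ge0$ and $\langle\omega_t^\star,\bar\omega'-\omega_t^\star\rangle\ge0$, where $\bar\omega\in\Omega^\star(M)$ is $h$-close to $\omega_t^\star$ and $\bar\omega'\in\Omega_t^\star(M_t)$ is $h$-close to $\omega^*$, and apply H\"older with $\|\omega^*\|_1=\|\omega_t^\star\|_1=1$ to get $\|\omega_t^\star-\omega^*\|_2^2\le 2h$.

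Two points to fix. First, state explicitly that your ``symmetric argument'' needs the sharpness bound at the perturbed model $M_t$ with the same modulus $\varphi_M$; this is the paper's uniform sharpness condition (A4), and sharpness at $M$ alone only puts $\Omega_t^\star(M_t)$ inside a neighbourhood of $\Omega^\star(M)$. Second, in the ratio step, when you divide by $\omega^*(s)$ the coefficient of $\sum_b|\omega_t^\star(s,b)-\omega^*(s,b)|$ is $\hat\pi_t^*(a|s)$, not $\pi^*(a|s)$; this does not change the bound.
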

\begin{proof}
    For proof and further discussion see Appendix \ref{appendix_sharpness}.
\end{proof}

\begin{lemma}\label{cor_ball_event_concequences}
    For $t\in [C^2,T]$, under event $\mathcal{E}^3_T(\lambda)$ we have
    \begin{align*}
        ||\omega_t - \omega^*||_1 &\leq \kappa_M \left(\frac{1}{\sqrt{t}} + \xi_{\sqrt{t}} + \epsilon_t\right)\\
        ||P_t - P_{t-1}||_\infty &\leq 2 \left(\frac{1}{\sqrt{t-1}} + \xi_{\sqrt{t-1}} + \epsilon_{t-1}\right)\\
        ||\pi^*_t - \pi^*||_\infty &\leq \frac{1}{\sqrt{t}} + \xi_{\sqrt{t}}
    \end{align*}
\end{lemma}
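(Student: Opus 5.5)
The natural order is to prove the third bound first, because the other two reduce to it. Throughout we work on $\mathcal{E}^3_T(\lambda)$ and $t\ge C^2$, so that $\sqrt t\ge C$ and Lemma~\ref{lemma_xi_def} together with the monotonicity of $\xi$ hold on $[\sqrt t,t]$.

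\textbf{Third bound.} I read $\pi^*_t$ here as the exploration policy built from the averaged allocation $\bar\omega^*_t$, as in the notation table. Write
\[
\bar\omega^*_t=\frac1t\sum_{k=1}^t\omega^*_{k},
\]
up to an index shift, and split the sum at $k=\sqrt t$. On the early block $k<\sqrt t$ we only use that all the quantities are probability vectors. That block carries weight at most $\sqrt t/t=1/\sqrt t$, which produces the $1/\sqrt t$ term. On the late block $k\ge\sqrt t$, Lemma~\ref{lemma_xi_def} gives $\|\hat\pi^*_k-\pi^*\|_\infty\le\xi_k\le\xi_{\sqrt t}$, using that $\xi$ is decreasing for $k\ge C$. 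Converting this to the averaged allocation and then back to a policy is Lipschitz as long as state masses stay bounded below. I would absorb that constant into the definition of $\xi$, which already carries the factor $(A+1)/\underline\rho$. This is the main obstacle, since the normalization by $\sum_b\omega(s,b)$ has to be checked to preserve the $\ell_\infty$ bound without an extra constant. The prefactor of $\xi$ was presumably chosen to make this work.

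\textbf{Second bound.} We have $P_t((s',a')\mid(s,a))=P(s'\mid s,a)\pi_t(a'\mid s')$, so $\|P_t-P_{t-1}\|_\infty\le\max_{s'}\|\pi_t(\cdot|s')-\pi_{t-1}(\cdot|s')\|_1$. Write $\pi_t=(1-\epsilon_t)\pi^*_t+\epsilon_t\pi_{f,t}$ and compare each policy with the reference $\pi^*$:
\[
\|\pi_t-\pi_{t-1}\|\le\|\pi_t-\pi^*\|+\|\pi_{t-1}-\pi^*\|.
\]
Each of these terms is at most the third bound plus $\epsilon$, because the forcing mixture moves the policy by at most $\epsilon$. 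Since $\epsilon_t\le\epsilon_{t-1}$ and $\xi_{\sqrt t}\le\xi_{\sqrt{t-1}}$ for these indices, both terms are dominated by their $t-1$ versions, which gives the factor $2$.

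\textbf{First bound.} The stationary distribution $\omega_t$ of $P_t$ and $\omega^*$ of $P_{\pi^*}$ satisfy a standard perturbation identity via the fundamental matrix:
\[
\omega_t^\top-\omega^{*\top}=\omega_t^\top(P_t-P_{\pi^*})\bigl(I-P_{\pi^*}+\mathbf 1\omega^{*\top}\bigr)^{-1}.
\]
Because $\omega_t$ is a probability vector, this yields $\|\omega_t-\omega^*\|_1\le\kappa_M\|P_t-P_{\pi^*}\|_\infty$. The same policy comparison as above then bounds $\|P_t-P_{\pi^*}\|_\infty$ by $1/\sqrt t+\xi_{\sqrt t}+\epsilon_t$, again up to constants absorbed into $\kappa_M$ or into the definitions.

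I expect the bookkeeping of constant factors (the $2$, and the $\ell_1$ versus $\ell_\infty$ norms) to be where the stated constants could fail. If they do, I would loosen them by a factor of $A$.
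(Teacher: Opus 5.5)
Your first two bounds follow the paper's route. The fundamental-matrix identity you write is what Lemma~\ref{lemma_aux_1} encodes. The paper likewise uses $\|P_t-P_{\pi^*}\|_\infty\le\|\pi_t-\pi^*\|_\infty\le\|\pi^*_t-\pi^*\|_\infty+\epsilon_t$ and then the triangle inequality through $P_{\pi^*}$.

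The gap is in the third bound, and the other two rest on it. Reading $\pi^*_t$ as the policy induced by the averaged allocation $\bar\omega^*_t$ breaks your split. In that case
\[
\pi^*_t(a|s)-\pi^*(a|s)=\sum_{k} w_k(s)\bigl(\hat\pi^*_k(a|s)-\pi^*(a|s)\bigr),\qquad w_k(s)=\frac{\omega^*_k(s)}{\sum_j\omega^*_j(s)},
\]
so the early block carries weight $\sum_{k<\sqrt t}w_k(s)$, not $\sqrt t/t$. That weight exceeds $1/\sqrt t$ whenever early allocations put more mass on $s$ than later ones, and in general it is only bounded at order $1/(\rho(M)\sqrt t)$.

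Doing it properly through allocations gives the same loss:
\begin{itemize}
\item Lemma~\ref{lem:corrected_selector_stability} gives $\|\omega^*_k-\omega^*\|_\infty\le\rho(M)\xi_k/(A+1)$.
\item Averaging and converting back to a policy with Lipschitz constant $(A+1)/\rho(M)$ yields $\frac{A+1}{\rho(M)\sqrt t}+\xi_{\sqrt t}$.
\end{itemize}
The prefactor of $\xi$ cancels the conversion on the late block, but the same factor multiplies the $1/\sqrt t$ term. Nothing can be absorbed, because $\xi$ and $\kappa_M$ are explicitly defined. Your fallback factor $A$ is too small, since $\rho(M)\le 1/S$ and may be arbitrarily small. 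Also, a bound on $\hat\pi^*_k-\pi^*$ does not by itself control $\omega^*_k-\omega^*$, so your late-block conversion needs the allocation bound directly.

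The missing idea is to average at the policy level, which is what the paper's proof does. It treats $\pi^*_t$ as $\frac1t\sum_{k=1}^t\hat\pi^*_k$, so $\|\pi^*_t-\pi^*\|_\infty\le\frac1t\sum_{k=1}^t\|\hat\pi^*_k-\pi^*\|_\infty$. The first $\sqrt t$ terms are each at most $1$, since policy entries lie in $[0,1]$. The rest are at most $\xi_k\le\xi_{\sqrt t}$, by Lemma~\ref{lemma_xi_def} and monotonicity of $\xi$ for $k\ge\sqrt t\ge C$. This gives exactly $1/\sqrt t+\xi_{\sqrt t}$ with no renormalization. If $\pi^*_t=\hat\pi^*_t$ as in Algorithm~\ref{algo:nas}, the bound is immediate from $\xi_t\le\xi_{\sqrt t}$. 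Feeding this into your arguments for the first two bounds gives the stated constants, modulo the same loose norm conventions the paper uses.

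A shared caveat: your step $\epsilon_t\le\epsilon_{t-1}$ is also made implicitly in the paper. It is not literally true for $\epsilon_t=1/\max\{1+\nu,N_t(s_t)\}^\alpha$, which depends on the current state. It holds only after replacing $\epsilon_t$ by the decreasing envelope $\min\{1,\kappa_t(\lambda)\}$ of Corollary~\ref{cor_epsilon_bound}.
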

\begin{proof}
    This follows similarly to \citep[Corollary C.2][]{russomulti}. For completion we rewrite the minor modification to their proof here. \\
    
    First, recall the definitions $\pi^\star(a|s) = \omega^\star(a|s)/\sum_b \omega^\star(b|s)$ and $\hat{\pi}^\star_t(a|s) = \omega^\star_t(a|s)/\sum_b \omega^\star_t(b|s)$. Under $\mathcal{E}^{3}_{T}(\lambda)$ using Lemma \ref{lemma_aux_1}, there exists an MDP-specific constant $\kappa_M$ such that
\begin{align*}
    \|\omega_t - \omega^\star\|_1 &\leq \kappa_M \|P_t - P_{\pi^\star}\|_\infty, \\
    &\leq \kappa_M \|\pi_t - \pi^\star\|_\infty, \\
    &\leq \kappa_M \left[\|\pi_t^\star - \pi^\star\|_\infty + \epsilon_t\right].
\end{align*}

Then, for $\sqrt{t}\geq C$, we have
\[
\|\pi_t^\star - \pi^\star\|_\infty \leq \left\|\frac{1}{t} \sum_{k=1}^t (\hat{\pi}_t^\star - \pi^\star)\right\| \leq \frac{\sqrt{t}}{t} + \xi_{\sqrt{t}} = \frac{1}{\sqrt{t}} + \xi_{\sqrt{t}},
\]
which concludes the proof of the first and third inequalities of the Lemma. For the second part, we have that
\begin{align*}
\|P_t - P_{t-1}\|_\infty &\leq \|P_t - P_{\pi^\star}\|_\infty + \|P_{\pi^\star} - P_{t-1}\|_\infty, \\
&\leq 2 \left(\frac{1}{\sqrt{t-1}} + \xi_{\sqrt{t-1}} + \epsilon_{t-1}\right).
\end{align*}
\end{proof}

Finally, by recalling our definition for $\epsilon_t = 1/\max\{1,N_t(s_t)\}^\alpha$ we can plug this into Lemma \ref{E2_high_prob_lemma} to get the following bound on $\epsilon_t$.

\begin{corollary}\label{cor_epsilon_bound}
 For all $t\in [C,T]$, under $\mathcal{E}^2_{T}(\lambda)$, we have
    \begin{equation*}
        \epsilon_t \leq \min(1,\kappa_t(\lambda)), \quad \text{where} \quad \kappa_t(\lambda) = \frac{2^\alpha}{A^\alpha\left(\sum_{k=1}^{\lfloor\frac{t}{m}\rfloor} \frac{\eta_m}{(km)^{\alpha+\beta}} - 2\log\frac{SA}{\lambda}\right)^\alpha}.
    \end{equation*}
\end{corollary}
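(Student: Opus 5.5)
The plan is to combine the definition of the exploration parameter with the counting lower bound of Lemma~\ref{E2_high_prob_lemma}. I would proceed in three steps.

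\textbf{Step 1: the trivial bound.} By definition, $\epsilon_t = 1/\max\{1+\nu, N_t(s_t)\}^\alpha$, and since $1+\nu\ge 1$ and $\alpha>0$, we have $\epsilon_t\le 1$ deterministically.

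\textbf{Step 2: lower-bound the state count.} It remains to show $\epsilon_t\le \kappa_t(\lambda)$ whenever the bracket inside $\kappa_t(\lambda)$ is positive; this holds for $t\ge C$ by the choice of $C$, as used in Lemma~\ref{E3_high_probability_event}. Since $\max\{1+\nu,N_t(s_t)\}\ge N_t(s_t)$, we get $\epsilon_t\le N_t(s_t)^{-\alpha}$. We then write $N_t(s_t)=\sum_{a}N_t(s_t,a)$ and apply $\mathcal{E}^2_T(\lambda)$ to each of the $A$ actions. This gives
\[
N_t(s_t)\ \ge\ A\left(\frac{1}{2}\sum_{k=1}^{\lfloor t/m\rfloor}\frac{\eta_m}{(km)^{\alpha+\beta}}-\log\frac{SA}{\lambda}\right)=\frac{A}{2}\left(\sum_{k=1}^{\lfloor t/m\rfloor}\frac{\eta_m}{(km)^{\alpha+\beta}}-2\log\frac{SA}{\lambda}\right).
\]

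\textbf{Step 3: conclude.} Since this quantity is positive for $t\ge C$ and $x\mapsto x^{-\alpha}$ is decreasing on $(0,\infty)$, raising to the power $-\alpha$ yields exactly $\epsilon_t\le 2^\alpha A^{-\alpha}\bigl(\sum_k \eta_m (km)^{-(\alpha+\beta)}-2\log(SA/\lambda)\bigr)^{-\alpha}=\kappa_t(\lambda)$. Combining this with Step 1 gives $\epsilon_t\le\min(1,\kappa_t(\lambda))$.

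\textbf{Main obstacle.} The only delicate point is positivity of the bracket. For $t\ge C$, Lemma~\ref{lemma_phi_upper_bound_lower_bound}, already invoked in the proof of Lemma~\ref{E3_high_probability_event}, ensures $\phi(t,\lambda)\ge C'\ge 1>0$, so the monotonicity step is valid. For smaller $t$ we would simply fall back on the trivial bound $\epsilon_t\le 1$. A secondary subtlety is the index convention: the count $N_t$ entering $\epsilon_t$ is the same one controlled in $\mathcal{E}^2_T$, but if an off-by-one arises, $N_{t-1}$ obeys the same bound up to monotonicity in $t$.
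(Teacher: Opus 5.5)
Your proposal is correct and follows the paper's argument. The paper's one-line proof plugs the per-pair lower bound from $\mathcal{E}^2_T(\lambda)$ into the definition $\epsilon_t=\max\{1+\nu,N_t(s_t)\}^{-\alpha}$. Summing that bound over the $A$ actions at $s_t$, as you do, is exactly what produces the factor $2^\alpha/A^\alpha$ in $\kappa_t(\lambda)$.

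Your attention to the sign of the bracket goes beyond the paper, with one caveat. With the paper's literal constant $C$, which omits a factor $1/\eta_m$, $t\ge C$ does not by itself force $\phi(t,\lambda)>0$. What actually covers that case is your fallback $\epsilon_t\le 1$, read with the convention $\kappa_t(\lambda)=+\infty$ whenever the bracket is non-positive.
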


\newpage
\subsection{Non-asymptotic distance to approximated optimal visitation frequency}\label{subsection_main_concentration}

In this section we provide a key concentration result, which allows us to make the initial steps towards a non-asymptotic upper bound for Nas. In particular, during learning there will be a stationary distribution $\omega_t$ induced by the policy used $\pi_t$ at time $t$. Intuitively, if we are playing policies with these stationary distributions, we would hope that our actual visitation frequencies become close to the sum of these distributions. Namely we hope that $$\bigg|N_t(s,a) - \sum_{k=1}^t \omega_{k-1}(s,a)\bigg|$$ is sufficiently small. In this section, we will formalize this and show that it occurs with high probability, conditioned on our two good events holding $\mathcal{E}^1_T \cap\mathcal{E}^2_T$. We start by providing some more general results in Proposition \ref{prop_main_concentration1} and \ref{prop_main_concentration2}. We then provide the desired bound on the distance in Proposition \ref{prop_concentration_Nt_vs_SumOmega} in terms of the quantity $\ell_t$. Importantly, note that as $t\rightarrow\infty$ we have $\ell(t) \rightarrow 0$ under the event $\mathcal{E}_{T}^2$, since $\epsilon_t\rightarrow 0, \, \xi_t \rightarrow 0$ and $\lim_{t\rightarrow\infty}\Tilde{L}_t < \infty$. 

We begin with the following proposition, inspired by \citep[Proposition C.3]{russomulti}. In contrast to prior results, we introduce a time-dependent concentration radius, which is essential for our non-asymptotic analysis. At this stage, the result is stated in a general form; its role in establishing our final sample complexity guarantees will become clear later, where we relate it to the estimated optimal allocations. For clarity of exposition, Proposition~\ref{prop_main_concentration1} focuses only on the contribution of terms beyond round $\sqrt{t}$. The treatment of the full sum, including the initial rounds, is deferred to Proposition~\ref{prop_main_concentration2}.

\begin{proposition} \label{prop_main_concentration1}
Let $t \in [C^2,T]$, then for all non-negative functions $f$ such that $f:\mathcal{Z}\rightarrow[0,1]$ we have
    \begin{equation*}
        \mathbb{P}\left(\bigg|\frac{\sum_{k=\sqrt{t}+1}^tf(z_k) - \omega_{k-1}(f)}{t}\bigg| \geq \ell'(t) \Bigg| \mathcal{E}_T^1\cap\mathcal{E}^2_T\right) \leq 2\exp\left(-t^{\frac{1}{2}}\right)
    \end{equation*}
where we define 
\begin{align}
    \omega_t(f) &= \mathbb{E}_{z\sim \omega_t}[f(z)]\nonumber\\
    \ell'(t) &= 2\Tilde{L}_t t^{-\frac{1}{4}} + 2\Tilde{L}_{t} (\kappa_M + \Tilde{L}_{t}) \left[ t^{-\frac{1}{4}} + \xi_{t^{\frac{1}{4}}} + \epsilon_{\sqrt{t}} \right] + \frac{ 2\Tilde{L}_{t}}{t}\nonumber\\
    \Tilde{L}_t &= \sup\, \left\{L(\epsilon,\pi) \,:\, |\epsilon|\leq  \min\{1,\kappa_{\sqrt{t}}(\lambda)\}, ||\pi^* - \pi||_\infty\leq t^{-\frac{1}{4}} + \xi_{t^{\frac{1}{4}}}\right\}\label{eqn_definition_L_tilde}
\end{align}
\end{proposition}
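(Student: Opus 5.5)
We will use the Poisson-equation (martingale) decomposition for time-inhomogeneous Markov chains, as in the proof of \citep[Proposition C.3]{russomulti}, but with all constants kept explicit and time-dependent. For each $k$ let $P_k$ be the state-action kernel induced by $\pi_k$ and $\omega_k$ its stationary distribution. Let $\hat f_k$ solve the Poisson equation $\hat f_k - P_k\hat f_k = f - \omega_k(f)$. By the geometric ergodicity encoded in $L(\epsilon,\pi)$ (constants $C(\epsilon,\pi)$, $\rho(\epsilon,\pi)$ in the notation table), the solution $\hat f_k=\sum_{n\ge0}(P_k^n f-\omega_k(f))$ satisfies $\|\hat f_k\|_\infty\le L(\epsilon_k,\pi_k)$. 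For $k\ge\sqrt t$ we have $t\ge C^2$, so under $\mathcal{E}^1_T\cap\mathcal{E}^2_T$ (hence $\mathcal{E}^3_T$) Corollary~\ref{cor_epsilon_bound} and Lemma~\ref{cor_ball_event_concequences} give $\epsilon_k\le\min\{1,\kappa_{\sqrt t}\}$ and $\|\pi_k-\pi^*\|_\infty$ within the set defining $\tilde L_t$. This is the reason for the specific radius $t^{-1/4}+\xi_{t^{1/4}}$, since $\sqrt{k}\ge t^{1/4}$. Hence $\|\hat f_k\|_\infty\le\tilde L_t$ uniformly over $k\in[\sqrt t,t]$.

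Next we decompose
\[
f(z_k)-\omega_{k-1}(f)=\big[\hat f_{k-1}(z_k)-P_{k-1}\hat f_{k-1}(z_{k-1})\big]+\big[P_{k-1}\hat f_{k-1}(z_{k-1})-P_{k-1}\hat f_{k-1}(z_k)\big].
\]
The first bracket is a martingale difference bounded by $2\tilde L_t$. The second telescopes after we substitute $\hat f_{k-1}\to\hat f_k$, leaving a boundary term of order $2\tilde L_t$, which gives the $2\tilde L_t/t$ term, plus drift terms $\|P_k\hat f_k-P_{k-1}\hat f_{k-1}\|_\infty$. We bound these drift terms by perturbation of Poisson solutions:
\[
\|\hat f_k-\hat f_{k-1}\|_\infty\lesssim \tilde L_t\big(\tilde L_t\|P_k-P_{k-1}\|_\infty+\|\omega_k-\omega_{k-1}\|_1\big).
\]
Using Lemma~\ref{cor_ball_event_concequences} (the bound on $\|P_t-P_{t-1}\|$ and the bound $\|\omega_k-\omega^*\|_1\le\kappa_M(\cdot)$), each increment is at most $2\tilde L_t(\kappa_M+\tilde L_t)[k^{-1/2}+\xi_{\sqrt k}+\epsilon_k]$. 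Monotonicity in $k\ge\sqrt t$ bounds this by $[t^{-1/4}+\xi_{t^{1/4}}+\epsilon_{\sqrt t}]$, which produces the middle term of $\ell'(t)$ after dividing the sum over at most $t$ indices by $t$.

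For the martingale part we apply Azuma--Hoeffding (two-sided) to at most $t$ increments bounded by $2\tilde L_t$:
\[
\mathbb{P}\Big(\Big|\textstyle\sum M_k\Big|\ge 2\tilde L_t\, t^{3/4}\Big)\le 2\exp\!\big(-t^{3/2}/(2t)\big),
\]
which after adjusting constants yields the probability $2\exp(-t^{1/2})$ with deviation $2\tilde L_t t^{-1/4}$ after normalization. The conditioning on $\mathcal{E}^1_T\cap\mathcal{E}^2_T$ we handle by working with stopped/truncated increments: we replace $\hat f_k$ by $0$ once the event-defined bounds fail. The truncated process is a genuine martingale with bounded increments and coincides with the original on the event.

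\textbf{Main obstacle.} We expect the main obstacle to be the drift/perturbation step. Uniformly bounding $\|\hat f_k-\hat f_{k-1}\|$ in terms of $\tilde L_t$, $\kappa_M$ and the policy increments requires the $P_k$ to lie in a common geometrically ergodic class. This is exactly why $\tilde L_t$ is defined as a supremum over a neighbourhood of $\pi^*$. One must carefully verify that the conditioning does not destroy the martingale property, which the truncation argument addresses.
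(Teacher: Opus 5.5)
Your proposal follows the paper's proof essentially step for step. Both use the Poisson-equation decomposition into a martingale part, a drift part controlled through $L_k[\|\omega_k-\omega_{k-1}\|+L_{k-1}\|P_k-P_{k-1}\|_\infty]$ and Lemma~\ref{cor_ball_event_concequences}, and a boundary term giving $2\tilde L_t/t$, with Azuma--Hoeffding applied to the martingale part. Your stopping/truncation device is more careful than the paper. The paper applies Azuma--Hoeffding directly \emph{conditioned} on $\mathcal{E}^1_T\cap\mathcal{E}^2_T$, an event that depends on the whole trajectory up to $T$ and therefore does not preserve the martingale property. Stopping at the first time the running bounds fail fixes this.

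Two small points need tightening.

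\textbf{The Azuma constant.} Your own computation gives $2\exp(-t^{1/2}/2)$ at radius $2\tilde L_t t^{3/4}$ for the unnormalised sum. ``Adjusting constants'' cannot recover both the stated radius and the stated probability at once. Use the range form of Hoeffding--Azuma instead. On the stopped event $\|\hat f_{k-1}\|_\infty\le\tilde L_t$, so the increment $\hat f_{k-1}(z_k)-P_{k-1}\hat f_{k-1}(z_{k-1})$ lies in an $\mathcal{F}_{k-1}$-measurable interval of length $2\tilde L_t$. This gives $2\exp(-2t^{1/2})$.

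\textbf{Conditional versus intersection bound.} Your truncation argument controls $\mathbb{P}(A\cap\mathcal{E}^1_T\cap\mathcal{E}^2_T)$, not the conditional probability in the statement. The extra slack from the range form absorbs the needed division by $\mathbb{P}(\mathcal{E}^1_T\cap\mathcal{E}^2_T)\ge 1-2T^{-2}$. Indeed, for $t\ge 1$ and $T\ge 2$ we have $e^{-\sqrt t}\le 1-2T^{-2}$, so $2e^{-2\sqrt t}/(1-2T^{-2})\le 2e^{-\sqrt t}$.

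\textbf{Monotonicity of $\epsilon_k$.} The monotonicity in $k$ you invoke does not hold for $\epsilon_k=1/\max\{1+\nu,N_k(s_k)\}^\alpha$. It depends on the visit count of the current state, so a late visit to a rarely seen state can make $\epsilon_k>\epsilon_{\sqrt t}$. The paper makes the same move. The safe version replaces $\epsilon_{\sqrt t}$ in $\ell'(t)$ by the deterministic, decreasing envelope $\min\{1,\kappa_{\sqrt t}(\lambda)\}$ from Corollary~\ref{cor_epsilon_bound}. That envelope is exactly the one already used in the definition of $\tilde L_t$.
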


\begin{proof} This will follow along with the proof of \citep[Proposition C.3]{russomulti} and we will denote $\omega^*(f) = \mathbb{E}_{z\sim \omega^*}[f(z)]$ and $\omega_t(f) = \mathbb{E}_{z\sim \omega_t}[f(z)]$. Using \citep[Lemma 23][]{al2021navigating} we can rewrite the inside of the absolute value of our expression using $\hat{f}_k$, the solution to the Poisson equation $f(\cdot) - \omega_{k-1}(f) = [\hat{f}_k - P_k\hat{f}_k](\cdot)$.This is well-defined since the sampling policy at each round is a mixture with a forcing policy (e.g., the uniform policy), which ensures that the induced Markov chain is ergodic. Consequently, the Poisson equation admits a solution, allowing us to proceed with the decomposition.. In particular, we can write

\begin{equation*}
    \frac{\sum_{k=\sqrt{t}+1}^tf(z_k) - \omega_{k-1}(f)}{t} = \frac{1}{t}\sum_{k=\sqrt{t}}^t \hat{f}_{k-1}(z_k) - P_{k-1}\hat{f}_{k-1}(z_k) = M_t+C_t+R_t
\end{equation*}
where
\begin{align*}
    M_t &:= \frac{\sum_{k = \sqrt{t} + 1}^{t} \hat{f}_{k-1}(z_k) - P_{k-1} \hat{f}_{k-1}(z_{k-1})}{t},\\
C_t &:= \frac{\sum_{k = \sqrt{t} + 1}^{t} P_k \hat{f}_k(z_k) - P_{k-1} \hat{f}_{k-1}(z_k)}{t},\\
R_t &:= \frac{P_{\sqrt{t}} \hat{f}_{\sqrt{t}}(z_{\sqrt{t}}) - P_t \hat{f}_t(z_t)}{t}.
\end{align*}
Hence, we try to bound these three terms in order to complete the proof.\\

Following similar ideas from \citep[Proposition C.3][]{russomulti}, we can bound the terms in this decomposition and relate them to the quantity $L(\epsilon,\pi)$. This quantity captures the dependence of our bounds on the deviation of the sampling policy from the target allocation, as well as the level of forced exploration. For now, we treat it as an abstract complexity term; its precise definition and interpretation, along with non-asymptotic bounds on its value can be found later in Appendix \ref{subsection_bounding_L_tilde}.

these we can first note that for $k \geq \sqrt{t}$ and using Corollary \ref{cor_epsilon_bound} we have

\begin{equation*}
\epsilon_k \leq \min\{1,\kappa_k(\lambda)\}   \leq \min\{1,\kappa_{\sqrt{t}}(\lambda)\}
\end{equation*}

and from Corollary \ref{cor_ball_event_concequences} we have that for $k\geq \sqrt{t}\geq C$
\begin{equation*}
    ||\pi^*_k - \pi^*||_\infty \leq \frac{1}{\sqrt{k}} + \xi_{\sqrt{k}} \leq t^{-\frac{1}{4}} + \xi_{t^{\frac{1}{4}}}.
\end{equation*}

Then for $k \geq \sqrt{t}$, we can relate this to the quantity $L_k = L(\epsilon_t,\pi_t)$
\begin{equation}\label{eqn_L_tilde_definition}
    L_k \leq \Tilde{L}_t \leq \sup\, \left\{L(\epsilon,\pi) \,:\, |\epsilon|\leq  \min\{1,\kappa_{\sqrt{t}}(\lambda)\}, ||\pi^* - \pi||_\infty\leq t^{-\frac{1}{4}} + \xi_{t^{\frac{1}{4}}}\right\}
\end{equation}
where we define $\Tilde{L}_t := \sup_{k\in [\sqrt{t},t]} L_k$. See section \ref{subsection_bounding_L_tilde} for the relevant definitions and a more thorough analysis on the behaviour of $\Tilde{L}_t$.

\paragraph{Bounding $M_t$:} We can use the fact that $tM_t$ is a martingale such that for $S_k = kM_k$ we have $|S_k - S_{k-1}| \leq 2L_{k-1}\leq 2\Tilde{L}_{k-1}$, and hence for $\sqrt{t}\geq C$ we can apply the Azuma-Hoeffding inequality to attain
\begin{equation*}
    \mathbb{P}(|M_t| \geq 2\Tilde{L}_t g(t)) \leq 2\exp\left(-t(g(t))^2\right).
\end{equation*}
Hence, by setting $g(t) = t^{-\frac{1}{4}}$ we have
\begin{equation*}
    \mathbb{P}(|M_t| \geq 2\Tilde{L}_t t^{-\frac{1}{4}}) \leq 2\exp\left(-t^{\frac{1}{2}}\right).
\end{equation*}

\paragraph{Bounding $C_t$:} For $\sqrt{t}\geq C$ we have
\begin{align*}
C_t &\leq (1/t) \sum_{k = \sqrt{t} + 1}^{t} L_k \left[\|\omega_k - \omega_{k-1}\| + L_{k-1} \|P_k - P_{k-1}\|_\infty \right], \\
&\leq (1/t) \sum_{k = \sqrt{t} + 1}^{t} L_k \left[ \|\omega_k - \omega_{k-1}\| + 2L_{k-1} \left(\frac{1}{\sqrt{k-1}} + \xi_{\sqrt{k-1}} + \epsilon_{k-1}\right) \right], \\
&\leq (1/t) \sum_{k = \sqrt{t} + 1}^{t} L_k \left[ \|\omega_k - \omega^\star\| + \|\omega_{k-1} - \omega^\star\| + 2L_{k-1} \left(\frac{1}{\sqrt{k-1}} + \xi_{\sqrt{k-1}} + \epsilon_{k-1}\right) \right], \\
&\leq (1/t) \sum_{k = \sqrt{t} + 1}^{t} 2L_k \left[ \kappa_M \left(\frac{1}{\sqrt{k-1}} + \xi_{\sqrt{k-1}} + \epsilon_{k-1}\right) + L_{k-1}\left(\frac{1}{\sqrt{k-1}} + \xi_{\sqrt{k-1}} + \epsilon_{k-1}\right) \right], \\
&\leq 2(1/t) \sum_{k = \sqrt{t} + 1}^{t} \Tilde{L}_{t} (\kappa_M + \Tilde{L}_{t}) \left(\frac{1}{\sqrt{k-1}} + \xi_{\sqrt{k-1}} + \epsilon_{k-1}\right), \\
&\leq 2\Tilde{L}_{t} (\kappa_M + \Tilde{L}_{t}) \left[ t^{-\frac{1}{4}} + \xi_{t^{\frac{1}{4}}} + \epsilon_{\sqrt{t}} \right].
\end{align*}
Here the second inequality comes from Corollary \ref{cor_ball_event_concequences}. The third inequality uses the triangle inequality. The fourth inequality uses Corollary \ref{cor_ball_event_concequences} again. The penultimate line comes from the definition of $\Tilde{L}_t$ in \eqref{eqn_L_tilde_definition}.

\paragraph{Bounding $R_t$:} We have, again by similar arguments to \citep[Proposition C.3]{russomulti} and \citep[Corollary 2]{al2021navigating}, that
\begin{align*}
    |R_t| &\leq \frac{\|f\|_\infty (L_{\sqrt{t}}+L_t)}{t}\\
    &\leq \frac{ (L_{\sqrt{t}}+L_t)}{t}\\
    &\leq \frac{ 2\Tilde{L}_{t}}{t}
\end{align*}

\end{proof}

We can now additionally consider the first $\sqrt{t}$ terms that were excluded in the sum of the previous proposition.

\begin{proposition} \label{prop_main_concentration2}
    Let $t \in [C^2,T]$, then for all functions $f$ such that $f:\mathcal{Z}\rightarrow[0,1]$ we have
    \begin{equation*}
        \mathbb{P}\left(\bigg|\frac{\sum_{k=1}^t f(z_k) - \omega_{k-1}(f)}{t}\bigg| \geq \ell(t) \Bigg| \mathcal{E}_T^1\cap\mathcal{E}_{T}^2\right) \leq 2\exp\left(-t^{\frac{1}{2}}\right)
    \end{equation*}
where 
\begin{align*}
    \ell''(t) &= t^{-\frac{1}{2}} + 2\Tilde{L}_t t^{-\frac{1}{4}} + 2\Tilde{L}_{t} (\kappa_M + \Tilde{L}_{t}) \left[ t^{-\frac{1}{4}} + \xi_{t^{\frac{1}{4}}} + \epsilon_{\sqrt{t}} \right]. + \frac{ 2\Tilde{L}_{t}}{t}\\
    \Tilde{L}_t &\leq \sup\, \left\{L(\epsilon,\pi) \,:\, |\epsilon|\leq  \min\{1,\kappa_{\sqrt{t}}(\lambda)\}, ||\pi^* - \pi||_\infty\leq t^{-\frac{1}{4}} + \xi_{t^{\frac{1}{4}}}\right\}
\end{align*}

\end{proposition}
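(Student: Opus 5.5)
The plan is to split the full sum at round $\sqrt{t}$ and reduce everything to Proposition~\ref{prop_main_concentration1}. Fix $t\in[C^2,T]$ and $f:\mathcal{Z}\to[0,1]$, and write
\begin{equation*}
\frac{1}{t}\sum_{k=1}^t \bigl(f(z_k)-\omega_{k-1}(f)\bigr)
= \frac{1}{t}\sum_{k=1}^{\sqrt{t}} \bigl(f(z_k)-\omega_{k-1}(f)\bigr)
+ \frac{1}{t}\sum_{k=\sqrt{t}+1}^{t} \bigl(f(z_k)-\omega_{k-1}(f)\bigr),
\end{equation*}
with the same integer-part convention for $\sqrt{t}$ as in Proposition~\ref{prop_main_concentration1}. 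The triangle inequality then bounds the left-hand side by the sum of the absolute values of the two pieces.

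The first piece is controlled deterministically. Since $f$ takes values in $[0,1]$, each $\omega_{k-1}(f)$ is an expectation of $f$ under a probability distribution on $\mathcal{Z}$, so it also lies in $[0,1]$. Hence $|f(z_k)-\omega_{k-1}(f)|\le 1$ for every $k$. The first piece is therefore at most $\sqrt{t}/t=t^{-1/2}$ almost surely, with no conditioning needed. This is exactly the extra $t^{-1/2}$ term that distinguishes $\ell''(t)$ from $\ell'(t)$.

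The second piece is precisely the quantity handled by Proposition~\ref{prop_main_concentration1}. On $\mathcal{E}^1_T\cap\mathcal{E}^2_T$ it exceeds $\ell'(t)$ with conditional probability at most $2\exp(-t^{1/2})$. On the complement of that bad event, the full normalized sum is at most
\begin{equation*}
t^{-1/2}+\ell'(t)=\ell''(t),
\end{equation*}
where the equality follows by comparing the two definitions term by term. Consequently the event $\{|\cdot|\ge \ell''(t)\}$ is contained in $\{|\text{second piece}|\ge \ell'(t)\}$, and the conditional probability bound carries over unchanged. The $\tilde L_t$ inequality is inherited verbatim from \eqref{eqn_L_tilde_definition}. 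I read the $\ell(t)$ in the displayed statement as $\ell''(t)$.

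There is no real obstacle here, since everything hard was already done in Proposition~\ref{prop_main_concentration1}. The only care needed is bookkeeping: the two pieces must use a consistent rounding of $\sqrt{t}$, so that the index sets are disjoint and cover $\{1,\dots,t\}$ with the first having at most $\sqrt{t}$ terms. We also need $\omega_{k-1}$ to be a genuine probability distribution for $k\le\sqrt{t}$. This holds because the forcing mixture makes every $\pi_k$ induce an ergodic chain, as already noted in the proof of Proposition~\ref{prop_main_concentration1}.
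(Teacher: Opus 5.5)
Your proposal is correct and matches the paper's proof: you split the sum at round $\sqrt{t}$, bound the first $\sqrt{t}$ terms deterministically by $t^{-1/2}$ using $f(z_k),\omega_{k-1}(f)\in[0,1]$, and control the tail with Proposition~\ref{prop_main_concentration1}, so that $\ell''(t)=t^{-1/2}+\ell'(t)$. The paper leaves the event inclusion implicit; your reverse-triangle-inequality step makes it explicit.
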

\begin{proof}
    We can first rewrite the term of interest as
    \begin{equation*}
        \frac{\sum_{k=1}^t f(z_k) - \omega_{k-1}(f)}{t} = \frac{\sum_{k=1}^{\sqrt{t}} f(z_k) - \omega_{k-1}(f)}{t} + \frac{\sum_{k=\sqrt{t}+1}^t f(z_k) - \omega_{k-1}(f)}{t}.
    \end{equation*}

    The second term here can be bounded using Proposition \ref{prop_main_concentration1}.  Then, since $f(z_k)\in[0,1]$ and $\omega_{k-1}(f) \in [0,1]$ we can immediately bound the first term

    \begin{equation*}
         \frac{\sum_{k=1}^{\sqrt{t}} f(z_k) - \omega_{k-1}(f)}{t} \leq \frac{\sum_{k=1}^{\sqrt{t}} 1}{t} \leq t^{-\frac{1}{2}}.
    \end{equation*}
    The proof is then complete.    
\end{proof}

We can then write this explicitly in the form that we would like as follows in terms of the visitation frequency, as well as the approximated optimal allocations $\omega_t^*$. Note, here have additionally taken a union bound over the set of all state-action pairs.
\begin{proposition} \label{prop_concentration_Nt_vs_SumOmega}
        Let $t \in [C^2,T]$, then we have
    \begin{equation*}
        \mathbb{P}\left(\exists (s,a) \in \mathcal{Z}, \, \bigg|\frac{N_t(s,a) - \sum_{k=1}^t \omega^*_{k-1}(s,a)}{t}\bigg| \geq \ell(t) \Bigg| \mathcal{E}_T^1\cap\mathcal{E}_{T}^2\right) \leq 2SA\exp\left(-t^{\frac{1}{2}}\right)
    \end{equation*}
where 
\begin{align*}
    \ell(t) &= 2t^{-\frac{1}{2}} + 2\Tilde{L}_t t^{-\frac{1}{4}} + 2\Tilde{L}_{t} (\kappa_M + \Tilde{L}_{t}) \left[ t^{-\frac{1}{4}} + \xi_{t^{\frac{1}{4}}} + \epsilon_{\sqrt{t}} \right]. + \frac{ 2\Tilde{L}_{t}}{t} + \frac{\sum_{k=\sqrt{t}}^{t-2}\epsilon_k}{t}\\
    \Tilde{L}_t &\leq \sup\, \left\{L(\epsilon,\pi) \,:\, |\epsilon|\leq  \min\{1,\kappa_{\sqrt{t}}(\lambda)\}, ||\pi^* - \pi||_\infty\leq t^{-\frac{1}{4}} + \xi_{t^{\frac{1}{4}}}\right\}
\end{align*}

\end{proposition}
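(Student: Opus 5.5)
We apply Proposition~\ref{prop_main_concentration2} with indicator functions and add a deterministic correction term. Fix $(s,a)\in\mathcal{Z}$ and set $f=\mathds{1}_{(s,a)}$, so $f:\mathcal{Z}\to[0,1]$ and $\sum_{k=1}^t f(z_k)=N_t(s,a)$, up to an index shift of at most one visit that contributes $O(1/t)\le t^{-1/2}$ after normalisation. We also have $\omega_{k-1}(f)=\omega_{k-1}(s,a)$. Proposition~\ref{prop_main_concentration2} then gives, conditionally on $\mathcal{E}^1_T\cap\mathcal{E}^2_T$ and with probability at least $1-2\exp(-t^{1/2})$,
\[
\Bigl|\tfrac{1}{t}\bigl(N_t(s,a)-\textstyle\sum_{k=1}^t\omega_{k-1}(s,a)\bigr)\Bigr|\le \ell''(t).
\]
A union bound over the $SA$ pairs gives the probability $2SA\exp(-t^{1/2})$ in the statement. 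It then remains to replace $\omega_{k-1}$, the stationary distribution of the played mixture $\pi_{k-1}$, by the target allocation $\omega^*_{k-1}$, at the cost of the extra terms in $\ell(t)$, namely the second $t^{-1/2}$ and $\frac1t\sum_{k=\sqrt t}^{t-2}\epsilon_k$.

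We split $\frac1t\sum_{k=1}^t(\omega_{k-1}-\omega^*_{k-1})(s,a)$ at $k=\sqrt t$. The first $\sqrt t$ summands each lie in $[-1,1]$, so they contribute at most $t^{-1/2}$; this is the second $t^{-1/2}$. For $k>\sqrt t$ we bound $|\omega_{k-1}(s,a)-\omega^*_{k-1}(s,a)|$ by $\epsilon_{k-1}$. The policy is $\pi_{k-1}=(1-\epsilon_{k-1})\pi^*_{k-1}+\epsilon_{k-1}\pi_{f,k-1}$, and by construction $\pi^*_{k-1}$ has stationary distribution $\omega^*_{k-1}$ under the estimated kernel, so the mixture changes the per-step kernel by at most $\epsilon_{k-1}$ in $\|\cdot\|_\infty$. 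The natural way to make this precise is a coupling/total-variation argument that bounds $\|\omega_{k-1}-\omega^*_{k-1}\|$ through the kernel perturbation. Summing over $k$ and reindexing gives $\frac1t\sum_{k=\sqrt t}^{t-2}\epsilon_k$, after absorbing the endpoint into the $O(1/t)$ slack.

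The main obstacle is exactly this comparison step. A kernel perturbation of size $\epsilon$ moves the stationary distribution by a conditioning constant times $\epsilon$ (as in Lemma~\ref{lemma_aux_1} with $\kappa_M$), not by $\epsilon$ itself. Moreover, $\omega^*_{k-1}$ is stationary for the estimated kernel $M_{k-1}$, while $\omega_{k-1}$ refers to the true $P$. The plan is to interpret $\omega_{k-1}$ as the $P$-stationary distribution of the mixture and to compare both distributions to $P_{\pi^*_{k-1}}$ via the mixture weight. Stationarity of $\omega^*_{k-1}$ under $M_{k-1}\approx M$ is available on $\mathcal{E}^3_T$, with the estimation gap already paid for inside the $\xi$ terms of $\ell''$. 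Any residual conditioning constant multiplying $\epsilon_k$ would be absorbed into the $\tilde L_t(\kappa_M+\tilde L_t)\epsilon_{\sqrt t}$ term, which already dominates $\frac1t\sum\epsilon_k$ in order.
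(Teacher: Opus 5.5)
Your opening moves match the paper's proof exactly: $f=\mathds{1}_{(s,a)}$ in Proposition~\ref{prop_main_concentration2}, a union bound over $\mathcal Z$, then splitting $\frac1t\sum_{k}(\omega_{k-1}-\omega^*_{k-1})(s,a)$ at $k=\sqrt t$ and paying $t^{-1/2}$ for the first $\sqrt t$ summands. No index shift is needed, since $N_t(s,a)=\sum_{j=1}^t\mathds{1}\{z_j=(s,a)\}$ by definition. The extra term $\frac1t\sum_k\epsilon_k$ then rests entirely on the per-step bound $|\omega_{k-1}(s,a)-\omega^*_{k-1}(s,a)|\le\epsilon_{k-1}$, which the paper asserts in one line via Lemma~\ref{lemma_aux_1}. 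Your two objections to that line are correct. First, Lemma~\ref{lemma_aux_1} carries a factor $\min(\kappa_1,\kappa_2)$ that is never below $1$ (the fundamental matrix satisfies $Z\mathbf 1=\mathbf 1$) and can be arbitrarily large for slowly mixing chains. Second, the lemma compares two policies under the \emph{same} kernel, whereas $\omega^*_{k-1}\in\Omega(M_{k-1})$ is stationary for the estimated kernel, not for $P$.

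Your repair, however, does not establish the displayed $\ell(t)$, so the gap is not closed. The term $2\tilde L_t(\kappa_M+\tilde L_t)[t^{-1/4}+\xi_{t^{1/4}}+\epsilon_{\sqrt t}]$ of $\ell(t)$ is the same term already present in $\ell''(t)$, where it is fully used up bounding $C_t$. Absorbing an extra $\kappa\,\epsilon_k$ into it means enlarging its coefficient, i.e.\ proving a different radius; ``dominates in order'' does not suffice for a statement with explicit constants. Likewise, the model-mismatch error, of size roughly $\kappa\|P_{M_{k-1}}-P\|_{\infty,1}\lesssim\kappa\,\psi(k-1)$, is not already paid for by the $\xi$-terms of $\ell''$. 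Those terms come from the drift quantities $\|\omega_k-\omega_{k-1}\|$ and $\|P_k-P_{k-1}\|$ of the played chains in the Poisson decomposition and never involve $\omega^*_{k-1}$. What your route does give is obtained by passing through $\omega^*$:
\[
\|\omega_{k-1}-\omega^*_{k-1}\|_\infty\le\|\omega_{k-1}-\omega^*\|_1+\|\omega^*-\omega^*_{k-1}\|_\infty\le\kappa_M\bigl(\tfrac{1}{\sqrt{k-1}}+\xi_{\sqrt{k-1}}+\epsilon_{k-1}\bigr)+\xi_{k-1}.
\]
The first term uses Lemma~\ref{cor_ball_event_concequences} for $k-1\ge C^2$; the remaining indices cost at most $C^2/t$. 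The second term uses Lemma~\ref{lem:corrected_selector_stability} on $\mathcal E^4_T$ and is at most $\frac{\rho(M)}{A+1}\xi_{k-1}\le\xi_{k-1}$, which is exactly how $\xi$ is built in Lemma~\ref{lemma_xi_def}. Summing over $k$ gives a radius of the same order as $\ell(t)$, which is enough for the $\tilde{\mathcal O}$ statement of Theorem~\ref{theorem_non_asymp_upper}. The proposition then has to be stated with this modified radius, not with $\frac1t\sum_k\epsilon_k$ at coefficient one.
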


\begin{proof}
    We firstly have from the above proposition, by choosing $f$ to be the indicator that the state and action in round $t$, $z_t$, is equal to $s$ and $a$, that 
        \begin{equation}\label{eqn_Nt_vs_wk}
        \mathbb{P}\left(\exists (s,a) \in \mathcal{Z}, \, \bigg|\frac{N_t(s,a) - \sum_{k=1}^t \omega_{k-1}(s,a)}{t}\bigg| \geq \ell''(t) \Bigg| \mathcal{E}_T^1\cap\mathcal{E}_{T}^2\right) \leq 2SA\exp\left(-t^{\frac{1}{2}}\right).
    \end{equation}
    What is left is to compare the empirical visitation frequency with the sum of approximated optimal allocations up to time $t$. To do so, we firstly note that, by Lemma \ref{lemma_aux_1}, we can almost surely bound the differences for all $t \geq 1$
    \begin{align*}
         \|\omega_t - \omega^*_t\|_\infty &\leq \|P_{\pi_t} - P_{\pi^*_t}\|_{\infty} \\
         &\leq \|\pi_t - \pi_t^*\|_{\infty}\\
         &\leq \epsilon_t
    \end{align*}
    Hence we have that 
    \begin{equation}\label{eqn_diff_sum_w_vs_wstar}
        \bigg|\frac{\sum_{k=1}^t \omega_{k-1}(s,a) - \omega^*_{k-1}(s,a)}{t}\bigg| \leq \frac{\sqrt{t}}{t} + \frac{\sum_{k=\sqrt{t}}^{t-2}\epsilon_k}{t}.
    \end{equation}
    As a consequence, we can plug this bound from equation \eqref{eqn_diff_sum_w_vs_wstar} into the concentration from equation \eqref{eqn_Nt_vs_wk} using the triangle inequality to obtain the desired final concentration.
\end{proof}
Note that $\sum_{k=\sqrt{t}}^{t-2}\epsilon_k/t$ can be shown to be sublinear in $t$ using the bounds from Corollary \ref{cor_epsilon_bound} and Lemma \ref{lemma_phi_upper_bound_lower_bound}. We will have a more thorough discussion of the terms in this concentration elsewhere.

\subsubsection{Attributes of the radius term}\label{subsection_bounding_L_tilde}
The radius term $\ell(t)$ in Proposition \ref{prop_concentration_Nt_vs_SumOmega} can be difficult to digest. Hence, we comment on some it's attributes, specifically we show that $\Tilde{L}_t<\infty$ is bounded as $t\rightarrow\infty$ in Lemma \ref{lemma_L_tilde_asymp_bounded}. Then in Lemma \ref{lemma_non_asymp_upper_bound_L_tilde}, we provide a non-asymptotic upper bound for $\tilde{L}_t$. We also provide a looser but more intuitive bound for $\Tilde{L}_t$ in \eqref{eqn_nicer_L_tilde_upper_bound}. 

Firstly, we will revisit the definitions of the following terms here to help with interpretation of the results.

\begin{definition}
     Firstly let $r_0$ be the Ergodicity constant such that
    \begin{equation*}
        P^r_u(z) > 0\quad \forall r \geq r_0
    \end{equation*}
    and let the constant
    $$\sigma_u := \min_{z',z \in \mathcal{Z}} P_u^{r_0}(z'|z)/ \omega_u(z')$$

    Then we define
    
\begin{align*}
 \sigma(\epsilon, \pi) &:= \left([(1 - \epsilon) A \min_{s,a} \pi(a|s)]^{r_0} + \epsilon^{\frac{r_0(\alpha+\beta)}{\alpha}}\right) \sigma_u. \\
\theta(\epsilon, \pi) &:= 1 - \sigma(\epsilon, \pi) \\
C(\epsilon, \pi) &:= 2/\theta(\epsilon, \pi) \\
\rho(\epsilon, \pi) &:= \theta(\epsilon, \pi)^{1/r_0} \\
L(\epsilon, \pi) &:= (1 - \rho(\epsilon, \pi))^{-1} C(\epsilon, \pi) = 2\left(\left(1-\theta(\epsilon, \pi)^{1/r_0}\right)\theta(\epsilon, \pi)\right)^{-1}\\
        L_t &:=L(\epsilon_t,\pi_t)\\
        \Tilde{L}_t &:= \sup_{k\in [\sqrt{t},t]} L_k
 \end{align*}
\end{definition}

Now, we can show the boundedness of $\Tilde{L}_t<\infty$ as $t\rightarrow \infty$. We show in the proof that this quantity will not explode as long as we consider only non-trivial cases where there is a unique optimal policy.

\begin{proposition}
    For all $t \geq 1$, under events $\mathcal{E}_t^1 \cap \mathcal{E}_t^2$, we have boundedness with
    $$L_t <\infty.$$
\end{proposition}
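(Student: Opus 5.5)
The finiteness of $L_t$ reduces to showing that $\theta_t=\theta(\epsilon_t,\pi_t)$ lies strictly inside $(0,1)$. Recall the definition
\[
L(\epsilon,\pi)=2\left(\left(1-\theta(\epsilon,\pi)^{1/r_0}\right)\theta(\epsilon,\pi)\right)^{-1}.
\]
This is finite exactly when $\theta(\epsilon,\pi)\in(0,1)$. Since $\theta=1-\sigma$, the task is to check $0<\sigma(\epsilon_t,\pi_t)<1$ at every round $t$. The argument therefore splits into a lower bound on $\sigma_t$ and a strict upper bound on $\sigma_t$.

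\textbf{Step 1 (positivity of $\sigma_t$).} Three ingredients give $\sigma_t>0$:
\begin{itemize}
\item By Assumption~\ref{assumption_comminicating_aperiodic}, the uniform-policy chain $P_u$ on $\mathcal{Z}$ is irreducible (communicating MDP) and aperiodic. So there is $r_0$ with $P_u^{r_0}(z'|z)>0$ for all $z,z'$. Since $\omega_u(z')\le 1$, this gives $\sigma_u>0$.
\item The forcing exponent satisfies $\epsilon_t=1/\max\{1+\nu,N_t(s_t)\}^\alpha>0$ always, because $N_t(s_t)\le t<\infty$. Hence the term $\epsilon_t^{r_0(\alpha+\beta)/\alpha}\sigma_u$ is strictly positive.
\item This strict positivity holds regardless of whether $\min_{s,a}\pi_t(a|s)$ is positive.
\end{itemize}
Together these give $\sigma_t>0$. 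One can also note that $\pi_t\ge \epsilon_t\pi_{f,t}$ and that the softmax forcing policy has full support. So every action has probability at least $\epsilon_t e^{-\beta\log N_t(s)}/A>0$, which also makes the first bracket positive.

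\textbf{Step 2 ($\sigma_t<1$).} This is the step I expect to be the main obstacle: one must verify that the definition really gives $\sigma(\epsilon,\pi)<1$, i.e.\ $\theta>0$, rather than merely $\le 1$. I would first establish two facts:
\begin{itemize}
\item $\sigma_u\le 1/|\mathcal{Z}|$. Indeed, $\min_{z'}P_u^{r_0}(z'|z)\le 1/|\mathcal{Z}|$, and $\sigma_u$ compares against the stationary mass $\omega_u(z')$, whose minimum must balance a sum to one.
\item $A\min_{s,a}\pi(a|s)\le 1$, since $\pi(\cdot|s)$ is a probability vector over $A$ actions.
\end{itemize}
With these, $\sigma\le \big((1-\epsilon)^{r_0}+\epsilon^{r_0(\alpha+\beta)/\alpha}\big)\sigma_u\le 2\sigma_u$.

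Showing this is strictly below one is where the uniqueness of the optimal policy enters. If $|\mathcal{Z}|=1$ the problem is trivial, but Assumption~\ref{assumption_comminicating_aperiodic} with a unique optimal policy over $A\ge 2$ actions forces $|\mathcal{Z}|\ge 2$. In that case the chain cannot be a point mass, and $\sigma_u<1/2$. If this crude bound fails, I would instead argue directly: $\sigma_u\le \min_{z'}P_u^{r_0}(z'|z)/\omega_u(z')$, and since $\sum_{z'}P_u^{r_0}(z'|z)=\sum_{z'}\omega_u(z')=1$, some ratio is at most one, with strictness unless the rows equal $\omega_u$.

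\textbf{Step 3 (conclusion).} With $\theta_t\in(0,1)$, both factors $\theta_t$ and $1-\theta_t^{1/r_0}$ are positive, so $L_t<\infty$. The events $\mathcal{E}_t^1\cap\mathcal{E}_t^2$ are only used implicitly, to ensure $N_t$ is finite and the estimates are defined, so that $\pi_t$ is a well-defined policy.
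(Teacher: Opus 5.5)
Your Step 1 (positivity) is fine and essentially matches the paper, but Step 2 has a genuine gap.

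The claim $\sigma_u\le 1/|\mathcal{Z}|$ is false. Recall $\sigma_u=\min_{z,z'}P_u^{r_0}(z'|z)/\omega_u(z')$ is a ratio against the stationary mass, and the only general bound is $\sigma_u\le 1$: if every ratio exceeded $1$, summing over $z'$ would give $1>1$. This bound is attained. Take any MDP in which all rows $P(\cdot|s,a)$ equal a common full-support distribution $p$. Then $P_u(z'|z)=\omega_u(z')$ for every $z$, so $r_0=1$ and $\sigma_u=1$. Such transitions are compatible with Assumptions~\ref{assumption_comminicating_aperiodic}--\ref{assumption_w_0_condition}, including uniqueness of $\pi^\star$, for suitably chosen rewards.

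So neither $\sigma_u<1/2$ nor the strict inequality in your fallback is available, and uniqueness of the optimal policy plays no role at this point. Your fallback does correctly recover $\sigma_u\le 1$. But you had already bounded the bracket crudely by $2$, so this only gives $\sigma_t\le 2$, which does not imply $\theta_t>0$.

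The missing idea is that strictness must come from the bracket itself. Use four facts:
\begin{itemize}
\item $A\min_{s,a}\pi_t(a|s)\le 1$;
\item $(1-\epsilon_t)^{r_0}\le 1-\epsilon_t$;
\item $\epsilon_t\le(1+\nu)^{-\alpha}<1$;
\item the exponent $r_0(\alpha+\beta)/\alpha$ exceeds $1$ because $\beta>0$, so $\epsilon_t^{r_0(\alpha+\beta)/\alpha}<\epsilon_t$.
\end{itemize}
Together these give
\[
\bigl[(1-\epsilon_t)A\min_{s,a}\pi_t(a|s)\bigr]^{r_0}+\epsilon_t^{\frac{r_0(\alpha+\beta)}{\alpha}}<(1-\epsilon_t)+\epsilon_t=1 .
\]
Combined with $\sigma_u\in(0,1]$, this yields $\sigma_t\in(0,1)$, hence $\theta_t\in(0,1)$ and $L_t<\infty$. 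This is exactly the paper's argument.

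Uniqueness of $\pi^\star$ is needed only for the limiting statement $A\min_{s,a}\pi^\star(a|s)<1$ in Lemma~\ref{lemma_L_tilde_asymp_bounded}. There $\epsilon\to 0$, so the exponent argument above is no longer available.
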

\begin{proof}
    For this, we can simply ensure $\sigma_t \in (0,1)$. We can show this with the following two statements.

    \textbf{Statement 1:} $[(1 - \epsilon_t) A \min_{s,a} \pi_t(a|s)]^{r_0} + \epsilon_t^{\frac{r_0(\alpha+\beta)}{\alpha}} \in (0,1)$. Which can be shown as follows, using that $\epsilon_t \in (0,1)$.
    \begin{align*}
        &[(1 - \epsilon_t) A \min_{s,a} \pi_t(a|s)]^{r_0} + \epsilon_t^{\frac{r_0(\alpha+\beta)}{\alpha}}\\
        & \leq [(1 - \epsilon_t) ]^{r_0} + \epsilon_t^{\frac{r_0(\alpha+\beta)}{\alpha}}\\
        &\leq (1 - \epsilon_t)  + \epsilon_t^{\frac{r_0(\alpha+\beta)}{\alpha}}\\
        &<1
    \end{align*}
    Similarly, we can show the lower bound since
        \begin{align*}
        &[(1 - \epsilon_t) A \min_{s,a} \pi_t(a|s)]^{r_0} + \epsilon_t^{\frac{r_0(\alpha+\beta)}{\alpha}}\\
        & \geq \epsilon_t^{\frac{r_0(\alpha+\beta)}{\alpha}}\\
        &>0 .
    \end{align*}

    \textbf{Statement 2:} $\sigma_u \in (0,1]$. See statement 2 of Lemma \ref{lemma_L_tilde_asymp_bounded}
    
\end{proof}

\begin{lemma}\label{lemma_L_tilde_asymp_bounded}
    Let 
    $$\sigma_0 = [ A \min_{s,a} \pi^*(a|s)]^{r_0}  \sigma_u $$
    and $\theta_0 = 1-\sigma_0$. Then, under events $\cap_{t\geq 1} \mathcal{E}_t^1 \cap \mathcal{E}_t^2$ and Assumption \ref{assumption_w_0_condition} we have 
    $$\limsup_{t\rightarrow\infty} \Tilde{L}_t =  2\left(\left(1-\theta_0^{1/r_0}\right)\theta_0\right)^{-1} < \infty.$$
\end{lemma}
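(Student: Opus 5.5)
Since $\Tilde{L}_t=\sup_{k\in[\sqrt{t},t]}L(\epsilon_k,\pi_k)$ and $\sqrt{t}\to\infty$, the plan is to show that $(\epsilon_k,\pi_k)\to(0,\pi^*)$ under the good events, and that $L$ is continuous at $(0,\pi^*)$ with value $2((1-\theta_0^{1/r_0})\theta_0)^{-1}$. The $\limsup$ of a supremum over a window that drifts to infinity then equals that limit value.

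\textbf{Step 1: convergence of the arguments.} On $\cap_{t}\mathcal{E}^2_t$, Corollary~\ref{cor_epsilon_bound} gives $\epsilon_k\le\kappa_k(\lambda)$. Because $\alpha+\beta<1$, the sum $\sum_{k\le t/m}\eta_m(km)^{-(\alpha+\beta)}$ diverges like $t^{1-(\alpha+\beta)}$ (Lemma~\ref{lemma_phi_upper_bound_lower_bound}), so $\kappa_k\to0$. On $\mathcal{E}^3$, which holds under $\mathcal{E}^1\cap\mathcal{E}^2$ by Lemma~\ref{E3_high_probability_event}, Lemma~\ref{lemma_xi_def} and Corollary~\ref{cor_ball_event_concequences} give $\|\pi^*_k-\pi^*\|_\infty\le k^{-1/2}+\xi_{\sqrt{k}}$. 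Here $\xi_t\to0$ because $\psi(t,\lambda)\to0$ and $\varphi_M$ is non-decreasing with $\varphi_M(0)=0$; I will additionally need $\varphi_M$ to be continuous at $0$, as in the polynomial or local-sharpness setting. Consequently, for $k\ge\sqrt{t}$, the pair $(\epsilon_k,\pi_k)$ lies in the shrinking set used in \eqref{eqn_definition_L_tilde}, whose radii $\min\{1,\kappa_{\sqrt t}\}$ and $t^{-1/4}+\xi_{t^{1/4}}$ both tend to $0$. This also fixes the convention that $\pi$ there is the oracle part $\pi_k^*$ of the mixed policy.

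\textbf{Step 2: continuity of $L$ at $(0,\pi^*)$.} The map $(\epsilon,\pi)\mapsto\sigma(\epsilon,\pi)=\big([(1-\epsilon)A\min_{s,a}\pi(a|s)]^{r_0}+\epsilon^{r_0(\alpha+\beta)/\alpha}\big)\sigma_u$ is continuous, because $\min$ is $1$-Lipschitz in $\|\cdot\|_\infty$ and the exponent is positive. Its value at $(0,\pi^*)$ is exactly $\sigma_0$, so $\theta\to\theta_0$. Since $L=2((1-\theta^{1/r_0})\theta)^{-1}$ is continuous wherever $\theta\in(0,1)$, it suffices to show $\theta_0\in(0,1)$. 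Then $\sup$ over the shrinking neighbourhoods converges to $L(0,\pi^*)$, which gives the $\limsup$ and in fact a limit.

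\textbf{Step 3 (the main obstacle): $\theta_0\in(0,1)$, i.e.\ $\sigma_0\in(0,1)$.} For positivity, I would invoke Assumption~\ref{assumption_w_0_condition} through Lemma~\ref{lemma_auxiliary_positive_omegas}, which gives $\omega^*(s,a)>0$ for all pairs. Hence $\min_{s,a}\pi^*(a|s)>0$. Also $\sigma_u>0$ since $P_u^{r_0}(z'|z)>0$ by aperiodicity and communication (Assumption~\ref{assumption_comminicating_aperiodic}) and the definition of $r_0$. For $\sigma_0<1$, I would use $A\min_a\pi^*(a|s)\le\sum_a\pi^*(a|s)=1$, so the bracket is at most $1$. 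It then remains to show $\sigma_u\le1$. Fix $z$ and sum the ratio against $\omega_u$: $\sum_{z'}\omega_u(z')\cdot\min_{z''}P_u^{r_0}(z''|z)/\omega_u(z'')\le\sum_{z'}P_u^{r_0}(z'|z)=1$, so the minimum ratio is at most $1$. This is the ``Statement 2'' referenced above. Strictness $\sigma_0<1$ requires either $\pi^*$ non-uniform in some state or $\sigma_u<1$. The equality case would be handled separately: if $\sigma_0=1$, the chain mixes in exactly $r_0$ steps and one replaces $\theta$ by any small positive constant. In the generic case with a unique optimal policy, $\omega^*$ is non-uniform, which gives $\sigma_0<1$ and closes the argument.
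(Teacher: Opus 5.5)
Your route is the paper's: you drive $(\epsilon_k,\pi_k)\to(0,\pi^*)$ on the good events, evaluate $L$ at $(0,\pi^*)$, and check $\sigma_0\in(0,1)$, using Lemma~\ref{lemma_auxiliary_positive_omegas} for positivity and a summation argument for $\sigma_u\le 1$. Your explicit continuity of $L$ at $(0,\pi^*)$ and of $\varphi_M$ at $0$ fills in what the paper uses tacitly when it replaces the $\limsup$ of suprema over shrinking sets by $L(0,\pi^*)$. The genuine gap is the strict inequality $\sigma_0<1$, which is exactly what makes the right-hand side finite. Neither of your closing moves supplies it.

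\begin{itemize}
\item Replacing $\theta$ by a small positive constant when $\sigma_0=1$ changes the definition of $L$, so it proves a different statement. With $C=2/\theta$ as defined, $\theta_0=0$ gives $L(0,\pi^*)=+\infty$, and $L(\epsilon,\pi)\to\infty$ as $(\epsilon,\pi)\to(0,\pi^*)$.
\item The chain ``unique optimal policy $\Rightarrow$ $\omega^*$ non-uniform $\Rightarrow$ $\sigma_0<1$'' breaks at both arrows. First, $A\min_{s,a}\pi^*(a|s)=1$ holds iff $\omega^*(s,\cdot)$ is constant in the action at every state, and this is compatible with $\omega^*$ being non-uniform across states. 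Second, uniqueness of the reward-optimal policy does not by itself force the sampling policy $\pi^*(a|s)=\omega^*(s,a)/\sum_b\omega^*(s,b)$ to be non-uniform anywhere.
\end{itemize}

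The paper does not close this step either. It derives $A\min_{s,a}\pi^*(a|s)<1$ from uniqueness via Lemma~\ref{aux_lemma_pi_u_unique}, but that lemma is about the uniform policy being reward-optimal. The argument conflates the sampling policy $\pi^*$ with the optimal control policy. Indeed, the positivity step has just shown that $\pi^*$ has full support, so for $A\ge 2$ it is not the unique deterministic optimal policy.

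The degenerate case needs two things: $\pi^*(\cdot|s)$ uniform at every $s$, and $P_u^{r_0}(\cdot|z)=\omega_u$ for all $z$.
\begin{itemize}
\item The second condition holds, for example, whenever all transition rows coincide (as in the family of Proposition~\ref{prop:counterexample}), where $r_0=1$ and $\sigma_u=1$.
\item Nothing in Assumptions~\ref{assumption_comminicating_aperiodic}--\ref{assumption_w_0_condition} rules out the first.
\end{itemize}

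A correct version therefore needs one of two fixes. One is an extra hypothesis: $\pi^*(\cdot|s)$ non-uniform for some $s$, or $\sigma_u<1$. The other is a mixing constant that stays bounded as $\theta\to0$. Using the Doeblin bound $2\theta^{\lfloor n/r_0\rfloor}$ in place of $C\rho^n$ gives $\sum_{n\ge0}2\theta^{\lfloor n/r_0\rfloor}=2r_0/\sigma$, which is finite even at $\sigma_0=1$. That is the rigorous form of your ``replace $\theta$'' remark, but it changes the limit value asserted in the lemma.
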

\begin{proof}
    Note that, since $\kappa_{\sqrt{t}}\rightarrow0$ as $t\rightarrow\infty$, 
    we have $\min\{1,\kappa_{\sqrt{t}}(\lambda)\} \rightarrow 0$ as $t\rightarrow\infty$. 
    Hence, in the conditions of $\Tilde{L}_t$ in it's definition, we have $\epsilon\rightarrow0$ as $t\rightarrow\infty$.\\

    Similarly, we have that $t^{-\frac{1}{4}} + \xi_{t^{\frac{1}{4}}} \rightarrow 0$ as $t\rightarrow\infty$, hence in the conditions of $\Tilde{L}_t$ in it's definition, we have $\pi \rightarrow \pi^*$.\\

    Putting these two observations together, we have 
    \begin{align*}
        \limsup_{t\rightarrow\infty} \Tilde{L}_t &= \sup\left\{L(\epsilon,\pi) \,:\, |\epsilon|\leq  0, ||\pi^* - \pi||_\infty\leq 0\right\}\\
        &= L(0,\pi^*)\\
        &= 2\left(\left(1-\theta_0^{1/r_0}\right)\theta_0\right)^{-1}\\
        &< \infty
    \end{align*}

Here the final line of boundedness comes from the fact that $\theta_0 \in (0,1)$, which is a consequence of $\sigma_0 \in (0,1)$. To prove $\sigma_0 \in (0,1)$, we can use the following two statements.

\textbf{Statement 1:} $A\min_{s,a}\pi^*(a|s) \in (0,1)$. Under Assumption \ref{assumption_w_0_condition} and the auxiliary Lemma \ref{lemma_aux_1}, we know that the optimal weights $w^*>0$ are strictly greater than zero. Hence, by the definition of the optimal policy, we have $\min_{s,a}\pi^*(a|s)>0$. Now, the strict upper bound $A\min_{s,a}\pi^*(a|s) < 1$ is also true as due to the uniqueness assumption on our optimal policy $\pi^*(a|s)$. This is because $A\min_{s,a}\pi^*(a|s) = 1$ only occurs when $\pi^*(a|s) = 1/A \quad\forall a,s$. However, when this uniform policy is optimal in a communicating MDP, we have that any policy is optimal and therefore $\pi^*$ is not unique (see Lemma \ref{aux_lemma_pi_u_unique}).

\textbf{Statement 2:} $\sigma_u \in (0,1]$. The upper bound is clear by contradiction, namely suppose that 
$$\sigma_u  = \min_{z',z \in \mathcal{Z}} P_u^{r_0}(z'|z)/ \omega_u(z')>1$$
Then, this tells us that
\begin{align*}
    \forall z,z' \quad P_u^{r_0}(z'|z) &> \omega_u(z')\\
    \Longrightarrow \sum_{z'}P_u^{r_0}(z'|z) &> \sum_{z'}\omega_u(z')\\
    \Longrightarrow 1 &> 1
\end{align*}
which is a contradiction. The lower bound is also clear by the definition of $r_0$ (i.e. the first index in which every entry of $P_u^{r_0}(z'|z)>0$ is strictly positive).
\end{proof}

\begin{lemma}\label{lemma_non_asymp_upper_bound_L_tilde}
    For all $t \geq 1$, under events $\mathcal{E}_t^1 \cap \mathcal{E}_t^2$, we have the upper bound
    \begin{equation*}
        \tilde{L}_t \leq 2r_0 \left[\sigma^{(2)}\left(1-\sigma^{(1)}\right)\right]
    \end{equation*}
    where we define
    \begin{align*}
        \sigma^{(1)} &= 2^{r_0-1}\sigma_u(1-\epsilon_t)^{r_0}A^{r_0}\left(\left[\min_{s,a}\pi^*(a|s)\right]^{r_0} + \left[t^{-\frac{1}{4}}+\xi_{t^{\frac{1}{4}}}\right]^{r_0}\right)+ \sigma_u\epsilon_{\sqrt{t}}^{\frac{r_0(\alpha+\beta)}{\alpha}},\\
        \sigma^{(2)} &= \sigma_u (1-\epsilon_{\sqrt{t}})^{r_0}A^{r_0}\left[\min_{s,a}\pi^*(a|s) - t^{-\frac{1}{4}}-\xi_{t^{\frac{1}{4}}}\right]^{r_0} + \sigma_u\epsilon_{t}^{\frac{r_0(\alpha+\beta)}{\alpha}}.
    \end{align*}
\end{lemma}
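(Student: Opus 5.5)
The bound follows by plugging two one-sided estimates on $\sigma(\epsilon,\pi)$ into the closed form $L(\epsilon,\pi)=2\big((1-\theta^{1/r_0})\theta\big)^{-1}$, where $\theta=1-\sigma$. Since $\tilde L_t=\sup_{k\in[\sqrt t,t]}L(\epsilon_k,\pi_k)$, it suffices to bound $L(\epsilon_k,\pi_k)$ uniformly over $k\in[\sqrt t,t]$ under $\mathcal{E}^1_t\cap\mathcal{E}^2_t$. The two factors are handled separately: a lower bound on $\sigma$ controls $1-\theta^{1/r_0}$, and an upper bound on $\sigma$ controls $\theta$.

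\textbf{Step 1 (controlling the factors).} For the first factor, concavity of $x\mapsto x^{1/r_0}$ on $[0,1]$ gives $1-(1-\sigma)^{1/r_0}\ge \sigma/r_0$. Hence $(1-\theta^{1/r_0})^{-1}\le r_0/\sigma_k$, and it remains to lower bound $\sigma_k$ by $\sigma^{(2)}$. For the second factor, $\theta_k^{-1}=(1-\sigma_k)^{-1}$, and we upper bound $\sigma_k$ by $\sigma^{(1)}$. Combining the two gives a bound of the shape $2r_0\,[\sigma^{(2)}(1-\sigma^{(1)})]^{-1}$. I read the displayed expression in the statement as this reciprocal, which is the only dimensionally sensible reading.

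\textbf{Step 2 (lower bound $\sigma_k\ge\sigma^{(2)}$).} For $k\ge\sqrt t$, Corollary~\ref{cor_ball_event_concequences} gives $\|\pi_k^*-\pi^*\|_\infty\le t^{-1/4}+\xi_{t^{1/4}}$. Thus $\min_{s,a}\pi_k(a|s)\ge\min\pi^*-t^{-1/4}-\xi_{t^{1/4}}$ on the $(1-\epsilon_k)\pi^*_k$ component. Monotonicity of $\epsilon_k$ in $k$ (through $N_k(s_k)$ and Corollary~\ref{cor_epsilon_bound}) gives $\epsilon_k\le\epsilon_{\sqrt t}$, so $(1-\epsilon_k)\ge(1-\epsilon_{\sqrt t})$, and $\epsilon_k^{r_0(\alpha+\beta)/\alpha}\ge\epsilon_t^{r_0(\alpha+\beta)/\alpha}$. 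Substituting into the definition of $\sigma$ yields $\sigma^{(2)}$. Here I will need the condition $\min\pi^*> t^{-1/4}+\xi_{t^{1/4}}$, which holds for $t$ large by Lemma~\ref{lemma_L_tilde_asymp_bounded}; otherwise the bracket is clipped at zero and the bound is trivial.

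\textbf{Step 3 (upper bound $\sigma_k\le\sigma^{(1)}$).} Write $\min\pi_k\le\min\pi^*+t^{-1/4}+\xi_{t^{1/4}}$ and apply $(a+b)^{r_0}\le 2^{r_0-1}(a^{r_0}+b^{r_0})$ by convexity. Bound $\epsilon_k^{r_0(\alpha+\beta)/\alpha}\le\epsilon_{\sqrt t}^{r_0(\alpha+\beta)/\alpha}$, and note $(1-\epsilon_k)\le 1$, matching the form $(1-\epsilon_t)$ that appears in $\sigma^{(1)}$. One must also check $\sigma^{(1)}<1$ so the bound is meaningful. This follows for large $t$ from Statement 1 of Lemma~\ref{lemma_L_tilde_asymp_bounded} ($A\min\pi^*<1$) together with $\sigma_u\le 1$.

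\textbf{Main obstacle.} The delicate point is the monotonicity of $\epsilon_k$ across $k\in[\sqrt t,t]$. Since $\epsilon_k$ depends on $N_k(s_k)$, a function of the current state, it is not monotone pathwise. I would replace $\epsilon_k$ by its deterministic envelope $\min\{1,\kappa_k(\lambda)\}$ from Corollary~\ref{cor_epsilon_bound}, which is decreasing in $k$, and interpret $\epsilon_{\sqrt t}$ and $\epsilon_t$ in the statement as these envelopes. For the lower-bound direction, $\epsilon_k^{\cdot}\ge\epsilon_t^{\cdot}$ additionally needs a lower envelope. This can be taken from $N_k(s)\le k$, giving $\epsilon_k\ge k^{-\alpha}\ge t^{-\alpha}$, and one can simply drop that term if needed.
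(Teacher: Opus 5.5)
Your proposal is correct and follows the paper's proof essentially step for step. The paper likewise uses $1-(1-x)^y\ge xy$ to get $L(\epsilon,\pi)\le 2\big[\tfrac{\sigma}{r_0}(1-\sigma)\big]^{-1}$, so the statement is indeed missing the reciprocal. It then bounds $\sigma_k$ above and below over $k\in[\sqrt t,t]$ using Corollary~\ref{cor_ball_event_concequences}, monotonicity of $\epsilon$, and $(x+y)^{r_0}\le 2^{r_0-1}(x^{r_0}+y^{r_0})$. Your caveats are points the paper glosses over: $\epsilon_k$ is not monotone pathwise, $\min\pi^*-t^{-1/4}-\xi_{t^{1/4}}$ can be negative, and $\sigma^{(1)}<1$ must be checked. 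One small correction: the factor $(1-\epsilon_t)^{r_0}$ in $\sigma^{(1)}$ needs $\epsilon_k\ge\epsilon_t$, which comes from your lower envelope $t^{-\alpha}$. Using only $(1-\epsilon_k)\le 1$ gives a looser bound than the one stated.
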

\begin{proof}
    Recall that $\tilde{L}_t  = \sup_{k\in[\sqrt{t},t]} L_k = L(\epsilon_k,\pi_k)$. By observing that by definition of $L$ and $\theta$, and since $1-(1-x)^y\geq xy$ for any $x,y\in (0,1)$, we have
    \begin{align}
        L(\epsilon,\pi) &\leq 2\left[\left(1-(1-\sigma)^\frac{1}{r_0}\right)(1-\sigma)\right]^{-1}\nonumber\\
        &\leq 2\left[\frac{\sigma}{r_0}(1-\sigma)\right]^{-1}\label{eqn_L_simplification}
    \end{align}

All that is left is to upper and lower bound $\sigma_k$ from times $\sqrt{t}$ to $t$. Let's start with the upper bound.
\begin{align*}
    \sup_{k\in[\sqrt{t},t]} \sigma(\epsilon_k,\pi_k) &\leq \left(\left[(1-\epsilon_t)A\min_{s,a} (\pi^*(a|s) + t^{-\frac{1}{4}} + \xi_{t^{\frac{1}{4}}})\right]^{r_0} + \epsilon_{\sqrt{t}}^{\frac{r_0(\alpha+\beta)}{\alpha}}\right)\sigma_u\\
    &\leq \sigma_u (1-\epsilon_t)^{r_0}A^{r_0}\left[\min_{s,a}\pi^*(a|s) + t^{-\frac{1}{4}}+\xi_{t^{\frac{1}{4}}}\right] + \sigma_u\epsilon_{\sqrt{t}}^{\frac{r_0(\alpha+\beta)}{\alpha}}\\
    &\leq 2^{r_0-1}\sigma_u(1-\epsilon_t)^{r_0}A^{r_0}\left(\left[\min_{s,a}\pi^*(a|s)\right]^{r_0} + \left[t^{-\frac{1}{4}}+\xi_{t^{\frac{1}{4}}}\right]^{r_0}\right)+ \sigma_u\epsilon_{\sqrt{t}}^{\frac{r_0(\alpha+\beta)}{\alpha}}\\
\end{align*}
where the penultime inequality comes from the fact that for $x,y>0$ and $z\geq 1$ we have $(x+y)^z \leq 2^{z-1}(x^z+y^z)$.\\

Now we provide a lower bound
\begin{align*}
    \inf_{k\in[\sqrt{t},t]} \sigma(\epsilon_k,\pi_k) &\geq \left(\left[(1-\epsilon_{\sqrt{t}})A\min_{s,a} (\pi^*(a|s) - t^{-\frac{1}{4}} - \xi_{t^{\frac{1}{4}}})\right]^{r_0} + \epsilon_{t}^{\frac{r_0(\alpha+\beta)}{\alpha}}\right)\sigma_u\\
    &\leq \sigma_u (1-\epsilon_{\sqrt{t}})^{r_0}A^{r_0}\left[\min_{s,a}\pi^*(a|s) - t^{-\frac{1}{4}}-\xi_{t^{\frac{1}{4}}}\right]^{r_0} + \sigma_u\epsilon_{t}^{\frac{r_0(\alpha+\beta)}{\alpha}}\\
\end{align*}
Plugging the upper and lower bound into equation \eqref{eqn_L_simplification} gives us the desired result.
\end{proof}

\paragraph{Final simple upper bound:} Finally, we can simplify the upper bound in Lemma \ref{lemma_non_asymp_upper_bound_L_tilde} to get the following bound on $\Tilde{L}_t$ shown in equation \eqref{eqn_nicer_L_tilde_upper_bound}
\begin{align}
        \tilde{L}_t &\leq 2r_0 \left[\sigma^{(2)}\left(1-\sigma^{(1)}\right)\right]\nonumber\\
        &\leq 2r_0 \sigma^{(2)}\nonumber\\
        &\leq 2r_0\left[\sigma_u (1-\epsilon_{\sqrt{t}})^{r_0}A^{r_0}\left[\min_{s,a}\pi^*(a|s) - t^{-\frac{1}{4}}-\xi_{t^{\frac{1}{4}}}\right]^{r_0} + \sigma_u\epsilon_{t}^{\frac{r_0(\alpha+\beta)}{\alpha}}\right]\nonumber\\
        &\leq 2r_0\sigma_u\left[\left(A\min_{s,a}\pi^*(a|s)\right)^{r_0} + \epsilon_{t}^{\frac{r_0(\alpha+\beta)}{\alpha}}\right] \label{eqn_nicer_L_tilde_upper_bound}
\end{align}

\newpage
\subsection{Main Sequence of inequalities} \label{subsection_main_sequence}

We can now construct a non-asymptotic, high-probability upper bound for the stopping time. The analysis is dependent on the event that our main concentration event from Proposition \ref{prop_concentration_Nt_vs_SumOmega} holds. Hence we will denote another good event as
\begin{equation*}
    \mathcal{E}_t^5 = \left\{\forall (s,a) \in \mathcal{Z}, \, \bigg|\frac{N_t(s,a) - \sum_{k=1}^t \omega^*_{k-1}(s,a)}{t}\bigg| \leq \ell(t)\right\}.
\end{equation*}

We can then prove the following Theorem using a sequence of inequalities, which will allow us to upper bound the stopping time when all good events hold.

\begin{theorem}\label{lemma_main_sequence}
    When running NaS at any time $t$ in which we have not yet stopped, namely for any $t<\tau_\delta$, we have under event $\mathcal{E}_t^1\cap\mathcal{E}_t^2\cap\mathcal{E}_t^5$  that
\begin{align*}
    b(t,\delta) &\geq (t-\sqrt{t}) T_o(M)^{-1}- \ell_1 - \ell_2 -\ell_3\\
    \intertext{where}
    \ell_1 &= 2SA t \, \ell(t)\log L \\
    \ell_2 &=    2S^2A(1+2\log L)F(t)\\
    \ell_3 &=   S^2A(1+2\log L) F(t)\\
    \end{align*}
    And we define the functions
    \begin{align*}
    F(t)=  & 3\frac{\sqrt{16m^{\alpha+\beta}\left(1-(\alpha+\beta)\right)\cdot \beta_1\left(\frac{\eta_mt}{2m}- \log\frac{SA}{\lambda},\lambda\right)}}{1+(\alpha+\beta)} \left(\frac{t+1}{2m}\right)^{(1+\alpha+\beta)/2}\nonumber\\
    &\quad\qquad+6\max\{J,C\}\\
    J&= 2m\left(2+2m^{\alpha+\beta}\left(1-(\alpha+\beta)\right)\log(SA/\lambda)+1\right)^{\frac{1}{1-(\alpha+\beta)}}\\
    \ell(t) &= t^{-\frac{1}{2}} + 2\Tilde{L}_t t^{-\frac{1}{4}} + 2\Tilde{L}_{t} (\kappa_M + \Tilde{L}_{t}) \left[ t^{-\frac{1}{4}} + \xi_{t^{\frac{1}{4}}} + \epsilon_{\sqrt{t}} \right]. + \frac{ 2\Tilde{L}_{t}}{t}
\end{align*}
\end{theorem}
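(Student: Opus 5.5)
The plan follows the sketch in Appendix~\ref{app_proof_overview}. Fix $t<\tau_\delta$, so the stopping rule has not fired and
\[
b(t,\delta) > \inf_{M'\in\mathrm{Alt}(M_t)}\sum_{s,a}N_t(s,a)\KL_{M_t|M'}(s,a).
\]
Everything below lower bounds this right-hand side by $(t-\sqrt t)T(M)^{-1}$ minus errors.

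\textbf{Step 1: replace counts by the sum of allocations.} On $\mathcal{E}^5_t$ we have $N_t(s,a)\ge \sum_{k=1}^t\omega^*_{k-1}(s,a)-t\ell(t)$. We need a uniform bound $\KL_{M_t|M'}(s,a)\le 2\log L$ for the alternatives that matter. Since $M_t\in G_\mu(L)$ after projection, we restrict the infimum to $M'\in G_\mu(L)$; otherwise the infimum is infinite or can be taken there by convexity of $G_\mu(L)$ (Lemma~\ref{lemma_G_mu_is_convex}). The density ratio is then at most $L^2$, which gives the bound. Summing over the $SA$ pairs costs at most $2SAt\ell(t)\log L=\ell_1$, so
\[
b(t,\delta)\ge \inf_{M'\in \mathrm{Alt}(M_t)}\sum_{s,a}\sum_{k=1}^t\omega^*_{k-1}(s,a)\KL_{M_t|M'}(s,a)-\ell_1 .
\]
Using $\inf\sum\ge\sum\inf$, this is at least
\[
\sum_{k=\sqrt t}^{t}T(\omega^*_{k-1},M_t)^{-1}-\ell_1 .
\]
Dropping the first $\sqrt t$ rounds, whose terms are nonnegative, is what produces the $t-\sqrt t$ factor.

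\textbf{Step 2: swap $M_t$ for $M_{k-1}$ and then for $M$.} Each $\omega^*_{k-1}$ maximizes $T(\cdot,M_{k-1})^{-1}$ over $\Omega(M_{k-1})$. Write
\[
T(\omega^*_{k-1},M_t)^{-1}\ge T(\omega^*_{k-1},M_{k-1})^{-1}-\Delta_{k},\qquad T(\omega^*_{k-1},M_{k-1})^{-1}=\sup_{\Omega(M_{k-1})}T(\cdot,M_{k-1})^{-1}.
\]
The error $\Delta_k$ is controlled by a Lipschitz estimate of $M\mapsto \inf_{M'}\sum\omega\KL_{M|M'}$ in the transition kernel. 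On $G_\mu(L)$, $\KL$ is Lipschitz in $\|\cdot\|_1$ with constant $O(S(1+2\log L))$, since log-ratios are bounded by $\log L$. This gives $\Delta_k\lesssim S^2A(1+2\log L)\bigl(\|P_t-P\|+\|P_{k-1}-P\|\bigr)$.

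The supremum term is then compared with $T(M)^{-1}$ at the same Lipschitz cost in $\|P_{k-1}-P\|$, plus the cost of moving $\omega^*\in\Omega(M)$ into $\Omega(M_{k-1})$. Combined with Lipschitzness of $\omega\mapsto T(\omega,M)^{-1}$, the latter is absorbed into the same form.

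On $\mathcal{E}^1\cap\mathcal{E}^2$, via Lemma~\ref{E3_high_probability_event} and Lemma~\ref{lemma_projection_consequence}, $\|P_k-P\|_1\le\psi(k)$. What remains is the sums $\sum_{k\le t}\psi(k)$ and $t\psi(t)$. I will bound them with Lemma~\ref{lemma_phi_upper_bound_lower_bound}: $\phi(k)\gtrsim k^{1-(\alpha+\beta)}/m$ once $k\ge\max\{J,C\}$, so $\psi(k)\lesssim\sqrt{\beta_1/\phi(k)}$. Integrating $k^{-(1-(\alpha+\beta))/2}$ yields the $(t+1)^{(1+\alpha+\beta)/2}/(1+\alpha+\beta)$ shape in $F(t)$. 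The initial $k<\max\{J,C\}$ terms are each bounded by a constant (the $6\max\{J,C\}$ term). The $\Delta_k$ pieces give $\ell_2$, and the $M_{k-1}\to M$ piece gives $\ell_3$.

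\textbf{Main obstacle.} The hardest step is Step 2: showing that $\Alt(M)$-infima are Lipschitz under kernel perturbations. The alternative sets $\mathrm{Alt}(M_t)$ and $\mathrm{Alt}(M)$ differ, and the optimal policy of $M_t$ must coincide with that of $M$. I would first use the $\psi$-ball together with the uniqueness gap (Assumption~\ref{assumption_comminicating_aperiodic}) to ensure $\pi^\star(M_t)=\pi^\star(M)$. Then $\mathrm{Alt}(M_t)=\mathrm{Alt}(M)$, and only the first argument of $\KL$ moves. That leaves a KL-continuity bound, which is $O(\log L)$-Lipschitz on $G_\mu(L)$. If the equality of optimal policies fails for small $k$, those rounds are absorbed into the constant $\max\{J,C\}$ term.
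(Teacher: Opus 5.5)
The gap is in how you handle mismatched alternative sets: $\mathrm{Alt}(M_t)$ versus $\mathrm{Alt}(M_{k-1})$ in your first swap, and $\mathrm{Alt}(M_{k-1})$ versus $\mathrm{Alt}(M)$ in the second. You propose to force $\pi^\star(M_k)=\pi^\star(M)$ from the $\psi$-ball and uniqueness of the optimal policy, and to absorb the remaining rounds into the $6\max\{J,C\}$ term. That absorption is not valid. The radius below which $\|P_k-P\|_{\infty,1}\le\psi(k)$ guarantees an unchanged optimal policy depends on the suboptimality gap $\min_{s,a\neq\pi^\star(s)}d(s,a)$ of $M$ and on $1-\gamma$. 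By contrast, $J$ and $C$ depend only on $m,\eta_m,\alpha,\beta,S,A,\lambda$. Each such round can cost up to $T(M)^{-1}$, so your route yields an extra gap-dependent burn-in term rather than the stated $\ell_2,\ell_3$. The problem is not even confined to early rounds. Your first swap needs $\pi^\star(M_t)=\pi^\star(M)$ at the current time $t$, so for $t$ below that burn-in no round is covered at all.

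The missing idea is Lemma~\ref{lemma_swapping_alternative_sets_useful}. If $M_1\ll M_2$ (automatic inside $G_\mu(L)$), then either $\mathrm{Alt}(M_1)=\mathrm{Alt}(M_2)$, or $M_1\in\mathrm{Alt}(M_2)$, in which case $\inf_{M'\in\mathrm{Alt}(M_2)}\sum_{s,a}\tilde\omega(s,a)\KL_{M_1|M'}(s,a)=0$. Hence replacing $\mathrm{Alt}(M_1)$ by $\mathrm{Alt}(M_2)$, with the first KL argument kept at $M_1$, can only lower the value. This holds in every round and involves no gap. The paper applies it with $(M_1,M_2)=(M_t,M_j)$, then moves the first argument $M_t\to M_j$ inside the fixed set $\mathrm{Alt}(M_j)$ at Lipschitz cost (Lemma~\ref{lemma_l2_bound}). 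It applies it again with $(M_j,M)$ before moving $M_j\to M$ (Lemma~\ref{lemma_l3_bound}). In your notation: if $M_t\in\mathrm{Alt}(M_{k-1})$, then $T(\omega^*_{k-1},M_{k-1})^{-1}\le\sum_{s,a}\omega^*_{k-1}(s,a)\KL_{M_{k-1}|M_t}(s,a)\le S(1+2\log L)\|P_{k-1}-P_t\|_{\infty,1}$, by Lemma~\ref{lemma:KL_on_G_is_lipschitz} with $P_3=P_t$. Therefore $T(\omega^*_{k-1},M_t)^{-1}\ge 0\ge T(\omega^*_{k-1},M_{k-1})^{-1}-\Delta_k$ holds anyway, and no policy-identification step is needed.

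Two smaller points. First, restricting the alternatives to $G_\mu(L)$ in Step~1 is not justified by ``the infimum is infinite or by convexity'': an alternative outside $G_\mu(L)$ with the same support has finite KL. The restriction has to be taken as part of the model class, as the paper tacitly does, and the Lipschitz lemma needs it as well. Second, your treatment of $\Omega(M_{k-1})\to\Omega(M)$ through feasible-set stability is more careful than the paper's Lemma~\ref{lemma_l3_bound}. However, it contributes $L_\omega\kappa_\Omega(M)\|P_{k-1}-P\|_{\infty,1}$, so it does not reproduce the stated $\ell_3$ constant.
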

\begin{proof}
Suppose that the NaS algorithm has not yet stopped at time $t$. Then, by definition of our stopping time, we have
\begin{align}
    \beta(t,\delta) &\geq tT(M_t,N_t/t)^{-1} \\
    &=  t\inf_{M'\in \text{Alt}(M_t)} \sum_{s,a} \frac{N_t(s,a)}{t} \KL_{M_t|M'}(s,a)\\
    &=  \inf_{M'\in \text{Alt}(M_t)} \sum_{s,a} N_t(s,a) KL_{M_t|M'}(s,a)\\
    \intertext{Since $N_t(s,a) \approx \sum_{j=1}^t \omega_j (s,a)$, we have}
    &\geq  \inf_{M'\in \text{Alt}(M_t)} \sum_{s,a} \left(\sum_{j=1}^t \omega_j (s,a)\right) KL_{M_t|M'}(s,a) + \ell_1\\ 
    \intertext{where we formalize this with an explicit selection of $\ell_1$ in Lemma \ref{lemma_bounding_l1}. Since taking an infimum over each term of the sum is smaller than the infimum over the whole sum, we have}
    &\geq \sum_{j=1}^t  \inf_{M'\in \text{Alt}(M_t)} \sum_{s,a} \omega_j (s,a) KL_{M_t|M'}(s,a) + \ell_1\label{eqn_l1_first_appearance}\\
    \intertext{By a supremum argument, we can change the set over which we take the infimum. In particular, if we have that $M_t \in Alt(M_j)$ then the sum in \eqref{eqn_swapping_first_alt_to_Mj} is equal to zero and the inequality holds trivially. If $M_t \notin Alt(M_j)$ then either $Alt(M_t)=Alt(M_j)$ or $M_j$ is not absolutely continuous with respect to $M_t$. By Assumption \ref{assumption_P_in_G_bounded_ratio} we have that $M_t\ll M_j$, therefore we have that $Alt(M_t)=Alt(M_j)$. Hence line \eqref{eqn_swapping_first_alt_to_Mj} holds by equality. (Note, more formally we can apply Lemma \ref{lemma_swapping_alternative_sets_useful} here an select $M_j$ in the supremum to get the lower bound in \eqref{eqn_swapping_first_alt_to_Mj}.)}
    &\geq \sum_{j=1}^t  \inf_{M'\in \text{Alt}(M_j)} \sum_{s,a} \omega_j (s,a) KL_{M_t|M'}(s,a)+\ell_1\label{eqn_swapping_first_alt_to_Mj}\\
    \intertext{Since $M_t \approx M$ and $M_j \approx M$ under our good event, we have $M_t \approx M_j$. Hence 
    }
    &\geq \sum_{j=\sqrt{t}}^t  \inf_{M'\in \text{Alt}(M_j)} \sum_{s,a} \omega_j (s,a) KL_{M_j|M'}(s,a) + \ell_1 + \ell_2\label{eqn_swapping_to_Mj}\\
    \intertext{where we formalize this in Lemma \ref{lemma_l2_bound} with an explicit derivation of $\ell_2$. If we use the definition of $\omega_j(s,a)\in\argmax_{\Tilde{\omega}\in \Omega(M_j)}\inf_{M'\in \text{Alt}(M_j)} \sum_{s,a} \Tilde{\omega}(s,a) KL_{M_j|M'}(s,a)$, then we have}
    & =\sum_{j=\sqrt{t}}^t\sup_{\Tilde{\omega}\in \Omega(M_j)}\inf_{M'\in \text{Alt}(M_j)} \sum_{s,a} \Tilde{\omega}(s,a) KL_{M_j|M'}(s,a) + \ell_1 + \ell_2 \\
    \intertext{Then, again using a similar argument for swapping the alternative sets, namely using Lemma \ref{lemma_swapping_alternative_sets_useful}, we can do the following supremum argument}
    & =\sum_{j=\sqrt{t}}^t \sup_{M''\,:\, M_j \ll M''} \sup_{\Tilde{\omega}\in \Omega(M_j)} \inf_{M'\in \text{Alt}(M'')} \sum_{s,a} \Tilde{\omega}(s,a) KL_{M_j|M'}(s,a) + \ell_1 + \ell_2 \\
    \intertext{We can then choose $M''=M$ in the set over which the supremum is acting, to get}
    & \geq \sum_{j=\sqrt{t}}^t \sup_{\Tilde{\omega}\in \Omega(M_j)} \inf_{M'\in \text{Alt}(M)} \sum_{s,a} \Tilde{\omega}(s,a) KL_{M_j|M'}(s,a) + \ell_1 + \ell_2 \\
    \intertext{Then since $M_j \approx M$, we should also have that the sets (over which the supremum holds) are similar $\Omega(M_j)\approx \Omega(M)$. Hence }
    & \geq \sum_{j=\sqrt{t}}^t \sup_{\Tilde{\omega}\in \Omega(M)} \inf_{M'\in \text{Alt}(M)} \sum_{s,a} \Tilde{\omega}_j(s,a) KL_{M|M'}(s,a) + \ell_1 + \ell_2 + \ell_3 \label{l3_first_appearance_eqn}\\
    \intertext{where this is formalized in Lemma \ref{lemma_l3_bound} and $\ell_3$ is explicitly dervied. This can then, by definition, be related directly to $T_o(M)$ as}
    &= (t-\sqrt{t}) T_o(M)^{-1}+  \ell_1 + \ell_2 +\ell_3 \nonumber
\end{align}

Hence, to complete the proof of the non-asymptotic lower bound, we simply need to bound $\ell_1, \ell_2$ and $\ell_3$ before putting everything together, which is done explicitly in Lemmas \ref{lemma_bounding_l1}, \ref{lemma_l2_bound}, and \ref{lemma_l3_bound} below. The proof is then complete.
\end{proof}

\subsubsection{Bounding the error terms $\ell_1,\ell_2,\ell_3$}

To complete the proof of Theorem \ref{lemma_main_sequence}, we need to bound the error terms $\ell_1, \ell_2, \ell_3$. In particular, we start by bounding $\ell_1$ with Lemma \ref{lemma_bounding_l1} by measuring the effect of replacing $N_t(s,a)$ with $\sum_{j=1}^t \omega_j (s,a)$ in equation \eqref{eqn_l1_first_appearance}. We are able to control this due to our Assumption \ref{assumption_P_in_G_bounded_ratio}. After this, we focus on $\ell_2$ and $\ell_3$ which are the errors caused by swapping the active MDP's in the sums of \eqref{eqn_swapping_to_Mj} and \eqref{l3_first_appearance_eqn}. We know that the difference between our empirical estimates and the true MDP is shrinking over time, hence in Lemmas \ref{lemma_l2_bound} and \ref{lemma_l3_bound} we bound $\ell_2$ and $\ell_3$ in terms of the rate of convergence of our empirical MDP, respectively. Later in Lemma \ref{lemma_bounding_sum_of_M_estimate_errors} we study the rate of convergence of the empirical estimates for the MDP and show that the sum of these errors $\|M_j-M\|_1$ can be bounded in terms of the $F(t)$ quantity which appears in our main Theorem \ref{lemma_main_sequence}. Hence the following three lemmas complete the proof of Theorem \ref{lemma_main_sequence}.

\begin{lemma}\label{lemma_bounding_l1}
Under events $\mathcal{E}_t^1\cap\mathcal{E}_t^2\cap\mathcal{E}_t^5$ we can lower bound the following quantity as
    \begin{equation*}
        \inf_{M'\in \text{Alt}(M_t)} \sum_{s,a} N_t(s,a) KL_{M_t|M'}(s,a) \geq  \inf_{M'\in \text{Alt}(M_t)} \sum_{s,a} \left(\sum_{j=1}^t \omega_j (s,a)\right) KL_{M_t|M'}(s,a) + \ell_1,
    \end{equation*}
    where we define
    \begin{equation*}
        \ell_1 = -2SA t \, \ell(t) \log L.
    \end{equation*}
\end{lemma}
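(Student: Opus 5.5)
The plan is to compare the two infima pointwise in $M'$ and then pass to the infimum. Write $W_t(s,a) := \sum_{j=1}^t \omega_j(s,a)$. Here $\omega_j$ is identified with the estimated optimal allocation $\omega^*_{j-1}$ appearing in $\mathcal{E}_t^5$, and this indexing shift costs at most one term, which is absorbed into $\ell(t)$. Under $\mathcal{E}^5_t$ we have $N_t(s,a) \ge W_t(s,a) - t\,\ell(t)$ for every $(s,a)$. Hence, for any fixed $M'\in\mathrm{Alt}(M_t)$,
\[
\sum_{s,a} N_t(s,a)\KL_{M_t|M'}(s,a) \ge \sum_{s,a} W_t(s,a)\KL_{M_t|M'}(s,a) - t\,\ell(t)\sum_{s,a}\KL_{M_t|M'}(s,a).
\]
This uses that the KL terms are nonnegative. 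Taking the infimum over $M'$ on both sides then gives the claim, provided the final sum of KL terms is bounded uniformly in $M'$.

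The key step is a uniform bound $\KL_{M_t|M'}(s,a)\le 2\log L$ for all relevant $M'$. Since $M_t$ is the projection onto $G_\mu(L)$, Assumption~\ref{assumption_P_in_G_bounded_ratio} gives $\mu/P_t^\Pi\in[1/L,L]$. If the alternatives $M'$ are also restricted to $G_\mu(L)$, then the likelihood ratio satisfies
\[
P_t^\Pi(s'|s,a)/P'(s'|s,a) = \frac{\mu/P'}{\mu/P_t^\Pi}\in[1/L^2,L^2]
\]
on the common support, and so
\[
\KL_{M_t|M'}(s,a)=\mathbb{E}_{P^\Pi_t}[\log(P_t^\Pi/P')]\le 2\log L.
\]
Summing over the $SA$ pairs gives $\sum_{s,a}\KL_{M_t|M'}(s,a)\le 2SA\log L$, which yields exactly $\ell_1=-2SAt\,\ell(t)\log L$.

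The main obstacle is justifying that restriction of the alternatives. One must either read $\mathcal{M}$ in \eqref{eqn_alt_set_definition} as a subset of $G_\mu(L)$, or show that the infimum over $\mathrm{Alt}(M_t)$ may be taken over $M'\in G_\mu(L)$ without changing its value. If the latter is needed, I would argue as follows. Any $M'$ whose KL to $M_t$ exceeds the bound contributes a larger objective. Moreover, because $G_\mu(L)$ is convex (Lemma~\ref{lemma_G_mu_is_convex}), a minimizing sequence can be replaced by mixtures with $\mu$-like kernels that remain in the alternative set. I expect to adopt the first reading, since it is the natural model class under Assumption~\ref{assumption_P_in_G_bounded_ratio}.

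A minor point is that the infimum of a sum minus a term bounded uniformly by $2SAt\ell(t)\log L$ is at least the infimum of the sum minus that bound. This is immediate, so no further difficulty arises once the uniform KL bound is in place.
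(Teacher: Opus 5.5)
Your proposal is correct and follows essentially the paper's argument. Like the paper, you use $\mathcal{E}^5_t$ to get $N_t(s,a)\ge \sum_j\omega_j(s,a)-t\,\ell(t)$ and then bound $\sum_{s,a}\KL_{M_t|M'}(s,a)\le 2SA\log L$ via the density-ratio bounds of $G_\mu(L)$; as you note, this implicitly requires the alternatives to lie in $G_\mu(L)$, and the paper assumes the same thing. The only difference is that you argue pointwise in $M'$ before taking the infimum, whereas the paper fixes a minimizer $M^*$ of the left-hand side; your version is slightly cleaner because it does not need the infimum over $\mathrm{Alt}(M_t)$ to be attained.
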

\begin{proof}
    First let $$M^* \in \argmin _{M'\in \text{Alt}(\hat{M}_t)} \sum_{s,a} N_t(s,a) KL_{\hat{M}_t|M'}(s,a)$$ Then we have
\begin{align}
&\inf_{M'\in \text{Alt}(\hat{M}_t)} \sum_{s,a} N_t(s,a) KL_{\hat{M}_t|M'}(s,a)\\
&=\sum_{s,a} N_t(s,a) KL_{\hat{M}_t|M^*}(s,a)\\
     & =  \sum_{s,a} \left[\left(\sum_{j=1}^t \omega_j (s,a)\right) - \left(\sum_{j=1}^t \omega_j (s,a)- N_t(s,a)\right)\right] KL_{\hat{M}_t|M^*}(s,a)\\
     & = \sum_{s,a} \left[\left(\sum_{j=1}^t \omega_j (s,a)\right)\right] KL_{\hat{M}_t|M^*}(s,a) - \sum_{s,a} \left[\left(\sum_{j=1}^t \omega_j (s,a)- N_t(s,a)\right)\right] KL_{\hat{M}_t|M^*}
     (s,a) \nonumber\\
     \intertext{By the definition of the concentration event $\mathcal{E}_t^5$, the second term can be bounded as}
     & \geq \sum_{s,a} \sum_{j=1}^t \omega_j (s,a) KL_{\hat{M}_t|M^*}(s,a) - t\ell(t)\sum_{s,a} KL_{\hat{M}_t|M^*} (s,a)\\
     \intertext{The first term can be lower bounded by choosing instead an $M'$ which minimises the first sum.}
     & \geq \inf_{M'\in \text{Alt}(\hat{M}_t)}\sum_{s,a} \sum_{j=1}^t \omega_j (s,a) KL_{\hat{M}_t|M'}(s,a) - \ell(t)\sum_{s,a} KL_{\hat{M}_t|M^*} (s,a) \label{eqn_l1_bounding_penultimate}\\
     & \geq \inf_{M'\in \text{Alt}(\hat{M}_t)}\sum_{s,a} \sum_{j=1}^t \omega_j (s,a) KL_{\hat{M}_t|M'}(s,a) - 2SAt\,\ell(t) \log(L) \label{eqn_lemma_l1_bound_final}
\end{align}
Here the final line comes from our Assumption \ref{assumption_P_in_G_bounded_ratio} and noticing that for any pair $P, P' \in G_\mu(L)$ that 
\begin{align*}
    \KL(P,P') &= \mathbb{E}_{x \sim P}\left[\log \frac{P(s)}{P'(x)}\right]\\
    &= \mathbb{E}_{x \sim P}\left[\log \frac{P(s)}{P'(x)} \frac{\mu(s)}{\mu(s)}\right]\\
    &= \mathbb{E}_{x \sim P}\left[\log \frac{\mu}{P'} \frac{P}{\mu}(s)\right]\\
    &= \mathbb{E}_{x \sim P}\left[\log \frac{\mu}{P'}+ \log \frac{P}{\mu}(s)\right]\\
    &\leq  2\log(L).
\end{align*}
\end{proof}

We can then bound the final two errors $\ell_2$ and $\ell_3$ in terms of the sum of errors between our empricial estimate for the transitions in the MDP and the true transitions $\|M-M_j\|_1$.

\begin{lemma}\label{lemma_l2_bound} Under events $\mathcal{E}_t^1\cap\mathcal{E}_t^2\cap\mathcal{E}_t^5$ the inequality 
\begin{equation*}
    \sum_{j=1}^t  \inf_{M'\in \text{Alt}(M_j)} \sum_{s,a} \omega_j (s,a) KL_{M_t|M'}(s,a) \geq \sum_{j=1}^t  \inf_{M'\in \text{Alt}(M_j)} \sum_{s,a} \omega_j (s,a) KL_{M_j|M'}(s,a) +\ell_2 
\end{equation*}
holds with 
    \begin{equation*}
        \ell_2 = -\sum_{j=1}^t   \sum_{s,a} S(1+2\log L)\left(\|M_t - M\|_1 + \|M - M_j\|_1\right)
    \end{equation*}
\end{lemma}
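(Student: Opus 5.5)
The argument is term-by-term. Fix $j$ and let $M^\star_j$ be a (near-)minimiser of the right-hand inner problem,
\[
M^\star_j \in \arg\inf_{M'\in \mathrm{Alt}(M_j)} \sum_{s,a} \omega_j(s,a)\KL_{M_j|M'}(s,a).
\]
Any $M'\in\mathrm{Alt}(M_j)$ can be restricted to transitions in $G_\mu(L)$; if this restriction is not free, we work with an $\varepsilon$-minimiser inside $G_\mu(L)$ and let $\varepsilon\to0$.

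\textbf{Step 1: comparing the two infima.} For every $M'$ we have
\[
\sum_{s,a}\omega_j(s,a)\KL_{M_t|M'}(s,a) \ge \sum_{s,a}\omega_j(s,a)\KL_{M_j|M'}(s,a) - \sum_{s,a}\omega_j(s,a)\bigl|\KL_{M_t|M'}(s,a)-\KL_{M_j|M'}(s,a)\bigr|.
\]
Take the infimum over $M'\in\mathrm{Alt}(M_j)$ on both sides. This gives $\inf_{M'}(\cdot)_t \ge \inf_{M'}(\cdot)_j - \sup_{M'}|\text{difference}|$. Since $\omega_j(s,a)\le 1$, it suffices to bound $|\KL_{M_t|M'}(s,a)-\KL_{M_j|M'}(s,a)|$ uniformly in $M'$ by $S(1+2\log L)\|P_t(\cdot|s,a)-P_j(\cdot|s,a)\|_1$. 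Summing over $s,a$ then yields the stated $\ell_2$, after the triangle inequality $\|M_t-M_j\|_1\le\|M_t-M\|_1+\|M-M_j\|_1$.

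\textbf{Step 2: Lipschitz continuity of $P\mapsto\KL(P\|Q)$ on $G_\mu(L)$.} This is the main obstacle. Write
\[
\KL(P\|Q)-\KL(P'\|Q)=\sum_x (P(x)-P'(x))\log\frac{P(x)}{Q(x)} + \bigl[\textstyle\sum_x P'(x)\log P(x)-\sum_x P'(x)\log P'(x)\bigr].
\]
\emph{First term.} Because $P,Q\in G_\mu(L)$, we have $\log(P/Q)=\log(P/\mu)+\log(\mu/Q)\in[-2\log L,2\log L]$. Hence the first term is at most $2\log L\,\|P-P'\|_1$.

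\emph{Second term.} The bracket equals $-\KL(P'\|P)\le0$, so in one direction it is harmless. By symmetry, swapping the roles of $P$ and $P'$ gives a bound in the other direction, with the cross term controlled via the entropy difference. The entropy difference $|H(P)-H(P')|$ can be bounded using the mean value theorem on $x\log x$ coordinatewise. On the support of $\mu$, the ratio $P(x)/P'(x)\in[1/L^2,L^2]$, so $|\log P(x)-\log P'(x)|$ is controlled. Still, since small coordinates make $\log P(x)$ unbounded, I would instead use $|x\log x-y\log y|\le |x-y|(1+|\log\xi|)$ and absorb the worst case into a crude factor of $S$. Concretely, I would aim for
\[
|H(P)-H(P')|\le S\,\|P-P'\|_1
\]
up to the $\log L$ constants, which matches the stated $S(1+2\log L)$. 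If the constant comes out slightly different, it only affects $\ell_2$ by a multiplicative factor.

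\textbf{Step 3: applying Step 2.} All estimates $M_t,M_j$ are projected into $G_\mu(L)$, and $M$ lies there by Assumption \ref{assumption_P_in_G_bounded_ratio}, so Step 2 applies. On the good events, and in particular Lemma \ref{lemma_projection_consequence}, the $L_1$ distances are finite and small, though Step 1 does not need their smallness. Summing over $j=1,\dots,t$ gives the claimed inequality with $\ell_2$ as defined. Later, the bound $\sum_j\|M-M_j\|_1\lesssim F(t)$ converts this into the form used in Theorem \ref{lemma_main_sequence}.
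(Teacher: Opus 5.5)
Your Steps 1 and 3 are the paper's proof. You bound the difference of the two sums of infima by $\sum_{j}\sum_{s,a}\omega_j(s,a)\sup_{M'}\bigl[\KL_{M_j|M'}(s,a)-\KL_{M_t|M'}(s,a)\bigr]$, use $\omega_j\le 1$, apply a Lipschitz bound for $P\mapsto \KL(P\|Q)$ on $G_\mu(L)$, and finish with the triangle inequality through $M$. The paper does not re-derive the Lipschitz bound here. It cites Lemma~\ref{lemma:KL_on_G_is_lipschitz}, proved by the mean value theorem on the convex set $G_\mu(L)$ using $|\partial_{P(x)}\KL(P\|Q)|=|1+\log(P(x)/Q(x))|\le 1+2\log L$.

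Your Step 2, where you deviate, breaks as written. The entropy fallback cannot give a constant depending only on $S$ and $L$. On $G_\mu(L)$ a coordinate is only bounded below by $\mu(x)/L$, so the $|\log\xi|$ in your mean-value estimate is of order $\log(L/\min_x\mu(x))$, and no factor of $S$ absorbs that. In fact $H$ is not uniformly Lipschitz on $G_\mu(L)$: for $S=2$, $\mu=(\eta,1-\eta)$, $P=(\eta/L,1-\eta/L)$, $P'=(L\eta,1-L\eta)$, the ratio $|H(P)-H(P')|/\|P-P'\|_1$ grows like $\tfrac12\log(1/\eta)$.

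The entropy is not needed at all. Your own identity with $P$ and $P'$ exchanged reads
\[
\KL(P'\|Q)-\KL(P\|Q)=\sum_x\bigl(P'(x)-P(x)\bigr)\log\frac{P'(x)}{Q(x)}-\KL(P\|P'),
\]
so the bracket is again nonpositive. Since $\log(P'/Q)\in[-2\log L,2\log L]$ as well, the two one-sided bounds give $|\KL(P\|Q)-\KL(P'\|Q)|\le 2\log L\,\|P-P'\|_1\le S(1+2\log L)\|P-P'\|_1$. With this replacement Step 2 becomes a complete, and slightly sharper, alternative proof of Lemma~\ref{lemma:KL_on_G_is_lipschitz}. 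You then have no need to hedge about the constant in $\ell_2$.

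Separately, the $\varepsilon$-minimiser remark does not justify restricting $M'\in\mathrm{Alt}(M_j)$ to $G_\mu(L)$. Outside $G_\mu(L)$ the difference $\KL(P_t\|Q)-\KL(P_j\|Q)$ is unbounded in $Q$, so the supremum in Step 1 can be infinite. The paper relies on the same restriction implicitly, so state it as an assumption on the model class rather than trying to derive it.
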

\begin{proof}
Consider the difference
    \begin{align*}
     & \sum_{j=1}^t  \inf_{M'\in \text{Alt}(M_j)} \sum_{s,a} \omega_j (s,a) KL_{M_j|M'}(s,a) - \sum_{j=1}^t  \inf_{M'\in \text{Alt}(M_j)} \sum_{s,a} \omega_j (s,a) KL_{M_t|M'}(s,a)\\
    &\leq \sum_{j=1}^t   \sum_{s,a} \omega_j (s,a)\sup_{M'\in \text{Alt}(M_j)} KL_{M_j|M'}(s,a) - KL_{M_t|M'}(s,a)\\
    \intertext{Since the weights are always upper bounded by 1 we have}
    &\leq \sum_{j=1}^t   \sum_{s,a} \sup_{M'\in \text{Alt}(M_j) \cup  \text{Alt}(M_t)}KL_{M_j|M'}(s,a) - KL_{M_t|M'}(s,a)\\
    \intertext{By our Lipschitz Lemma \ref{lemma:KL_on_G_is_lipschitz} we have}
    &\leq \sum_{j=1}^t   \sum_{s,a} S(1+2\log L)\|M_t - M_j\|_1\\
    &\leq \sum_{j=1}^t   \sum_{s,a} S(1+2\log L)\left(\|M_t - M\|_1 + \|M - M_j\|_1\right)
\end{align*}
Hence, we can always select $\ell_2$ to guarantee our desired inequality, as stated.
\end{proof}

\begin{lemma}\label{lemma_l3_bound} Under events $\mathcal{E}_t^1\cap\mathcal{E}_t^2\cap\mathcal{E}_t^5$ the inequality
\begin{align*}
    \sum_{j=\sqrt{t}}^t \sup_{\Tilde{\omega}\in \Omega(M_j)} \inf_{M'\in \text{Alt}(M)} &\sum_{s,a} \Tilde{\omega}_j(s,a) KL_{M_j|M'}(s,a) \geq \\
    &\sum_{j=\sqrt{t}}^t \sup_{\Tilde{\omega}\in \Omega(M)} \inf_{M'\in \text{Alt}(M)} \sum_{s,a} \Tilde{\omega}_j(s,a) KL_{M|M'}(s,a) + \ell_3
\end{align*}
holds with
    \begin{equation}
        \ell_3 = -  \sum_{j=\sqrt{t}}^t \sum_{s,a} S(1+2\log L) \| M-M_j\|_1.
    \end{equation}
\end{lemma}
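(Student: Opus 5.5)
The plan is to work term by term in $j$ and compare the two max--min problems $\sup_{\omega\in\Omega(M_j)}\inf_{M'\in\mathrm{Alt}(M)}\sum_{s,a}\omega(s,a)\KL_{N|M'}(s,a)$ for $N=M_j$ and $N=M$. The displayed $\ell_3$ carries no dependence on $\kappa_\Omega$. So instead of a Hausdorff-distance argument via Lemma~\ref{lem:feasible_set_stability}, I would match the two feasible sets by a transport argument. The idea is to use a feasible point of $\Omega(M_j)$ to lower bound the left side, and to compare it with an optimizer of the right side.

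\textbf{Step 1 (fix the comparator).} For each $j\ge\sqrt t$, let $\omega^\star\in\Omega^\star(M)$ be a maximizer of the right-hand problem. Its value is exactly $T_o(M)^{-1}$ by definition.

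\textbf{Step 2 (construct a feasible point of $\Omega(M_j)$).} Under Assumption~\ref{assumption_comminicating_aperiodic}, use the induced policy $\pi^\star(a|s)=\omega^\star(s,a)/\sum_b\omega^\star(s,b)$. Lemma~\ref{lemma_auxiliary_positive_omegas} guarantees that all entries are positive, so $\pi^\star$ is well defined. Run $\pi^\star$ in $M_j$ and let $\tilde\omega_j$ be its stationary state--action distribution, so that $\tilde\omega_j\in\Omega(M_j)$. By the perturbation bound of Lemma~\ref{lemma_aux_1},
\[
\|\tilde\omega_j-\omega^\star\|_1\le\kappa_M\|P_{M_j,\pi^\star}-P_{M,\pi^\star}\|_\infty\le\kappa_M\|M_j-M\|_1 .
\]

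\textbf{Step 3 (decompose the error).} Write
\[
\sum_{s,a}\tilde\omega_j(s,a)\KL_{M_j|M'}(s,a)-\sum_{s,a}\omega^\star(s,a)\KL_{M|M'}(s,a)
\]
as two pieces:
\[
\sum_{s,a}\tilde\omega_j(s,a)\bigl(\KL_{M_j|M'}-\KL_{M|M'}\bigr)(s,a)
\quad\text{and}\quad
\sum_{s,a}(\tilde\omega_j-\omega^\star)(s,a)\KL_{M|M'}(s,a).
\]
The first piece is at least $-\sum_{s,a}S(1+2\log L)\|M-M_j\|_1$ uniformly in $M'$, by the Lipschitz Lemma~\ref{lemma:KL_on_G_is_lipschitz} together with $\tilde\omega_j\le 1$. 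This matches the stated $\ell_3$ exactly.

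The second piece is at least $-2\log L\,\kappa_M\|M-M_j\|_1$, since KL on $G_\mu(L)$ is bounded by $2\log L$ (the bound used in Lemma~\ref{lemma_bounding_l1}). Taking $\inf_{M'\in\mathrm{Alt}(M)}$ of the sum and then bounding the sup on the left from below by the value at $\tilde\omega_j$ gives the per-$j$ inequality. Summing over $j$ from $\sqrt t$ to $t$ then finishes the argument.

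\textbf{Main obstacle.} The second piece produces a $\kappa_M\log L$ term that the stated $\ell_3$ does not show explicitly. I would first try to absorb it, using that the sum over $(s,a)$ contributes a factor $SA$ and that $S(1+2\log L)SA$ dominates $2\log L\,\kappa_M$ when $\kappa_M\lesssim S^2A$. If that fails, I would fall back on a Hausdorff bound via Lemma~\ref{lem:feasible_set_stability}, which replaces $\kappa_M$ by $\kappa_\Omega$ and inflates the constant in $\ell_3$. This constant is in any case swept into the $\tilde{\mathcal O}$ of Theorem~\ref{theorem_non_asymp_upper}, where $\kappa_M$ already appears.

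One further subtlety is that $M'\in\mathrm{Alt}(M)$ need not lie in $G_\mu(L)$, so the Lipschitz and boundedness lemmas require it. I would handle this by restricting the infimum to alternatives in $G_\mu(L)$, which only increases the infimum, and arguing via Lemma~\ref{lemma_swapping_alternative_sets_useful} that nothing is lost.
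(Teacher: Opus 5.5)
Your Steps 1--3 are sound, assuming the alternatives lie in $G_\mu(L)$ (see the end). They are in effect a repaired version of the paper's argument. You lower-bound the left supremum by the $M_j$-stationary distribution $\tilde\omega_j$ of $\pi^\star$. This chain is irreducible because $M_j$ shares the support of $\mu$, hence of $M$, and $\pi^\star$ has full support. You then apply the Schweitzer bound with the fundamental matrix of $M$, and split the error into a KL-Lipschitz piece and a feasible-set piece.

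What this establishes, however, is the inequality with $\ell_3$ replaced by $\ell_3-2\log L\,\kappa_M\sum_{j=\sqrt t}^{t}\|M-M_j\|_1$, not the displayed $\ell_3$. The step meant to bridge the two does not work. Nothing bounds $\kappa_M=\|(I-P_{\pi^\star}+\mathbf 1(\omega^\star)^\top)^{-1}\|_\infty$ by $S^2A$: it behaves like an inverse spectral gap. For a two-state chain with crossing probability $\eta$ it is of order $1/\eta$, and $G_\mu(L)$ does not rule out small transition probabilities because $\mu$ is arbitrary. The $\kappa_\Omega$ fallback changes the statement in the same way. So, as a proof of the lemma exactly as written, the proposal has a gap at this point.

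You should not try to close that gap, because the $\kappa$-free form is not what any valid argument here supports. The paper reaches it by taking maximizers $\omega^1_j\in\Omega(M)$ and $\omega^2_j\in\Omega(M_j)$. It then bounds the difference of the two infima by $\sum_j(\omega^1_j-\omega^2_j)\sup_{M'}\sum_{s,a}(\KL_{M|M'}-\KL_{M_j|M'})(s,a)$. But the quantity to control is $\omega^1_j\cdot\KL_{M|M'}-\omega^2_j\cdot\KL_{M_j|M'}=(\omega^1_j-\omega^2_j)\cdot\KL_{M|M'}+\omega^2_j\cdot(\KL_{M|M'}-\KL_{M_j|M'})$. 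The paper never accounts for the first term, which is nonzero whenever $\Omega(M)\neq\Omega(M_j)$ forces $\omega^1_j\neq\omega^2_j$; it is exactly your feasible-set piece.

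The paper's own Lemma~\ref{lem:corrected_selector_stability} controls the same value perturbation only through $C_T(M)=2L_M+2L_\omega\kappa_\Omega(M)$. Moreover, suppose the informative pairs sit behind a slowly mixing bottleneck with action-independent escape probability $\eta$. Perturbing the bottleneck rows by $\varepsilon$ shifts the mass on those pairs by order $\varepsilon/\eta$ for every feasible allocation, and shifts the optimal value by the same order. This beats any $O(\varepsilon)$ bound with an $(S,A,L)$-only constant as $\eta\to0$. The right conclusion is to state the lemma with your extra term. Downstream it adds a $\kappa_M\log L\,F(t)$ term to $\ell_0$, which is still of order $t^{(1+\alpha+\beta)/2}$.

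Finally, your last paragraph has the direction reversed. Restricting the left infimum to $\mathrm{Alt}(M)\cap G_\mu(L)$ makes it larger, so a lower bound on the restricted quantity says nothing about the original one. Lemma~\ref{lemma_swapping_alternative_sets_useful} compares $\mathrm{Alt}(M_1)$ with $\mathrm{Alt}(M_2)$ and does not show the two infima agree. You need your pointwise bound for every $M'\in\mathrm{Alt}(M)$. That requires taking the model class $\mathcal M\subseteq G_\mu(L)$, as the paper tacitly does whenever it applies the $2\log L$ bound or Lemma~\ref{lemma:KL_on_G_is_lipschitz} to alternatives.
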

\begin{proof}
    Consider the quantity
    \begin{align}
        & \sum_{j=\sqrt{t}}^t \sup_{\Tilde{\omega}\in \Omega(M)} \inf_{M'\in \text{Alt}(M)} \sum_{s,a} \Tilde{\omega}_j(s,a) KL_{M|M'}(s,a)\nonumber\\ &\qquad - \sum_{j=\sqrt{t}}^t \sup_{\Tilde{\omega}\in \Omega(M_j)} \inf_{M'\in \text{Alt}(M)} \sum_{s,a} \Tilde{\omega}_j(s,a) KL_{M_j|M'}(s,a)\\
        \intertext{If we denote $\omega^1_j$ and $\omega^2_j$ as the optimizing choice for the first and second supremum, then we can rewrite this as}
        &=\sum_{j=\sqrt{t}}^t  \inf_{M'\in \text{Alt}(M)} \sum_{s,a} \omega^1_j(s,a) KL_{M|M'}(s,a)\nonumber\\ &\qquad - \sum_{j=\sqrt{t}}^t  \inf_{M'\in \text{Alt}(M)} \sum_{s,a} \omega^2_j(s,a) KL_{M_j|M'}(s,a)\\
        &\leq \sum_{j=\sqrt{t}}^t (\omega^1_j(s,a)- \omega^2_j(s,a)) \sup_{M'\in \text{Alt}(M)} \sum_{s,a}  KL_{M|M'}(s,a)-  KL_{M_j|M'}(s,a)\\
        &\leq \sum_{j=\sqrt{t}}^t \sum_{s,a} S(1+2\log L) \| M-M_j\|_1
    \end{align}
    Hence we can always select an $\ell_3$ to gaurantee the desired inequality, as stated. Note that for the final inequality we use that KL is Lipschitz in $G_\mu(L)$, as formalized in Lemma \ref{lemma:KL_on_G_is_lipschitz}.
\end{proof}

\subsubsection{Helper Lemmas} 

Here we provide helpful lemmas for the convergence of the empirical MDPs $M_t$ and the true MDP $M$. In particular, we recall under event $\mathcal{E}^4_t$ (which holds under $\mathcal{E}^1_t \cap \mathcal{E}^2_t$, as in Lemma \ref{lemma_projection_consequence}) that $M_t$ will be in a ball around $M$ with shrinking radius under high probability events. We use this to provide an explicit bound on the sum of terms like $\| M-M_j\|_1$ appearing in previous lemmas. This will complete the section on the main sequence of inequalities and we can start putting together our results in the next section.

\begin{lemma}\label{lemma_phi_upper_bound_lower_bound}
Let $\alpha+\beta <1$ and recall the definition
    $$\phi(t,\lambda):= \frac{1}{2}\sum_{k=1}^{\lfloor\frac{t}{m}\rfloor} \frac{\eta_m}{(km)^{\alpha+\beta}} - \log\frac{SA}{\lambda}$$
    Then we have

    \begin{equation}
        \frac{\eta_m}{2m^{\alpha+\beta}\left(1-(\alpha+\beta)\right)}\left(\bigg\lfloor\frac{t}{m}\bigg\rfloor^{1-(\alpha+\beta)} -1\right) - \log\frac{SA}{\lambda}\leq \phi(t,\lambda) \leq \frac{\eta_m}{2}\lfloor\frac{t}{m}\rfloor- \log\frac{SA}{\lambda}
    \end{equation}
\end{lemma}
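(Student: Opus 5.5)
The two bounds on $\phi$ reduce to comparing the sum $\sum_{k=1}^{n} k^{-(\alpha+\beta)}$ with elementary quantities, where $n:=\lfloor t/m\rfloor$. Write $\gamma:=\alpha+\beta\in(0,1)$. The term $-\log(SA/\lambda)$ appears identically on both sides, so it can be ignored. Pulling out the constant gives
\[
\phi(t,\lambda)+\log\frac{SA}{\lambda}=\frac{\eta_m}{2m^{\gamma}}\sum_{k=1}^{n}k^{-\gamma}.
\]
It therefore suffices to show
\[
\frac{n^{1-\gamma}-1}{1-\gamma}\;\le\;\sum_{k=1}^{n}k^{-\gamma}\;\le\; m^{\gamma}\, n .
\]

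\textbf{Upper bound.} Each summand satisfies $(km)^{-\gamma}\le 1$, because $km\ge 1$ and $\gamma\ge 0$. Hence $\sum_{k=1}^{n}\eta_m (km)^{-\gamma}\le \eta_m n$, and halving gives the claimed $\frac{\eta_m}{2}\lfloor t/m\rfloor$. This step needs no assumption on $\gamma$ beyond nonnegativity.

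\textbf{Lower bound.} I would use an integral comparison. The map $x\mapsto x^{-\gamma}$ is decreasing on $[1,\infty)$, so $k^{-\gamma}\ge\int_k^{k+1}x^{-\gamma}\,dx$. Summing from $k=1$ to $n$ gives
\[
\sum_{k=1}^n k^{-\gamma}\ge\int_1^{n+1}x^{-\gamma}\,dx=\frac{(n+1)^{1-\gamma}-1}{1-\gamma}\ge\frac{n^{1-\gamma}-1}{1-\gamma},
\]
where division by $1-\gamma$ is valid because $\gamma<1$. Multiplying by $\eta_m/(2m^{\gamma})$ and subtracting $\log(SA/\lambda)$ yields the stated lower bound.

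\textbf{Edge case.} If $n=0$, i.e.\ $t<m$, the sum is empty and the middle quantity equals $-\log(SA/\lambda)$. The lower bound is then $-\frac{\eta_m}{2m^{\gamma}(1-\gamma)}-\log(SA/\lambda)$, which is smaller, and the upper bound is $-\log(SA/\lambda)$, which matches. So both inequalities still hold.

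There is no genuine obstacle. The only care required is the direction of the integral comparison, together with the fact that $\gamma<1$ keeps the antiderivative $x^{1-\gamma}/(1-\gamma)$ finite and positive.
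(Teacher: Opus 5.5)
Your proposal is correct and follows essentially the same argument as the paper: the upper bound comes from bounding each term $(km)^{-(\alpha+\beta)}$ by $1$, and the lower bound from comparing $\sum_{k=1}^{\lfloor t/m\rfloor} k^{-(\alpha+\beta)}$ with the integral of the decreasing function $x^{-(\alpha+\beta)}$. The differences are cosmetic. The paper integrates over $[1,\lfloor t/m\rfloor]$ directly, whereas you integrate over $[1,\lfloor t/m\rfloor+1]$ and then weaken. You also check the empty-sum case $\lfloor t/m\rfloor=0$ explicitly, which the paper does not.
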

\begin{proof}
 Let's prove the lower bound for $\phi(t, \lambda)$ first. Note that the quantity 
 $$\frac{1}{k^{\alpha+\beta}}$$
 is decreasing in $k$. Hence we can lower bound the sum 
\begin{align}
    \sum_{k=1}^{\lfloor\frac{t}{m}\rfloor}\frac{1}{k^{\alpha+\beta}} &\geq \int_{1}^{\lfloor\frac{t}{m}\rfloor} \frac{1}{k^{\alpha+\beta}} dk\\
    &= \frac{1}{1-(\alpha+\beta)}\left(\bigg\lfloor\frac{t}{m}\bigg\rfloor^{1-(\alpha+\beta)} -1\right)
\end{align}
and the proof of the lower bound is then complete. To prove the upper bound, note that since we are assuming $\alpha+\beta\geq 0$ then we have that 
\begin{equation}
    \frac{1}{(km)^{\alpha+\beta}} \leq \frac{1}{(km)^{0}} = 1
\end{equation}
 Hence 
 \begin{equation}
     \frac{1}{2}\sum_{k=1}^{\lfloor\frac{t}{m}\rfloor} \frac{\eta_m}{(km)^{\alpha+\beta}} \leq \frac{1}{2}\sum_{k=1}^{\lfloor\frac{t}{m}\rfloor} \eta_m = \frac{\eta_m}{2}\lfloor\frac{t}{m}\rfloor
 \end{equation}
 as required.
\end{proof}

\begin{lemma}\label{lemma_bounding_sum_of_M_estimate_errors}
Under the event $\mathcal{E}_t^1 \cap \mathcal{E}_t^2  \cap \mathcal{E}_t^5$ we have that 
\begin{align}
    \sum_{j=1}^t  \| M-M_j\|_1 \leq  &\frac{\sqrt{16m^{\alpha+\beta}\left(1-(\alpha+\beta)\right)\cdot \beta_1\left(\frac{\eta_mt}{2m}- \log\frac{SA}{\lambda},\lambda\right)/\eta_m}}{1+(\alpha+\beta)} \left(\frac{t+1}{2m}\right)^{(1+\alpha+\beta)/2}\nonumber\\
    &\quad\qquad+2\max\{J,C\}
\end{align}
Where we define the constant as

\begin{equation}
    J = 2m\left[2+\frac{4m^{\alpha+\beta}\left(1-(\alpha+\beta)\right)\log(SA/\lambda)}{\eta_m}\right]^{\frac{1}{1-(\alpha+\beta)}}
\end{equation}

\end{lemma}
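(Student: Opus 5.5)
The plan is to split the sum at a burn-in time and control the tail through the shrinking ball guaranteed by Lemma \ref{lemma_projection_consequence}. Under $\mathcal{E}^1_t\cap\mathcal{E}^2_t$, that lemma gives $M_j\in B_{\psi(j)}(M)$ for every $j\in[C,t]$, where $\psi(j)=2\sqrt{\beta_1(\phi(j,\lambda),\lambda)/(2\phi(j,\lambda))}$. Let $j_0=\max\{J,C\}$. For $j\le j_0$ we only use the trivial bound $\|M-M_j\|_1\le 2$ (per state-action pair; normalisation as in the ball definition), which produces the additive $2\max\{J,C\}$ term.

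For the tail $j\ge j_0$ we need a clean polynomial lower bound on $\phi(j,\lambda)$. By Lemma \ref{lemma_phi_upper_bound_lower_bound},
\[
\phi(j,\lambda)\ge \frac{\eta_m}{2m^{\alpha+\beta}(1-(\alpha+\beta))}\Big(\lfloor j/m\rfloor^{1-(\alpha+\beta)}-1\Big)-\log\frac{SA}{\lambda}.
\]
The constant $J$ is chosen exactly so that, for $j\ge J$, the subtracted terms $1$ and $\log(SA/\lambda)\cdot 2m^{\alpha+\beta}(1-(\alpha+\beta))/\eta_m$ are each at most a quarter of $\lfloor j/m\rfloor^{1-(\alpha+\beta)}$. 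To check this, solve $\lfloor j/m\rfloor^{1-(\alpha+\beta)}\ge 2+4m^{\alpha+\beta}(1-(\alpha+\beta))\log(SA/\lambda)/\eta_m$, using $\lfloor j/m\rfloor\ge j/(2m)$ for $j\ge 2m$. This yields
\[
\phi(j,\lambda)\ge \frac{\eta_m}{4m^{\alpha+\beta}(1-(\alpha+\beta))}\Big(\frac{j}{2m}\Big)^{1-(\alpha+\beta)}.
\]

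The numerator $\beta_1(\phi(j,\lambda),\lambda)$ is increasing in its argument. We bound it uniformly in $j\le t$ via the upper bound $\phi(j,\lambda)\le \eta_m t/(2m)-\log(SA/\lambda)$ from the same lemma. This gives
\[
\psi(j)\le \sqrt{\frac{2\beta_1\big(\tfrac{\eta_m t}{2m}-\log\tfrac{SA}{\lambda},\lambda\big)\cdot 4m^{\alpha+\beta}(1-(\alpha+\beta))}{\eta_m}}\,\Big(\frac{j}{2m}\Big)^{-(1-(\alpha+\beta))/2}.
\]

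Summing over $j\le t$ then reduces to bounding $\sum_j (j/2m)^{-(1-\alpha-\beta)/2}$. Since the exponent is negative, compare with an integral:
\[
\sum_{j\le t} (j/2m)^{-(1-\alpha-\beta)/2}\lesssim \int_0^{t+1} (x/2m)^{-(1-\alpha-\beta)/2}\,dx=\frac{2\cdot 2m}{1+\alpha+\beta}\Big(\frac{t+1}{2m}\Big)^{(1+\alpha+\beta)/2}.
\]
Absorbing the $2m$ and the numeric constants under the square root recovers the stated form with the $\sqrt{16\cdots}$ prefactor and the $1/(1+\alpha+\beta)$ factor.

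The main obstacle is the bookkeeping in the burn-in step. We must verify that $J$ (together with $C$, which guarantees the ball event and the monotonicity of $\tilde\psi$ used in Lemma \ref{E3_high_probability_event}) indeed makes $\phi$ dominate half of its leading polynomial term. We must also handle the floor function and the $\log(SA/\lambda)$ offset without losing more than constant factors; the exact constants $16$ and $2$ may need minor slack adjustments. Event $\mathcal{E}^5_t$ is not needed for this lemma.
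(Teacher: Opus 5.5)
Your route is the same as the paper's: the trivial bound $2$ per term up to $\max\{J,C\}$, and monotonicity of $\beta_1$ with $\phi(t,\lambda)\le \eta_m t/(2m)-\log(SA/\lambda)$ for the numerator. It also uses the lower bound on $\phi(j,\lambda)$ via $\lfloor j/m\rfloor\ge j/(2m)$, followed by an integral comparison.

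One small inaccuracy: $J$ does not make each of the two subtracted terms at most a quarter. It guarantees $1+2m^{\alpha+\beta}(1-(\alpha+\beta))\log(SA/\lambda)/\eta_m\le\tfrac12\lfloor j/m\rfloor^{1-(\alpha+\beta)}$. That is exactly what gives your displayed lower bound on $\phi$, so this step is fine.

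The gap is your final sentence. Your own, correct, computation yields for the tail
\[
\sum_{j}\psi(j,\lambda)\le\sqrt{\frac{8m^{\alpha+\beta}(1-(\alpha+\beta))\,\beta_1\big(\tfrac{\eta_m t}{2m}-\log\tfrac{SA}{\lambda},\lambda\big)}{\eta_m}}\cdot\frac{4m}{1+\alpha+\beta}\Big(\frac{t+1}{2m}\Big)^{\frac{1+\alpha+\beta}{2}},
\]
which is $2\sqrt2\,m$ times the first term of the statement.

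The factor $2m$ is the Jacobian of $x\mapsto x/(2m)$ and is intrinsic. With $\gamma=(1-(\alpha+\beta))/2$ one has $\sum_{j\le t}(j/2m)^{-\gamma}=(2m)^{\gamma}\sum_{j\le t}j^{-\gamma}\asymp 2m\,(t/2m)^{1-\gamma}/(1-\gamma)$. It depends on $m$, so it cannot be absorbed into the absolute constant $16$. It multiplies the term that grows with $t$, so it cannot be absorbed into the additive $2\max\{J,C\}$ either. Your proposal therefore proves the inequality only with its first term multiplied by $2\sqrt2\,m$, not the inequality as displayed.

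The paper's proof reaches $\sqrt{16\cdots}/(1+\alpha+\beta)$ only through two omissions that your write-up does not make:
\begin{itemize}
\item It takes the per-step radius to be $\sqrt{\beta_1(\phi(j,\lambda),\lambda)/(2\phi(j,\lambda))}$. That is half of the radius $\psi(j,\lambda)$ the ball event actually provides for the projected estimate $M_j$.
\item It evaluates $\int_0^{t+1}(x/2m)^{-\gamma}\,dx$ without the factor $2m$.
\end{itemize}
So matching the stated constant is not a matter of ``minor slack adjustments''. With per-step radius bounds of this type, what you can honestly conclude is the bound with an extra factor of order $m$, unless you bring in a genuinely sharper per-step estimate. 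Carried through $F(t)$, that factor turns the $m^{-1/2}$ in the first term of $\ell_0$ and in $A_1$ into $m^{1/2}$.

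Your observation that $\mathcal{E}^5_t$ is not needed agrees with the paper's proof.
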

\begin{proof}
    Recall by definition that, under event $\mathcal{E}^4_t$ (which holds under $\mathcal{E}^1_t \cap \mathcal{E}^2_t$, as in Lemma \ref{lemma_projection_consequence}), we have for all $j\geq C$ that
    \begin{equation}
    \| M-M_j\|_1 \leq \psi(j,\lambda):=\sqrt{\frac{\beta_1(\phi(j,\lambda),\lambda)}{2\phi(j,\lambda)}}
\end{equation}

Hence, since $\beta_1$ is increasing in $\phi$ which is increasing in $t$, we have 
\begin{align}
    \sum_{j=C}^t  \| M-M_j\|_1 &\leq 2\max\{J,C\}+\sum_{j=\max\{J,C\}}^t  \| M-M_j\|_1\\
    &\leq 2\max\{J,C\}+\sum_{j=\max\{J,C\}}^t  \| M-M_j\|_1\\
    &\leq 2\max\{J,C\}+\sum_{j=\max\{J,C\}}^t   \sqrt{\frac{\beta_1(\phi(j,\lambda),\lambda)}{2\phi(j,\lambda)}}\\
    &\leq 2\max\{J,C\}+\sqrt{\beta_1(\phi(t,\lambda),\lambda)} \sum_{j=\max\{J,C\}}^t  \sqrt{\frac{1}{2\phi(j,\lambda)}}\label{eqn_M_sum_lemma_to_simplify}
\end{align}

Now, if we focus on the final sum of this equation we can then use our lower bound for $\phi$ in Lemma \ref{lemma_phi_upper_bound_lower_bound} to get
\begin{align}
    \sum_{j=\max\{J,C\}}^t  \sqrt{\frac{1}{2\phi(j,\lambda)}}&\leq  \sum_{j=\max\{J,C\}}^t  \left(\frac{\eta_m}{2m^{\alpha+\beta}\left(1-(\alpha+\beta)\right)}\left(\bigg\lfloor\frac{j}{m}\bigg\rfloor^{1-(\alpha+\beta)} -1\right) - \log\frac{SA}{\lambda}\right)^{-1/2}\\
    &\leq  \sum_{j=\max\{J,C\}}^t  \left(\frac{\eta_m}{2m^{\alpha+\beta}\left(1-(\alpha+\beta)\right)}\left(\left(\frac{j}{2m}\right)^{1-(\alpha+\beta)} -1\right) - \log\frac{SA}{\lambda}\right)^{-1/2}\\
    \intertext{If we isolate the $j$ dependent terms and denote the term in this sum as $(e_1(j)- e_2)^{-1/2}$, then for sufficiently large $j\geq J$ we have that $e_1(j) \geq 2e_2$. Hence, for $j\geq J$ we have $(e_1(j)- e_2)^{-1/2} \leq (e_1(j)/2)^{-1/2}$. Hence}
    &\leq  \sum_{j=\max\{J,C\}}^t  \left(\frac{\eta_m}{4m^{\alpha+\beta}\left(1-(\alpha+\beta)\right)}\left(\frac{j}{2m}\right)^{1-(\alpha+\beta)} \right)^{-1/2}\\
    &= \sum_{j=\max\{J,C\}}^t \sqrt{4m^{\alpha+\beta}\left(1-(\alpha+\beta)\right)/\eta_m} \left(\frac{j}{2m}\right)^{-(1-(\alpha+\beta))/2}\\
    &\leq  \sqrt{4m^{\alpha+\beta}\left(1-(\alpha+\beta)\right)/\eta_m}\int_{j=0}^{t+1}  \left(\frac{j}{2m}\right)^{-(1-(\alpha+\beta))/2}dj\\
    &=  \left[\frac{\sqrt{4m^{\alpha+\beta}\left(1-(\alpha+\beta)\right)/\eta_m}}{1-(1-(\alpha+\beta))/2} \left(\frac{j}{2m}\right)^{1-(1-(\alpha+\beta))/2}\right]^{j=t+1}_{j=0}\\
    &=  \frac{\sqrt{16m^{\alpha+\beta}\left(1-(\alpha+\beta)\right)/\eta_m}}{1+(\alpha+\beta)} \left(\frac{t+1}{2m}\right)^{(1+\alpha+\beta)/2}
\end{align}
By substituting this into equation \eqref{eqn_M_sum_lemma_to_simplify} and by using the upper bound for $\phi$ in Lemma \ref{lemma_phi_upper_bound_lower_bound}, namely that
\begin{equation}
    \phi(t,\lambda) \leq \frac{\eta_mt}{2m}- \log\frac{SA}{\lambda},
\end{equation}
the proof is then complete.
\end{proof}

\subsubsection{Useful Lemma for Swapping Alternative Sets}

\begin{lemma} \label{lemma_swapping_alternative_sets_useful}
Let $M_1 \in G_\mu(L)$ and let $\Tilde{\omega}$ be any valid allocation. Then
    \begin{equation*}
        \inf_{M'\in \text{Alt}(M_1)} \sum_{s,a} \Tilde{\omega}(s,a) KL_{M_1|M'}(s,a) = \sup_{M_2: M_1 \ll M_2}\inf_{M'\in \text{Alt}(M_2)} \sum_{s,a} \Tilde{\omega}(s,a) KL_{M_1|M'}(s,a) 
    \end{equation*}
\end{lemma}
\begin{proof}
There are two cases that occur over the set of $\{M_2: M_1 \ll M_2\}$ in the supremum here. In particular we can either have that $M_1 \in Alt(M_2)$ or $M_1 \notin Alt(M_2)$.

    \paragraph{Case 1:} Suppose first that $M_1 \in Alt(M_2)$. In this case, since the allocation always has all non-negative entries, we have that 
    \begin{equation*}
        \inf_{M'\in \text{Alt}(M_2)} \sum_{s,a} \Tilde{\omega}(s,a) KL_{M_1|M'}(s,a) =  \sum_{s,a} \Tilde{\omega}(s,a) KL_{M_1|M_1}(s,a) = 0.
    \end{equation*}

\paragraph{Case 2:} Now, instead suppose that $M_1 \notin Alt(M_2)$. In this case, by the definition of the alternative sets in \eqref{eqn_alt_set_definition}, we have that either $Alt(M_1) = Alt(M_2)$ or $M_1$ is not absolutely continuous with respect to $M_2$. However, by the definition of the supremum in our lemma statement here we know $M_1 \ll M_2$, hence we know
\begin{equation*}
    M_1 \notin Alt(M_2)\Longrightarrow Alt(M_1) = Alt(M_2).
\end{equation*}
Hence we have 
\begin{equation*}
        \inf_{M'\in \text{Alt}(M_2)} \sum_{s,a} \Tilde{\omega}(s,a) KL_{M_1|M'}(s,a) = \inf_{M'\in \text{Alt}(M_1)} \sum_{s,a} \Tilde{\omega}(s,a) KL_{M_1|M'}(s,a). 
    \end{equation*}

\paragraph{Combining cases:} Then by considering the supremum of $M''$ over $M_{all}$ as simply the supremum over the union of the two sets 
\begin{equation*}
    \{M_2: M_1 \ll M_2\} = \{M_2: M_1 \ll M_2, M_1 \in Alt(M_2)\} \cup \{M_2: M_1 \ll M_2, M_1 \notin Alt(M_2)\},
\end{equation*}
the proof is complete by considering the right hand side of our lemma statement
    \begin{align*}
       \sup_{M_2: M_1 \ll M_2}\inf_{M'\in \text{Alt}(M_2)} \sum_{s,a} \Tilde{\omega}(s,a) KL_{M_1|M'}(s,a)  &=\max \left\{0,\inf_{M'\in \text{Alt}(M_1)} \sum_{s,a} \Tilde{\omega}(s,a) KL_{M_1|M'}(s,a)\right\}\\
       &= \inf_{M'\in \text{Alt}(M_1)} \sum_{s,a} \Tilde{\omega}(s,a) KL_{M_1|M'}(s,a), 
    \end{align*}
since $\inf_{M'\in \text{Alt}(M_1)} \sum_{s,a} \Tilde{\omega}(s,a) KL_{M_1|M'}(s,a) \geq 0$. The proof is then complete.
\end{proof}

\newpage
\subsection{Putting everything together} \label{subsection_putting_everything_together}

Now, using Lemma \ref{lemma_main_sequence} as well as our  high-probability results on the three good events from Sections \ref{subsection_good_events}-\ref{subsection_main_concentration}, we can pull everything together to provide an upper bound on the expected stopping time as follows.
\begin{align}
    \mathbb{E}_M[\tau_\delta] &= \sum_{t=1}^\infty \mathbb{P}_M(\tau_\delta \geq t)\\
    &\leq T_{thresh} + \sum_{t=T_{thresh}+1}^\infty \mathbb{P}_M(\tau_\delta \geq t)\\
    &\leq T_{thresh} + \sum_{t=T_{thresh}+1}^\infty \mathbb{P}_M((\mathcal{E}_t^1\cap\mathcal{E}_t^2\cap\mathcal{E}^3_t)^C)\\
    &\leq T_{thresh} + \sum_{t=T_{thresh}+1}^\infty \mathbb{P}_M((\mathcal{E}_t^3)^C|\mathcal{E}^1_t\cap\mathcal{E}^2_t) + \mathbb{P}_M((\mathcal{E}^1_t)^C) + \mathbb{P}_M((\mathcal{E}^2_t)^C)\\
    &\leq T_{thresh} + \sum_{t=T_{thresh}+1}^\infty \frac{2}{t^2}+2\exp\left(-t^{\frac{1}{2}}\right)
\end{align}
where
\begin{equation}
    T_{thresh} = \inf\{t\in\mathbb{N}: \beta(t,\delta) < (t-\sqrt{t}) T(M)^{-1} - \ell_1 - \ell_2 -\ell_3\}
\end{equation}

Hence we can now provide non-asymptotic upper bounds for the sample complexity of the NaS algorithm described  in Section \ref{section_algorithm} as follows.

\begin{theorem}\label{theorem_appendix_implicit_upper_bound_full_version}
Using the NaS algorithm described above, we can provide the following non-asymptotic upper bound for the sample complexity at confidence requirement $\delta\in(0,1)$.
    \begin{equation}
        \mathbb{E}[\tau_\delta] \leq \inf \left\{t\in\mathbb{N}: b(t,\delta) < (t-\sqrt{t}) T(M)^{-1} - \ell_0\right\} + 7
    \end{equation}
    where we define 
    \begin{align*}
            \ell_0 &= 2t\,\ell(t)\log L + 3S^2A(1+2\log L) F(t)\\
    F(t)&=3\frac{\sqrt{16m^{\alpha+\beta}\left(1-(\alpha+\beta)\right)\cdot \beta_1\left(\frac{\eta_mt}{2m}- \log (SAt),1/t\right)/\eta_m}}{1+(\alpha+\beta)} \left(\frac{t+1}{2m}\right)^{(1+\alpha+\beta)/2}\nonumber\\
    &\quad+6\max\left\{2m\left[2+\frac{4m^{\alpha+\beta}\left(1-(\alpha+\beta)\right)\log(SA/\lambda)}{\eta_m}\right]^{\frac{1}{1-(\alpha+\beta)}},C\right\}\\
    \ell(t) &= 2t^{-\frac{1}{2}} + 2\Tilde{L}_t t^{-\frac{1}{4}} + 2\Tilde{L}_{t} (\kappa_M + \Tilde{L}_{t}) \left[ t^{-\frac{1}{4}} + \xi_{t^{\frac{1}{4}}} + \epsilon_{\sqrt{t}} \right]. + \frac{ 2\Tilde{L}_{t}}{t} + \frac{\sum_{k=\sqrt{t}}^{t-2}\epsilon_k}{t}
    \end{align*}
\end{theorem}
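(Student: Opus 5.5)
The proof reduces the expectation to a tail sum and controls the tail with the good events. We write $\mathbb{E}_M[\tau_\delta]=\sum_{t\ge 1}\mathbb{P}_M(\tau_\delta\ge t)$ and split the sum at $T_{\rm thresh}:=\inf\{t\in\mathbb{N}: b(t,\delta)<(t-\sqrt t)T(M)^{-1}-\ell_0(t)\}$. Each term with $t\le T_{\rm thresh}$ is bounded by one. For $t>T_{\rm thresh}$ we fix the horizon $T=t$ and take $\lambda=1/t^2$, matching the convention of the Definition after Lemma~\ref{E2_high_prob_lemma}.

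\textbf{Step 1: deterministic implication on the good events.} We show that $\{\tau_\delta\ge t\}\cap\mathcal{E}^1_t\cap\mathcal{E}^2_t\cap\mathcal{E}^5_t$ is empty for $t>T_{\rm thresh}$. On $\{\tau_\delta> t-1\}$ the algorithm has not stopped at time $t-1$, so Theorem~\ref{lemma_main_sequence} gives
\[
b(t,\delta)\ge (t-\sqrt t)T(M)^{-1}-\ell_1-\ell_2-\ell_3 .
\]
The error terms are then made explicit.
\begin{itemize}
\item $\ell_1$ comes from Lemma~\ref{lemma_bounding_l1}.
\item $\ell_2$ and $\ell_3$ come from Lemmas~\ref{lemma_l2_bound} and~\ref{lemma_l3_bound}. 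Their sums $\sum_j\|M-M_j\|_1$ are controlled by Lemma~\ref{lemma_bounding_sum_of_M_estimate_errors}, giving $\ell_2+\ell_3\le 3S^2A(1+2\log L)F(t)$.
\item The radius $\ell(t)$ is taken from Proposition~\ref{prop_concentration_Nt_vs_SumOmega}, which includes the $\sum\epsilon_k/t$ correction, so $\ell_1\le 2t\,\ell(t)\log L$ up to the stated $SA$ factor.
\end{itemize}
Together these give $\ell_1+\ell_2+\ell_3\le\ell_0(t)$. This contradicts the definition of $T_{\rm thresh}$, provided the defining inequality is monotone. If it is not, we replace $T_{\rm thresh}$ by the last violating time; this is the form implicitly used.

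We also need the burn-in regime $t\ge C^2$, where the various lemmas (e.g. Lemma~\ref{cor_ball_event_concequences}) apply. We either absorb $C^2$ into $T_{\rm thresh}$, since $\ell_0$ contains $\max\{J,C\}$ terms making the inequality fail for small $t$, or note that $F$ dominates there.

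\textbf{Step 2: probability of the bad events.} For $t>T_{\rm thresh}$,
\[
\mathbb{P}(\tau_\delta\ge t)\le \mathbb{P}((\mathcal{E}^1_t)^c)+\mathbb{P}((\mathcal{E}^2_t)^c)+\mathbb{P}\big((\mathcal{E}^5_t)^c\mid\mathcal{E}^1_t\cap\mathcal{E}^2_t\big).
\]
Lemmas~\ref{E1_high_prob_lemma} and~\ref{E2_high_prob_lemma} with $\lambda=1/t^2$ bound the first two terms by $2/t^2$. Proposition~\ref{prop_concentration_Nt_vs_SumOmega} bounds the last by $2SA e^{-\sqrt t}$. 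Summing gives $\sum_t 2/t^2=\pi^2/3<3.3$ and $\sum_t 2e^{-\sqrt t}\le 4\int_0^\infty ue^{-u}\,du=4$, for a total below $7$. The $SA$ factor is absorbed by starting the sum past a constant where $SAe^{-\sqrt t}\le e^{-\sqrt t/2}$, or by accepting a slightly larger constant.

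\textbf{Main obstacle.} The delicate point is that $\lambda=1/t^2$ varies with $t$. The events $\mathcal{E}^i_t$ therefore change with $t$, and $\ell_0$ must be evaluated with $\lambda=1/t$-type arguments; this is why $F(t)$ contains $\beta_1(\cdot,1/t)$ and $\log(SAt)$. The per-$t$ argument in Step 1 nonetheless goes through, because for each fixed $t$ we only use events up to horizon $t$. I would verify this carefully, together with the monotonicity needed to convert "inequality at $t$" into "not stopped implies contradiction".
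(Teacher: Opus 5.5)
Your plan is the paper's proof step for step. You write $\mathbb{E}[\tau_\delta]$ as a tail sum, split at $T_{\rm thresh}$, use Theorem~\ref{lemma_main_sequence} on $\mathcal{E}^1_t\cap\mathcal{E}^2_t\cap\mathcal{E}^5_t$ to rule out non-stopping past the threshold, and sum $2/t^2$ plus the concentration tail. The points you defer are, however, genuine gaps rather than formalities; the paper's proof passes over them too. Several of your patches do not close them.

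\textbf{Threshold.} The deterministic step gives a contradiction only at times $t$ where $b(t,\delta)<(t-\sqrt t)T(M)^{-1}-\ell_0(t)$ actually holds. The split, however, needs $\mathbb{P}(\tau_\delta>t)\le\mathbb{P}(\text{bad events at } t)$ for every $t\ge T_{\rm thresh}$. With $T_{\rm thresh}$ an infimum you must prove that the first crossing is permanent, for instance via concavity of $t\mapsto\ell_0(t)+b(t,\delta)$. This is not automatic, because $\ell_0$ contains $t\,\xi_{t^{1/4}}$ with an arbitrary non-decreasing modulus $\varphi_M$. Switching to the last violating time proves a weaker statement than the one claimed. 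The threshold must also be deterministic: the random $\epsilon_k$ inside $\ell(t)$ and the $N_t$-dependent $b$ have to be replaced by bounds such as $\kappa_k(\lambda)$ from Corollary~\ref{cor_epsilon_bound} and $\log(1/\delta)+S^2A\log(2et)$.

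\textbf{Choice of $\lambda$.} Summability of $\mathbb{P}((\mathcal{E}^1_t)^c)+\mathbb{P}((\mathcal{E}^2_t)^c)\le 2\lambda$ needs something like $\lambda=1/t^2$. The $\log(SAt)$ and $\beta_1(\cdot,1/t)$ in the stated $F(t)$ correspond to $\lambda=1/t$, for which $\sum_t 2/t$ diverges. So your explanation of the $1/t$ is incorrect: with $\lambda=1/t^2$, every $\lambda$-dependent quantity in $\ell_1+\ell_2+\ell_3$ carries $\log(SAt^2)$. Likewise, Lemma~\ref{lemma_bounding_l1} gives $2SAt\,\ell(t)\log L$, not $2t\,\ell(t)\log L$. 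Hence $\ell_1+\ell_2+\ell_3\le\ell_0(t)$ does not follow for $\ell_0$ as written. You must enlarge $\ell_0$, which is harmless for the order but changes the statement.

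\textbf{Burn-in.} Propositions~\ref{prop_main_concentration1}--\ref{prop_concentration_Nt_vs_SumOmega} require $t\ge C^2$. The $\max\{J,C\}$ term in $\ell_0$ only forces $T_{\rm thresh}>18\,T(M)S^2A(1+2\log L)\max\{J,C\}$. That is linear in $C$, so it does not give $T_{\rm thresh}\ge C^2$; this needs its own argument or an explicit $C^2$ term.

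\textbf{The constant $7$.} Your arithmetic fails, since $\pi^2/3+4>7$; the actual sums are about $3.29+3.34$. Moreover, the concentration tail is $2SAe^{-\sqrt t}$, not $2e^{-\sqrt t}$. This part can be repaired. At $t=T_{\rm thresh}$ one has $(t-\sqrt t)T(M)^{-1}>\ell_0(t)\ge 18S^2A(1+2\log L)\max\{J,C\}$. Also $T(M)^{-1}\le 2\log L$, by the same $\KL\le 2\log L$ bound as in Lemma~\ref{lemma_bounding_l1}. Hence $T_{\rm thresh}\ge 18S^2A$, and $\sum_{t\ge 18S^2A}\bigl(2/t^2+2SAe^{-\sqrt t}\bigr)<1$.

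Finally, Step~1 is off by one: $\{\tau_\delta\ge t\}$ only says the algorithm has not stopped at $t-1$. Write $\mathbb{E}[\tau_\delta]=\sum_{t\ge0}\mathbb{P}(\tau_\delta>t)$ and apply Theorem~\ref{lemma_main_sequence} at time $t$ on $\{\tau_\delta>t\}$.
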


Now, given this non-asymptotic upper bound for the sample complexity and by noting that we have the following orders,

\begin{align*}
    F(t) &= \tilde{O}\left(m^{-\frac{1}{2}}\eta_m^{-\frac{1}{2}}S^{\frac{1}{2}}t^{\frac{1+\alpha+\beta}{2}}\right)\\
    \epsilon_t &= \tilde{O}(\frac{m^\alpha}{t^{\alpha(1-(\alpha+\beta))}A^\alpha\eta_M^\alpha})\\
    \tilde{L}_t &=\tilde{O}(\Tilde{L}) = \Tilde{O}(2r_0\sigma_u[A\min \pi^*(a|s)])\\
    \xi_{t^{\frac{1}{4}}} &= \tilde{O} \left(\frac{A}{\underline{\rho}_M}\varphi_M^{\frac{1}{2}}\left(\frac{S^{\frac{3}{2}}m^{\frac{1}{2}}A\log(L)(S^2+\kappa_M)}{\eta_M^{\frac{1}{2}}\,t^{\frac{1-(\alpha+\beta)}{4}}}\right)\right),
\end{align*}
we can then write the order of the $\ell_0$ term in Theorem \ref{theorem_appendix_implicit_upper_bound_full_version}. We do so in the main implicit upper bound theorem presented in the main paper in Theorem \ref{theorem_non_asymp_upper}.  Note we define $\Tilde{L} = 2r_0\sigma_u[A\min_{a,s} \pi^*(a|s)]$ above, this comes from \eqref{eqn_nicer_L_tilde_upper_bound}.

Now, given a polynomial form of the sharpness, we are able to turn the implicit upper bound provided by Theorem \ref{theorem_non_asymp_upper} into an explicit upper bound on the sample complexity. In particular, we can simplify the upper bound by considering linear sharpness, which is with $p=1/2$ 
in the formulation of the following corollary. Note we derive the constants in \eqref{eqn_stopping_threshold_calculated_coefficients} as follows.

Recall that
\[
b(t,\delta) = \log(1/\delta)
    + (S-1)\sum_{(s,a)\in S\times A} \log\!\left( e\,\left[1 + \frac{N_t(s,a)}{S-1}\right] \right).
\]
Since there are $SA$ state-action pairs, and for each $(s,a)$ we have $N_t(s,a)\leq t$, then
\[
b(t,\delta) \leq \log(1/\delta)
    + (S-1)SA\log\!\left( e\,\left[1 + \frac{t}{S-1}\right] \right).
\]
Using that $1+t/(S-1)\leq  2t$, we obtain
\[
b(t,\delta)\leq \log(1/\delta)
    + S^2A\log\!\left( 2et\right)=\log(1/\delta)
    + S^2A\log(t)+ S^2A\log\left( 2e\right)
\]
Therefore with $c_0=1, c_1=S^2A, c_2=S^2A\log(2e)$, we have
$$
    b(t,\delta)
    \le
    c_0\log(1/\delta)+c_1\log t+c_2 .$$

\begin{corollary}[Polynomial Sharpness]
\label{cor:polynomial_sharpness}
Assume the conditions of Theorem~\ref{theorem_non_asymp_upper}. Suppose additionally that
\[
    \alpha,\beta\in(0,1),
    \qquad
    \alpha+\beta<1.
\]
Assume that the sharpness function is polynomial of order $p>0$, in the sense that there exists
a constant $C_{\varphi,p}>0$ such that
\[
    \varphi_M^{\frac{1}{2}}(x)\le C_{\varphi,p}x^p
    \qquad
    \text{for all relevant } x\ge 0.
\]
Assume that
\begin{equation}\label{eqn_stopping_threshold_calculated_coefficients}
        b(t,\delta)
    \le
    c_0\log(1/\delta)+c_1\log t+c_2
\end{equation}
with $c_0=1, c_1=S^2A, c_2=S^2A\log(2e)$.
Let
\[
    L_\delta:=\log(1/\delta)
\]
and
\[
    B_\delta
    :=
    T(M)\Bigl(
        c_0L_\delta
        +
        c_1\log\bigl(eT(M)(1+L_\delta)\bigr)
        +
        c_2
    \Bigr).
\]
Define the problem-dependent constants
\[
    A_1
    :=
    S^{\frac72}A\log(L)m^{-\frac12}\eta_m^{-\frac12},
\]
\[
    A_2
    :=
    \log(L)(\widetilde L+\kappa_M)\widetilde L,
\]
\[
    A_3^{(p)}
    :=
    A_2
    \frac{A}{\underline{\rho}_M}
    C_{\varphi,p}
    \left[
        S^{\frac32}m^{\frac12}A\log(L)(S^2+\kappa_\Omega)
    \right]^p
    \eta_m^{-\frac{p}{2}},
\]
and
\[
    A_4
    :=
    A_2\frac{m^\alpha}{A^\alpha\eta_m^\alpha}.
\]
Finally, define the exponents
\[
    r_1:=\frac{1+\alpha+\beta}{2},
    \qquad
    r_2:=\frac34,
\]
\[
    r_3^{(p)}
    :=
    1-\frac{p(1-(\alpha+\beta))}{4},
    \qquad
    r_4:=
    1-\frac{\alpha(1-(\alpha+\beta))}{2}.
\]
If $r_3^{(p)}<0$, we set
\[
    \bar r_3^{(p)}:=0,
\]
and otherwise we set
\[
    \bar r_3^{(p)}:=r_3^{(p)}.
\]
Then
\[
    r_1,r_2,\bar r_3^{(p)},r_4<1,
\]
and the expected sample complexity satisfies
\[
\begin{aligned}
    \mathbb E_M[\tau_\delta]
    \le\;&
    B_\delta
    +
    \widetilde{\mathcal O}\Bigl(
        \sqrt{B_\delta}
        +
        T(M)A_1 B_\delta^{r_1}
        +
        T(M)A_2 B_\delta^{r_2}
        +
        T(M)A_3^{(p)} B_\delta^{\bar r_3^{(p)}}
        +
        T(M)A_4 B_\delta^{r_4}
    \Bigr)
    +7 .
\end{aligned}
\]
In particular, for fixed problem-dependent quantities,
\[
    \mathbb E_M[\tau_\delta]
    =
    T(M)c_0\log(1/\delta)
    +
    o(\log(1/\delta))
\]
as $\delta\to0$.
\end{corollary}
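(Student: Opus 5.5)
The plan is to start from the implicit bound of Theorem~\ref{theorem_non_asymp_upper} (equivalently Theorem~\ref{theorem_appendix_implicit_upper_bound_full_version}) and exhibit an explicit $t^\star$ at which the defining inequality $b(t,\delta) < (t-\sqrt t)T(M)^{-1}-\ell_0(t,M)$ holds. Then $\mathbb E_M[\tau_\delta]\le t^\star+7$.

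\textbf{Step 1: reduce $\ell_0$ to a sum of power laws.} Using the displayed orders for $F(t)$, $\epsilon_t$, $\widetilde L_t$ and $\xi_{t^{1/4}}$, together with $\varphi_M^{1/2}(x)\le C_{\varphi,p}x^p$, I will bound
\[
\ell_0(t,M)\le \widetilde{\mathcal O}\Bigl(A_1 t^{r_1}+A_2 t^{r_2}+A_3^{(p)}t^{r_3^{(p)}}+A_4 t^{r_4}\Bigr).
\]
The individual terms arise as follows:
\begin{itemize}
\item The $F(t)$ term contributes $t^{(1+\alpha+\beta)/2}$.
\item The $t\cdot t^{-1/4}$ term contributes $t^{3/4}$.
\item The sharpness term contributes $t\cdot t^{-p(1-(\alpha+\beta))/4}$.
\item The forcing term contributes $t\cdot t^{-\alpha(1-(\alpha+\beta))/2}$.
\end{itemize}
The remaining pieces of $\ell(t)$ are lower order. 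The first is $t\cdot t^{-1/2}$. The second is $\sum_k\epsilon_k$, which by Corollary~\ref{cor_epsilon_bound} and Lemma~\ref{lemma_phi_upper_bound_lower_bound} is of the same order as the $A_4$ term. When $r_3^{(p)}<0$ the sharpness term is bounded by a constant, which justifies replacing $r_3^{(p)}$ by $\bar r_3^{(p)}=0$.

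Checking that each $r_i<1$ is immediate from $\alpha,\beta\in(0,1)$, $\alpha+\beta<1$ and $p>0$.

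\textbf{Step 2: reduce the stopping threshold.} Using \eqref{eqn_stopping_threshold_calculated_coefficients}, it suffices to find $t$ with
\[
T(M)\bigl(c_0L_\delta+c_1\log t+c_2\bigr)+T(M)\sqrt t+T(M)\ell_0(t)\le t,
\]
after multiplying through by $T(M)$ and absorbing $\sqrt t\,T(M)^{-1}$.

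I set $t^\star=B_\delta+\Delta$, with $\Delta$ equal to a suitable constant times $\sqrt{B_\delta}+T(M)\sum_iA_iB_\delta^{\bar r_i}$ up to logarithmic factors. The logarithmic term is handled by the standard inversion lemma: if $t\le eT(M)(1+L_\delta)\cdot\mathrm{poly}$, then $c_1\log t\le c_1\log(eT(M)(1+L_\delta))+\widetilde{\mathcal O}(\log)$. This is why $B_\delta$ already contains $c_1\log(eT(M)(1+L_\delta))$, and the leftover $\log(t^\star/B_\delta)$ is absorbed into the $\widetilde{\mathcal O}$.

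\textbf{Step 3: absorb the power-law terms.} Since each $r_i<1$, for $t=t^\star\le 2B_\delta$ (true once $\Delta\le B_\delta$) we have $(t^\star)^{r_i}\le 2B_\delta^{r_i}$. Hence the left-hand side is at most $B_\delta+\widetilde{\mathcal O}(\ldots)\le t^\star$ by choice of the constant in $\Delta$.

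When $\Delta>B_\delta$, which happens for moderate $\delta$, I instead use the self-bounding inequality $x\le a+bx^{r}\Rightarrow x\le 2a+(2b)^{1/(1-r)}$. The resulting constant is absorbed into the $\widetilde{\mathcal O}$ as a $\delta$-independent term. The asymptotic claim then follows because $B_\delta=T(M)c_0L_\delta+O(\log L_\delta)$ and every correction is $o(L_\delta)$.

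\textbf{Main obstacle.} I expect the main obstacle to be Step 1. The rates in $\ell(t)$ are only valid for $t\ge C^2$ with $\lambda=1/T^2$, so $C$, $J$ and the $\log(SA/\lambda)$ shifts in $\phi$ depend on $t$ logarithmically. One therefore has to verify that the monotonicity and large-$t$ conditions (for instance $e_1(j)\ge 2e_2$, and $\xi_t$ decreasing) hold at $t^\star$, or are absorbed into polylogarithmic factors. I would handle this by restricting to $t^\star\ge t_0(M)$, a problem-dependent burn-in polylogarithmic in the constants, and folding $t_0(M)$ into the $\widetilde{\mathcal O}$ term.
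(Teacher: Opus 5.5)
Your proposal is correct and follows the paper's proof. It uses the same term-by-term reduction of $\ell_0$ to $A_i t^{r_i}$ (with $\bar r_3^{(p)}=0$ when $r_3^{(p)}<0$), the same threshold bound $c_0\log(1/\delta)+c_1\log t+c_2$, and an inversion of the implicit inequality. You make that inversion explicit through a witness $t^\star=B_\delta+\Delta$ plus a self-bounding regime, whereas the paper simply evaluates the sublinear terms at the scale $t\asymp T(M)\log(1/\delta)$. In doing so the paper silently ignores the $\delta$-independent additive constant that you correctly identify for moderate $\delta$.

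One small fix: multiplying the stopping condition by $T(M)$ gives $T(M)b(t,\delta)+\sqrt t+T(M)\ell_0(t)<t$, with $\sqrt t$ rather than $T(M)\sqrt t$. This is what produces the $\sqrt{B_\delta}$ term (not $T(M)\sqrt{B_\delta}$) in the statement. It also keeps your condition sufficient when $T(M)<1$.
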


\begin{proof}
Let
\[
    \tau_\delta^{\mathrm{ub}}
    :=
    \inf\left\{
        t\in\mathbb N:
        b(t,\delta)
        <
        (t-\sqrt t)T(M)^{-1}-\ell_0(t)
    \right\}.
\]
By Theorem~\ref{theorem_non_asymp_upper},
\[
    \mathbb E_M[\tau_\delta]\le \tau_\delta^{\mathrm{ub}}+7.
\]
It is therefore enough to upper bound $\tau_\delta^{\mathrm{ub}}$.

By definition, for any time $t<\tau_\delta^{\mathrm{ub}}$, the stopping inequality has not yet
been satisfied. Hence
\[
    b(t,\delta)
    \ge
    (t-\sqrt t)T(M)^{-1}-\ell_0(t).
\]
Equivalently,
\[
    t
    \le
    \sqrt t
    +
    T(M)b(t,\delta)
    +
    T(M)\ell_0(t).
\]
Using the assumed upper bound on the threshold gives
\[
    t
    \le
    \sqrt t
    +
    T(M)\Bigl(c_0L_\delta+c_1\log t+c_2\Bigr)
    +
    T(M)\ell_0(t).
\]

We now simplify $\ell_0(t)$ under polynomial sharpness. From Theorem~\ref{theorem_non_asymp_upper},
\[
\begin{aligned}
    \ell_0(t)
    =
    \widetilde{\mathcal O}\bigg(
        &S^{\frac72}A\log(L)m^{-\frac12}\eta_m^{-\frac12}
        t^{\frac{1+\alpha+\beta}{2}}
\\
        &+
        t\log(L)(\widetilde L+\kappa_M)\widetilde L
        \bigg[
            t^{-\frac14}
\\
        &\qquad\qquad
            +
            \frac{A}{\underline{\rho}_M}
            \varphi_M^{\frac{1}{2}}\left(
                \frac{
                    S^{\frac32}m^{\frac12}A\log(L)(S^2+\kappa_\Omega)
                }{
                    \eta_m^{\frac12}t^{\frac{1-(\alpha+\beta)}{4}}
                }
            \right)
\\
        &\qquad\qquad
            +
            \frac{m^\alpha}{
                t^{\alpha(1-(\alpha+\beta))/2}A^\alpha\eta_m^\alpha
            }
        \bigg]
    \bigg).
\end{aligned}
\]
The first term is
\[
    A_1 t^{r_1},
    \qquad
    r_1=\frac{1+\alpha+\beta}{2}.
\]
The first term inside the bracket contributes
\[
    tA_2t^{-1/4}
    =
    A_2t^{3/4}
    =
    A_2t^{r_2},
    \qquad
    r_2=\frac34.
\]
For the sharpness term, polynomial sharpness gives
\[
\begin{aligned}
    &tA_2
    \frac{A}{\underline{\rho}_M}
    \varphi_M^{\frac{1}{2}}\left(
        \frac{
            S^{\frac32}m^{\frac12}A\log(L)(S^2+\kappa_\Omega)
        }{
            \eta_m^{\frac12}t^{\frac{1-(\alpha+\beta)}{4}}
        }
    \right)
\\
    &\le
    tA_2
    \frac{A }{\underline{\rho}_M}
    C_{\varphi,p}
    \left(
        \frac{
            S^{\frac32}m^{\frac12}A\log(L)(S^2+\kappa_\Omega)
        }{
            \eta_m^{\frac12}t^{\frac{1-(\alpha+\beta)}{4}}
        }
    \right)^p
\\
    &=
    A_3^{(p)}
    t^{1-\frac{p(1-(\alpha+\beta))}{4}}
    =
    A_3^{(p)}t^{r_3^{(p)}}.
\end{aligned}
\]
If $r_3^{(p)}<0$, then for all $t\ge1$ we have $t^{r_3^{(p)}}\le1$, and hence this term is
bounded by $A_3^{(p)}=A_3^{(p)}t^0$. This is why we use
\[
    \bar r_3^{(p)}:=\max\{0,r_3^{(p)}\}.
\]
Finally, the last term inside the bracket contributes
\[
    tA_2
    \frac{m^\alpha}{
        t^{\alpha(1-(\alpha+\beta))/2}A^\alpha\eta_m^\alpha
    }
    =
    A_4
    t^{1-\frac{\alpha(1-(\alpha+\beta))}{2}}
    =
    A_4t^{r_4}.
\]
Combining the preceding displays,
\[
    \ell_0(t)
    =
    \widetilde{\mathcal O}\left(
        A_1t^{r_1}
        +
        A_2t^{r_2}
        +
        A_3^{(p)}t^{\bar r_3^{(p)}}
        +
        A_4t^{r_4}
    \right).
\]
Therefore, for every $t<\tau_\delta^{\mathrm{ub}}$,
\[
\begin{aligned}
    t
    \le\;&
    T(M)\Bigl(c_0L_\delta+c_1\log t+c_2\Bigr)
    +
    \sqrt t
\\
    &+
    \widetilde{\mathcal O}\left(
        T(M)A_1t^{r_1}
        +
        T(M)A_2t^{r_2}
        +
        T(M)A_3^{(p)}t^{\bar r_3^{(p)}}
        +
        T(M)A_4t^{r_4}
    \right).
\end{aligned}
\]

We next record that all powers appearing above are strictly sublinear. Since $\alpha+\beta<1$,
\[
    r_1=\frac{1+\alpha+\beta}{2}<1,
    \qquad
    r_2=\frac34<1.
\]
Also,
\[
    r_4
    =
    1-\frac{\alpha(1-(\alpha+\beta))}{2}
    <1,
\]
because $\alpha>0$ and $1-(\alpha+\beta)>0$. Finally, since $p>0$,
\[
    r_3^{(p)}
    =
    1-\frac{p(1-(\alpha+\beta))}{4}
    <1.
\]
Thus also $\bar r_3^{(p)}<1$.

It remains to invert the preceding inequality. Since every exponent is strictly smaller than one,
the sublinear terms can be evaluated at the leading scale
\[
    t\asymp T(M)\log(1/\delta).
\]
More explicitly, the logarithmic term contributes only
\[
    T(M)c_1\log\bigl(eT(M)(1+L_\delta)\bigr)
\]
at this scale, and the remaining terms are sublinear perturbations. Hence the maximal value of
$t$ satisfying the preceding display is at most
\[
\begin{aligned}
    B_\delta
    +
    \widetilde{\mathcal O}\Bigl(
        \sqrt{B_\delta}
        +
        T(M)A_1B_\delta^{r_1}
        +
        T(M)A_2B_\delta^{r_2}
        +
        T(M)A_3^{(p)}B_\delta^{\bar r_3^{(p)}}
        +
        T(M)A_4B_\delta^{r_4}
    \Bigr).
\end{aligned}
\]
Consequently,
\[
\begin{aligned}
    \tau_\delta^{\mathrm{ub}}
    \le\;&
    B_\delta
    +
    \widetilde{\mathcal O}\Bigl(
        \sqrt{B_\delta}
        +
        T(M)A_1B_\delta^{r_1}
        +
        T(M)A_2B_\delta^{r_2}
        +
        T(M)A_3^{(p)}B_\delta^{\bar r_3^{(p)}}
        +
        T(M)A_4B_\delta^{r_4}
    \Bigr).
\end{aligned}
\]
Plugging this into the bound
\[
    \mathbb E_M[\tau_\delta]\le \tau_\delta^{\mathrm{ub}}+7
\]
proves the claimed finite-confidence upper bound.

Finally, since
\[
    B_\delta
    =
    T(M)c_0\log(1/\delta)
    +
    \mathcal O\!\left(T(M)\log\log(1/\delta)\right),
\]
and since
\[
    r_1,r_2,\bar r_3^{(p)},r_4<1,
\]
all correction terms are $o(\log(1/\delta))$ for fixed problem-dependent quantities. Hence
\[
    \mathbb E_M[\tau_\delta]
    =
    T(M)c_0\log(1/\delta)
    +
    o(\log(1/\delta)),
\]
as $\delta\to0$.
\end{proof}

\newpage
\subsection{Supplementary Results}

In this section we provide some supplementary results useful throughout the paper. We start by proving two results that are consequences of our Assumption \ref{assumption_P_in_G_bounded_ratio}. In particular, we first note that $G_\mu(L)$ is a convex set. Secondly, we note that the function $KL(\cdot, M')$ is Lipschitz for $M' \in G_\mu(L)$ in the domain of $G_\mu(L)$.

\begin{lemma}\label{lemma_G_mu_is_convex}
    The set of environments $G_\mu(L)$ is convex.
\end{lemma}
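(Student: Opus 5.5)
The plan is to check convexity one state-action pair at a time, working directly from the definition of $G_\mu(L)$ in Assumption~\ref{assumption_P_in_G_bounded_ratio}. Take $Q_1,Q_2\in G_\mu(L)$ and $\theta\in[0,1]$, and set $Q_\theta=\theta Q_1+(1-\theta)Q_2$. The constraints defining $G_\mu(L)$ involve each pair $(s,a)$ separately, so it suffices to fix $(s,a)$ and show that $Q_\theta(\cdot|s,a)$ satisfies them. Membership in $\Delta(S)$ is immediate, because the simplex is convex.

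For absolute continuity, suppose $Q_\theta(s'|s,a)=0$. Both summands are nonnegative, so for $\theta\in(0,1)$ each of $Q_1(s'|s,a)$ and $Q_2(s'|s,a)$ must vanish, and then $\mu(s'|s,a)=0$ by $\mu\ll Q_i$. The endpoint cases $\theta\in\{0,1\}$ are trivial. Hence $\mu(\cdot|s,a)\ll Q_\theta(\cdot|s,a)$.

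The only step with real content is the ratio bound, and I would handle it by rewriting it linearly. On the support of $\mu(\cdot|s,a)$, the condition $\mu/Q\in[1/L,L]$ is equivalent to
\[
\mu(s'|s,a)/L\le Q(s'|s,a)\le L\,\mu(s'|s,a).
\]
This is a pair of linear inequalities in $Q$, and both are preserved under convex combinations, so they hold for $Q_\theta$.

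Off the support of $\mu$, there is one interpretive point to settle. I would read the ratio condition as required only where $\mu>0$, equivalently $\mu$-almost everywhere. If instead the condition is applied at every $s'$, with the convention $0/0$ undefined, then $\mu(s'|s,a)=0$ forces $Q(s'|s,a)=0$; in that reading $Q$ and $\mu$ share the same support. That zero constraint is also linear, so it is preserved under convex combination as well. Either way, every defining constraint is an intersection of linear (half-space) conditions, and the set is therefore convex. I expect no real obstacle beyond making this support convention explicit.
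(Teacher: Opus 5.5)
Your proof is correct and follows the paper's argument: the paper also checks membership in the simplex, absolute continuity, and the ratio bound for each pair $(s,a)$. It writes the ratio condition as $Q/\mu\in[1/L,L]$, which is equivalent to your $\mu/L\le Q\le L\mu$ because $[1/L,L]$ is closed under reciprocals, and uses that this is linear in $Q$; your explicit handling of absolute continuity and of the support of $\mu$ fills in details the paper leaves implicit.
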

\begin{proof}
Consider any two environments $M_,M_2 \in G_\mu(L)$ and let $\alpha\in[0,1]$. Let $M_3 = \alpha M_1 + (1-\alpha)M_2$.\\

Since we have by assumption that $M_1,M_2 \in \Delta(S)$ we have their convex combination is also a probability distribution $M_3 \in \Delta(S)$. Furthermore, since $\mu \ll M_1$ and $\mu \ll M_2$ we have that $\mu \ll M_3$. Finally, we have by assumption that
\begin{equation}
    \frac{M_1(s)}{\mu(s)}, \frac{M_2(s)}{\mu(s)} \in [1/L,L] \quad \forall s \in S.
\end{equation}
Hence we have
\begin{equation}
    \frac{M_3(s)}{\mu(s)} = \frac{\alpha M_1(s) + (1-\alpha)M_2(s)}{\mu(s)} = \alpha\frac{ M_1(s)}{\mu(s)} + (1-\alpha)\frac{M_2(s)}{\mu(s)} \in [1/L,L]
\end{equation}
since the set $[1/L,L]$ is convex. Hence, the proof is complete.
\end{proof}

\begin{lemma} \label{lemma:KL_on_G_is_lipschitz}
    Under Assumption \ref{assumption_P_in_G_bounded_ratio}, and with any transition probabilities $P_1,P_2,P_3 \in G_\mu(L)$ we have that 
    \begin{equation}
        |\KL(P_1,P_3) - \KL(P_2,P_3)| \leq  S(1+2\log L) \|P_1-P_2\|_1
    \end{equation}
\end{lemma}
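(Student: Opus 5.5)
The plan is to work state-action pair by pair, expand both divergences explicitly, and split the difference into an ``entropy'' part and a ``cross-entropy'' part. Fix $(s,a)$ and drop it from the notation. All three distributions lie in $G_\mu(L)$, so they share the support $\mathrm{supp}(\mu)$. On that support every ratio $P_i(x)/\mu(x)$ lies in $[1/L,L]$, so every log-ratio is finite. We write
\[
\KL(P_1,P_3)-\KL(P_2,P_3)=\sum_{x}\bigl(P_1(x)\log P_1(x)-P_2(x)\log P_2(x)\bigr)-\sum_x \bigl(P_1(x)-P_2(x)\bigr)\log P_3(x).
\]
It is convenient to normalise by $\mu$ in each logarithm. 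Because $\sum_x(P_1(x)-P_2(x))=0$, we may replace $\log P_3(x)$ by $\log\bigl(P_3(x)/\mu(x)\bigr)$ in the second sum, after adding and subtracting $\log\mu(x)$. The same substitution is made for the entropy terms.

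\textbf{Cross-entropy term.} Since $|\log(P_3/\mu)|\le \log L$, Hölder's inequality gives
\[
\Bigl|\sum_x (P_1-P_2)(x)\log\frac{P_3(x)}{\mu(x)}\Bigr|\le \log L\,\|P_1-P_2\|_1 .
\]
The $\log\mu$ part cannot be absorbed this way and must be paired with the entropy term.

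\textbf{Entropy term.} We rewrite it as $\sum_x g(P_1(x))-g(P_2(x))$ with $g(y)=y\log(y/\mu(x))$, up to the $\log\mu$ bookkeeping above. By the mean value theorem, for each $x$ there is $\zeta_x$ between $P_1(x)$ and $P_2(x)$ with
\[
g(P_1(x))-g(P_2(x))=\bigl(1+\log(\zeta_x/\mu(x))\bigr)\bigl(P_1(x)-P_2(x)\bigr).
\]
Since $\zeta_x/\mu(x)\in[1/L,L]$ by convexity of $[1/L,L]$ (Lemma~\ref{lemma_G_mu_is_convex}), the derivative factor is bounded in absolute value by $1+\log L$. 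Summing over $x$ then gives at most $(1+\log L)\|P_1-P_2\|_1$.

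Combining the two pieces gives a bound of $(1+2\log L)\|P_1-P_2\|_1$. The extra factor $S$ in the statement leaves slack: if one prefers to bound each coordinate by $\|P_1-P_2\|_1$ instead of $|P_1(x)-P_2(x)|$ and sum over the $S$ coordinates, the stated $S(1+2\log L)$ follows directly. I will present the sharper version and note that it implies the claim.

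\textbf{Main obstacle.} The hardest step is the $\log\mu$ bookkeeping, because $\mu(x)$ itself may be tiny, so $\log\mu$ is not bounded. The fix is to avoid ever bounding $\log\mu$ alone: every appearance of $\log\mu(x)$ is either multiplied by $\sum_x(P_1-P_2)(x)=0$ after combination, or it is absorbed into a ratio $P_i/\mu$ or $\zeta_x/\mu$, which is bounded by Assumption~\ref{assumption_P_in_G_bounded_ratio}. I would write the difference directly as
\[
\sum_x P_1(x)\log\frac{P_1(x)}{P_3(x)}-P_2(x)\log\frac{P_2(x)}{P_3(x)},
\]
and apply the mean value theorem to $y\mapsto y\log(y/P_3(x))$ on the segment between $P_1(x)$ and $P_2(x)$. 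Its derivative is
\[
1+\log\frac{\zeta_x}{P_3(x)}=1+\log\frac{\zeta_x}{\mu(x)}+\log\frac{\mu(x)}{P_3(x)},
\]
which is bounded in absolute value by $1+2\log L$. This gives the bound in one line and sidesteps the issue entirely.
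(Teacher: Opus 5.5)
Your final one-line argument is correct and is essentially the paper's proof: both apply the mean value theorem to $P\mapsto\KL(P,P_3)$ and bound the derivative $1+\log(\zeta/P_3)$ by $1+2\log L$ using the $\mu$-sandwich on the common support. Because you work coordinatewise and bound the absolute value of the derivative (which is what is actually needed), you get the sharper constant $1+2\log L$, whereas the paper's gradient-plus-H\"older step picks up the slack factor $S$ by bounding the gradient in $\ell_1$ rather than $\ell_\infty$. Your preliminary entropy/cross-entropy split is unnecessary, and in it the $\log\mu$ terms cancel exactly between the two pieces rather than because $\sum_x(P_1-P_2)(x)=0$.
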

\begin{proof}
    In order to prove this, we can simply use the mean-value theorem. In particular, since $G_\mu(L)$ is convex, we have that 
    \begin{equation}
        |\KL(P_1,P_3) - \KL(P_2,P_3)| \leq  \sup_{P\in G_\mu(L)} \|\nabla_P \KL(P,P_3)\|_\infty \quad \cdot \quad\|P_1-P_2\|_1.
    \end{equation}
     Where we denote $\nabla_P(\cdot) = (d/dP(1), \dots, d/dP(S))^\top $. Hence, to complete the proof, we need only bound the norm of gradient of the KL of $\nabla_P \KL(P,P_3)$. Note that

    \begin{equation}
        \frac{d}{dP(s')} \KL(P,P_3) = \frac{d}{dP(s')} \sum_s P(s) \log \frac{P(s)}{P_3(s)} = 1 + \log \frac{P(s')}{P_3(s')} \leq 1+2\log L
    \end{equation}

    Therefore we have that 
    \begin{equation}
        \sup_{P\in G_\mu(L)} \|\nabla_P \KL(P,P_3)\|_1  \leq S(1+2\log L)
    \end{equation}
    and the proof is complete.
\end{proof}

We additionally modify the proof of \citep[Lemma F.4, ][]{dann2017unifying} by using Doob's inequality instead of Markov's inequality to provide the following bounded time event that holds with high probability.
\begin{lemma}\label{lemma:modified_bernoulli_bounded_t_event}
Let $\mathcal{F}_i$ for $i = 1, \ldots, n$ be a filtration and $X_1, \ldots, X_n$ be a sequence of Bernoulli random variables with 
$\mathbb{P}(X_i = 1 \mid \mathcal{F}_{i-1}) = P_i$ with $P_i$ being $\mathcal{F}_{i-1}$-measurable and $X_i$ being $\mathcal{F}_i$-measurable.
It holds for any $W>0$ that
\begin{equation}
    \mathbb{P} \left( \exists n \in [1,t]: \sum_{i=1}^{n} X_i < \sum_{i=1}^{n} \frac{P_i}{2} - W \right) \le e^{-W}.
\end{equation}
\end{lemma}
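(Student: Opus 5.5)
The plan is the classical supermartingale argument of \citep[Lemma F.4]{dann2017unifying}. Doob's maximal inequality, used in place of Markov's inequality, will give the uniform-in-time control. Fix $\lambda>0$ and set
\[
Z_n=\exp\Big(\lambda\sum_{i=1}^n\big(\tfrac{P_i}{2}-X_i\big)\Big),\qquad Z_0=1.
\]
The first step is to show that $(Z_n)$ is a nonnegative supermartingale with respect to $(\mathcal{F}_n)$ for a suitable $\lambda$. Since $P_i$ is $\mathcal{F}_{i-1}$-measurable, we have
\[
\mathbb{E}[Z_n\mid\mathcal{F}_{n-1}]=Z_{n-1}\,e^{\lambda P_n/2}\big(1-P_n+P_ne^{-\lambda}\big)\le Z_{n-1}\exp\Big(P_n\big(\tfrac{\lambda}{2}-1+e^{-\lambda}\big)\Big),
\]
where the inequality uses $1+x\le e^x$. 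Choosing $\lambda=1$ gives the exponent $P_n(e^{-1}-\tfrac12)\le 0$, so the supermartingale property holds.

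Next, apply the maximal inequality for nonnegative supermartingales (Ville/Doob) on the finite horizon $n\le t$:
\[
\mathbb{P}\Big(\max_{n\le t}Z_n\ge e^{W}\Big)\le \mathbb{E}[Z_0]e^{-W}=e^{-W}.
\]
The event $\{\exists n\in[1,t]:\sum_{i\le n}X_i<\sum_{i\le n}P_i/2-W\}$ equals $\{\exists n\le t: \sum_{i\le n}(P_i/2-X_i)>W\}$. With $\lambda=1$ this is contained in $\{\max_{n\le t}Z_n> e^{W}\}$, which yields the claim. Since the bound does not depend on $t$, it extends to all $n$ by monotone limits.

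The only delicate point is the choice of $\lambda$ that makes the drift nonpositive uniformly in $P_n\in[0,1]$. This requires $\lambda/2\le 1-e^{-\lambda}$, and $\lambda=1$ works because $1-e^{-1}\approx0.632\ge 0.5$. The remaining steps are routine: checking measurability of $P_i$ and using conditional independence through the tower property.
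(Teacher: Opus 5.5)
Your proposal is correct and follows essentially the same route as the paper. Both arguments use the supermartingale $\exp\bigl(\sum_{i\le n}(P_i/2-X_i)\bigr)$ with $\lambda=1$ and then apply Doob's (Ville's) maximal inequality. The only difference is how the supermartingale step is checked: you compute the Bernoulli MGF explicitly and use $1+x\le e^x$, whereas the paper cites the lower-tail bound $\mathbb{E}[e^{-\lambda(X_i-P_i)}\mid\mathcal{F}_{i-1}]\le e^{\lambda^2P_i/2}$ for nonnegative variables.
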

\begin{proof}
$P_i - X_i$ is a martingale difference sequence with respect to the filtration $\mathcal{F}_i$.
Since $X_i$ is nonnegative and has finite second moment, we have for any $\lambda > 0$ that 
\[
\mathbb{E}\left[e^{-\lambda(X_i - P_i)} \mid \mathcal{F}_{i-1}\right] \le e^{\lambda^2 P_t / 2}
\]
\citep[][Exercise 2.9]{boucheron:hal-00942704}).
Hence, we have
\[
\mathbb{E}\left[ e^{\lambda (P_i - X_i) - \lambda^2 P_i / 2} \mid \mathcal{F}_{i-1} \right] \le 1.
\]

By setting $\lambda = 1$, we see that
\[
M_n = e^{\sum_{i=1}^{n} (-X_i + P_i / 2)}
\]
is a 
supermartingale. It hence holds by the definition of $M_n$ and Doob's inequality that
\[
\mathbb{P}\left(\exists n \in [1,t]: \sum_{i=1}^{n} (-X_i + P_i / 2) \ge W \right)
= \mathbb{P}\left( \exists n \in [1,t]: M_n \ge e^{W} \right)
\le e^{-W} \mathbb{E}[M_1] \le e^{-W},
\]
which gives us the desired result.
\end{proof}

We additionally modify the proof of \citep[Proposition 1, ][]{jonsson2020planning} to use Doob's inequality at the end instead as follows.

\begin{lemma}\label{lemma:modified_empirical_probability_jonsson}
Let $X_1, X_2, \ldots, X_n, \ldots$ be i.i.d.\ samples from a distribution supported over 
$\{1, \ldots, m\}$, of probabilities given by $p \in \Sigma_m$, where $\Sigma_m$ is the 
probability simplex of dimension $m-1$. We denote by $\hat{p}_n$ the empirical vector 
of probabilities, i.e.\ for all $k \in \{1, \ldots, m\}$,
\[
\hat{p}_{n,k} = \frac{1}{n} \sum_{\ell=1}^{n} \mathds{1}(X_\ell = k) .
\]

For all $p \in \Sigma_m$, for all $\delta \in [0,1]$,
\[
\mathbb{P}\!\left(
\exists n \in [1,t],\,
n\, \mathrm{KL}(\hat{p}_n, p) > 
\log(1/\delta) + (m - 1)\log\!\big(e(1 + n/(m - 1))\big)
\right) \le \delta .
\]
\end{lemma}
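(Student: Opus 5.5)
The statement is a time-uniform version of the fixed-$n$ concentration inequality in \citep[Proposition 1]{jonsson2020planning}. I would follow their argument essentially verbatim up to the final step, and replace the last Markov inequality with Doob's maximal inequality for a nonnegative supermartingale, in the same way as in Lemma \ref{lemma:modified_bernoulli_bounded_t_event}.

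\textbf{Step 1 (exponential family).} For $\lambda\in\mathbb{R}^m$, set $\phi_p(\lambda)=\log\sum_k p_k e^{\lambda_k}$ and
\[
Z_n(\lambda)=\exp\Bigl(\sum_{\ell=1}^n \lambda_{X_\ell}-n\phi_p(\lambda)\Bigr)=\exp\bigl(n\langle\lambda,\hat p_n\rangle-n\phi_p(\lambda)\bigr).
\]
Because the $X_\ell$ are i.i.d., $Z_n(\lambda)$ is a nonnegative martingale with mean $1$.

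\textbf{Step 2 (mixture).} Next I mix over $\lambda$. Reparametrize by $q\in\Sigma_m$ through $\lambda_k=\log(q_k/p_k)$, and put a Dirichlet$(1,\dots,1)$ prior on $q$. This gives
\[
\bar Z_n=\int Z_n(\lambda(q))\,dq=\int\prod_k \Bigl(\frac{q_k}{p_k}\Bigr)^{n\hat p_{n,k}}\,\mathrm{Dir}(dq),
\]
which, by Fubini, is again a nonnegative martingale with $\mathbb{E}[\bar Z_n]=1$. The integral can be computed in closed form as a Dirichlet-multinomial ratio, exactly as in Jonsson et al. Their combinatorial estimate (Stirling-type bounds on the multinomial coefficient and on the Beta function) then gives the deterministic bound
\[
\log\bar Z_n\ \ge\ n\,\mathrm{KL}(\hat p_n,p)-(m-1)\log\bigl(e(1+n/(m-1))\bigr).
\]
This inequality involves no probability, so it can be imported directly.

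\textbf{Step 3 (maximal inequality).} On the event in the statement, some $n\le t$ satisfies $\bar Z_n\ge 1/\delta$. Ville's / Doob's maximal inequality for the nonnegative (super)martingale $(\bar Z_n)$ gives
\[
\mathbb{P}\bigl(\exists n\le t:\bar Z_n\ge 1/\delta\bigr)\le \delta\,\mathbb{E}[\bar Z_1]=\delta.
\]
This concludes the proof. The original argument applies the same Markov bound only at fixed $n$, or through a union/peeling argument.

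The main obstacle is Step 2: checking that the deterministic lower bound on the mixture is valid for every $n$ and every realization, including coordinates with $\hat p_{n,k}=0$. That bound is exactly what Jonsson et al.\ prove, so I would cite it rather than redo the Stirling computations. Once it is in hand, the modification in Step 3 is immediate, since the mixture of martingales is itself a martingale.
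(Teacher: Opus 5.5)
Your proposal is correct and follows the paper's proof essentially step for step. The paper uses the same exponential-family martingale, the same Dirichlet$(1,\dots,1)$ mixture (your reparametrization $\lambda_k=\log(q_k/p_k)$ yields exactly the paper's integrand $e^{n(\mathrm{KL}(\hat p_n,p)-\mathrm{KL}(\hat p_n,q))}$), the same deterministic lower bound from the multinomial-coefficient entropy bound, and Doob's maximal inequality in place of Markov's; the only differences are that the paper writes out the Dirichlet-multinomial computation rather than citing it, and that if some $p_k=0$ the mixture is merely a nonnegative supermartingale, which is all the maximal inequality needs.
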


\begin{proof}
Consider a Dirichlet prior on the mean parameter
of the exponential family formed by the set of categorical distributions on $\{1,\dots,m\}$.
Let
\[
\phi_p(\lambda) = \log \mathbb{E}_{X \sim p}\!\left[e^{\lambda X}\right]
= \log\!\left(p_m + \sum_{k=1}^{m-1} p_k e^{\lambda_k}\right)
\]
be the log-partition function. The following quantity is a martingale:
\[
M^\lambda_n = \exp\!\big(n \langle \lambda, \hat{p}_n \rangle - n \phi_p(\lambda)\big).
\]

We set a Dirichlet prior $q \sim \mathrm{Dir}(\alpha)$ with $\alpha \in \mathbb{R}^{\ast +}_m$,
and for $\lambda_q = (\nabla \phi_p)^{-1}(q)$ consider the integrated martingale
\[
\begin{aligned}
M_n
&= \int M^{\lambda_q}_n
\frac{\Gamma\!\left(\sum_{k=1}^m \alpha_k\right)}
{\prod_{k=1}^m \Gamma(\alpha_k)}
\prod_{k=1}^m q_k^{\alpha_k - 1} \, dq \\
&= \int e^{n(\mathrm{KL}(\hat{p}_n,p) - \mathrm{KL}(\hat{p}_n,q))}
\frac{\Gamma\!\left(\sum_{k=1}^m \alpha_k\right)}
{\prod_{k=1}^m \Gamma(\alpha_k)}
\prod_{k=1}^m q_k^{\alpha_k - 1} \, dq \\
&= e^{n\mathrm{KL}(\hat{p}_n,p) + n H(\hat{p}_n)}
\frac{\Gamma\!\left(\sum_{k=1}^m \alpha_k\right)}
{\prod_{k=1}^m \Gamma(\alpha_k)}
\int \prod_{k=1}^m q_k^{n \hat{p}_{n,k} + \alpha_k - 1} \, dq \\
&= e^{n\mathrm{KL}(\hat{p}_n,p) + n H(\hat{p}_n)}
\frac{\Gamma\!\left(\sum_{k=1}^m \alpha_k\right)}
{\prod_{k=1}^m \Gamma(\alpha_k)}
\frac{\prod_{k=1}^m \Gamma(\alpha_k + n \hat{p}_{n,k})}
{\Gamma\!\left(\sum_{k=1}^m \alpha_k + n\right)} .
\end{aligned}
\]

Now choose the uniform prior $\alpha = (1,\dots,1)$. Then
\[
\begin{aligned}
M_n
&= e^{n\mathrm{KL}(\hat{p}_n,p) + n H(\hat{p}_n)} (m-1)!
\frac{\prod_{k=1}^m \Gamma(1 + n \hat{p}_{n,k})}{\Gamma(m + n)} \\
&= e^{n\mathrm{KL}(\hat{p}_n,p) + n H(\hat{p}_n)} (m-1)!
\frac{\prod_{k=1}^m (n \hat{p}_{n,k})!}{n!}
\frac{n!}{(m + n - 1)!} \\
&= e^{n\mathrm{KL}(\hat{p}_n,p) + n H(\hat{p}_n)}
\frac{\binom{n}{n \hat{p}_n}^{-1}}{\binom{m + n - 1}{m - 1}^{-1}} \\
&= e^{n\mathrm{KL}(\hat{p}_n,p) + n H(\hat{p}_n)}
\frac{1}{\binom{n}{n \hat{p}_n} \binom{m + n - 1}{m - 1}} .
\end{aligned}
\]

Thanks to \citep[][Theorem 11.1.3]{elements_of_information_theory} , for any $M \in \mathbb{N}^\ast$ and
$x \in \{0,\dots,M\}^m$ such that $\sum_{k=1}^m x_k = M$, it holds that
\[
\binom{M}{x} = \frac{M!}{\prod_{k=1}^m x_k!} \le e^{M H(x/M)} .
\]
Hence
\[
M_n \ge e^{n\mathrm{KL}(\hat{p}_n,p) + n H(\hat{p}_n)
- n H(\hat{p}_n) - (m + n - 1) H\!\left(\frac{m-1}{m + n - 1}\right)} 
= e^{n\mathrm{KL}(\hat{p}_n,p)
- (m + n - 1) H\!\left(\frac{m-1}{m + n - 1}\right)} .
\]

We upper bound the entropic term:
\[
\begin{aligned}
(m + n - 1) H\!\left(\frac{m - 1}{m + n - 1}\right)
&= (m - 1) \log\!\frac{m + n - 1}{m - 1}
+ n \log\!\frac{m + n - 1}{n} \\
&\le (m - 1) \log\!\left(1 + \frac{n}{m - 1}\right)
+ n \log\!\left(1 + \frac{m - 1}{n}\right) \\
&\le (m - 1) \log\!\left(1 + \frac{n}{m - 1}\right) + (m - 1).
\end{aligned}
\]
Thus,
\[
M_n \ge e^{n\mathrm{KL}(\hat{p}_n,p)}
\left(e\!\left(1 + \frac{n}{m - 1}\right)\right)^{-(m - 1)} .
\]

Finally, using the fact that for any supermartingale $M_n$, with Doob's inequality we have
\[
\mathbb{P}\!\left(\exists n \in [1,t] : M_n > \frac{1}{\delta}\right)
\le \delta \, \mathbb{E}[M_1],
\]
we conclude that
\[
\mathbb{P}\!\left(
\exists n \in [1,t], \,
n \, \mathrm{KL}(\hat{p}_n, p)
> (m-1) \log\!\left(e\!\left(1 + \frac{n}{m-1}\right)\!\right)
+ \log(1/\delta)
\right)
\le \delta .
\]
\end{proof}

We recall the following fact for perturbation bounds for the stationary distribution of a Markov chain \citep{cho2001Comparison}.
\begin{lemma}\label{lemma_aux_1}
Consider a communicating MDP $M=(S,A,P)$ with transition function $P$ and finite state-action spaces $S$ and $A$. Consider  two policies $\pi_1,\pi_2$ such that the induced transitions $P_{\pi_1}$ and $P_{\pi_2}$ are irreducible and homogeneous (with $P_\pi(s'|s)=\sum_aP(s'|s,a)\pi(a|s)$). Then, the two induced Markov chains $P_{\pi_1},P_{\pi_2} $admit unique stationary distributions, respectively $\mu_1$ and $\mu_2$, satisfying
\begin{equation}
\|\mu_1-\mu_2\|_1\leq \min(\kappa_1,\kappa_2) \|P_{\pi_1}-P_{\pi_2}\|_\infty,
\end{equation}
where $\kappa_i=\|(I-P_{\pi_i}+\mathbf{1}\mu_i^\top)^{-1}\|_\infty$ and $Z_i=(I-P_{\pi_i}+\mathbf{1}\mu_i^\top)^{-1}$ is also called the fundamental matrix \citep{puterman2014markov}
\end{lemma}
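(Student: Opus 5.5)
This perturbation bound is classical (\citep{cho2001Comparison}), and I would prove it through the fundamental-matrix identity. Irreducibility of a finite chain gives a unique stationary distribution $\mu_i$ for each $P_{\pi_i}$ by Perron--Frobenius. The first step is to show that $Z_i=(I-P_{\pi_i}+\mathbf{1}\mu_i^\top)^{-1}$ exists. Suppose $(I-P_{\pi_i}+\mathbf{1}\mu_i^\top)x=0$. Left-multiplying by $\mu_i^\top$ and using $\mu_i^\top P_{\pi_i}=\mu_i^\top$ together with $\mu_i^\top\mathbf{1}=1$ gives $\mu_i^\top x=0$. The equation then reduces to $P_{\pi_i}x=x$, and irreducibility forces $x=c\mathbf{1}$. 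Since $\mu_i^\top x=c=0$, we get $x=0$, so the matrix is invertible.

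The second step is an exact identity for the difference of stationary distributions. I would write the chains as row-stochastic matrices acting on row vectors, with $\mu^\top$ stationary. Then
\[
(\mu_1-\mu_2)^\top(I-P_{\pi_1}+\mathbf{1}\mu_1^\top)=\mu_1^\top-\mu_2^\top+\mu_2^\top P_{\pi_1}-\mu_1^\top+\mu_1^\top-\mu_2^\top\mathbf{1}\mu_1^\top .
\]
Here I use $\mu_1^\top P_{\pi_1}=\mu_1^\top$ and $\mu_1^\top\mathbf{1}\mu_1^\top=\mu_1^\top$. Since $\mu_2^\top\mathbf{1}=1$ and $\mu_2^\top=\mu_2^\top P_{\pi_2}$, the right-hand side simplifies to $\mu_2^\top(P_{\pi_1}-P_{\pi_2})$. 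Hence
\[
(\mu_1-\mu_2)^\top=\mu_2^\top(P_{\pi_1}-P_{\pi_2})Z_1 .
\]

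The third step turns this identity into the norm bound. For a row vector $v$ and a matrix $B$ we have $\|v^\top B\|_1\le\|v\|_1\|B\|_{1\to1}$, where $\|B\|_{1\to1}$ is the maximum absolute row sum when row vectors act on the left. This is exactly the $\|\cdot\|_\infty$ matrix norm in the paper's row-stochastic convention. Since $\|\mu_2\|_1=1$, submultiplicativity gives $\|\mu_1-\mu_2\|_1\le\|P_{\pi_1}-P_{\pi_2}\|_\infty\|Z_1\|_\infty=\kappa_1\|P_{\pi_1}-P_{\pi_2}\|_\infty$. Exchanging the roles of the two policies gives the same bound with $\kappa_2$, so the minimum of the two holds.

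The main obstacle is keeping the norm conventions consistent. The identity involves left-multiplication by row vectors, so the induced norm that appears is the max row sum, and it must match the $\|\cdot\|_\infty$ used in the statement. If the paper intended column conventions, I would transpose everything and redo the check. Nothing about the MDP structure is needed beyond the irreducibility of $P_{\pi_1}$ and $P_{\pi_2}$. The same argument applies verbatim to the state-action chains $P_\pi((s',a')|(s,a))$, which is the form used in Lemma~\ref{cor_ball_event_concequences}.
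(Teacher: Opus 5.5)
Your proof is correct and takes essentially the same route as the paper. The paper does not write the argument out; it only cites Schweitzer (via Cho--Meyer), and that argument is exactly your fundamental-matrix identity $(\mu_1-\mu_2)^\top=\mu_2^\top(P_{\pi_1}-P_{\pi_2})Z_1$, bounded with the max-row-sum norm and then applied again with the roles of the two policies swapped to get $\min(\kappa_1,\kappa_2)$.
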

\begin{proof}
    The fact that the two stationary distributions are unique follow from standard MDP arguments \citep{puterman2014markov}. The proof then follows as in \citep{schweitzer1968Perturbation} (see also \citep{cho2001Comparison} for a comparison between perturbation bounds).
\end{proof}

We now prove that the optimal allocation $\omega^\star$ induces an ergodic chain.
\begin{lemma}\label{lemma_auxiliary_positive_omegas}
    Consider a communicating MDP $\calm=(S,A,P,r)$ with discount factor $\gamma\in (0,1)$ and unique optimal policy $\pi^\star$. Assume the rewards to be deterministic. Consider the optimization problem
    \[
    \max_{\omega \in \Omega(\calm)} \inf_{\calmalt \in \alt(\calm)} T_{\calm,\calm'}(\omega) \,\hbox{ where }\, T_{\calm,\calm'}(\omega)\coloneqq \sum_{s,a}\omega(s,a){\rm KL}(P(s,a),P'(s,a)),
    \]
    and $\omega(\calm)=\{\omega\in \Delta(S\times A): \sum_a \omega(s,a)=\sum_{s',a'}P(s|s',a')\omega(s',a')\}$ is the set of possible stationary distributions. 

    If  for all $s,a\neq \pi^\star(s)$ we have  \[
   \gamma\left(\|V_P^{\pi^\star}\|_\infty- \mathbb{E}_{s'\sim P(s,a)}[V^{\pi^\star}(s')]\right) > \Delta(s,a),
    \]
    and for all $s$ we have that
    \[
     \gamma (1-\gamma)\left( \mathbb{E}_{s'\sim P(s,\pi^\star(s))}[V^{\pi^\star}(s')]- \min_{s''}V^{\pi^\star}(s'')\right)> \min_{a\neq\pi^\star(s)}\Delta(s,a),
    \]
    then we have that any optimal solution $\omega^\star\in \arg\max_{\omega\in \Omega(\calm)}\inf_{\calm'\in \alt(\calm)}T_{\calm,\calm'}(\omega)$ satisfies $\omega^\star(s,a)>0$ for all $(s,a)$. Hence, if $\calm$ admits a self-loop, then the chain induced by $\omega$ is ergodic.
\end{lemma}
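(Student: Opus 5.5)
The plan is to argue by contradiction, constructing an explicit alternative MDP. Suppose some optimal $\omega^\star$ has $\omega^\star(\bar s,\bar a)=0$ for a pair $(\bar s,\bar a)$. Two cases arise: either $\bar a\neq\pi^\star(\bar s)$ (a suboptimal action is never sampled), or $\bar a=\pi^\star(\bar s)$ (the optimal action is never sampled). In each case I will exhibit $M'\in\mathrm{Alt}(M)$ that differs from $M$ only at $(\bar s,\bar a)$. Then $\sum_{s,a}\omega^\star(s,a)\KL_{M|M'}(s,a)=0$, so $T(\omega^\star,M)^{-1}=0$. On the other hand, the uniform-policy stationary allocation $\omega_u\in\Omega(M)$ has full support, since $M$ is communicating. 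Every alternative must change some transition (rewards are known), so $T(\omega_u,M)^{-1}>0$. This contradicts the optimality of $\omega^\star$. For the lower bound on $T(\omega_u,M)^{-1}$ I will use the bounded-ratio class $G_\mu(L)$: the infimum is over a set where transitions must move by a definite amount to flip optimality, so it is strictly positive.

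\textbf{Case $\bar a\neq\pi^\star(\bar s)$.} Keep $P$ everywhere except at $(\bar s,\bar a)$. There, shift mass of $P(\cdot|\bar s,\bar a)$ toward $s_{\max}\in\arg\max_s V^{\pi^\star}(s)$. Under $M'$ the values of $\pi^\star$ are unchanged, because $\pi^\star$ never uses $(\bar s,\bar a)$. Moving all mass to $s_{\max}$ would raise $Q'(\bar s,\bar a)=r+\gamma\mathbb{E}_{P'}[V^{\pi^\star}]$ by $\gamma(\|V^{\pi^\star}\|_\infty-\mathbb{E}_{P(\bar s,\bar a)}V^{\pi^\star})$. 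By the first condition of Assumption~\ref{assumption_w_0_condition}, this exceeds the gap $d(\bar s,\bar a)$. By continuity, a partial shift (which keeps absolute continuity $M\ll M'$ and can be chosen inside $G_\mu(L)$) already makes $Q'(\bar s,\bar a)>V^{\pi^\star}(\bar s)$. Then $\pi^\star$ is not optimal in $M'$, so $\Pi^\star(M)\cap\Pi^\star(M')=\emptyset$ by uniqueness, and $M'\in\mathrm{Alt}(M)$.

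\textbf{Case $\bar a=\pi^\star(\bar s)$.} Now modify $P(\cdot|\bar s,\pi^\star(\bar s))$, shifting mass toward $s_{\min}\in\arg\min V^{\pi^\star}$, to decrease the value of the optimal action. This changes $V^{\pi^\star}$ itself through the recursion, and I expect this to be the main obstacle. The fix is to track the change carefully. Lowering $\mathbb{E}_{P'}V^{\pi^\star}$ at $\bar s$ by $\Delta_0=\mathbb{E}_{P(\bar s,\pi^\star)}V^{\pi^\star}-\min V^{\pi^\star}$ lowers $V'^{\pi^\star}(\bar s)$ by at least $\gamma\Delta_0$. It also lowers the values elsewhere by at most $\gamma$ times that amount. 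The competitor $Q'(\bar s,a)$ drops by at most $\gamma\|V'-V\|_\infty\le \gamma\cdot\gamma\Delta_0/(1-\gamma)$-type quantities. Solving the self-consistent recursion for the value at $\bar s$ gives a net relative loss of order $\gamma(1-\gamma)\Delta_0$. The second condition of Assumption~\ref{assumption_w_0_condition} says exactly that this exceeds $\min_{a\neq\pi^\star(\bar s)}d(\bar s,a)$. Hence some $a$ becomes strictly better than $\pi^\star(\bar s)$ at $\bar s$, and again $M'\in\mathrm{Alt}(M)$. I would first try to make this precise by writing $V'^{\pi^\star}(\bar s)$ via the first-return decomposition to $\bar s$, bounding the loss from below, and bounding the competitor's change from above.

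Finally, for ergodicity: $\omega^\star>0$ implies $\pi^\star_\omega(a|s)>0$ for every action. The induced chain is therefore irreducible because $M$ is communicating, and aperiodic given a self-loop (cf.\ Assumption~\ref{assumption_comminicating_aperiodic}(II)). So it is ergodic.
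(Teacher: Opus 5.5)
Your proposal is correct and follows the paper's proof. The structure is the same: argue by contradiction with an alternative that differs from $M$ only at the unsampled pair. When $\bar a\neq\pi^\star(\bar s)$ you shift mass toward $s_{\max}$, so $V^{\pi^\star}$ is unchanged. When $\bar a=\pi^\star(\bar s)$ you shift mass toward $s_{\min}$, where the paper writes $V'^{\pi^\star}-V^{\pi^\star}=\gamma\lambda\kappa\,(I-\gamma P'_{\pi^\star})^{-1}e_{\bar s}$ and uses $W(s)\le W(\bar s)$ and $W(\bar s)\ge 1$ for $W=(I-\gamma P'_{\pi^\star})^{-1}e_{\bar s}$. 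That is the same mechanism as your first-return decomposition. Your $\omega_u$ argument also usefully justifies that the optimal value is strictly positive, which the paper only asserts.

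Two small fixes when you write it out:

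\begin{itemize}
\item In case 2, compare both changes against the same quantity $D=V^{\pi^\star}(\bar s)-V'^{\pi^\star}(\bar s)\ge\gamma\lambda\Delta_0$. Bound the competitor's drop by $\gamma D$, not by the absolute bound $\gamma^2\Delta_0/(1-\gamma)$. Pairing that absolute bound with the lower bound $\gamma\Delta_0$ on the optimal action's loss gives nothing once $\gamma\ge 1/2$.
\item Drop the appeal to $G_\mu(L)$. The shifted kernel need not lie in $G_\mu(L)$ when $\lambda$ is close to $1$. Neither the lemma's $\mathrm{Alt}(M)$ nor the positivity of $\inf_{M'}\sum_{s,a}\omega_u(s,a)\KL_{M|M'}(s,a)$ requires it; the latter follows from the strict $Q$-gaps of $\pi^\star$, their continuity in $P$, and Pinsker's inequality.
\end{itemize}
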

\begin{proof}
    \noindent{\bf Preamble.} We prove it by contradiction. 
    First, as in \citep{al2021adaptive} note that we can always write the set of confusing models as \[\alt(\calm)=\cup_{s,a\neq \pi^\star(s)} \alt_{s,a}(\calm),\] where $\alt_{s,a}(\calm)=\{\calm': (P(s',a')\ll P'(s',a') \;\forall (s',a') )\,\wedge\,Q_{P'}^{\pi^\star}(s,a)>V_{P'}^{\pi^\star}(s)\}$ with $\calm'=(S,A,P',r)$. Hence, the optimization problem can also be written as
    \[
    \max_{\omega \in \Omega(\calm)} \min_{s',a'\neq\pi^\star(s_0)}\inf_{\calm' \in \alt_{s',a'}(\calm)} \sum_{s,a}\omega(s,a){\rm KL}(P(s,a),P'(s,a)).
    \]
    
    \noindent {\bf Proof strategy.} Assume \[\omega^\star \in \arg\max_{\omega\in \Omega(\calm)}\inf_{\calm'\in\alt(\calm)}\sum_{s,a}\omega(s,a){\rm KL}(P(s,a),P'(s,a))\] is such that  $\exists (s_0,a_0): \omega^\star(s_0,a_0)=0$. Then, to prove the statement, we can construct an alternative MDP $\calm'$ such that $P(s,a)\ll P'(s,a)$ in all $(s,a)$, satisfying $\pi^\star\notin \Pi^\star(\calm')$, where $\Pi^\star(\calm')$ is the set of optimal policies for $\calm'$. If $P'$ is built such that only the transition in $(s_0,a_0)$ is different, and all the other transitions satisfy $P'(s,a)=P(s,a)$,  then we have that \[\sum_{s,a}\omega^\star(s,a){\rm KL}(P(s,a),P'(s,a))= \omega^\star(s_0,a_0){\rm KL}(P(s_0,a_0),P'(s_0,a_0))=0.\]
    Since the minimum value is $0$, then such $P'$ attains the minimum value, implying that the sample complexity is infinite. But since we know that at the optimum the value is strictly better than $0$,
    then $\omega^\star$ is not an optimal solution, and thus we have a contradiction, and we must have $\omega^\star(s_0,a_0)>0$.

    In the proof we also use the following fact: by uniqueness of the optimal policy we have that $V_P^{\pi^\star}(s)>Q_P^{\pi^\star}(s,a)$ for all $s,a\neq \pi^\star(s)$. We also
    let $s_{\rm max}\in \arg\max_s V_P^{\pi^\star}(s)$ be any state with maximum value and $s_{\rm min}\in \arg\min V_P^{\pi^\star}(s)$ the minimum value.
    \\
    
    \noindent{\bf Construction of an alternative model for $s_0,a_0\neq \pi^\star(s_0)$.} We construct an alternative transition function using a parameter $\lambda\in (0,1)$:
    \[
    P'(s'|s,a)=\begin{cases}
        (1-\lambda)P(s'|s,a)+\lambda \mathbf{1}_{(s'=s_{\rm max})} & (s,a)=(s_0,a_0),\\
        P(s'|s,a) & \hbox{otherwise}.
    \end{cases}
    \]
    We then clearly have that $P(s,a)\ll P'(s,a)$ for all $(s,a)$. Furthermore, we also have that $P'$ is a communicating MDP and $\sum_{s,a}\omega^\star(s,a){\rm KL}(P(s,a),P'(s,a))=0$. We are left with proving that the optimal policy in $\calm'=(S,A,P',r)$ is not the same optimal policy in $\calm$.
    \\
    
    Let $\pi'$ be an optimal policy in $\calm'$. Then, consider $\Delta(s,a)=V_{P}^{\pi^\star}(s)-Q_{P}^{\pi^\star}(s,a)$. We now $\calm'$ is an alternative model if $Q_{P'}^{\pi^\star}(s,a)> V_{P'}^{\pi^\star}(s)$ in some pair $s,a\neq\pi^\star(s)$. This condition is also equivalent to
    \[
    Q_{P'}^{\pi^\star}(s,a)-V_P^{\pi^\star}(s)>V_{P'}^{\pi^\star}(s)-V_P^{\pi^\star}(s).
    \]
    However, note that for  all $s$ we have that $V_{P'}^{\pi^\star}(s)-V_P^{\pi^\star}(s)=\gamma P(s,\pi^\star(s))^\top [V_{P'}^{\pi^\star}-V_P^{\pi^\star}]$. Since $\gamma\in (0,1)$, the only value $V_{P'}^{\pi^\star}$ satisfying the contraction is $V_{P'}^{\pi^\star}-V_P^{\pi^\star}=0$. Therefore, the condition reduces to 
    \[ Q_{P'}^{\pi^\star}(s,a)-V_P^{\pi^\star}(s)>0.\]

    Let $s_0,a_0\neq\pi^\star(s_0)$ be the state-action pair in which we impose $Q_{P'}^{\pi^\star}(s_0,a_0)-V_P^{\pi^\star}(s_0)>0$. Using that $V_{P'}^{\pi^\star}=V_P^{\pi^\star}$, the condition is rewritten as follows
    \[
    r(s_0,a_0)+\gamma(1-\lambda)P_{s_0,a_0}^\top V_P^{\pi^\star} + \gamma \lambda \|V_P^{\pi^\star}\|_\infty -V_P^{\pi^\star} > 0.
    \]
    In other words, we have
    \[
    \gamma \lambda(\|V_P^{\pi^\star}\|_\infty- P_{s_0,a_0}^\top V_P^{\pi^\star}) \geq \Delta(s_0,a_0),
    \]
    where $\Delta(s_0,a_0)\coloneqq V_P^{\pi^\star}(s_0)-r(s_0,a_0)-\gamma P_{s_0,a_0}^\top V_{P}^{\pi^\star}=V_P^{\pi^\star}(s_0)- Q_{P}^{\pi^\star}(s_0,a_0)$.  Since $a_0\neq \pi^\star(s_0)$, we have $\Delta(s_0,a_0)>0$. Therefore, if
    \[
   \gamma(\|V_P^{\pi^\star}\|_\infty- P_{s_0,a_0}^\top V_P^{\pi^\star}) > \Delta(s_0,a_0),
    \]
    we have that there exists $\lambda \in (0,1)$ such that $P'$ is an alternative model. Thus, we have proven the statement for all $(s,a\neq \pi^\star(s))$
  \\
    
    \noindent{\bf Construction of an alternative model for $s_0,a_0=\pi^\star(s_0)$.} The proof follows similarly as before, but now we change $P'$ so that the value in $(s_0,a_0)$ decreases sufficiently (with $a_0=\pi^\star(s_0)$).   Hence, to guarantee that $P'$ is an alternative model, we need to show that there exists $(s_0,a\neq\pi^\star(s_0))$ such that $Q_{P'}^{\pi^\star}(s_0,a)> V_{P'}^{\pi^\star}(s_0)$. 
    
    Define $P'$ as follows with $\lambda\in (0,1)$
     \[
    P'(s'|s,a)=\begin{cases}
        (1-\lambda)P(s'|s,a)+\lambda \mathbf{1}_{(s'=s_{\rm min})} & (s,a)=(s_0,a_0),\\
        P(s'|s,a) & \hbox{otherwise}
        \end{cases}
        \]
We now show that this choice of $P'$ yields a worse value of $\pi^\star$ in $P'$. 
Let $P_{\pi^\star}=(P_{\pi^\star}(s'|s))_{s,s'}$ be the transition matrix induced by $\pi^\star$ in $P$ and similarly define $P_{\pi^\star}'$. Observe that $P_{\pi^\star}'=P_{\pi^\star}+\lambda A$, where $(A)_{s,s'}=[  \mathbf{1}_{(s'=s_{\rm min})}-(P_{\pi^\star})_{s,s'} ]\mathbf{1}_{(s=s_0)}$. Since $(I-\gamma P_{\pi^\star}) V_P^{\pi^\star}=r $ and $(I-\gamma P_{\pi^\star}') V_{P'}^{\pi^\star}=r $ we have that
\[
(I-\gamma P_{\pi^\star}) V_P^{\pi^\star}=(I-\gamma P_{\pi^\star}') V_{P'}^{\pi^\star}.
\]
Using that $P_{\pi^\star}=P_{\pi^\star}'-\lambda A$, we have $
(I-\gamma P_{\pi^\star}'+\gamma \lambda A) V_P^{\pi^\star}=(I-\gamma P_{\pi^\star}') V_{P'}^{\pi^\star}$, and therefore
\[
(V_{P'}^{\pi^\star}-V_{P}^{\pi^\star})= \gamma \lambda (I-\gamma P_{\pi^\star}')^{-1}A V_{P}^{\pi^\star}.
\]
By construction we have $(A V_{P}^{\pi^\star})_s=0$ for $s\neq s_0$, and $(A V_{P}^{\pi^\star})_{s_0}= V_{P}^{\pi^\star}(s_{\rm min})-\sum_{s'} P_{\pi^\star}(s'|s_0) V_{P}^{\pi^\star}(s') \leq 0$. Therefore, since $(I-\gamma P_{\pi^\star}^{-1})=\sum_k (\gamma P_{\pi^\star}')^k$ has nonnegative entries, we see that $(V_{P'}^{\pi^\star}-V_{P}^{\pi^\star})\leq 0$ holds element-wise. In fact, we also write $A V_{P}^{\pi^\star}= \kappa e_{s_0}$, where $\kappa \coloneqq V_{P}^{\pi^\star}(s_{\rm min})-\sum_{s'} P_{\pi^\star}(s'|s_0) V_{P}^{\pi^\star}(s')$ and $(e_{s_0})_s=\mathbf{1}_{s=s_0}$ is an $S$-dimensional unit vector.

Now, to conclude the proof we need to guarantee the following condition $Q_{P'}^{\pi^\star}(s_0,a)> V_{P'}^{\pi^\star}(s_0)$, which can be rewritten as follows 
\begin{align*}
r(s_0,a) +\gamma P_{s_0,a}^\top V_{P'}^{\pi^\star} \pm \gamma P_{s_0,a}^\top V_{P}^{\pi^\star} &> V_{P'}^{\pi^\star}(s_0),\\
Q_{P}^{\pi^\star}(s_0,a) + \gamma P_{s_0,a}^\top (V_{P'}^{\pi^\star}-V_{P}^{\pi^\star})&>  V_{P'}^{\pi^\star}(s_0) \pm V_P^{\pi^\star}(s_0),\\
-\Delta(s_0,a) &>  [e_{s_0}-\gamma P_{s_0,a}]^\top(V_{P'}^{\pi^\star}-V_{P}^{\pi^\star}).
\end{align*}
  Then, swapping the two sides, the condition to verify becomes
\[
[\gamma P_{s_0,a}-e_{s_0}]^\top(V_{P'}^{\pi^\star}-V_{P}^{\pi^\star})=\gamma \lambda|\kappa|[e_{s_0}-\gamma P_{s_0,a}]^\top   (I-\gamma P_{\pi^\star}')^{-1} e_{s_0}>\Delta(s_0,a)   .
\]
To lower bound $[e_{s_0}-\gamma P_{s_0,a}]^\top   (I-\gamma P_{\pi^\star}')^{-1} e_{s_0}$ let $W$ be the solution to the Bellman equation $W= e_{s_0}+\gamma (P_{\pi^\star}')^\top W$. Clearly $W(s_0)\geq W(s)$ for $s\neq s_0$ by monotonicity, and $W(s_0)\geq 1$ (since $W(s_0)= \sum_{k\geq 0}(\gamma P_{\pi^\star}')_{s_0,s_0}^k\geq 1$).  Therefore
\[
[e_{s_0}-\gamma P_{s_0,a}]^\top   (I-\gamma P_{\pi^\star}')^{-1} e_{s_0}= W(s_0)-\gamma  P_{s_0,a}^\top  W \geq W(s_0)-\gamma  W(s_0)P_{s_0,a}^\top  {\bf 1}\geq 1-\gamma.
\]

Thus the proof concludes by verifying
\[
 \lambda \geq  \frac{\Delta(s_0,a)}{\gamma |\kappa|(1-\gamma)},
\]
which allows to pick a valid value of $\lambda$ if  $\gamma |\kappa|(1-\gamma)> \Delta(s_0,a)$. This concludes the proof.
\end{proof}

\begin{theorem}[Berge's Maximum Theorem \citep{berge1997Topological,sundaram1996first}]\label{thm:berge_maximum}
Consider two sets ${\cal X}\subset\mathbb{R}^n$ and ${\cal Y}\subset\mathbb{R}^m$. Let $f:{\cal X}\times {\cal Y}\to\mathbb{R}$ be a continuous function and
let $\psi: {\cal X}\to {\cal P}({\cal Y})$ be a non-empty compact-valued continuous correspondence. Define
\begin{enumerate}
    \item $f^\star(x)=\max\{f(x,y)\,|\, y\in \psi(x)\}$.
    \item $\psi^\star(x)=\{y\in \psi(x)\,|\,f(x,y)=f^\star(x)\}$.
\end{enumerate}
Then
\begin{enumerate}
    \item The function $f^\star$ is a continuous function on ${\cal X}$ and $\psi^\star$ is a compact-valued upper-hemi-continuous correspondence on ${\cal X}$.
    \item If $f$ is concave on ${\cal X}\times {\cal Y}$, and $\psi$ has a convex graph, then $f^\star$ is a concave function and $\psi^\star$ is a convex-valued upper-hemi-continuous correspondence.
\end{enumerate}
\end{theorem}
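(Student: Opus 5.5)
The plan is to prove the classical statement directly with sequential characterizations of continuity for correspondences. Since ${\cal X}\subset\mathbb{R}^n$ and ${\cal Y}\subset\mathbb{R}^m$ are metric, I use two facts. Upper hemicontinuity of a compact-valued $\psi$ at $x$ is equivalent to: for every $x_k\to x$ and $y_k\in\psi(x_k)$, some subsequence of $(y_k)$ converges to a point of $\psi(x)$. Lower hemicontinuity is equivalent to: for every $x_k\to x$ and $y\in\psi(x)$, there exist $y_k\in\psi(x_k)$ with $y_k\to y$. Continuity of $\psi$ means both properties hold.

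\textbf{Step 1 (well-definedness and continuity of $f^\star$).} For each $x$, $f(x,\cdot)$ is continuous on the non-empty compact set $\psi(x)$, so the maximum is attained. Hence $f^\star(x)$ is finite and $\psi^\star(x)\neq\emptyset$. Fix $x_k\to x$.

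For the lower bound, pick $y^\star\in\psi^\star(x)$. By lower hemicontinuity there are $y_k\in\psi(x_k)$ with $y_k\to y^\star$. Then $f^\star(x_k)\ge f(x_k,y_k)\to f(x,y^\star)=f^\star(x)$, which gives $\liminf_k f^\star(x_k)\ge f^\star(x)$.

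For the upper bound, choose $y_k\in\psi^\star(x_k)$. Pass to a subsequence realizing $\limsup_k f^\star(x_k)$. By upper hemicontinuity, extract a further subsequence with $y_k\to \bar y\in\psi(x)$. By continuity of $f$, the limsup equals $f(x,\bar y)\le f^\star(x)$. Together these give continuity of $f^\star$.

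\textbf{Step 2 (properties of $\psi^\star$).} $\psi^\star(x)=\{y\in\psi(x): f(x,y)\ge f^\star(x)\}$ is a closed subset of the compact set $\psi(x)$, so it is compact. For upper hemicontinuity, take $x_k\to x$ and $y_k\in\psi^\star(x_k)$. Upper hemicontinuity of $\psi$ gives a subsequence with $y_k\to\bar y\in\psi(x)$. Continuity of $f$ and of $f^\star$ (Step 1) then gives
\[
f(x,\bar y)=\lim_k f(x_k,y_k)=\lim_k f^\star(x_k)=f^\star(x),
\]
so $\bar y\in\psi^\star(x)$. This is the step I expect to be the main obstacle. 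It is the only place where both halves of the continuity of $\psi$, plus Step 1, must be combined carefully. In particular, lower hemicontinuity is genuinely needed to prevent the maximum value from jumping down, which is why Step 1 must be done first.

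\textbf{Step 3 (concave case).} Take $x_1,x_2\in{\cal X}$ and $\lambda\in[0,1]$, and pick $y_i\in\psi^\star(x_i)$. Since the graph of $\psi$ is convex, $y_\lambda:=\lambda y_1+(1-\lambda)y_2\in\psi(x_\lambda)$, where $x_\lambda:=\lambda x_1+(1-\lambda)x_2$. Joint concavity of $f$ then gives
\[
f^\star(x_\lambda)\ge f(x_\lambda,y_\lambda)\ge \lambda f^\star(x_1)+(1-\lambda)f^\star(x_2),
\]
so $f^\star$ is concave. For convex-valuedness, note that each slice $\psi(x)$ of a convex graph is convex. Hence $\psi^\star(x)$ is the intersection of the convex set $\psi(x)$ with the superlevel set $\{y: f(x,y)\ge f^\star(x)\}$ of the concave function $f(x,\cdot)$, and is therefore convex. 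Upper hemicontinuity and compactness were already established in Step 2, which completes the argument.
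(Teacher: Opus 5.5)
Your proof is correct and complete. The paper gives no proof of its own and only cites Berge and Sundaram; your sequential argument (attainment of the maximum, the $\liminf$ bound via lower hemicontinuity, the $\limsup$ bound and closedness of $\psi^\star$ via upper hemicontinuity, then the convex-graph computation for concavity and convex values) is essentially the standard textbook proof found in those references.
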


\begin{lemma}\label{aux_lemma_optimising_set_convex}
    Let \[\Omega^\star({\cal M})\coloneqq \left\{\omega \in \Omega(M)\,\Bigg|\, \inf_{ \calm'\in \alt(\calm)} \sum_{s,a}\omega(s,a){\rm KL}(P(s,a),P'(s,a))= (T^\star(\calm))^{-1}\right\}.\]
    Then the set of optimizers $\Omega^\star({\cal M})$ is  convex. 
\end{lemma}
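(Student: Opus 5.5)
The plan is to show that the objective $J(\omega)\coloneqq \inf_{\calm'\in\alt(\calm)}\sum_{s,a}\omega(s,a){\rm KL}(P(s,a),P'(s,a))$ is concave on $\Omega(M)$ and that $\Omega(M)$ is convex. The set of maximizers of a concave function over a convex set is then a superlevel set at the maximal value, hence convex.

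\textbf{Concavity of $J$.} For each fixed $\calm'\in\alt(\calm)$, the map
\[
\omega\mapsto \sum_{s,a}\omega(s,a){\rm KL}(P(s,a),P'(s,a))
\]
is linear in $\omega$. Its coefficients ${\rm KL}(P(s,a),P'(s,a))\in[0,\infty]$ are nonnegative and do not depend on $\omega$. The pointwise infimum of a family of affine functions is concave, so $J$ is concave; it is also bounded above, since it is finite at the maximum by Theorem~\ref{thm:lowerbound}. The one subtlety is that some coefficients may be $+\infty$. Then $J$ is an infimum of extended-valued linear functions $\ell_{\calm'}(\omega)=\sum\omega\cdot{\rm KL}$, with the convention $0\cdot\infty=0$. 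Each $\ell_{\calm'}$ still satisfies $\ell_{\calm'}(\theta\omega_1+(1-\theta)\omega_2)=\theta\ell_{\calm'}(\omega_1)+(1-\theta)\ell_{\calm'}(\omega_2)$ for $\theta\in(0,1)$, as an identity in $[0,\infty]$. Taking the infimum over $\calm'$ gives
\[
J(\theta\omega_1+(1-\theta)\omega_2)\ge\theta J(\omega_1)+(1-\theta)J(\omega_2).
\]
I will verify this identity directly rather than appeal to a general theorem, since this is where a careless argument could fail.

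\textbf{Convexity of $\Omega(M)$.} $\Omega(M)$ is the intersection of the simplex $\Delta(S\times A)$ with the linear equalities $\sum_a\omega(s,a)=\sum_{s',a'}P(s|s',a')\omega(s',a')$ for all $s$. It is therefore a convex polytope, and in particular compact.

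\textbf{Conclusion.} Set $J^\star=(T^\star(\calm))^{-1}=\sup_{\Omega(M)}J$, and take $\omega_1,\omega_2\in\Omega^\star(\calm)$ and $\theta\in[0,1]$. The convex combination $\omega_\theta$ lies in $\Omega(M)$, and by concavity $J(\omega_\theta)\ge\theta J^\star+(1-\theta)J^\star=J^\star$. The reverse inequality $J(\omega_\theta)\le J^\star$ holds by the definition of the supremum. Hence $\omega_\theta\in\Omega^\star(\calm)$, so $\Omega^\star(\calm)$ is convex.

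For the paper's purpose it is also worth recording that $\Omega^\star(\calm)$ is nonempty and closed, which is needed for the minimum-norm selector $\omega^*(M)$ to exist uniquely. Closedness follows because $J$ is upper semicontinuous as an infimum of (lower-bounded) linear functionals, so the superlevel set $\{J\ge J^\star\}$ is closed. Nonemptiness then follows from compactness of $\Omega(M)$, since an upper semicontinuous function attains its supremum on a compact set. Closedness plus convexity then yields a unique minimizer of $\|\omega\|_2^2$ over $\Omega^\star(\calm)$. The main obstacle is only the extended-value bookkeeping in the concavity step; everything else is routine.
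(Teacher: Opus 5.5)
Your proof is correct, and its mathematical core matches the paper's. Both arguments get concavity of $J(\omega)=\inf_{M'\in\alt(M)}\sum_{s,a}\omega(s,a){\rm KL}(P(s,a),P'(s,a))$ from linearity in $\omega$ plus ``infimum of a sum $\ge$ sum of infima'', and both combine this with convexity of the polytope $\Omega(M)$.

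The difference is in how this is packaged. The paper invokes Berge's maximum theorem twice. First it replaces $\alt(M)$ by its closure $\overline{\alt}(M)$ to obtain continuity of $J$. Then it uses the concave/convex-graph clause to conclude that the argmax correspondence is convex-valued. You prove the superlevel-set statement directly instead.

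Your route is more elementary and more robust, because convexity of $\Omega^\star(M)$ needs no topology on the alternative set. The Berge route implicitly needs two things: $(\omega,M')\mapsto\sum_{s,a}\omega(s,a){\rm KL}(P(s,a),P'(s,a))$ must be finite and continuous on $\Omega(M)\times\overline{\alt}(M)$, and the minimum over the closure must equal the infimum over $\alt(M)$. Neither is automatic, since ${\rm KL}(P,P')$ blows up as $P'$ loses support. In exchange, Berge gives continuity of $J$ and compactness of $\Omega^\star(M)$ in one stroke; you recover these separately through your upper-semicontinuity remark.

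Two small points.

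\textbf{The $+\infty$ bookkeeping is unnecessary.} The definition of $\alt(M)$ requires $M\ll M'$, so every coefficient ${\rm KL}(P(s,a),P'(s,a))$ is finite. That finiteness is exactly what your closedness remark relies on. With an infinite coefficient, $\ell_{M'}$ would jump from $0$ to $+\infty$ as $\omega(s,a)$ leaves $0$, so it would not be upper semicontinuous. The property you need is continuity of each $\ell_{M'}$, not lower-boundedness.

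\textbf{Finiteness of $J$.} When $\alt(M)\neq\emptyset$, finiteness of $J$ follows from $J\le\ell_{M'}$ for any single $M'\in\alt(M)$, not from Theorem~\ref{thm:lowerbound}. It is not needed for the convexity step in any case.
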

\begin{proof}
    The idea of the proof is to apply twice Berge's maximum Theorem \ref{thm:berge_maximum}. First, assume $\calm$ to be fixed, and  consider the function
    \[
    f(\omega, \calm')= \sum_{s,a}\omega(s,a){\rm KL}(P(s,a),P'(s,a)).
    \]
    Let ${\cal X}=\Omega(\calm)$ and $\psi(\omega)\equiv\psi\coloneqq \overline\alt(\calm)$. Clearly, $f$ is continuous in $(\omega,\calm')$ and the set $\psi$ is constant, non-empty and compact. Then, by Theorem \ref{thm:berge_maximum} the function $f^\star(\omega)=\min\{f(\omega,\calm')\,|\,\calm'\in \psi\}$ is continuous  and $\psi^\star=\{\calm' \in \psi\,|\, f(\omega,\calm')=f^\star(\omega)\}$ is a compact-valued upper-hemi-continuous correspondence.

    At this point, note that for any $\omega_1,\omega_2\in \Omega(\calm)$  and $\lambda\in [0,1]$ we have
    \begin{align*}
    f^\star((1-\lambda)\omega_1+\lambda \omega_2) &= \min_{\calm'\in\overline\alt(\calm)} f((1-\lambda)\omega_1+\lambda \omega_2,\calm'),\\
    &= \min_{\calm'\in\overline\alt(\calm)} (1-\lambda)f(\omega_1,\calm') + \lambda f(\omega_2,\calm') \tag{since $f$ is linear in $\omega$},\\
    &\geq \min_{\calm'\in\overline\alt(\calm)} (1-\lambda)f(\omega_1,\calm') + \min_{\calm'\in\overline\alt(\calm)}\lambda f(\omega_2,\calm') ,\\
    &= (1-\lambda)f^\star(\omega_1)+\lambda f^\star(\omega_2).
    \end{align*}
    Hence $f^\star$ is concave in $\omega$.
    At this point we can apply the theorem again to $f^\star(\omega)$, so that we consider the values $\max\{f^\star(\omega)\,|\,\omega\in \Omega(\calm)\}$. We let ${\cal X}=\{\calm\}$ and ${\cal Y}=\Delta(S\times A)$, with $\psi(\calm)=\Omega(\calm)$ (which is compact by the finiteness of the MDP). Since $f^\star$ is continuous and concave, and $\psi$ is a convex set, then $\{\omega \in \Omega(M)\,|\, f^\star(\omega)=(T^\star(\calm))^{-1}\}$ is a convex-valued set  by Lemma \ref{thm:berge_maximum}. 
\end{proof}

\begin{lemma}
    Let
    \[
    f(\omega, \calm')= \sum_{s,a}\omega(s,a){\rm KL}(P(s,a),P'(s,a)).
    \]
    and $f^\star(\omega)=\min\{f(\omega,\calm')\,|\,\calm'\in \alt(\cal M)\}$. Then $f^*(\omega)$ is continuous and concave in $\omega$.
\end{lemma}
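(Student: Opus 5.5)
Concavity follows from the structure of $f^\star$ as a pointwise infimum of linear functions; continuity needs a separate argument because $\alt(\calm)$ is neither closed nor bounded.

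\emph{Concavity.} For each fixed $\calm'\in\alt(\calm)$, the map $\omega\mapsto f(\omega,\calm')$ is linear. Each coefficient ${\rm KL}(P(s,a),P'(s,a))$ is finite because $\calm\ll\calm'$, and it is nonnegative. Hence $f^\star$ is an infimum of linear functions. For $\omega_1,\omega_2$ and $\lambda\in[0,1]$ we have
\[
f^\star((1-\lambda)\omega_1+\lambda\omega_2)\ge (1-\lambda)f^\star(\omega_1)+\lambda f^\star(\omega_2),
\]
because the infimum of a sum is at least the sum of the infima. This is exactly the computation in the proof of Lemma~\ref{aux_lemma_optimising_set_convex}, and it does not need compactness of the alternative set.

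\emph{Continuity.} Here I would not use Berge's theorem directly. I would instead exploit the bounded-ratio structure of Assumption~\ref{assumption_P_in_G_bounded_ratio}, working, as throughout the paper, with alternatives whose kernels lie in $G_\mu(L)$. By the computation ending the proof of Lemma~\ref{lemma_bounding_l1}, every coefficient then satisfies ${\rm KL}_{\calm|\calm'}(s,a)\le 2\log L$. Consequently each linear map $\omega\mapsto f(\omega,\calm')$ is Lipschitz in $\|\cdot\|_1$ with constant $2\log L$, uniformly in $\calm'$. For any $\omega,\omega'$ and any $\calm'$,
\[
f(\omega,\calm')\le f(\omega',\calm')+2\log L\,\|\omega-\omega'\|_1 .
\]
Taking the infimum over $\calm'$ on both sides, and then swapping the roles of $\omega$ and $\omega'$, gives
\[
|f^\star(\omega)-f^\star(\omega')|\le 2\log L\,\|\omega-\omega'\|_1 ,
\]
so $f^\star$ is Lipschitz and in particular continuous.

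\emph{Without the $G_\mu(L)$ restriction.} In this case I would argue as follows.
\begin{enumerate}
\item $f^\star$ is finite and nonnegative on the simplex. It is bounded above by $f(\omega,\calm'_0)$ for any fixed alternative $\calm'_0$, such as those built in Lemma~\ref{lemma_auxiliary_positive_omegas}, each of which has bounded KL coefficients.
\item A finite concave function is continuous on the relative interior of its domain.
\item Being an infimum of continuous functions, $f^\star$ is upper semicontinuous everywhere.
\item On the boundary of the polytope $\Delta(S\times A)$, a concave function that is upper semicontinuous is continuous, by the Gale--Klee--Rockafellar theorem for polyhedral domains.
\end{enumerate}

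\emph{Main obstacle.} The delicate step is the continuity argument, specifically ensuring that the infimum is taken over a family with uniformly bounded coefficients. The unbounded-KL alternatives must not break the uniform Lipschitz estimate. The plan therefore first tries the $G_\mu(L)$-based Lipschitz bound, and falls back on the polyhedral-domain semicontinuity argument if the infimum is over all of $\alt(\calm)$.
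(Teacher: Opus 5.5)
Your proof is correct. The concavity half is the same as the paper's (an infimum of functions linear in $\omega$), but your continuity argument takes a genuinely different route.

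The paper gets continuity from Berge's maximum theorem, applied with the constant correspondence $\psi\equiv\overline{\alt}(\calm)$. That set is compact simply because it is a closed subset of $\Delta(S)^{S\times A}$. (A small correction to your opening sentence: $\alt(\calm)$ is bounded as a set of kernels; what is unbounded is $\KL$ over it.) The real burden in the Berge route is the hypothesis that $f$ is a real-valued, jointly continuous function on $\Omega(\calm)\times\overline{\alt}(\calm)$. The paper asserts this without comment. It holds once alternatives are confined to the closed set $G_\mu(L)$. It fails in general, because the closure contains kernels that no longer dominate $P(s,a)$, which makes some $\KL$ coefficients infinite.

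Your first argument works in that same $G_\mu(L)$ setting, but replaces the topological appeal with a uniform estimate. In return it gives a quantitative conclusion: $f^\star$ is $2\log L$-Lipschitz in $\|\cdot\|_1$, consistent with the constant $L_\omega=2SA\log L$ claimed for (A2). Berge yields only bare continuity, plus upper hemicontinuity of the set of minimizing alternatives, which this lemma does not need.

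Your fallback uses finiteness, upper semicontinuity as an infimum of continuous functions, and the Gale--Klee--Rockafellar theorem on the polytope $\Delta(S\times A)$. It needs no control on the alternatives beyond $\alt(\calm)\neq\emptyset$. So it proves the unrestricted statement, where the Berge argument as written does not apply; its cost is that it gives no modulus of continuity. Step 2 of the fallback is redundant, since Gale--Klee--Rockafellar already gives lower semicontinuity of a finite concave function relative to the whole polytope, interior included.
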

\begin{proof}
    This is an immediate consequence from the proof of Lemma \ref{aux_lemma_optimising_set_convex}.
\end{proof}

We also add the result here that if the uniform policy is optimal in a communcating MDP, then we have that it is not unique.

\begin{lemma}\label{aux_lemma_pi_u_unique}
Let $(S, A, P, r, \gamma)$ be a finite discounted Markov decision process (MDP) with 
$0 < \gamma < 1$ and $|A| \ge 2$. 
Suppose that the \emph{uniform policy}
\[
\pi_u(a|s) = \frac{1}{|A|}, \quad \forall s \in S, \, a \in A
\]
is optimal. Then $\pi_u$ is not unique; in fact, every policy is optimal.
\end{lemma}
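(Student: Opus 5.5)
The plan is to work with the Bellman equations of the uniform policy rather than with trajectories. Let $V := V^{\pi_u}$ and define $Q(s,a) := r(s,a) + \gamma \sum_{s'} P(s'|s,a) V(s')$. Since $\pi_u$ is optimal, we have $V = V^\star$ and $Q = Q^\star$.

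The Bellman optimality equation then gives $V(s) = \max_a Q(s,a)$ for every $s$. On the other hand, the policy evaluation equation for $\pi_u$ gives $V(s) = \frac{1}{|A|}\sum_a Q(s,a)$. An average of the finitely many numbers $\{Q(s,a)\}_{a\in A}$ equals their maximum only if they are all equal. Hence $Q(s,a) = V(s)$ for all $a \in A$, and every action is greedy with respect to $Q^\star$ in every state.

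Next, take an arbitrary (possibly stochastic) Markov policy $\pi$ and show that $V^\pi = V^\star$. Define the evaluation operator
\[
(T^\pi W)(s) = \sum_a \pi(a|s)\Big(r(s,a) + \gamma \sum_{s'} P(s'|s,a) W(s')\Big).
\]
Applied to $W = V^\star$, it gives $(T^\pi V^\star)(s) = \sum_a \pi(a|s) Q^\star(s,a) = V^\star(s)$. So $V^\star$ is a fixed point of $T^\pi$. Because $T^\pi$ is a $\gamma$-contraction in $\|\cdot\|_\infty$ with unique fixed point $V^\pi$, it follows that $V^\pi = V^\star$. In particular every deterministic policy lies in $\Pi^\star(M)$. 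Since $|A|\ge 2$, there are at least two distinct deterministic policies, and $\pi_u$ differs from both, so the optimal policy is not unique.

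I do not expect a real obstacle. The only step needing care is the equality-of-average-and-max argument, which requires using $V^{\pi_u} = V^\star$ to identify $Q^{\pi_u}$ with $Q^\star$. The communicating assumption used elsewhere in the paper is not needed here.
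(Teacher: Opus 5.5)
Your proof is correct and follows the same route as the paper: first show $Q^\star(s,a)=V^\star(s)$ for every $(s,a)$, then conclude that every policy is optimal. The only difference is that you prove both directions of the greedy-support characterization of optimal policies inline, via the average-equals-maximum argument and the fixed point of the $\gamma$-contraction $T^\pi$; the paper instead cites this characterization from Puterman (Proposition~6.2.3).
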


\begin{proof}
Let $V^\pi$ denote the value function of policy $\pi$:
\[
V^\pi(s) = \mathbb{E}_\pi \!\left[ \sum_{t=0}^{\infty} 
\gamma^t r(s_t, a_t) \,\middle|\, s_0 = s \right].
\]
Define the optimal value and $Q$-functions by
\[
V^*(s) = \max_{a \in A} Q^*(s,a), \qquad
Q^*(s,a) = r(s,a) + \gamma \sum_{s'} P(s'|s,a) V^*(s').
\]

\paragraph{Step 1. Characterization of optimal policies.}
A standard result in dynamic programming (see, e.g., 
Puterman, \emph{Markov Decision Processes}, 1994, Proposition~6.2.3)
states that a policy $\pi$ is optimal if and only if, for all $s \in S$,
\begin{equation}
\pi(a|s) > 0 \; \Rightarrow \; Q^*(s,a) = V^*(s).
\label{eq:opt}
\end{equation}
In words: a policy may assign positive probability only to actions that
achieve the maximal $Q^*(s,a)$ value in each state.

\paragraph{Step 2. Apply~\eqref{eq:opt} to the uniform policy.}
Assume that the uniform policy $\pi_u$ is optimal. 
Since $\pi_u(a|s) = 1/|A| > 0$ for every $s$ and $a$, 
substituting into~\eqref{eq:opt} gives
\begin{equation}
Q^*(s,a) = V^*(s), \quad \forall s \in S, \, a \in A.
\label{eq:qequal}
\end{equation}
Thus, in each state, all actions have identical optimal $Q$-values.

\paragraph{Step 3. The set of optimal actions.}
Define the set of optimal actions in state $s$ as
\[
A^*(s) = \{ a \in A : Q^*(s,a) = V^*(s) \}.
\]
Equation~\eqref{eq:qequal} implies
\[
A^*(s) = A, \quad \forall s \in S.
\]
Hence, every action is optimal in every state.

\paragraph{Step 4. The set of optimal policies.}
Let $\Pi^*$ denote the set of optimal policies:
\[
\Pi^* = \bigl\{ \pi : \pi(a|s) > 0 \Rightarrow a \in A^*(s),
\ \forall s \in S \bigr\}.
\]
Since $A^*(s) = A$ for all $s$, we have
\[
\Pi^* = \{ \pi : \pi(\cdot|s) \in \Delta(A), \ \forall s \in S \},
\]
where $\Delta(A)$ is the simplex over $A$. 
That is, \emph{every possible policy} (deterministic or stochastic) is optimal.

\paragraph{Step 5. Non-uniqueness.}
Because $\Pi^*$ contains all policies, the uniform policy $\pi_u$ is not unique.
Indeed, any policy $\pi'$ differing from $\pi_u$ on at least one $(s,a)$ is also optimal.

\paragraph{Step 6. Degenerate exception.}
If $|A|=1$, the MDP admits only one policy. 
In that trivial case the uniform policy is uniquely optimal, but this is vacuous.

\paragraph{Conclusion.}
For any finite discounted MDP with $|A|>1$, 
if the uniform policy is optimal, it cannot be the unique optimal policy. 
\end{proof}

\newpage
\section{Sharpness of the Objective Function} \label{appendix_sharpness}

In this section we discuss in more detail the formal sharpness results used earlier in the appendix and within the main paper. In particular, we will present Lemma \ref{lem:corrected_selector_stability} which, under some `assumptions' (A2)-(A5), illustrates that for MDPs with similar transitions, the minimum 2-norm selected optimal allocations are similar and therefore their respective induced policies are similar too. After this, we show that (A2)-(A5) hold in our problem setting and so the result is valid and can be used in our analysis. After this, we discuss some particular examples of MDP problem settings and their respective sharpnesses.

\begin{remark}
    The sharpness condition used in our analysis is stated in Assumption (A4) of 
    Lemma~\ref{lem:corrected_selector_stability}. For the purposes of our sample 
    complexity bounds, it is sufficient to impose this condition only locally, namely 
    for suboptimal allocations lying within a fixed neighbourhood of the optimal 
    allocation set. Such a localization affects the analysis only through constant 
    factors, since it merely restricts the range of perturbations for which the 
    selector-stability argument is applied.

    Thus, although Lemma~\ref{lem:corrected_selector_stability} is stated using a 
    global sharpness condition, a local version is enough for our application. This is 
    also the most natural notion for studying the objective $T$: the relevant quantity 
    is the local curvature of the objective on paths near the optimal allocation set. For this 
    reason, the examples later in the appendix focus on local sharpness.
\end{remark}

\subsection{Main sharpness lemma}
Before stating the main sharpness lemma, we introduce the following notation. Define the distance to a set
\[
\mathrm{dist}_\infty(\omega,S(M))
:=
\inf_{\omega'\in S(M)} \|\omega-\omega'\|_\infty .
\]

\begin{lemma}[Local stability of the minimum-norm optimal allocation and induced navigating policy]
\label{lem:corrected_selector_stability}
Let \(M=(\mathcal S,\mathcal A,P,r,\gamma)\) be a finite discounted MDP with deterministic rewards,
and let \(M'=(\mathcal S,\mathcal A,P',r,\gamma)\) be another MDP with the same reward function.
Write
\[
    \varepsilon
    :=
    \|P'-P\|_{\infty,1}
    :=
    \max_{s,a}
    \|P'(\cdot\mid s,a)-P(\cdot\mid s,a)\|_1 .
\]
For an MDP \(N=(\mathcal S,\mathcal A,P_N,r,\gamma)\), define
\[
    J_N(\omega)
    :=
    \inf_{\widetilde N\in \alt(N)}
    \sum_{s,a}
    \omega(s,a)
    \KL\!\left(
        P_N(\cdot\mid s,a),
        P_{\widetilde N}(\cdot\mid s,a)
    \right),
\]
and
\[
    J_N^\star
    :=
    \max_{\omega\in\Omega(N)}J_N(\omega),
    \qquad
    S(N)
    :=
    \argmax_{\omega\in\Omega(N)}J_N(\omega).
\]
Assume that \(S(N)\) is non-empty, compact, and convex for every \(N\) in the local neighbourhood
considered below. Define the minimum-norm optimal allocation selector by
\[
    \omega^\dagger(N)
    :=
    \argmin_{\omega\in S(N)}\|\omega\|_2^2 .
\]
Since \(S(N)\) is non-empty, compact, and convex, and since
\(\omega\mapsto\|\omega\|_2^2\) is strictly convex, \(\omega^\dagger(N)\) is uniquely defined.

Assume that 
the following conditions hold for every
MDP \(N=(\mathcal S,\mathcal A,P_N,r,\gamma)\) with the same reward function as \(M\) 

\begin{enumerate}

    \item[\textup{(A2)}] \textbf{Objective regularity.}
    There exist constants \(L_M,L_\omega<\infty\), depending on \(M\) but not on \(N\), such that
    for all admissible allocations \(\omega,\omega_1,\omega_2\),
    \[
        |J_N(\omega)-J_M(\omega)|
        \le
        L_M\|P_N-P\|_{\infty,1},
    \]
    and
    \[
        |J_M(\omega_1)-J_M(\omega_2)|
        \le
        L_\omega\|\omega_1-\omega_2\|_\infty .
    \]
    We also assume the same objective-regularity bound holds locally with \(M\) and \(N\)
    interchanged, with the same constants up to enlarging them.

    \item[\textup{(A3)}] \textbf{Feasible-set stability.}
    There exists \(\kappa_\Omega(M)<\infty\), depending on \(M\) but not on \(N\), such that
    \[
        d_H\bigl(\Omega(N),\Omega(M)\bigr)
        \le
        \kappa_\Omega(M)\|P_N-P\|_{\infty,1},
    \]
    where \(d_H\) denotes Hausdorff distance with respect to \(\|\cdot\|_\infty\).

    \item[\textup{(A4)}] \textbf{Uniform sharpness.}
    There exists a non-decreasing function
    \[
        \varphi_M:[0,\infty)\to[0,\infty),
        \qquad
        \varphi_M(0)=0,
    \]
    such that, for 
    every
    \(\omega\in\Omega(N)\),
    \[
        \mathrm{dist}_\infty(\omega,S(N))
        \le
        \varphi_M\!\left(J_N^\star-J_N(\omega)\right).
    \]

    \item[\textup{(A5)}] \textbf{Positive optimal state marginals.}
    The optimal allocations at \(M\) have uniformly positive state marginals:
    \[
        \rho(M)
        :=
        \min_{\omega\in S(M)}
        \min_{s\in\mathcal S}
        \omega(s)
        >0,
        \qquad
        \omega(s):=\sum_{a\in\mathcal A}\omega(s,a).
    \]
\end{enumerate}

Define
\[
    C_T(M)
    :=
    2L_M+2L_\omega\kappa_\Omega(M).
\]
Then
we have
\[
    \|\omega^\dagger(M')-\omega^\dagger(M)\|_\infty
    \le
    \sqrt{
        2\left[
            \kappa_\Omega(M)\|P'-P\|_{\infty,1}
            +
            \varphi_M\!\left(C_T(M)\|P'-P\|_{\infty,1}\right)
        \right]
    } .
\]

Furthermore, the induced policies
\[
    \pi^\dagger_N(a\mid s)
    :=
    \frac{\omega^\dagger(N)(s,a)}
    {\sum_{b\in\mathcal A}\omega^\dagger(N)(s,b)}
\]
satisfy
\[
    \|\pi^\dagger_{M'}-\pi^\dagger_M\|_\infty
    \le
    \frac{(A+1)}{\rho(M)}
    \sqrt{
        2\left[
            \kappa_\Omega(M)\|P'-P\|_{\infty,1}
            +
            \varphi_M\!\left(C_T(M)\|P'-P\|_{\infty,1}\right)
        \right]
    } .
\]

In particular, on any event of the form
\[
    \mathcal E_T(\lambda)
    :=
    \left\{
        \forall t\le T:
        \|\widehat P_t-P\|_{\infty,1}
        \le
        \psi(t,\lambda)
    \right\},
\]
we have
\[
    \|\omega^\dagger(\widehat M_t)-\omega^\dagger(M)\|_\infty
    \le
    \sqrt{
        2\left[
            \kappa_\Omega(M)\psi(t,\lambda)
            +
            \varphi_M\!\left(C_T(M)\psi(t,\lambda)\right)
        \right]
    } 
\]
and thus
\[
    \|\pi^\dagger_{\widehat M_t}-\pi^\dagger_M\|_\infty
    \le
    \frac{A+1}{\rho(M)}
    \sqrt{
        2\left[
            \kappa_\Omega(M)\psi(t,\lambda)
            +
            \varphi_M\!\left(C_T(M)\psi(t,\lambda)\right)
        \right]
    } .
\]
\end{lemma}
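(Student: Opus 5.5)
We argue in three steps. First we show that near-optimal allocations for $M$ lie close to $S(M')$. Then we exploit convexity of $S(\cdot)$ and the strong convexity of $\|\cdot\|_2^2$ to control the minimum-norm selector. Finally we pass from allocations to policies by normalizing with the state marginals. Write $\varepsilon=\|P'-P\|_{\infty,1}$, $\omega_M:=\omega^\dagger(M)$ and $\omega_{M'}:=\omega^\dagger(M')$.

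\textbf{Step 1: transfer of optimal sets.} By (A3), there is $\tilde\omega\in\Omega(M')$ with $\|\tilde\omega-\omega_M\|_\infty\le\kappa_\Omega(M)\varepsilon$. Combining the two parts of (A2) gives
\[
J_{M'}(\tilde\omega)\ge J_M(\omega_M)-L_\omega\kappa_\Omega\varepsilon-L_M\varepsilon .
\]
Symmetrically, take $\omega'\in S(M')$, project it into $\Omega(M)$ via (A3), and use optimality of $J_M^\star$. This yields $J_{M'}^\star\le J_M^\star+L_M\varepsilon+L_\omega\kappa_\Omega\varepsilon$, using the interchanged version of (A2). Together these show $J^\star_{M'}-J_{M'}(\tilde\omega)\le C_T(M)\varepsilon$. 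Applying (A4) at $N=M'$ then gives
\[
\mathrm{dist}_\infty(\omega_M,S(M'))\le \kappa_\Omega\varepsilon+\varphi_M(C_T\varepsilon)=:r(\varepsilon).
\]
The same argument with the roles of $M$ and $M'$ exchanged gives $\mathrm{dist}_\infty(\omega_{M'},S(M))\le r(\varepsilon)$. This requires (A4) at $N=M$, which the lemma assumes for every $N$.

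\textbf{Step 2: selector stability.} This is the main obstacle. A direct Hausdorff argument only gives closeness of the sets, not of their min-norm points; this is exactly why a square root appears in the conclusion. Let $u\in S(M')$ with $\|u-\omega_M\|_\infty\le r$, and $v\in S(M)$ with $\|v-\omega_{M'}\|_\infty\le r$. Since $S(M)$ is convex and $\omega_M$ minimizes $\|\cdot\|_2^2$ on it, the first-order condition gives
\[
\|v\|_2^2-\|\omega_M\|_2^2\ge\|v-\omega_M\|_2^2 .
\]
Similarly, on $S(M')$,
\[
\|u\|_2^2-\|\omega_{M'}\|_2^2\ge\|u-\omega_{M'}\|_2^2 .
\]
Adding these and bounding the norm differences $\|v\|_2^2-\|\omega_{M'}\|_2^2$ and $\|u\|_2^2-\|\omega_M\|_2^2$ uses that all vectors lie in the simplex. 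Then $|\|x\|_2^2-\|y\|_2^2|\le\|x-y\|_1\max(\|x\|_\infty,\|y\|_\infty)\cdot 2$, and we control this by $\|x-y\|_\infty$ times a constant. We expect the right bookkeeping (possibly measuring the simplex via $\|x\|_1=1$ and $\langle x+y,x-y\rangle\le\|x-y\|_\infty\|x+y\|_1\le 2\|x-y\|_\infty$) to give
\[
\|v-\omega_M\|_2^2+\|u-\omega_{M'}\|_2^2\le 4r .
\]
Using $\|\cdot\|_\infty\le\|\cdot\|_2$ and the triangle inequality through $u$ or $v$, this gives $\|\omega_{M'}-\omega_M\|_\infty\lesssim\sqrt{2r}$. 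Adjusting the constant so that it matches the stated $\sqrt{2r}$ is where care is needed. If the symmetric bound does not close, we would instead use only the one-sided inequality on $S(M)$ together with a linear-in-$r$ correction, absorbed for small $\varepsilon$.

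\textbf{Step 3: policies.} Write $\pi(a|s)=\omega(s,a)/\omega(s)$. Then
\[
|\pi'(a|s)-\pi(a|s)|\le \frac{|\omega'(s,a)-\omega(s,a)|+\pi'(a|s)\,|\omega'(s)-\omega(s)|}{\omega(s)} .
\]
Since $|\omega'(s)-\omega(s)|\le A\|\omega'-\omega\|_\infty$, and (A5) gives $\omega_M(s)\ge\rho(M)$, this yields the factor $(A+1)/\rho(M)$. Finally, the statement on $\mathcal E_T(\lambda)$ follows by substituting $\varepsilon\le\psi(t,\lambda)$ and using monotonicity of $\varphi_M$.
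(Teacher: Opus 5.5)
Your Steps 1 and 3 follow the paper. The paper proves the full Hausdorff bound $d_H(S(M'),S(M))\le h:=\kappa_\Omega(M)\varepsilon+\varphi_M(C_T(M)\varepsilon)$ by exactly your (A3)/(A2)/(A4) chain, and your pointwise version for $\omega^\dagger(M)$ and $\omega^\dagger(M')$ is all that is actually used. Your normalisation argument for the policies is more explicit than the paper's, which only asserts the $(A+1)/\rho(M)$ factor.

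The gap is in Step 2, where you flagged it yourself: as written, it does not give the stated constant.
\begin{itemize}
\item From $\|v-\omega_M\|_2^2+\|u-\omega_{M'}\|_2^2\le 4r$, the most you can extract is $\min\{\|v-\omega_M\|_2,\|u-\omega_{M'}\|_2\}\le\sqrt{2r}$.
\item The triangle inequality through $v$ (or $u$) then yields only $\|\omega^\dagger(M')-\omega^\dagger(M)\|_\infty\le r+\sqrt{2r}$.
\item Since $r+\sqrt{2r}>\sqrt{2r}$ for every $r>0$, the additive term cannot be ``absorbed for small $\varepsilon$'' without weakening the inequality. Your fallback therefore proves a strictly weaker statement.
\end{itemize}
The loss comes from turning the first-order conditions into the norm inequalities $\|v\|_2^2-\|\omega_M\|_2^2\ge\|v-\omega_M\|_2^2$. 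That step discards the cross terms that link $\omega_M$ and $\omega_{M'}$ directly.

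The missing step, which is the paper's argument, is to keep the inner-product form. With $x=\omega^\dagger(M)$ and $y=\omega^\dagger(M')$,
\[
\|x-y\|_2^2=\langle x,x-y\rangle+\langle y,y-x\rangle
=\langle x,x-v\rangle+\langle x,v-y\rangle+\langle y,y-u\rangle+\langle y,u-x\rangle .
\]
Here $\langle x,x-v\rangle\le 0$ and $\langle y,y-u\rangle\le 0$ by the variational inequalities for the projections of $0$ onto $S(M)$ and $S(M')$. Each of the two remaining terms is at most $r$ by H\"older, for example $\langle x,v-y\rangle\le\|x\|_1\|v-y\|_\infty\le r$, and likewise $\langle y,u-x\rangle\le\|y\|_1\|u-x\|_\infty\le r$. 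This gives $\|x-y\|_\infty\le\|x-y\|_2\le\sqrt{2r}$ exactly.

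Your $\ell_1$--$\ell_\infty$ H\"older pairing with $\|x\|_1=1$ is the right way to use an $\ell_\infty$ Hausdorff bound here. The paper instead writes $\|y-\bar y\|_2\le h$, which follows from the $\ell_\infty$ bound only up to a factor $\sqrt{SA}$; the H\"older version removes that slip.
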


\begin{proof}
Fix \(M'\in \mathcal{M}\).
We first prove a Hausdorff stability bound between \(S(M')\) and \(S(M)\).

Let \(\omega'\in S(M')\). By Assumption~\textup{(A3)}, there exists
\(\bar\omega'\in\Omega(M)\) such that
\[
    \|\omega'-\bar\omega'\|_\infty
    \le
    \kappa_\Omega(M)\varepsilon .
\]
Let \(\omega^\star\in S(M)\). Again by Assumption~\textup{(A3)}, there exists
\(\bar\omega^\star\in\Omega(M')\) such that
\[
    \|\omega^\star-\bar\omega^\star\|_\infty
    \le
    \kappa_\Omega(M)\varepsilon .
\]
Since \(\omega'\in S(M')\), it is optimal for \(J_{M'}\) over \(\Omega(M')\). Therefore,
\[
    J_{M'}(\omega')
    \ge
    J_{M'}(\bar\omega^\star).
\]
Using Assumption~\textup{(A2)}, we obtain
\[
\begin{aligned}
    J_M(\bar\omega')
    &\ge
    J_M(\omega') - L_\omega\|\bar\omega'-\omega'\|_\infty
    \\
    &\ge
    J_{M'}(\omega') - L_M\varepsilon
       - L_\omega\kappa_\Omega(M)\varepsilon
    \\
    &\ge
    J_{M'}(\bar\omega^\star) - L_M\varepsilon
       - L_\omega\kappa_\Omega(M)\varepsilon
    \\
    &\ge
    J_M(\bar\omega^\star) - 2L_M\varepsilon
       - L_\omega\kappa_\Omega(M)\varepsilon
    \\
    &\ge
    J_M(\omega^\star)
       -2L_M\varepsilon
       -2L_\omega\kappa_\Omega(M)\varepsilon .
\end{aligned}
\]
Since \(\omega^\star\in S(M)\), \(J_M(\omega^\star)=J_M^\star\). Thus
\[
    J_M^\star-J_M(\bar\omega')
    \le
    \left(2L_M+2L_\omega\kappa_\Omega(M)\right)\varepsilon
    =
    C_T(M)\varepsilon .
\]
By Assumption~\textup{(A4)} applied with \(N=M\),
\[
    \mathrm{dist}_\infty(\bar\omega',S(M))
    \le
    \varphi_M(C_T(M)\varepsilon).
\]
Therefore,
\[
\begin{aligned}
    \mathrm{dist}_\infty(\omega',S(M))
    &\le
    \|\omega'-\bar\omega'\|_\infty
    +
    \mathrm{dist}_\infty(\bar\omega',S(M))
    \\
    &\le
    \kappa_\Omega(M)\varepsilon
    +
    \varphi_M(C_T(M)\varepsilon).
\end{aligned}
\]
Taking the supremum over \(\omega'\in S(M')\), we obtain
\[
    \sup_{\omega'\in S(M')}
    \mathrm{dist}_\infty(\omega',S(M))
    \le
    \kappa_\Omega(M)\varepsilon
    +
    \varphi_M(C_T(M)\varepsilon).
\]

We now prove the reverse inclusion bound. Let \(\omega\in S(M)\). By Assumption~\textup{(A3)},
there exists \(\bar\omega\in\Omega(M')\) such that
\[
    \|\omega-\bar\omega\|_\infty
    \le
    \kappa_\Omega(M)\varepsilon .
\]
Let \(\omega'^\star\in S(M')\). Again by Assumption~\textup{(A3)}, there exists
\(\bar\omega'^\star\in\Omega(M)\) such that
\[
    \|\omega'^\star-\bar\omega'^\star\|_\infty
    \le
    \kappa_\Omega(M)\varepsilon .
\]
Since \(\omega'^\star\in S(M')\),
\[
    J_{M'}^\star=J_{M'}(\omega'^\star).
\]
Using Assumption~\textup{(A2)} and the optimality of \(\omega\in S(M)\), we have
\[
\begin{aligned}
    J_{M'}^\star
    &=
    J_{M'}(\omega'^\star)
    \\
    &\le
    J_M(\omega'^\star)+L_M\varepsilon
    \\
    &\le
    J_M(\bar\omega'^\star)
    +
    L_\omega\|\omega'^\star-\bar\omega'^\star\|_\infty
    +
    L_M\varepsilon
    \\
    &\le
    J_M(\bar\omega'^\star)
    +
    L_\omega\kappa_\Omega(M)\varepsilon
    +
    L_M\varepsilon
    \\
    &\le
    J_M^\star
    +
    L_\omega\kappa_\Omega(M)\varepsilon
    +
    L_M\varepsilon
    \\
    &=
    J_M(\omega)
    +
    L_\omega\kappa_\Omega(M)\varepsilon
    +
    L_M\varepsilon .
\end{aligned}
\]
On the other hand,
\[
\begin{aligned}
    J_{M'}(\bar\omega)
    &\ge
    J_M(\bar\omega)-L_M\varepsilon
    \\
    &\ge
    J_M(\omega)
    -
    L_\omega\|\bar\omega-\omega\|_\infty
    -
    L_M\varepsilon
    \\
    &\ge
    J_M(\omega)
    -
    L_\omega\kappa_\Omega(M)\varepsilon
    -
    L_M\varepsilon .
\end{aligned}
\]
Combining the two displays gives
\[
    J_{M'}^\star-J_{M'}(\bar\omega)
    \le
    2L_M\varepsilon+2L_\omega\kappa_\Omega(M)\varepsilon
    =
    C_T(M)\varepsilon .
\]
By Assumption~\textup{(A4)} applied with \(N=M'\),
\[
    \mathrm{dist}_\infty(\bar\omega,S(M'))
    \le
    \varphi_M(C_T(M)\varepsilon).
\]
Therefore,
\[
\begin{aligned}
    \mathrm{dist}_\infty(\omega,S(M'))
    &\le
    \|\omega-\bar\omega\|_\infty
    +
    \mathrm{dist}_\infty(\bar\omega,S(M'))
    \\
    &\le
    \kappa_\Omega(M)\varepsilon
    +
    \varphi_M(C_T(M)\varepsilon).
\end{aligned}
\]
Taking the supremum over \(\omega\in S(M)\), we obtain
\[
    \sup_{\omega\in S(M)}
    \mathrm{dist}_\infty(\omega,S(M'))
    \le
    \kappa_\Omega(M)\varepsilon
    +
    \varphi_M(C_T(M)\varepsilon).
\]
Consequently,
\[
    d_H(S(M'),S(M))
    \le
    \kappa_\Omega(M)\varepsilon
    +
    \varphi_M(C_T(M)\varepsilon).
\]
Set
\[
    h
    :=
    d_H(S(M'),S(M)).
\]
Then
\[
    h
    \le
    \kappa_\Omega(M)\varepsilon
    +
    \varphi_M(C_T(M)\varepsilon).
\]

We now control the minimum-norm selectors. Let
\[
    x:=\omega^\dagger(M),
    \qquad
    y:=\omega^\dagger(M').
\]
Because \(S(M)\) and \(S(M')\) are non-empty, compact, and convex, \(x\) and \(y\) are the
Euclidean projections of \(0\) onto \(S(M)\) and \(S(M')\), respectively.

Since \(d_H(S(M'),S(M))=h\), there exists \(\bar y\in S(M)\) such that
\[
    \|y-\bar y\|_2\le h,
\]
and there exists \(\bar x\in S(M')\) such that
\[
    \|x-\bar x\|_2\le h.
\]
The projection variational inequality for \(x=\Pi_{S(M)}(0) = \argmin_{\omega\in S(M)}\|\omega\|_2^2\) gives
\[
    \langle x,\bar y-x\rangle\ge 0.
\]
Hence
\[
\begin{aligned}
    \langle x,y-x\rangle
    &=
    \langle x,\bar y-x\rangle
    +
    \langle x,y-\bar y\rangle
    \\
    &\ge
    -\|x\|_2\|y-\bar y\|_2
    \\
    &\ge
    -h,
\end{aligned}
\]
where we have used Cauchy-Schwarz and \(\|x\|_2\le 1\), since \(x\) is a probability vector. Similarly, the projection
variational inequality for \(y=\Pi_{S(M')}(0)\) gives
\[
    \langle y,x-y\rangle\ge -h.
\]
Therefore,
\[
\begin{aligned}
    \|x-y\|_2^2
    &=
    \langle x-y,x-y\rangle
    \\
    &=
    \langle x,x-y\rangle
    +
    \langle y,y-x\rangle
    \\
    &\le
    h+h
    =
    2h .
\end{aligned}
\]
Thus,
\[
    \|x-y\|_\infty
    \le
    \|x-y\|_2
    \le
    \sqrt{2h}.
\]
Substituting the bound on \(h\) gives
\[
    \|\omega^\dagger(M')-\omega^\dagger(M)\|_\infty
    \le
    \sqrt{
        2\left[
            \kappa_\Omega(M)\varepsilon
            +
            \varphi_M(C_T(M)\varepsilon)
        \right]
    } .
\]

It remains to prove the policy bound. For any $\omega_1,\omega_2$, we have
\[
\|\pi_{\omega_1}-\pi_{\omega_2}\|_\infty
\le
\frac{A+1}{\rho(M)}\|\omega_1-\omega_2\|_\infty
\]
Combining this with the earlier allocation bound gives
\[
    \|\pi^\dagger_{M'}-\pi^\dagger_M\|_\infty
    \le
    \frac{(A+1)}{\rho(M)}
    \sqrt{
        2\left[
            \kappa_\Omega(M)\varepsilon
            +
            \varphi_M(C_T(M)\varepsilon)
        \right]
    } .
\]

Finally, the empirical statements follow by setting \(M'=\widehat M_t\). On
\(\mathcal E_T(\lambda)\), if \(\psi(t,\lambda)\le r_M\), then
\[
    \|\widehat P_t-P\|_{\infty,1}\le \psi(t,\lambda),
\]
so the deterministic bounds above apply with
\[
    \varepsilon=\|\widehat P_t-P\|_{\infty,1}\le \psi(t,\lambda).
\]
\end{proof}

\subsection{Assumptions in Lemma~\ref{lem:corrected_selector_stability} hold in our setting}

We now discuss that all assumptions for Lemma \ref{lem:corrected_selector_stability} hold and therefore our the lemma itself holds. 

\paragraph{Assumption (A2).} The first part of Assumption A3 can be proven with Lemma \ref{lemma:KL_on_G_is_lipschitz}, with $L_M = S^3A(1+\log(L))$. The second part of the assumption can be shown directly by our $G_\mu(L)$ assumption (see Assumption \ref{assumption_P_in_G_bounded_ratio}) where we can find $L_\omega = 2SA\log(L)$.

\paragraph{Assumption (A3).} We can illustrate that this bound holds using Hoffman's error bound. We do so through the following lemma. 

\begin{lemma}[Justification of Assumption~\textup{(A3)}: feasible-set stability]
\label{lem:feasible_set_stability}
Let
\[
    \Omega(M)
    =
    \left\{
    \omega\in\Delta(\mathcal S\times\mathcal A):
    \sum_a\omega(s,a)
    =
    \sum_{s',a'}P(s\mid s',a')\omega(s',a')
    \quad
    \forall s\in\mathcal S
    \right\}.
\]
Then there exist constant \(\kappa_\Omega(M)<\infty\) such that
we have
\[
    d_H(\Omega(N),\Omega(M))
    \le
    \kappa_\Omega(M)\|P_N-P\|_{\infty,1}.
\]
\end{lemma}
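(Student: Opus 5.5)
The plan is to realise $\Omega(N)$ and $\Omega(M)$ as polyhedra defined by the same constraint template, and to use Hoffman's error bound to turn small constraint violations into small distances. Write $\omega\in\mathbb R^{SA}$ and let $E\in\mathbb R^{S\times SA}$ be the state-aggregation matrix, $(E\omega)(s)=\sum_a\omega(s,a)$. Let $K_P\in\mathbb R^{S\times SA}$ be the flow matrix, $(K_P\omega)(s)=\sum_{s',a'}P(s|s',a')\omega(s',a')$. Then
\[
\Omega(M)=\{\omega:\ A_M\omega=b,\ \omega\ge 0\},\qquad A_M=\begin{pmatrix}E-K_P\\ \mathbf 1^\top\end{pmatrix},\quad b=\begin{pmatrix}0\\1\end{pmatrix},
\]
and $\Omega(N)$ is the same with $K_{P_N}$ in place of $K_P$. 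Both sets are non-empty, since every finite MDP admits a stationary state-action distribution under any policy. Hoffman's lemma then gives a constant $H(M)<\infty$, depending only on $A_M$, such that for every $\omega\ge0$
\[
\mathrm{dist}_\infty(\omega,\Omega(M))\le H(M)\,\|A_M\omega-b\|_\infty .
\]

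\textbf{Direction $\sup_{\omega\in\Omega(N)}\mathrm{dist}_\infty(\omega,\Omega(M))$.} Take $\omega\in\Omega(N)$, so $\omega\ge0$, $\mathbf 1^\top\omega=1$ and $(E-K_{P_N})\omega=0$. Its residual for $M$ is
\[
A_M\omega-b=\begin{pmatrix}(K_{P_N}-K_P)\omega\\0\end{pmatrix}.
\]
The next estimate is a column-sum bound:
\[
\|(K_{P_N}-K_P)\omega\|_\infty\le\|(K_{P_N}-K_P)\omega\|_1\le\sum_{s',a'}\omega(s',a')\,\|P_N(\cdot|s',a')-P(\cdot|s',a')\|_1\le\|P_N-P\|_{\infty,1}.
\]
Hoffman then gives $\mathrm{dist}_\infty(\omega,\Omega(M))\le H(M)\|P_N-P\|_{\infty,1}$.

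\textbf{Reverse direction $\sup_{\omega\in\Omega(M)}\mathrm{dist}_\infty(\omega,\Omega(N))$.} Applying Hoffman to $\Omega(N)$ directly would produce a constant $H(N)$, which is not allowed to depend on $N$. This is the main obstacle. I would first try the following. Hoffman constants of a system $\{A\omega=b,\omega\ge0\}$ can be written as a maximum over the finitely many bases $B$ of $\|B^{-1}\|$-type quantities. A basis that is nonsingular for $A_M$ stays nonsingular under perturbations of size at most $r_M$, with $\|B_N^{-1}\|\le 2\|B_M^{-1}\|$. So on the local neighbourhood $\|P_N-P\|_{\infty,1}\le r_M$ one has $H(N)\le 2H(M)$. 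The subtlety is that bases which are singular for $A_M$ can become nonsingular for $A_N$ with large inverses. I would handle this by working instead with an explicit construction.

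\textbf{Explicit construction.} Take $\omega\in\Omega(M)$ with policy $\pi_\omega$. The chain $P_{N,\pi_\omega}$ need not be irreducible, so I would mix in a small amount of uniform exploration. Set $\pi'=(1-\eta)\pi_\omega+\eta\pi_u$ with $\eta\propto\|P_N-P\|_{\infty,1}$. Assumption~\ref{assumption_comminicating_aperiodic} then makes both $P_{\pi'}$ and $P_{N,\pi'}$ ergodic, with fundamental-matrix constant bounded by an $M$-dependent $\kappa$ uniformly over the neighbourhood. Lemma~\ref{lemma_aux_1} bounds the distance between the stationary distributions of $P_{\pi'}$ and $P_{N,\pi'}$ by $\kappa\|P_N-P\|_{\infty,1}$. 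The cost of moving $\omega$ to the stationary distribution of $P_{\pi'}$ is $O(\eta)$.

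\textbf{Conclusion.} Both directions are then bounded by $\kappa_\Omega(M)\|P_N-P\|_{\infty,1}$, with $\kappa_\Omega(M)=\max\{H(M),2H(M),\kappa\}$ up to constants. Outside the neighbourhood, the bound holds trivially because $d_H\le1$, after enlarging $\kappa_\Omega(M)$ by $1/r_M$. This matches the local use of (A3) in Lemma~\ref{lem:corrected_selector_stability}.
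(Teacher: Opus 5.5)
Your first direction is correct. Hoffman on the fixed polytope $\Omega(M)$, together with the column-sum estimate $\|(K_{P_N}-K_P)\omega\|_1\le\|P_N-P\|_{\infty,1}$, bounds $\sup_{\omega\in\Omega(N)}\mathrm{dist}_\infty(\omega,\Omega(M))$ with an $M$-only constant. That is the half the paper's proof never treats. The paper bounds only $\sup_{\omega\in\Omega(M)}\mathrm{dist}_\infty(\omega,\Omega(N))$, and does so by applying Hoffman to $\Omega(N)$, which produces exactly the $N$-dependent $H_N$ you flag. Your diagnosis is therefore right.

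\textbf{The gap is in your repair of the reverse direction.} The uniform fundamental-matrix bound for $P_{\pi'}$, $P_{N,\pi'}$ and the $O(\eta)$ cost of moving $\omega$ both fail once $P_{\pi_\omega}$ is reducible or nearly so. Take two states, each with a self-loop action $a$, where the other action $b$ sends $s_1\to s_2$ surely and $s_2\to s_1$ with probability $1/2$. This MDP is communicating and aperiodic, and $\omega=\tfrac12\delta_{(s_1,a)}+\tfrac12\delta_{(s_2,a)}\in\Omega(M)$. Under $\pi'=(1-\eta)\pi_\omega+\eta\pi_u$ the switching rates are $\eta/2$ and $\eta/4$. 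So the stationary law of $P_{\pi'}$ has state masses $(1/3,2/3)$ for every $\eta>0$, and its displacement from $\omega$ is $\Theta(1)$, not $O(\eta)$. Moreover, its fundamental-matrix norm is of order $1/\eta$. With $\eta\propto\|P_N-P\|_{\infty,1}$, the product $\kappa\|P_N-P\|_{\infty,1}$ is then $O(1)$. Even when $P_{\pi_\omega}$ is irreducible, its constant depends on $\omega$ and degenerates as $\omega$ nears the relative boundary of $\Omega(M)$. So no pointwise construction of this kind yields a uniform $\kappa_\Omega(M)$.

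\textbf{The missing idea is to use convexity of $\Omega(N)$ and work only at the finitely many vertices of $\Omega(M)$.}
\begin{itemize}
\item At a vertex $v$, the state support $S_v$ is closed under $P_{\pi_v}$, and $P_{\pi_v}$ restricted to $S_v$ is irreducible. Otherwise $v$ would be a nontrivial convex combination of stationary laws on distinct closed classes, each lying in $\Omega(M)$.
\item If $P_N$ has the same support as $P$ (as for $N\in G_\mu(L)$), $S_v$ stays closed under $P_{N,\pi_v}$. The perturbation bound of Lemma~\ref{lemma_aux_1}, applied on $S_v$ with the $M$-side fundamental matrix (allowed, since it holds with $\min(\kappa_1,\kappa_2)$), then gives $v'(s,a)=\mu_N(s)\pi_v(a|s)\in\Omega(N)$ with $\|v'-v\|_1\le\kappa_v\|P_N-P\|_{\infty,1}$. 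Here $\mu_N$ is the stationary law of $P_{N,\pi_v}$ on $S_v$.
\item Writing $\omega=\sum_i\lambda_i v_i$ and setting $\omega'=\sum_i\lambda_i v_i'\in\Omega(N)$ gives the reverse direction with constant $\max_i\kappa_{v_i}$, which depends only on $M$.
\item If $N$ may put mass outside the support of $P$, you also need a return-to-$S_v$ policy off $S_v$, available because $M$ is communicating, to absorb the $O(\|P_N-P\|_{\infty,1})$ leakage.
\end{itemize}
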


\begin{proof}

Recall that, for an MDP \(N=(\mathcal S,\mathcal A,P_N,r,\gamma)\),
\[
    \Omega(N)
    =
    \left\{
        \omega\in\Delta(\mathcal S\times\mathcal A):
        \sum_a \omega(s,a)
        =
        \sum_{s',a'} P_N(s\mid s',a')\omega(s',a')
        \quad
        \forall s\in\mathcal S
    \right\}.
\]
Equivalently,
\[
    \Omega(N)
    =
    \left\{
        \omega\in\mathbb R^{SA}:
        \omega\ge 0,\ 
        \mathbf 1^\top \omega=1,\ 
        B(P_N)\omega=0
    \right\},
\]
where \(B(P_N)\omega=0\) encodes the stationarity equations
\[
    \sum_a \omega(s,a)
    -
    \sum_{s',a'} P_N(s\mid s',a')\omega(s',a')
    =
    0,
    \qquad s\in\mathcal S .
\]
Thus \(\Omega(N)\) is a polyhedron of the form
\[
    \{x\in\mathbb R^{SA}: Ax\le b,\ C_Nx=d\},
\]
where we may take
\[
    A=-I,
    \qquad
    b=0,
\]
to encode the non-negativity constraints \(\omega\ge 0\), and
\[
    C_N
    =
    \begin{pmatrix}
        \mathbf 1^\top \\
        B(P_N)
    \end{pmatrix},
    \qquad
    d
    =
    \begin{pmatrix}
        1\\
        0
    \end{pmatrix}.
\]

Fix \(\omega\in\Omega(M)\). We want to bound
\[
    \mathrm{dist}_\infty(\omega,\Omega(N)).
\]
Since \(\omega\in\Omega(M)\), we have
\[
    \omega\ge 0,
    \qquad
    \mathbf 1^\top\omega=1,
    \qquad
    B(P)\omega=0.
\]
When \(\omega\) is tested against the constraints defining \(\Omega(N)\), the inequality
residual is therefore zero:
\[
    (-I\omega-0)_+
    =
    (-\omega)_+
    =
    0,
\]
and the normalisation residual is also zero:
\[
    \mathbf 1^\top\omega-1=0.
\]
The only non-zero residual is the stationarity residual:
\[
    B(P_N)\omega
    =
    B(P_N)\omega-B(P)\omega
    =
    \bigl(B(P_N)-B(P)\bigr)\omega.
\]
Since \(B(P)\) depends linearly on \(P\), there exists a finite dimension-dependent constant
\(c_B>0\) such that
\[
    \|B(P_N)-B(P)\|_\infty
    \le
    c_B\|P_N-P\|_{\infty,1}.
\]
Moreover, since \(\omega\in\Delta(\mathcal S\times\mathcal A)\), we have
\[
    \|\omega\|_1=1.
\]
Hence
\[
    \|B(P_N)\omega\|_\infty
    \le
    c_B\|P_N-P\|_{\infty,1}.
\]

Applying Hoffman's lemma (Lemma \ref{hoffman_lemma_appendix}) to the fixed polyhedron \(\Omega(N)\), there exists a constant
\(H_N<\infty\) such that
\[
    \mathrm{dist}_\infty(\omega,\Omega(N))
    \le
    H_N
    \left(
        \|(-\omega)_+\|_\infty
        +
        |\mathbf 1^\top\omega-1|
        +
        \|B(P_N)\omega\|_\infty
    \right).
\]
The first two terms are zero, and the third term is bounded as above. Therefore
\[
    \mathrm{dist}_\infty(\omega,\Omega(N))
    \le
    H_Nc_B\|P_N-P\|_{\infty,1}.
\]
Thus, for each fixed \(N\), the one-sided deviation satisfies
\[
    \sup_{\omega\in\Omega(M)}
    \mathrm{dist}_\infty(\omega,\Omega(N))
    \le
    H_Nc_B\|P_N-P\|_{\infty,1}.
\]
\end{proof}

\paragraph{Assumption (A4).} 
Assumption (A4) is quite weak and it does not impose
any linear/polynomial rate. Such a modulus function $\varphi$ always exists when $\Omega(M)$ is compact and
$T(M,\cdot)$ is continuous (which holds in our setting, see Berge's theorem and the proof of Lemma \ref{aux_lemma_optimising_set_convex}). Namely, define
\[
g_M(r)
:=
\min_{\omega:\,\mathrm{dist}_\infty(\omega,S(M))\ge r}
J^*_M-J_M(\omega),
\]
which is strictly positive for $r>0$,
and let
\[
\varphi_M(u)
:=
\sup\{r:\,g_M(r)\le u\}.
\]
Then (A4) holds with this $\varphi_M$. Although we discuss later (and in the main body of the paper) particular case-studies $M$ in which we are able to provide explicit forms for this sharpness function $\varphi_M$.

\paragraph{Assumption (A5).} This statement follows quickly from Assumption \ref{assumption_w_0_condition} which, as discussed in the main paper, enforces all optimal allocations for all state-action pairs to be strictly positive by Lemma \ref{lemma_auxiliary_positive_omegas}.

\begin{lemma}[Justification of Assumption~\textup{(A5)}: positive optimal state marginals]
\label{lem:positive_state_marginals}
Suppose every optimal allocation assigns positive mass to every state-action pair:
\[
    \omega(s,a)>0
    \qquad
    \forall \omega\in S(M),\ \forall(s,a)\in\mathcal S\times\mathcal A.
\]
Then
\[
    \rho(M)
    :=
    \min_{\omega\in S(M)}
    \min_{s\in\mathcal S}
    \sum_a\omega(s,a)
    >0.
\]
\end{lemma}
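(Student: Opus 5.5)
The plan is a compactness argument. The key observation is that the map
\[
    \omega \mapsto \min_{s\in\mathcal S}\sum_a \omega(s,a)
\]
is continuous on $\Delta(\mathcal S\times\mathcal A)$: it is a minimum of finitely many linear functionals. So it suffices to show that $S(M)$ is compact and non-empty. The infimum over $S(M)$ is then attained at some $\omega_0\in S(M)$, and the hypothesis gives $\omega_0(s,a)>0$ for every pair, hence
\[
    \sum_a\omega_0(s,a)\ge \omega_0(s,a_1)>0
\]
for every $s$, for any fixed action $a_1$. This yields $\rho(M)>0$.

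\textbf{Compactness of $S(M)$.} First, $\Omega(M)$ is a closed subset of the simplex. It is cut out by finitely many linear equalities and non-negativity constraints, so it is a compact polyhedron (this is also the representation used in Lemma~\ref{lem:feasible_set_stability}). Second, the objective
\[
    J_M(\omega)=\inf_{M'\in\alt(M)}\sum_{s,a}\omega(s,a)\KL_{M|M'}(s,a)
\]
is continuous on $\Omega(M)$ by the Berge-type argument in the proof of Lemma~\ref{aux_lemma_optimising_set_convex}. Under Assumption~\ref{assumption_P_in_G_bounded_ratio} one can argue more directly: $J_M$ is an infimum of linear functions of $\omega$ with coefficients bounded by $2\log L$ (as in Lemma~\ref{lemma_bounding_l1}), so it is concave and $2SA\log L$-Lipschitz in $\|\cdot\|_\infty$, which is exactly the constant $L_\omega$ in (A2). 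Therefore $S(M)=\argmax_{\omega\in\Omega(M)}J_M(\omega)$ is the preimage of the closed set $\{J_M^\star\}$ intersected with the compact set $\Omega(M)$. It is non-empty because the supremum of a continuous function on a compact set is attained.

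\textbf{Concluding.} Take $\omega_0\in\argmin_{\omega\in S(M)}\min_s\omega(s)$. The hypothesis (which holds under Assumption~\ref{assumption_w_0_condition} by Lemma~\ref{lemma_auxiliary_positive_omegas}) gives positivity of every entry of $\omega_0$, and the claim follows.

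\textbf{Main obstacle.} The only delicate point is continuity of $J_M$, because $\alt(M)$ is neither compact nor closed. The Lipschitz bound above avoids this: a pointwise infimum of uniformly Lipschitz functions is Lipschitz. This requires the alternatives $M'$ to lie in $G_\mu(L)$ so that the KL coefficients are uniformly bounded. If that restriction is not intended, I would fall back on the closure argument of Lemma~\ref{aux_lemma_optimising_set_convex}, replacing $\alt(M)$ by $\overline{\alt}(M)$. This leaves the infimum unchanged because the KL terms are continuous on $G_\mu(L)$.
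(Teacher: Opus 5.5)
Your argument is correct and is essentially the paper's proof: continuity of $\omega\mapsto\min_s\sum_a\omega(s,a)$, attainment of its minimum on the compact set $S(M)$, and strict positivity of every entry at the minimizer. The only difference is that the paper takes non-emptiness and compactness of $S(M)$ as a standing hypothesis of Lemma~\ref{lem:corrected_selector_stability}, whereas you derive them. For that derivation you do not need Lipschitz continuity or the restriction of alternatives to $G_\mu(L)$: $J_M$ is an infimum of linear functions, hence upper semicontinuous, so the maximum over $\Omega(M)$ is attained and $S(M)=\{\omega\in\Omega(M):J_M(\omega)\ge J_M^\star\}$ is closed.
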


\begin{proof}
For every \(\omega\in S(M)\) and every \(s\in\mathcal S\),
\[
    \omega(s)
    =
    \sum_a\omega(s,a)>0.
\]
The map
\[
    \omega\mapsto \min_{s\in\mathcal S}\omega(s)
\]
is continuous. Since \(S(M)\) is compact, this map attains its minimum over \(S(M)\). By the
strict positivity assumption, that minimum cannot be zero. Hence
\[
    \rho(M)
    =
    \min_{\omega\in S(M)}
    \min_{s\in\mathcal S}
    \omega(s)
    >0.
\]
\end{proof}

\paragraph{Useful results for the above proofs.} Within the above proof, we used a consequence of Hoffman's Lemma which we state here.

\begin{lemma}[Hoffman error bound]\label{hoffman_lemma_appendix}
Let
\[
S = \{x \in \mathbb{R}^n : Ax \le b,\; Cx = d\}
\]
be a nonempty polyhedron. Then, for any choice of norms, there exists
a constant $\kappa > 0$ such that for all $x$,
\[
\mathrm{dist}(x,S)
\;\le\;
\kappa \Big( \|(Ax - b)_+\| + \|Cx - d\| \Big).
\]
\end{lemma}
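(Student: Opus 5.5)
This is the classical Hoffman error bound. I would prove it through LP duality, combined with a finiteness argument over the vertices of the dual polyhedron.

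\textbf{Reductions.} All norms on $\mathbb{R}^n$ and on the residual spaces are equivalent in finite dimension. So it suffices to prove the bound for one convenient choice and absorb the comparison constants into $\kappa$. I would take $\mathrm{dist}$ in $\|\cdot\|_\infty$ and residuals in $\|\cdot\|_1$. The equality constraints $Cx=d$ can be split into the pair $Cx\le d$, $-Cx\le -d$. Since $\|(Cx-d)_+\|+\|(d-Cx)_+\|\le 2\|Cx-d\|$ up to norm constants, it is enough to treat $S=\{x: Ax\le b\}$ with $A\in\mathbb{R}^{k\times n}$ nonempty.

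\textbf{Step 1: write the distance as an LP and dualize.} Fix $x$. Then $\mathrm{dist}_\infty(x,S)=\min\{t: \exists y,\ Ay\le b,\ -t\mathbf 1\le y-x\le t\mathbf 1\}$. This LP is feasible because $S\neq\emptyset$, and it is bounded below by $0$. Strong duality therefore gives
\[
\mathrm{dist}_\infty(x,S)=\max\{u^\top(Ax-b):\ u\ge 0,\ \|A^\top u\|_1\le 1\}.
\]
The derivation uses the multiplier $u$ for $Ay\le b$ and $v^\pm$ for the box constraints, then eliminates $v^\pm$. The elimination works because $\min_y u^\top(Ay-b)$ plus the box terms reduces to $u^\top(Ax-b)-t\|A^\top u\|_1$.

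\textbf{Step 2: restrict to finitely many candidates.} The feasible dual set $U=\{u\ge0,\|A^\top u\|_1\le1\}$ is a polyhedron depending only on $A$, not on $x$ or $b$. It may be unbounded, but it is pointed because it lies in the nonnegative orthant. The linear objective $u^\top(Ax-b)$ is bounded above on $U$, since the primal is finite. Hence the maximum is attained at a vertex of $U$. Let $V$ be the finite vertex set and set $\kappa_0:=\max_{u\in V}\|u\|_\infty$, which depends only on $A$. Then for every $x$,
\[
\mathrm{dist}_\infty(x,S)=\max_{u\in V}u^\top(Ax-b)\le \max_{u\in V}u^\top(Ax-b)_+\le \kappa_0\|(Ax-b)_+\|_1,
\]
where the middle inequality uses $u\ge0$. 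If $x\in S$, both sides vanish, which is consistent.

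\textbf{Main obstacle.} The delicate point is Step 2. Boundedness of the dual objective for each $x$ ensures a vertex optimum exists, but the constant must be uniform in $x$. The vertex argument handles this because $V$ depends only on $A$, not on $x$ or $b$.

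\textbf{Conclusion.} Undoing the reductions gives the stated bound with $\kappa$ equal to $\kappa_0$ times norm-equivalence factors, which are dimension-dependent. In particular $\kappa$ depends only on $(A,C)$, matching how the lemma is used in Lemma~\ref{lem:feasible_set_stability}.
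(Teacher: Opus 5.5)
Your proof is correct, but it is not what the paper does. The paper gives no argument and simply cites Hoffman's 1952 paper and a modern reference (their equation 1.3).

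You instead give the standard self-contained LP-duality proof:
\begin{enumerate}
  \item Reduce to a pure inequality system, using $\ell_\infty$ distance and $\ell_1$ residuals. Splitting $Cx=d$ into two inequalities makes the $\ell_1$ residual exactly $\|(Ax-b)_+\|_1+\|Cx-d\|_1$.
  \item Dualize the distance LP to $\max\{u^\top(Ax-b): u\ge 0,\ \|A^\top u\|_1\le 1\}$.
  \item Bound the optimum by the largest vertex norm of a pointed dual polyhedron that depends on neither $x$ nor $b$.
\end{enumerate}

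Two small points are worth stating explicitly. First, $U$ is a polyhedron because $\|A^\top u\|_1\le 1$ is equivalent to the finite system $\sigma^\top A^\top u\le 1$ for $\sigma\in\{-1,1\}^n$. Second, $V\neq\emptyset$ because $u=0$ is a vertex; if $\kappa_0=0$, any positive $\kappa$ works.

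The citation buys brevity. Your route buys an explicit constant $\kappa_0=\max_{u\in V}\|u\|_\infty$ that depends only on the constraint matrices, not on the right-hand sides.

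That explicitness also clarifies the paper's use in Lemma~\ref{lem:feasible_set_stability}. There the polyhedron is $\Omega(N)$, whose equality matrix $C_N$ varies with $N$. So the Hoffman constant $H_N$ is a function of $N$, and "depends only on $(A,C)$" does not by itself give a $\kappa_\Omega(M)$ that is uniform in $N$. Applying the lemma to the fixed polyhedron $\Omega(M)$ at points $x\in\Omega(N)$ does give a uniform constant, but only for the other one-sided deviation.
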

\begin{proof}
This is Hoffman's lemma; see \citep{Hoffman1952OnAS} and \citep[][equation 1.3]{hoffman_modern}.
\end{proof}

The following consequence of Hoffman's lemma will also be useful later.

\begin{lemma}[LP error bound from Hoffman]\label{lem:lp-error-bound}
Let \(P\subset\mathbb{R}^{d+1}\) be a nonempty polyhedron, and let
\[
t^\star=\max\{t:(\omega,t)\in P\}.
\]
Assume the maximum is attained, and define
\[
S^\star:=P\cap\{t=t^\star\}.
\]
Then there exists a constant \(\kappa_P>0\) such that for all
\((\omega,t)\in P\),
\[
\mathrm{dist}_\infty\big((\omega,t),S^\star\big)
\le
\kappa_P\,(t^\star-t).
\]
\end{lemma}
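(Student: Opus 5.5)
The plan is to apply Hoffman's error bound (Lemma~\ref{hoffman_lemma_appendix}) directly to the optimal face $S^\star$, viewed as a polyhedron in its own right. Since $P$ is a polyhedron, I will write it as $P=\{x\in\mathbb{R}^{d+1}: Ax\le b,\ Cx=d\}$ with $x=(\omega,t)$. The optimal face is obtained by adding the single linear equality $e_{d+1}^\top x=t^\star$, where $e_{d+1}$ is the last coordinate vector. Hence
\[
S^\star=\Bigl\{x: Ax\le b,\ \begin{pmatrix} C\\ e_{d+1}^\top\end{pmatrix}x=\begin{pmatrix} d\\ t^\star\end{pmatrix}\Bigr\},
\]
which is again a polyhedron. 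It is nonempty because the maximum $t^\star$ is assumed to be attained.

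Next I will invoke Lemma~\ref{hoffman_lemma_appendix} for $S^\star$, using the $\|\cdot\|_\infty$ norm; this is permitted since Hoffman's bound holds for any choice of norms. It yields a constant $\kappa>0$, depending only on $(A,C,e_{d+1})$ and the chosen norms, such that for every $x\in\mathbb{R}^{d+1}$,
\[
\mathrm{dist}_\infty(x,S^\star)\le\kappa\Bigl(\|(Ax-b)_+\|+\|Cx-d\|+|e_{d+1}^\top x-t^\star|\Bigr).
\]
I will then restrict to $x=(\omega,t)\in P$. On $P$ the inequality residual $(Ax-b)_+$ and the residual $Cx-d$ both vanish. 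The only remaining term is $|t-t^\star|$, which equals $t^\star-t$ because $t\le t^\star$ on $P$ by definition of $t^\star$. Setting $\kappa_P:=\kappa$ gives $\mathrm{dist}_\infty((\omega,t),S^\star)\le\kappa_P(t^\star-t)$, as claimed.

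There is no real obstacle here. The only points requiring care are (i) checking that $S^\star$ is nonempty, so that Hoffman's lemma applies, which is exactly the attainment hypothesis, and (ii) noting that the stacked equality system for $S^\star$ has a fixed coefficient matrix, so the Hoffman constant is independent of the point $x$. Beyond that the argument is immediate.
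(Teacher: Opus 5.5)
Your proof is correct and is essentially the paper's argument: both write $S^\star$ as the polyhedron obtained by adding $t=t^\star$ to the description of $P$, apply Hoffman's bound in $\|\cdot\|_\infty$, and note that for $(\omega,t)\in P$ the only surviving residual is $t^\star-t$. The one difference is cosmetic: you add $t=t^\star$ as a single equality row with residual $|t-t^\star|$, while the paper adds the two inequalities $t\le t^\star$ and $-t\le -t^\star$ and then uses $(t-t^\star)_+=0$.
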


\begin{proof}
Since \(P\) is a polyhedron, there exist matrices \(A,C\) and vectors \(b,d\)
such that
\[
P=\{x\in\mathbb{R}^{d+1}: Ax\le b,\; Cx=d\}.
\]
Hence
\[
S^\star
=
\{x\in\mathbb{R}^{d+1}: Ax\le b,\; Cx=d,\; t\le t^\star,\; -t\le -t^\star\}
\]
is also a nonempty polyhedron. Applying Hoffman's error bound to this system,
there exists \(\kappa_P>0\) such that for all \(x=(\omega,t)\),
\[
\mathrm{dist}_\infty(x,S^\star)
\le
\kappa_P\Big(\|(Ax-b)_+\|_\infty+\|Cx-d\|_\infty+(t-t^\star)_+ + (t^\star-t)_+\Big).
\]
Now let \(x=(\omega,t)\in P\). Then \((Ax-b)_+=0\) and \(Cx-d=0\). Moreover,
since \(t^\star\) is the optimal value over \(P\), we have \(t\le t^\star\),
so \((t-t^\star)_+=0\). Therefore
\[
\mathrm{dist}_\infty\big((\omega,t),S^\star\big)
\le
\kappa_P\,(t^\star-t),
\]
as claimed.
\end{proof}

\subsection{Linear Sharpness Under a Finite Discretisation of the MDP Space}
\label{sec:appendix:discretization}

In many settings it is natural to approximate the alternative family by a finite discretization. 
For example, if the model class is parametrized by a compact parameter space, 
one may construct a finite grid approximation of the alternatives. 
Such discretizations are also common in algorithmic implementations and in lower bound arguments, 
where a finite collection of alternative models suffices to capture the intrinsic difficulty of the identification problem.

In this section we show that discretizing the alternative set restores linear sharpness. 
The key observation is that once the set of alternatives is finite, 
the function $T(M,\omega)$ becomes a concave piecewise-linear function of $\omega$. 
Maximizing such a function over the polyhedral feasible set $\Omega(M)$ can be written as a linear program in epigraph form. 
Classical error bounds for linear programs then imply a linear relationship between the suboptimality gap and the distance to the optimal set. 

We write this into a lemma and proof below. Informally, when the alternative family is discretized, the objective becomes piecewise linear and the optimization problem reduces to a linear program. 
In this case the suboptimality gap grows linearly with the distance to the optimal set, yielding the linear modulus in Assumption~A4.

\paragraph{Discretized objective.}
Let $\alt_\Delta(M) = \{M^1,\ldots,M^N\} \subset \alt(M)$ be a finite subset of alternatives. 
Define the discretized objective
\[
T_\Delta(M,\omega)
    := \min_{1 \le i \le N} \langle \omega, k(M^i) \rangle,
\]
where $k(M^i) = \KL(P_M,P_{M^i})$. Let
\[
T_\Delta^\star(M)
    := \max_{\omega \in \Omega(M)} T_\Delta(M,\omega),
\qquad
S_\Delta(M)
    := \arg\max_{\omega \in \Omega(M)} T_\Delta(M,\omega).
\]

\begin{proposition}[Linear sharpness under discretization]\label{prop:linear-sharp-discrete}
Assume that \(\Omega(M)\subset\mathbb{R}^d\) is a nonempty compact polytope.
Let
\[
\alt_\Delta(M)=\{M_1,\dots,M_N\}\subset \alt(M)
\]
be a finite discretization of the alternative set, and define
\[
T_\Delta(M,\omega)
:=
\min_{1\le i\le N}\langle \omega, k(M_i)\rangle,
\qquad \omega\in\Omega(M).
\]
Let
\[
T_\Delta^\star(M)
:=
\max_{\omega\in\Omega(M)} T_\Delta(M,\omega),
\qquad
S_\Delta(M)
:=
\argmax_{\omega\in\Omega(M)} T_\Delta(M,\omega).
\]
Then there exists a constant \(c_\Delta(M)>0\) such that for all
\(\omega\in\Omega(M)\),
\[
\mathrm{dist}_\infty\big(\omega,S_\Delta(M)\big)
\le
c_\Delta(M)\Big(T_\Delta^\star(M)-T_\Delta(M,\omega)\Big).
\]
Consequently, Assumption~(A4) in Lemma \ref{lem:corrected_selector_stability} holds for the discretized objective with linear
modulus
\[
\varphi_M(u)=c_\Delta(M)\,u.
\]
\end{proposition}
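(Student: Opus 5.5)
The plan is to reduce the discretized problem to a linear program in epigraph form and then apply the LP error bound, Lemma~\ref{lem:lp-error-bound}, which was stated precisely for this purpose.

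\textbf{Epigraph polyhedron.} Write $k_i:=k(M_i)\in\mathbb{R}^d$ and define
\[
\mathcal{P}:=\bigl\{(\omega,t)\in\mathbb{R}^{d+1}:\ \omega\in\Omega(M),\ t\le\langle\omega,k_i\rangle\ \ \forall i=1,\dots,N\bigr\}.
\]
Since $\Omega(M)$ is a polytope (nonnegativity, normalisation and the linear stationarity equations) and the remaining constraints are finitely many linear inequalities, $\mathcal{P}$ is a polyhedron. It is nonempty because $\Omega(M)\neq\emptyset$. The quantity $t^\star:=\sup\{t:(\omega,t)\in\mathcal{P}\}$ equals $\max_{\omega\in\Omega(M)}\min_i\langle\omega,k_i\rangle=T^\star_\Delta(M)$. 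It is finite and attained, because $\Omega(M)$ is compact and $\omega\mapsto T_\Delta(M,\omega)$ is continuous, being a minimum of finitely many linear maps. The optimal face is $S^\star=\mathcal{P}\cap\{t=t^\star\}=\{(\omega,t^\star):\omega\in S_\Delta(M)\}$. To check this, note that $(\omega,t^\star)\in\mathcal{P}$ if and only if $T_\Delta(M,\omega)\ge t^\star$, which forces equality.

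\textbf{Applying the LP error bound.} Fix $\omega\in\Omega(M)$ and set $t:=T_\Delta(M,\omega)$, so that $(\omega,t)\in\mathcal{P}$. Lemma~\ref{lem:lp-error-bound} yields $\kappa_{\mathcal{P}}>0$ with
\[
\mathrm{dist}_\infty\bigl((\omega,t),S^\star\bigr)\le\kappa_{\mathcal{P}}\,(t^\star-t)=\kappa_{\mathcal{P}}\bigl(T^\star_\Delta(M)-T_\Delta(M,\omega)\bigr).
\]
To pass from this to a bound on $\omega$ alone, take any $(\omega',t^\star)\in S^\star$. Then $\|(\omega,t)-(\omega',t^\star)\|_\infty\ge\|\omega-\omega'\|_\infty\ge\mathrm{dist}_\infty(\omega,S_\Delta(M))$, since $\omega'\in S_\Delta(M)$. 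Taking the infimum over $S^\star$ gives the claim with $c_\Delta(M):=\kappa_{\mathcal{P}}$. The constant depends only on the matrices describing $\mathcal{P}$, that is, on $P_M$ and the $k_i$, and not on $\omega$.

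\textbf{Consequence for (A4) and the main obstacle.} The (A4) statement then follows immediately by taking $\varphi_M(u)=c_\Delta(M)u$, which is nondecreasing and vanishes at $0$. The only delicate point is verifying the hypotheses of Lemma~\ref{lem:lp-error-bound}: that $\mathcal{P}$ is a genuine polyhedron in which $t$ is unbounded below but bounded above, and that the maximum is attained. Both follow from the compactness of $\Omega(M)$ together with $N<\infty$. It also matters that the $k_i$ are finite vectors, which Assumption~\ref{assumption_P_in_G_bounded_ratio} guarantees when the $M_i$ lie in $G_\mu(L)$. This finiteness is what fails in the continuum case of Theorem~\ref{theorem: arbitrary_sharpness}, where the minimum over infinitely many alternatives is no longer polyhedral.
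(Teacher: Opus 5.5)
Your proof is correct and follows the paper's argument essentially step for step: you build the epigraph polyhedron $\{(\omega,t):\omega\in\Omega(M),\ t\le\langle\omega,k_i\rangle\ \forall i\}$, apply Lemma~\ref{lem:lp-error-bound} at the point $(\omega,T_\Delta(M,\omega))$, and project onto the $\omega$-coordinates, and your explicit identification of the optimal face as $S_\Delta(M)\times\{t^\star\}$ makes the projection step slightly more careful than the paper's brief version. One caveat applies equally to the paper: (A4) in Lemma~\ref{lem:corrected_selector_stability} requires a single $\varphi_M$ that works for every nearby $N$, but both arguments only produce the Hoffman constant at $N=M$, so the ``consequently'' part establishes (A4) only at the base model unless one also shows the Hoffman constants are uniform in $N$.
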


\begin{proof}
Write \(k_i:=k(M_i)\in\mathbb{R}^d\) for \(i=1,\dots,N\). Since the alternative
set is finite,
\[
T_\Delta(M,\omega)=\min_{1\le i\le N}\langle \omega,k_i\rangle
\]
is a concave piecewise-linear function of \(\omega\). Now, introduce the lifted feasible set
\[
P_\Delta(M)
:=
\Bigl\{
(\omega,t)\in\mathbb{R}^{d+1}
:
\omega\in\Omega(M),\;
t\le \langle \omega,k_i\rangle \ \forall i=1,\dots,N
\Bigr\}.
\]
Then maximizing \(T_\Delta(M,\omega)\) over \(\Omega(M)\) is equivalent to the
linear program
\[
T_\Delta^\star(M)=\max\{t:(\omega,t)\in P_\Delta(M)\}.
\]
Because \(\Omega(M)\) is a polytope and the additional constraints
\(t\le \langle \omega,k_i\rangle\) are linear, \(P_\Delta(M)\) is a nonempty
polyhedron. Since \(\Omega(M)\) is compact and each map
\(\omega \mapsto \langle \omega,k_i\rangle\) is continuous,
the function \(T_\Delta(M,\omega)=\min_i \langle \omega,k_i\rangle\)
is continuous on \(\Omega(M)\). Hence the maximum
\(T_\Delta^\star(M)\) is attained. Define the lifted optimal
set
\[
S_\Delta^{\mathrm{lift}}(M)
:=
P_\Delta(M)\cap\{t=T_\Delta^\star(M)\}.
\]
By Lemma~\ref{lem:lp-error-bound}, there exists a constant
\(\kappa_\Delta(M)>0\) such that for all \((\omega,t)\in P_\Delta(M)\),
\[
\mathrm{dist}_\infty\big((\omega,t),S_\Delta^{\mathrm{lift}}(M)\big)
\le
\kappa_\Delta(M)\big(T_\Delta^\star(M)-t\big).
\]

Now fix \(\omega\in\Omega(M)\), and set \(t:=T_\Delta(M,\omega)\). By
definition of \(T_\Delta\), we have \(t\le \langle \omega,k_i\rangle\) for all
\(i\), hence \((\omega,t)\in P_\Delta(M)\). Therefore
\[
\mathrm{dist}_\infty\big((\omega,T_\Delta(M,\omega)),S_\Delta^{\mathrm{lift}}(M)\big)
\le
\kappa_\Delta(M)\Big(T_\Delta^\star(M)-T_\Delta(M,\omega)\Big).
\]
Finally, projection onto the \(\omega\)-coordinates cannot increase
\(\ell_\infty\)-distance, so
\[
\mathrm{dist}_\infty\big(\omega,S_\Delta(M)\big)
\le
\kappa_\Delta(M)\Big(T_\Delta^\star(M)-T_\Delta(M,\omega)\Big).
\]
Setting \(c_\Delta(M):=\kappa_\Delta(M)\) completes the proof.
\end{proof}

\subsection{An explicit family realizing arbitrary rational exponents for polynomial sharpness}\label{appendix_subsection_arbitrary_sharpness}

\begin{figure}
    \centering
    \includegraphics[width=0.5\linewidth]{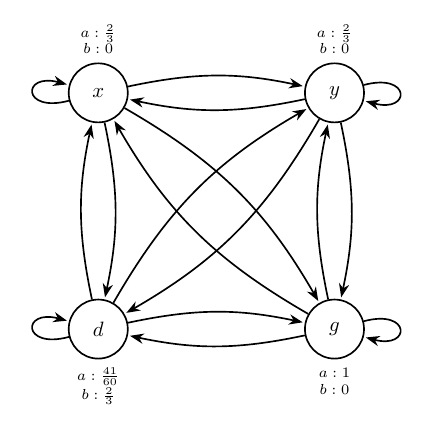}
    \caption{Visualization of the family of MDPs used in the construction in Proposition \ref{prop:counterexample}. We display the deterministic rewards from the two actions $a,b$ next to each of the states. The MDP is fully connected and the transitions vary parametrically through the family of MDPs. See Proposition \ref{prop:counterexample} for more details.}
    \label{fig:MDP}
\end{figure}

We now construct a concrete family of finite discounted MDPs for which the sharpness exponent along a feasible transverse direction can equal \((m-n)/m\) for any $m,n \geq 1$. In particular, we show that from any point on the boundary of $\Omega^*(M)$, there always exists a direction outwards from the optimal set $\Omega^*(M)$ in which the function $J_M(\cdot)$ is arbitrarily flat.  In this particular example constructed for Proposition \ref{prop:counterexample}, we note that the infinity norm distance between $\omega_t = \omega^*+th$ and $\Omega^*(M)$ is $t$ in this case. Namely $\mathrm{dist}_{\infty}(\omega_t,\Omega^*(M)) = t$. Hence, the final bounds on the curvature can be related to our definition of sharpness in Assumption (A4) of Lemma \ref{lem:corrected_selector_stability}. Hence, we have the statement in the main body of the paper, Theorem \ref{theorem: arbitrary_sharpness}.

\begin{proposition}[Explicit communicating and aperiodic family realizing every rational exponent \(>1\)]\label{prop:counterexample}
Fix integers \(m>n\ge 1\). Then there exist:
\begin{itemize}
\item a finite discounted communicating and aperiodic MDP \(M\) with \(|\mathcal{S}|=4\), \(|\mathcal{A}|=2\), rewards in \([0,1]\), and unique deterministic optimal policy \(\pi^\star\),
\item a compact semialgebraic feasible family \(\Theta\) of communicating and aperiodic models,
\item a known reference kernel \(\mu\) such that \(P_M\in G_\mu(L)\) for every \(L>1\), and in fact \(\Theta\subseteq G_\mu(4)\) with $\mu$ uniform,
\item a boundary point \(\omega^\star\in\Omega^\star(M)\),
\item and a feasible transverse direction \(h\), i.e., such that $\omega^\star + t h \in \Omega(M)$ for all sufficiently small $t > 0$
\end{itemize}
such that
\begin{equation}\label{eqn_appendix_dierctional_sharpness_arbitrary}
    J^\star-J_M(\omega^\star+t h)
=
c_{m,n}\, t^{m/(m-n)}+o\bigl(t^{m/(m-n)}\bigr)
\qquad (t\downarrow0),
\end{equation}

where
\[
c_{m,n}
=
\frac43\Bigl(1-\frac{n}{m}\Bigr)
\Bigl(\frac{4n}{m}\Bigr)^{\!\frac{n}{m-n}}
>0.
\]
In particular, every rational exponent \(p>1\) is achievable even within the communicating and aperiodic discounted setting.
\end{proposition}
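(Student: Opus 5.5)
The plan is to make the alternative set a one-parameter curve on which the KL vector is polynomial in the parameter. Minimizing over that parameter then turns $J_M$ along a transverse direction into a Legendre-type transform of a monomial, and such a transform has exactly the exponent $m/(m-n)$. The heuristic is the elementary computation
\[
\sup_{x\ge 0}\bigl(t\,a\,x^{n}-b\,x^{m}\bigr)=\Bigl(1-\tfrac{n}{m}\Bigr)\Bigl(\tfrac{n a}{m b}\Bigr)^{\frac{n}{m-n}}\,a\,t^{\frac{m}{m-n}},
\]
with maximizer $x^\star(t)=(n a t/(m b))^{1/(m-n)}$. This already has the shape of $c_{m,n}$; the factors $\tfrac43$ and $4n/m$ should come out of the specific choice of $a,b$ below. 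I would therefore build $M$ and $\Theta$ so that, for $\omega$ near $\omega^\star$,
\[
J_M(\omega)=\min_{x\in[0,x_0]}\langle \omega,k(x)\rangle,\qquad k(x)=k_0+x^{m}u-x^{n}v+O(x^{m+1}),
\]
where $k(x)$ is the vector of per-pair KL divergences, $\langle\omega^\star,v\rangle=0$, $\langle\omega^\star,u\rangle>0$, and $\langle h,v\rangle>0$.

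\textbf{Step 1 (base MDP).} Take four states and two actions $\{a,b\}$ with full-support transitions, so the chain is communicating and aperiodic and $P_M\in G_\mu(4)$ for uniform $\mu$. Choose rewards in $[0,1]$ so that $\pi^\star$ is deterministic, unique, and has gaps large enough that Assumption~\ref{assumption_w_0_condition} holds.

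\textbf{Step 2 (alternative family).} Define $\Theta$ as the image of a polynomial map $x\mapsto P_x$, $x\in[0,x_0]$, compact and semialgebraic. Perturb the transitions at two designated state--action pairs. At the first pair, move mass at order $x^{m/2}$, so its KL contributes $\asymp x^{m}$ and gives the $u$ term. At the second pair, arrange that the KL term entering $\omega^\star$ with zero weight has a component of order $x^{n}$ along a direction in which $\omega^\star$ is not interior; this gives the $v$ term. Then check that every $P_x$ with $x>0$ flips the optimal action in one fixed state, so that $P_x\in\alt(M)$ and $\Pi^\star$ changes. I would do this by writing $Q^{\pi^\star}_{P_x}-V^{\pi^\star}_{P_x}$ explicitly via the $4\times4$ resolvent.

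\textbf{Step 3 (optimal set and direction).} Since $\Omega(M)$ is a polytope in $\mathbb R^{8}$ and $J_M$ is a minimum of linear functions along the curve, I will identify $\Omega^\star(M)$ directly. The goal is that at $t=0$ the minimum over $x$ is attained at $x=0$ with value $J^\star=\langle\omega^\star,k_0\rangle$. I then choose $h$ in the tangent cone of $\Omega(M)$, satisfying the stationarity equations with $\mathbf 1^\top h=0$ and positivity preserved for small $t$. The requirements on $h$ are $\langle h,k_0\rangle=0$ and $\langle h,v\rangle>0$. Substituting into the heuristic with $a=\langle h,v\rangle$ and $b=\langle\omega^\star,u\rangle$ gives
\[
J^\star-J_M(\omega^\star+th)=\sup_x\bigl(t a x^{n}-b x^{m}\bigr)+o\bigl(t^{m/(m-n)}\bigr).
\]
The cross term $t\langle h,u\rangle x^m$ and the $O(x^{m+1})$ remainder are negligible because $x^\star(t)\to0$. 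I also need $\mathrm{dist}_\infty(\omega^\star+th,\Omega^\star(M))=t$, up to normalizing $\|h\|_\infty=1$. For this, $h$ should be orthogonal in the $\ell_\infty$ sense to the face $\Omega^\star(M)$, which I would verify from its explicit description.

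\textbf{Main obstacle.} The hardest part is Steps 2 and 3 together: showing that the optimum really sits at the boundary point $\omega^\star$. This has two pieces. First, no other alternative in $\Theta$, and no other point of the curve away from $x\approx 0$, becomes binding; I would handle this by a compactness argument giving a uniform gap for $x\ge x_1$. Second, the navigation constraints must actually permit a direction $h$ with $\langle h,k_0\rangle=0$ but $\langle h,v\rangle>0$. My first attempt is to choose the transitions so that the stationarity constraints leave a one-dimensional free direction exchanging mass between the two perturbed pairs. If that fails, I would enlarge $\Theta$ by a second parameter that pins $k_0$ on that face.
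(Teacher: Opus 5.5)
\textbf{Verdict: genuine gap.} Your overall strategy is the paper's: a one-parameter polynomial curve of alternatives, two state--action pairs whose KL terms move like $+x^m$ and $-x^n$, zero optimal weight on the second pair, and a Legendre-type minimisation in $x$ that produces the exponent $m/(m-n)$. The problem is how you realise the $u$-term. It is incompatible with your own neutrality requirement $\langle h,k_0\rangle=0$.

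You move mass at order $x^{m/2}$ at the first pair so that its KL is $\asymp x^m$. For a generic perturbation direction, that scaling is the quadratic behaviour of KL around its minimiser. It therefore requires the alternative to coincide with $P_M$ at that pair when $x=0$, i.e.\ $k_0(\text{pair }1)=0$.

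At the second pair the KL must decrease like $-x^n v$. Since KL is nonnegative, this forces $k_0(\text{pair }2)>0$. Take your first-attempt direction, which exchanges mass between the two perturbed pairs and adds mass to pair 2 so that $\langle h,v\rangle>0$. Then $\langle h,k_0\rangle=h(\text{pair }2)\,k_0(\text{pair }2)>0$. The objective increases at first order in $t$ along $h$, so $\omega^\star$ could not be optimal at all. Your fallback of adding a second parameter to ``pin'' $k_0$ is not an argument.

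The missing idea, which is what the paper does, has two parts:
\begin{itemize}
\item The base alternative must already differ from $M$ at \emph{both} pairs by the \emph{same} KL.
\item The transition probabilities are perturbed linearly at orders $u^m$ and $u^n$, so that each KL moves at first order.
\end{itemize}
Concretely, with every row of $P_M$ equal to the uniform $p$, the paper takes $q_a(u)=(\tfrac38+u^m,\tfrac14,\tfrac14,\tfrac18-u^m)$ at $(d,a)$ and $q_b(u)=(\tfrac18+u^n,\tfrac14,\tfrac14,\tfrac38-u^n)$ at $(d,b)$. The mirror symmetry gives $K_m(0)=L_n(0)=A=\tfrac14\log\tfrac43$. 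The expansions are $K_m(u)=A+\tfrac43u^m+O(u^{2m})$ and $L_n(u)=A-\tfrac43u^n+O(u^{2n})$. With $\omega^\star(d,a)=\tfrac14$ this gives $b=\tfrac13$ and $a=\tfrac43$ in your formula, which is exactly where the stated $c_{m,n}$ comes from. Your sketch never fixes $a$ and $b$, so as written it does not deliver the claimed constant.

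Further points:
\begin{itemize}
\item \textbf{The statement pins down $P_M$.} A uniform $\mu$ together with $P_M\in G_\mu(L)$ for every $L>1$ forces $P_M=\mu$, so every row is uniform. Your Step 1 (generic full-support transitions with $P_M\in G_\mu(4)$) and your plan to choose transitions that shape the stationarity constraints do not respect this. The constraint actually helps. It gives $\Omega(M)=\{\omega:\omega(s)=\tfrac14\ \forall s\}$, so placing both perturbed pairs at one state $d$ makes $h=e_{(d,b)}-e_{(d,a)}$ feasible for free. Since actions in $M$ differ only through rewards, choosing $r$ so that every $M_u$ uniquely prefers $b$ at $d$ reduces to a $4\times 4$ Bellman computation at $u=0$ plus continuity.
\item \textbf{Assumption~\ref{assumption_w_0_condition} is not needed.} It is not part of the statement, and the paper's example does not satisfy it, so you need not impose it.
\item \textbf{Your remainder $O(x^{m+1})$ is not correct in general.} The pair-2 KL has $O(x^{2n})$ corrections, and $2n$ can be smaller than $m+1$. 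What saves the expansion is that these terms carry weight $\omega(\text{pair }2)=t$, and $t\,x^{2n}=o(t^{m/(m-n)})$ at the relevant scale $x\asymp t^{1/(m-n)}$.
\end{itemize}
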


\begin{proof}
We divide the proof into several steps.

\smallskip
\noindent\textbf{Step 1: the true MDP and the reward normalization.}
Let
\[
\mathcal{S}=\{x,y,d,g\},
\qquad
\mathcal{A}=\{a,b\},
\qquad
\gamma=\frac12.
\]
Define the true transition kernel by
\[
P_M(\cdot\mid s,a)=P_M(\cdot\mid s,b)=p
\qquad \forall s\in\mathcal{S},
\]
where
\[
p:=\Bigl(\frac14,\frac14,\frac14,\frac14\Bigr)
\]
in the state order \((x,y,d,g)\). Thus every row of every transition matrix equals \(p\). In particular, under every deterministic policy the induced state chain has strictly positive transition probabilities, so \(M\) is communicating and aperiodic.

The actual reward function used in the proposition is
\[
r(x,a)=r(y,a)=\frac23,
\qquad
r(d,a)=\frac{41}{60},
\qquad
r(g,a)=1,
\]
and
\[
r(x,b)=r(y,b)=r(g,b)=0,
\qquad
r(d,b)=\frac23.
\]
Hence \(r(s,\alpha)\in[0,1]\) for every state-action pair. For algebraic convenience, introduce the auxiliary reward
\[
\bar r:=3r-2,
\]
so that
\[
\bar r(x,a)=\bar r(y,a)=0,
\qquad
\bar r(d,a)=\frac{1}{20},
\qquad
\bar r(g,a)=1,
\]
and
\[
\bar r(x,b)=\bar r(y,b)=\bar r(g,b)=-2,
\qquad
\bar r(d,b)=0.
\]
The deterministic optimal-policy sets under \(r\) and \(\bar r\) coincide for the true model and for every alternative considered below. Therefore all optimality calculations may be carried out with \(\bar r\).

Under \(M\), the transition kernel is independent of the action, so action comparisons are determined entirely by immediate rewards. Since
\[
\bar r(s,a)-\bar r(s,b)>0
\qquad \forall s\in\mathcal{S},
\]
it follows that the policy
\[
\pi^\star(s)\equiv a
\qquad (s\in\mathcal{S})
\]
is the unique deterministic optimal policy of \(M\). The same is true for the actual reward function \(r\in[0,1]\).

\smallskip
\noindent\textbf{Step 2: the alternative family and the \(G_\mu(L)\) condition.}
Fix \(\bar u\in(0,1/16]\). For \(u\in[0,\bar u]\), define two probability vectors
\[
q_a(u):=\Bigl(\frac38+u^m,\frac14,\frac14,\frac18-u^m\Bigr),
\qquad
q_b(u):=\Bigl(\frac18+u^n,\frac14,\frac14,\frac38-u^n\Bigr),
\]
again in the state order \((x,y,d,g)\). Because \(\bar u\le 1/16\), all entries of \(q_a(u)\) and \(q_b(u)\) belong to \([1/16,7/16]\), in particular they are strictly positive.

Now define \(M_u\) by modifying only the two transitions at state \(d\):
\[
P_{M_u}(\cdot\mid d,a)=q_a(u),
\qquad
P_{M_u}(\cdot\mid d,b)=q_b(u),
\]
while for every other state-action pair \((s,\alpha)\neq(d,a),(d,b)\),
\[
P_{M_u}(\cdot\mid s,\alpha)=p.
\]
Set
\[
\Theta:=\{M\}\cup\{M_u:0\le u\le \bar u\}.
\]
 The family \(\Theta\) is compact, and because it is parameterized polynomially by \(u\), it is semialgebraic as a subset of the product of simplices. Since every transition probability of every \(M_u\) is strictly positive, every \(M_u\) is also communicating and aperiodic.

Define the reference kernel \(\mu\) by
\[
\mu(\cdot\mid s,\alpha)=p
\qquad \forall (s,\alpha)\in\mathcal{S}\times\mathcal{A}.
\]
Then \(P_M=\mu\), so \(P_M\in G_\mu(L)\) for every \(L>1\). Moreover, because every entry of every transition row in the family belongs to \([1/16,7/16]\), one has for every \(M_u\in\Theta\) and every state-action-state triple \((s,\alpha,s')\),
\[
\frac{\mu(s'\mid s,\alpha)}{P_{M_u}(s'\mid s,\alpha)}
\in
\left[\frac{1/4}{7/16},\frac{1/4}{1/16}\right]
=
\left[\frac47,4\right]
\subseteq [1/4,4].
\]
Hence \(\Theta\subseteq G_\mu(4)\).

\smallskip
\noindent\textbf{Step 3: the alternative policy is strictly optimal for all small \(u\).}
Define the deterministic policy
\[
\pi_u(x)=a,\qquad
\pi_u(y)=a,\qquad
\pi_u(d)=b,\qquad
\pi_u(g)=a.
\]
We first verify strict optimality at \(u=0\). Under \(M_0\), the policy \(\pi_0\) has value vector
\[
V_{M_0}^{\pi_0}
=
\Bigl(\frac{17}{64},\frac{17}{64},\frac{21}{64},\frac{81}{64}\Bigr),
\]
which is obtained by solving the Bellman system
\[
V(x)=\frac12\cdot\frac{V(x)+V(y)+V(d)+V(g)}{4},
\]
\[
V(y)=\frac12\cdot\frac{V(x)+V(y)+V(d)+V(g)}{4},
\]
\[
V(d)=\frac12\Bigl(\frac18V(x)+\frac14V(y)+\frac14V(d)+\frac38V(g)\Bigr),
\]
\[
V(g)=1+\frac12\cdot\frac{V(x)+V(y)+V(d)+V(g)}{4}.
\]
Now compare actions state by state.

At \(x\) and \(y\), action \(b\) has the same transition kernel as action \(a\) but a reward smaller by \(2\), so
\[
Q_{M_0}^{\pi_0}(x,a)-Q_{M_0}^{\pi_0}(x,b)=2,
\qquad
Q_{M_0}^{\pi_0}(y,a)-Q_{M_0}^{\pi_0}(y,b)=2.
\]
At \(g\), action \(b\) has the same transition kernel as \(a\) but a reward smaller by \(3\), so
\[
Q_{M_0}^{\pi_0}(g,a)-Q_{M_0}^{\pi_0}(g,b)=3.
\]
At \(d\), one computes
\[
Q_{M_0}^{\pi_0}(d,b)
=
\frac12\Bigl(\frac18\cdot\frac{17}{64}+\frac14\cdot\frac{17}{64}
+\frac14\cdot\frac{21}{64}+\frac38\cdot\frac{81}{64}\Bigr)
=
\frac{21}{64},
\]
whereas
\[
Q_{M_0}^{\pi_0}(d,a)
=
\frac{1}{20}
+\frac12\Bigl(\frac38\cdot\frac{17}{64}+\frac14\cdot\frac{17}{64}
+\frac14\cdot\frac{21}{64}+\frac18\cdot\frac{81}{64}\Bigr)
=
\frac{81}{320}.
\]
Hence
\[
Q_{M_0}^{\pi_0}(d,b)-Q_{M_0}^{\pi_0}(d,a)=\frac{3}{40}>0.
\]
Thus \(\pi_0\) is the unique deterministic optimal policy of \(M_0\).

All Bellman action values for the auxiliary reward \(\bar r\) depend continuously on \(u\). Therefore, after possibly shrinking \(\bar u>0\), the same strict inequalities remain valid for every \(u\in[0,\bar u]\). Consequently, for all \(u\in[0,\bar u]\),
\[
\Pi^\star(M_u)=\{\pi_u\},
\qquad
\pi^\star\notin\Pi^\star(M_u),
\]
first for the auxiliary reward \(\bar r\), and therefore also for the actual reward \(r\in[0,1]\). 
In particular,
\[
\alt(M)=\{M_u:0\le u\le \bar u\}
\]
after replacing \(\Theta\) by the smaller compact family \(\{M\}\cup\{M_u:0\le u\le \bar u\}\).

\smallskip
\noindent\textbf{Step 4: the navigation polytope.}
Because every row of \(P_M\) equals \(p\), the balance equations become
\[
\omega(s)=\sum_{s',\alpha} p(s)\,\omega(s',\alpha)=p(s)\sum_{s',\alpha}\omega(s',\alpha)=p(s)
\qquad \forall s\in\mathcal{S}.
\]
Since \(\omega\in\Delta(\mathcal{S}\times\mathcal{A})\), this is equivalent to
\[
\omega(x)=\omega(y)=\omega(d)=\omega(g)=\frac14.
\]
Hence
\[
\Omega(M)
=
\Bigl\{\omega\in\Delta(\mathcal{S}\times\mathcal{A}):
\omega(x)=\omega(y)=\omega(d)=\omega(g)=\frac14\Bigr\}.
\]

\smallskip
\noindent\textbf{Step 5: the objective and the optimal-allocation set.}
For \(u\in[0,\bar u]\), define
\[
K_m(u):=\KL\bigl(p\,\|\,q_a(u)\bigr),
\qquad
L_n(u):=\KL\bigl(p\,\|\,q_b(u)\bigr).
\]
Since every state-action pair other than \((d,a)\) and \((d,b)\) has the same transition under \(M\) and \(M_u\), one has, for every \(\omega\in\Omega(M)\),
\[
J_M(\omega)
=
\inf_{u\in[0,\bar u]}
\Bigl[
\omega(d,a)K_m(u)+\omega(d,b)L_n(u)
\Bigr].
\]
Because \(\omega(d,a)+\omega(d,b)=1/4\) for every \(\omega\in\Omega(M)\), this is a one-dimensional action-allocation problem at state \(d\).

A direct computation gives
\[
K_m(0)=L_n(0)=:A=\frac14\log\frac{4}{3}.
\]
Moreover, for \(u>0\),
\[
K_m(u)>A>L_n(u).
\]
If \(\omega(d,b)=0\), then \(\omega(d,a)=1/4\) and
\[
J_M(\omega)=\inf_{u\in[0,\bar u]}\frac14 K_m(u)=\frac14 A.
\]
If \(\omega(d,b)>0\), then choosing \(u>0\) small enough yields
\[
\omega(d,a)K_m(u)+\omega(d,b)L_n(u)<\frac14 A,
\]
because the negative \(u^n\) variation in \(L_n(u)\) dominates the positive \(u^m\) variation in \(K_m(u)\) as \(u\downarrow0\). Hence
\[
J_M(\omega)<\frac14 A
\qquad \text{whenever }\omega(d,b)>0.
\]
Therefore
\[
J^\star=\frac14 A
\]
and
\[
\Omega^\star(M)
=
\{\omega\in\Omega(M):\omega(d,b)=0,\ \omega(d,a)=1/4\}.
\]
Choose
\[
\omega^\star(s,a)=\frac14,\qquad
\omega^\star(s,b)=0
\qquad (s\in\mathcal{S}).
\]
Then \(\omega^\star\in\Omega^\star(M)\), and it is a boundary point of \(\Omega^\star(M)\).

\smallskip
\noindent\textbf{Step 6: the transverse direction.}
Let
\[
h:=e_{(d,b)}-e_{(d,a)}.
\]
Because \(h\) preserves the state marginal at \(d\) and leaves all other state marginals unchanged, it is a feasible direction at \(\omega^\star\). It is not strongly tangent to \(\Omega^\star(M)\), since
\[
(\omega^\star+t h)(d,b)=t>0
\qquad (t>0),
\]
whereas every point of \(\Omega^\star(M)\) satisfies \(\omega(d,b)=0\). Thus \(h\) is transverse.

For \(t\in(0,1/4)\), write
\[
\omega_t:=\omega^\star+t h.
\]
Then
\[
\omega_t(d,a)=\frac14-t,
\qquad
\omega_t(d,b)=t,
\]
and therefore
\[
J_M(\omega_t)=\inf_{u\in[0,\bar u]}F_t(u),
\qquad
F_t(u):=\Bigl(\frac14-t\Bigr)K_m(u)+tL_n(u).
\]

Note also that the infinity norm distance between $\omega_t$ and $\Omega^*(M)$ is $t$ in this case. Namely $\mathrm{dist}_{\infty}(\omega_t,\Omega^*(M)) = t$. Hence, the final bounds on the curvature can be related to our definition of sharpness in Assumption (A4) of Lemma \ref{lem:corrected_selector_stability}.

\smallskip
\noindent\textbf{Step 7: asymptotic expansions of the branch costs.}
Because
\[
q_a(u)=\Bigl(\frac38+u^m,\frac14,\frac14,\frac18-u^m\Bigr),
\qquad
q_b(u)=\Bigl(\frac18+u^n,\frac14,\frac14,\frac38-u^n\Bigr),
\]
one computes
\[
K_m(u)
=
\frac14\log\frac{(1/4)^2}{(3/8+u^m)(1/8-u^m)},
\qquad
L_n(u)
=
\frac14\log\frac{(1/4)^2}{(1/8+u^n)(3/8-u^n)}.
\]
Expanding at \(u=0\) gives
\[
K_m(u)=A+\frac43 u^m + O(u^{2m}),
\qquad
L_n(u)=A-\frac43 u^n + O(u^{2n}).
\]
Hence
\[
F_t(u)
=
\frac{A}{4}
+\frac13 u^m
-\frac43 t u^n
+R_t(u),
\]
where
\[
|R_t(u)|
\le
C\bigl(tu^m+u^{2m}+t u^{2n}\bigr)
\]
for all sufficiently small \(u,t\).

\smallskip
\noindent\textbf{Step 8: localization of the minimizer.}
Set
\[
s:=\frac{1}{m-n},
\qquad
p^\sharp:=\frac{m}{m-n}.
\]
We claim that any minimizer \(u_t\) of \(F_t\) over \([0,\bar u]\) satisfies
\[
u_t=O(t^s).
\]
To see this, choose a constant \(M>0\) such that \(M^{m-n}>4\). If \(u\ge M t^s\), then
\[
\frac13 u^m-\frac43 t u^n
=
u^n\Bigl(\frac13 u^{m-n}-\frac43 t\Bigr)
\ge 0.
\]
For such \(u\), the remainder \(R_t(u)\) is of strictly smaller order than \(u^m\), so in fact
\[
F_t(u)\ge \frac{A}{4}
\]
for all sufficiently small \(t\). On the other hand, at the trial point \(u=t^s\),
\[
F_t(t^s)=\frac{A}{4}+t^{p^\sharp}\Bigl(\frac13-\frac43+o(1)\Bigr)
<\frac{A}{4}
\]
for small \(t\). Hence no minimizer can satisfy \(u_t\ge M t^s\), which proves the claim.

\smallskip
\noindent\textbf{Step 9: scaling and the sharpness exponent.}
Set \(u=t^s v\). Since \(u_t=O(t^s)\), it suffices to analyze bounded \(v\). Uniformly for \(v\) in bounded sets,
\[
F_t(t^s v)
=
\frac{A}{4}
+t^{p^\sharp}
\Bigl[
\frac13 v^m-\frac43 v^n+o(1)
\Bigr].
\]
Define
\[
\psi(v):=\frac13 v^m-\frac43 v^n.
\]
This polynomial has a unique minimizer at
\[
v_\star=\Bigl(\frac{4n}{m}\Bigr)^{1/(m-n)},
\]
and
\[
\psi(v_\star)
=
-\frac43\Bigl(1-\frac{n}{m}\Bigr)
\Bigl(\frac{4n}{m}\Bigr)^{\frac{n}{m-n}}
=:-c_{m,n}<0.
\]
Therefore,
\[
\inf_{u\in[0,\bar u]}F_t(u)
=
\frac{A}{4}
-c_{m,n}\, t^{p^\sharp}
+o(t^{p^\sharp}).
\]
Since \(J^\star=A/4\), this gives
\[
J^\star-J_M(\omega^\star+t h)
=
c_{m,n}\, t^{m/(m-n)}
+o\bigl(t^{m/(m-n)}\bigr),
\]
where
\[
c_{m,n}
=
\frac43\Bigl(1-\frac{n}{m}\Bigr)
\Bigl(\frac{4n}{m}\Bigr)^{\frac{n}{m-n}}
>0.
\]

\end{proof}

\subsection{Empirical Verification of Sharpness Results.}

\paragraph{Construction of Figure~\ref{fig:main_figure}.}
For each pair of integers $(m,n)$ shown in the figure, we instantiate the MDP family from Theorem~\ref{theorem: arbitrary_sharpness}. 
Let $\Omega^\star(M)$ denote the set of optimal allocations and let $\omega^\star\in\Omega^\star(M)$ be the reference optimal allocation used in the construction. 
We then consider the one-dimensional transverse path
\[
    \omega_t = \omega^\star + t\bigl(e_{(d,b)} - e_{(d,a)}\bigr),
\]
which moves mass from the optimal state-action pair $(d,a)$ to the suboptimal state-action pair $(d,b)$ while remaining feasible for sufficiently small $t>0$. 
Ass described in Appendix \ref{appendix_subsection_arbitrary_sharpness}For each value of $t$, we compute the objective value
\[
    J(\omega_t)
    =
    \inf_{u\in[0,1/16]}
    \sum_{s,a} \omega_t(s,a)
    \mathrm{KL}\!\left(P(\cdot\mid s,a)\,\|\,P_u(\cdot\mid s,a)\right),
\]
where $P_u$ ranges over the one-parameter family of alternatives used in the proof. 
The scalar minimization over $u$ is solved numerically using the reduced one-dimensional objective from the construction. 
We then plot the resulting gaps $J^\star - J(\omega_t)$ against $t$ on log--log axes and fit a line to each curve in log space. 
The fitted slope estimates the exponent in
\[
    J^\star - J(\omega_t)
    \asymp
    t^{m/(m-n)} ,
\]
so the reciprocal fitted slope estimates the polynomial sharpness exponent of $\varphi$, equal to $(m-n)/m$.

All experiments were conducted on a cluster equipped with NVIDIA RTX 6000 Ada Generation GPUs (48 GB VRAM each).

\newpage
\section{Other results}\label{lebesgue_measure_zero_appendix_section}

Another important property is that the set of MDPs with multiple optimal policies has null Lebesgue measure.
To prove this result, we employ  the following lemma.
\begin{lemma}{Rational structure of policy values}{}
Consider the set of tabular MDPs with fixed reward vector $r\in [0,1]^{\mathcal S\times \mathcal A}$ and $\gamma\in [0,1)$. Let $M$ be an MDP identified by $P\in \Delta(\mathcal S)^{\mathcal S\times \mathcal A}$. For any two policies $(\pi,\pi')$, for any $s\in \mathcal S$ 
the map
\[
g_M^{\pi,\pi'}(s)\coloneqq V_M^\pi(s)-V_M^{\pi'}(s),
\]
is a real rational function.
\end{lemma}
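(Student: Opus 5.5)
The proof rests on the linear Bellman system that characterizes $V^\pi_M$, followed by Cramer's rule. For any (possibly stochastic) Markov policy $\pi$, let $P^\pi\in\mathbb{R}^{S\times S}$ be the state transition matrix with entries $P^\pi(s'|s)=\sum_a \pi(a|s)P(s'|s,a)$, and let $r^\pi(s)=\sum_a\pi(a|s)r(s,a)$. Both are affine in the entries of $P$, since $\pi$ and $r$ are fixed. The value function is the unique solution of
\[
(I-\gamma P^\pi)\,V^\pi_M = r^\pi .
\]
We first check that this system is uniquely solvable for every $P$ in the (closed) product of simplices. $P^\pi$ is row-stochastic, so $\|\gamma P^\pi\|_\infty=\gamma<1$. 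Hence the spectral radius of $\gamma P^\pi$ is below one and $I-\gamma P^\pi$ is invertible, giving $\det(I-\gamma P^\pi)\neq 0$ on the whole domain.

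Next we apply Cramer's rule: $V^\pi_M(s)=\det(B_s^\pi)/\det(I-\gamma P^\pi)$. Here $B_s^\pi$ is $I-\gamma P^\pi$ with column $s$ replaced by $r^\pi$. Every entry of both matrices is a polynomial (indeed affine) in the coordinates of $P$. Therefore the numerator and denominator are polynomials in $P$ of degree at most $S$, with real coefficients depending only on $\gamma$, $r$ and $\pi$. So $V^\pi_M(s)$ is a rational function of $P$ whose denominator does not vanish on the domain.

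Finally, $g^{\pi,\pi'}_M(s)$ is a difference of two such rational functions. Writing it over the common denominator $\det(I-\gamma P^\pi)\det(I-\gamma P^{\pi'})$ gives a quotient of polynomials with nonvanishing denominator, which is the claim.

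There is no real obstacle here. The only point needing care is stating precisely in which variables the function is rational: the $SA(S-1)$ free coordinates of $P$, or all $S^2A$ coordinates restricted to the simplex; either works since the simplex constraints are affine. We also want the nonvanishing of the denominator to hold on all of $\Delta(\mathcal S)^{\mathcal S\times\mathcal A}$, not just generically, so that the later Lebesgue-measure argument can use real-analyticity on an open neighbourhood of the simplex product. The row-sum argument above extends to a small open neighbourhood by continuity of the determinant.
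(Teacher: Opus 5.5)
Your proposal is correct and follows essentially the same route as the paper: write $V^\pi_M=(I-\gamma P_\pi)^{-1}r_\pi$, note that $\det(I-\gamma P_\pi)\neq 0$ because $\gamma<1$, and apply Cramer's rule (the paper phrases it through the adjugate) to see that each $V^\pi_M(s)$ is a ratio of polynomials in $P$. Your explicit row-sum argument for invertibility, the common-denominator step for $g^{\pi,\pi'}_M(s)$, and your remark on which coordinates of $P$ to use fill in details that the paper leaves implicit.
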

\begin{proof}
Since $V_M^\pi=(I-\gamma P_\pi)^{-1} r_\pi$, by Cramer’s rule,
$(I-\gamma P_\pi)^{-1}=\operatorname{adj}(I-\gamma P_\pi)/\det(I-\gamma P_\pi)$, with $\det(I-\gamma P_\pi)\neq 0$ since $\gamma\in [0,1)$.
Entries of $\operatorname{adj}(\cdot)$ and the determinant are polynomials in the entries of $P_\pi$,
hence $V_M^\pi$ is componentwise a ratio of polynomials in $(P,r)$ for every $s$.
\end{proof}
We also need the following useful result, showing that optimal policies do not yield constant reward vectors if $\max_a r(s,a)$ varies across states.
\begin{lemma}{Constant-reward policies cannot be optimal}{}
    Let $m(s):=\max_a r(s,a)$ and assume $\exists s,s': m(s) \neq m(s')$. If a deterministic policy $\pi$ has $r_\pi(s)\equiv c$ for all $s\in \mathcal S$, then $\pi$ is not optimal for any transition kernel $P$.
\end{lemma}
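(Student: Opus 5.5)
Fix a deterministic policy $\pi$ with $r_\pi(s)=r(s,\pi(s))\equiv c$ for all $s$. I will argue directly via the Bellman optimality characterization; the transition kernel plays almost no role. Since $r_\pi\equiv c$ is constant and every row of $P_\pi$ sums to one, the value solves $V^\pi = c\mathbf 1+\gamma P_\pi V^\pi$. Its unique solution is $V^\pi(s)=c/(1-\gamma)$ for every $s$, whatever $P$ is. Uniqueness comes from $I-\gamma P_\pi$ being invertible, as in the rational-structure lemma above.

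Next, exhibit a state where a one-step deviation strictly improves on $\pi$. By hypothesis $m(\cdot)$ is not constant. Hence there is a state $\bar s$ with $m(\bar s)>c$: if every state had $m(s)\le c$, then, since $m(s)\ge r(s,\pi(s))=c$, we would get $m\equiv c$, a contradiction. Pick $\bar a\in\argmax_a r(\bar s,a)$, so $r(\bar s,\bar a)=m(\bar s)>c$. Then
\[
Q^\pi(\bar s,\bar a)=r(\bar s,\bar a)+\gamma\sum_{s'}P(s'\mid\bar s,\bar a)\frac{c}{1-\gamma}
=m(\bar s)+\frac{\gamma c}{1-\gamma}
>c+\frac{\gamma c}{1-\gamma}=V^\pi(\bar s).
\]
This uses only that $P(\cdot\mid\bar s,\bar a)$ is a probability vector, so the bound holds for every $P$.

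Finally, invoke the policy improvement theorem, or equivalently the characterization used in the proof of Lemma~\ref{aux_lemma_pi_u_unique}: a policy is optimal iff $V^\pi(s)=\max_a Q^\pi(s,a)$ for all $s$. The strict inequality at $\bar s$ shows $\pi$ violates this. Concretely, the policy $\pi'$ that agrees with $\pi$ except $\pi'(\bar s)=\bar a$ satisfies $V^{\pi'}\ge V^\pi$ with strict inequality at $\bar s$. So $\pi$ is not optimal for any $P$.

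The only subtle step is the existence of $\bar s$ with $m(\bar s)>c$. It follows from $m(s)\ge r(s,\pi(s))=c$ for all $s$ together with the non-constancy of $m$. Everything else is routine.
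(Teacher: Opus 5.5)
Your proof is correct and is essentially the paper's argument. Both compute $V^\pi=\frac{c}{1-\gamma}\mathbf 1$ and compare $Q^\pi(s,b)=r(s,b)+\frac{\gamma c}{1-\gamma}$ with $V^\pi(s)$. The paper argues by contradiction: optimality would force $c\ge m(s)\ge r(s,\pi(s))=c$ for all $s$, so $m$ would be constant. You run the contrapositive directly, exhibiting $\bar s$ with $m(\bar s)>c$ and a strictly improving one-step deviation there.
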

\begin{proof}
    If $r_\pi =c\mathbf{1}$ for some $c\in [0,1]$, then for any $P$ we have $V_M^\pi=\frac{c}{1-\gamma}\mathbf 1$.
    For any $s$ and $b$,
    \[
    Q_M^\pi(s,b)=r(s,b)+\gamma \frac{c}{1-\gamma},\qquad
    V_M^\pi(s)=\frac{c}{1-\gamma}.
    \]
    Optimality of $\pi$ would require $V_M^\pi(s)\geq Q_M^\pi(s,b)$ for all $(s,b)$, i.e.
    \[
    0\leq \frac{c}{1-\gamma}-r(s,b)-\gamma \frac{c}{1-\gamma} =  c -r(s,b) \Rightarrow c\geq r(s,b)
    \]
    for all $(s,b)$, implying $c\geq m(s)$ for all $s$. But $c=r_\pi(s) =r(s,\pi(s)) \leq m(s)$ for all $s$, implying  we must have $c=m(s)$ for all $s$, which is however a contradiction since there exist a pair $(s,s')$ s.t. $m(s)\neq m(s')$.
\end{proof}
Next, we show that $g^\pi$ is non-trivial under some assumptions.
\begin{lemma}{Non-triviality of $g$}\label{lem:switch-zero-iff}
    Fix deterministic $\pi$, a state $s$, and $a\neq \pi(s)$. Define
    \[
    g_{s,a}^\pi(M):=Q^\pi_M(s,\pi(s))-Q^\pi_M(s,a).
    \]
    Then $g_{s,a}^\pi\equiv 0$ on $\Delta(\mathcal S)^{\mathcal S\times \mathcal A}$ if and only if $r(s,\pi(s))=r(s,a)$ and $r_\pi$ is constant across states.
\end{lemma}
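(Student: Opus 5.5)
The plan is to write $g_{s,a}^\pi$ explicitly and use the fact that the transition row $P(\cdot\mid s,a)$ with $a\neq\pi(s)$ is a free variable that does not enter $V^\pi_M$. By the Bellman equations,
\[
g_{s,a}^\pi(M)=r(s,\pi(s))-r(s,a)+\gamma\bigl(P(\cdot\mid s,\pi(s))-P(\cdot\mid s,a)\bigr)^\top V_M^\pi ,
\]
and $V_M^\pi=(I-\gamma P_\pi)^{-1}r_\pi$ depends on $P$ only through the rows $P(\cdot\mid s',\pi(s'))$, $s'\in\mathcal S$. The key structural observation is that $\Delta(\mathcal S)^{\mathcal S\times\mathcal A}$ is a product of simplices. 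Hence we may fix every row used by $\pi$ and vary $P(\cdot\mid s,a)$ independently over $\Delta(\mathcal S)$; this leaves $V_M^\pi$ and $P(\cdot\mid s,\pi(s))$ unchanged.

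\emph{Sufficiency.} Suppose $r_\pi\equiv c\mathbf 1$. Then for every kernel $P$ we have $V_M^\pi=\frac{c}{1-\gamma}\mathbf 1$, since $P_\pi$ is row-stochastic. The inner product of the difference of two probability vectors with a constant vector is zero. So $g_{s,a}^\pi(M)=r(s,\pi(s))-r(s,a)$, which vanishes by the reward equality.

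\emph{Necessity.} Assume $g_{s,a}^\pi\equiv0$. Fix an arbitrary kernel $P$. Replace the row at $(s,a)$ by the Dirac masses $\delta_{s'}$ and $\delta_{s''}$ for two states $s',s''$, keeping everything else fixed. Subtracting the two resulting identities $g=0$ gives $\gamma\bigl(V_M^\pi(s')-V_M^\pi(s'')\bigr)=0$. For $\gamma>0$ this makes $V_M^\pi=v\mathbf 1$ constant. Then $r_\pi=(I-\gamma P_\pi)V_M^\pi=(1-\gamma)v\mathbf 1$ is constant across states. This is a statement about $r$ alone, so it holds independently of $P$. Substituting a constant $V_M^\pi$ back into $g=0$ kills the inner-product term and yields $r(s,\pi(s))=r(s,a)$.

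The main point to handle carefully is the independence claim: $V^\pi_M$ must not depend on the row $P(\cdot\mid s,a)$. This holds precisely because $a\neq\pi(s)$ and $\pi$ is deterministic.

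Two degenerate cases need a remark. If $\gamma=0$, then $g=r(s,\pi(s))-r(s,a)$, and the vanishing of $g$ does not force $r_\pi$ to be constant. The argument therefore uses $\gamma\in(0,1)$, the setting of the paper, and I would note this restriction explicitly. If $|\mathcal S|=1$, constancy of $r_\pi$ is trivial and the statement reduces to the reward equality.
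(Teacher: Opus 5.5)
Your proof is correct and follows the paper's argument essentially step for step. You expand $g$ via the Bellman equation, substitute Dirac rows for the free row $P(\cdot\mid s,a)$ to force $V^\pi_M$ to be constant, then read off constancy of $r_\pi$ and the reward equality, and you get sufficiency from $(p-q)^\top\mathbf 1=0$. Your remark that necessity genuinely requires $\gamma>0$ (it fails at $\gamma=0$) is a valid caveat that the paper's proof leaves implicit.
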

\begin{proof}
    Write 
     \[g_{s,a}^\pi(M)= [r(s,\pi(s))-r(s,a)] + \gamma(p-q)^\top V^\pi(P)\]
      for some MDP with transition $P$ and we write
    $p=P(\cdot|s,\pi(s))$, $q=P(\cdot|s,a)$.
    
    \paragraph{Sufficiency.} If $r(s,\pi(s))=r(s,a)$ and $r_\pi=c\mathbf{1}$ for some $c\in [0,1]$, then
    \[
    g_{s,a}^\pi(M)= c\gamma(p-q)^\top (I-\gamma P_\pi)^{-1} \mathbf{1}
    \]
    Since $\mathbf{1}$ is a right eigenvector of $(I-\gamma P_\pi)^{-1}$ satisfying $(I-\gamma P_\pi)^{-1}\mathbf{1}=\frac{1}{1-\gamma}\mathbf{1}$, we have that
    \[
    g_{s,a}^\pi(M)= c\gamma(p-q)^\top \mathbf{1}=0.
    \]
    which proves sufficiency.
    
    \paragraph{Necessity.} Suppose $g_{s,a}^\pi(M)\equiv 0$ on $P\in \Delta(\mathcal S)^{\mathcal S\times \mathcal A}$. For each $P$ we obtain $P_\pi$. Treat $P_\pi$ as a fixed variable, while $q$ is a degree of freedom. Then, we must have that
    \[0 = \sup_q \left| [r(s,\pi(s))-r(s,a)] + \gamma(p-q)^\top V^\pi(P)\right|.\]
    Since for any $q$ we obtain $0$, consider $q=e_i$ and $q=e_j$ for $i,j$ taking value in an enumeration of $\mathcal S$. Then
    \[
     [r(s,\pi(s))-r(s,a)] + \gamma(p-e_i)^\top V^\pi(P)= [r(s,\pi(s))-r(s,a)] + \gamma(p-e_j)^\top V^\pi(P)
    \]
    implying that $(e_j-e_i)^\top V^\pi(P)=0$. Since $i,j$ are allowed to vary, this implies that $V_P^\pi = c\mathbf{1}$ for some $c\in \mathbb{R}$.
    Therefore
     \[g_{s,a}^\pi(M)= [r(s,\pi(s))-r(s,a)] + \gamma(p-q)^\top c\mathbf{1}= r(s,\pi(s))-r(s,a),\]
     implying that $r(s,\pi(s))=r(s,a)$ if $g_{s,a}^\pi(M)= 0$.
     
     Now, for any fixed $q$ consider varying $P$ so that $P_\pi$ varies. Since $V_P^\pi = c\mathbf{1}=(I-\gamma P_\pi)^{-1}r_\pi$, and $\mathbf{1}$ is the right eigenvector of $(I-\gamma P_\pi)^{-1}$, we have that $r_\pi$ itself is proportional to $\mathbf{1}$. Thus we conclude that $r_\pi$ must be constant across states.
\end{proof}
\begin{lemma}{Switch hypersurfaces have null measure}{}
    Let $m(s)=\max_a r(s,a)$, $Z_{s,a}^\pi\coloneqq\{P\in \Delta(\mathcal S)^{\mathcal S\times \mathcal A}: g_{s,a}^\pi(P)=0\}$ and 
    $\mathcal U^\pi\coloneqq \{P\in \Delta(\mathcal S)^{\mathcal S\times \mathcal A}: \pi \text{ is optimal at } (P,r,\gamma)\}$.
    
 If $\exists s,s': m(s)\neq m(s')$ , then for every deterministic $\pi$ and every $a\neq \pi(s)$, the set
    $\mathcal U_\pi\cap Z_{s,a}^\pi$ has Lebesgue measure $0$.
\end{lemma}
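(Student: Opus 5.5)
Since $\mathcal U_\pi\cap Z^\pi_{s,a}\subseteq Z^\pi_{s,a}$, it suffices to show that $Z^\pi_{s,a}$ itself is Lebesgue-null in the product of simplices $\Delta(\mathcal S)^{\mathcal S\times\mathcal A}$. Identify this product with a full-dimensional open subset of $\mathbb R^{SA(S-1)}$ by dropping one coordinate per row. By the rational-structure lemma, $g^\pi_{s,a}(P)=Q^\pi_P(s,\pi(s))-Q^\pi_P(s,a)$ is a rational function $N(P)/D(P)$. Here $D(P)=\det(I-\gamma P_\pi)$, which is nonzero on the whole simplex product because $\gamma<1$. Hence
\[
Z^\pi_{s,a}=\{P: N(P)=0\}
\]
is the zero set of a polynomial restricted to an open set. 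The standard fact then finishes the argument: the zero set of a polynomial that is not identically zero has Lebesgue measure zero. This can be proved by induction on the dimension together with Fubini.

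The key step is therefore to rule out $N\equiv0$, i.e.\ $g^\pi_{s,a}\equiv 0$ on the simplex product. A polynomial vanishing on a nonempty open set vanishes identically, so the two conditions are equivalent. By Lemma~\ref{lem:switch-zero-iff}, $g^\pi_{s,a}\equiv0$ holds if and only if $r(s,\pi(s))=r(s,a)$ and $r_\pi$ is constant across states. Split into cases.

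\emph{Case 1: $r_\pi$ is not constant.} Then $g^\pi_{s,a}\not\equiv0$ and $Z^\pi_{s,a}$ is null directly.

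\emph{Case 2: $r_\pi\equiv c\mathbf 1$.} By the constant-reward lemma, under the hypothesis that $m(s)\neq m(s')$ for some $s,s'$, the policy $\pi$ is optimal for no kernel $P$. Hence $\mathcal U_\pi=\emptyset$, and the intersection is empty.

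In both cases $\mathcal U_\pi\cap Z^\pi_{s,a}$ is null, which proves the claim.

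I expect the main obstacle to be the measure-theoretic bookkeeping rather than any new idea. First, the zero-set fact has to be applied in the affine chart of the product of simplices, and one should check that $N$ remains a nonzero polynomial after eliminating the dependent coordinates. Second, a boundary subtlety needs care: Lemma~\ref{lem:switch-zero-iff} is stated on the closed product of simplices. Non-vanishing somewhere on the closed set still implies $N$ is a nonzero polynomial, and hence non-vanishing on a dense open subset of the interior. Since the boundary of the product is itself null, this is enough.
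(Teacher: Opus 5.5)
Your proof is correct and is essentially the paper's argument. The paper splits on whether $g^\pi_{s,a}\equiv 0$: if so, $r_\pi$ is constant by Lemma~\ref{lem:switch-zero-iff}, so $\mathcal U^\pi=\emptyset$ by the constant-reward lemma; if not, $Z^\pi_{s,a}$ is the zero set of a nontrivial rational (hence real-analytic) function and is null. Your split on whether $r_\pi$ is constant is the contrapositive form of this, and your polynomial numerator $N$ in an affine chart is a more careful version of the same measure-zero step. One small inaccuracy: your opening reduction, that it suffices to show $Z^\pi_{s,a}$ is null, is not what Case 2 does, since $Z^\pi_{s,a}$ can be the whole product there; Case 2 correctly bypasses it via $\mathcal U_\pi=\emptyset$.
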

\begin{proof}
    By the previous lemma we know that $g_{s,a}^\pi$ is a real rational function, thus real analytic.
    For a nontrivial real-analytic function, its zero set has Lebesgue measure zero.
    
    If $g_{s,a}^\pi\equiv 0$, then by the previous lemma we have that $r_\pi$ is constant. But optimal policies do not admit constant $r_\pi$ if there exists a pair $(s,s')$ such that $m(s)\neq m(s')$, so $\mathcal U^\pi=\varnothing$ and 
    $\mathcal U^\pi\cap Z_{s,a}^\pi=\varnothing$ (null).
    Alternatively, if $g_{s,a}^\pi\not\equiv 0$, then $Z_{s,a}^\pi$ is the zero set of a nontrivial real-analytic function, hence it has Lebesgue measure $0$; intersecting with the measurable set $\mathcal U^\pi$ preserves nullity.
\end{proof}
\begin{lemma}[Policy-switch set has Lebesgue measure $0$]\label{prop:measure0-A1}
    Let $m(s)=\max_a r(s,a)$. If $\exists s,s': m(s)\neq m(s')$  then the set
    \[
    \mathcal T:=\{P\in \Delta(\mathcal S)^{\mathcal S\times \mathcal A}: \exists s,\ a\neq b\ \text{ with } Q^\star_M(s,a)=Q^\star_M(s,b)=V^\star_M(s)\}
    \]
     has Lebesgue measure $0$.
\end{lemma}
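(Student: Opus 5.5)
The plan is to write $\mathcal T$ as a finite union of sets already shown to be null. Suppose $P\in\mathcal T$, with state $s$ and actions $a\neq b$ satisfying $Q^\star_M(s,a)=Q^\star_M(s,b)=V^\star_M(s)$.

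First, pick an optimal deterministic policy adapted to this pair. By the standard characterization used in Lemma~\ref{aux_lemma_pi_u_unique}, any deterministic $\pi$ that plays a greedy action in every state is optimal. So we may take a deterministic optimal $\pi$ with $\pi(s)=a$, choosing arbitrary greedy actions elsewhere. For this $\pi$ we have $Q^\pi_M=Q^\star_M$, and hence
\[
g^\pi_{s,b}(P)=Q^\pi_M(s,\pi(s))-Q^\pi_M(s,b)=Q^\star_M(s,a)-Q^\star_M(s,b)=0 .
\]
Thus $P\in\mathcal U^\pi\cap Z^\pi_{s,b}$, and $b\neq\pi(s)$.

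This gives the inclusion
\[
\mathcal T\subseteq\bigcup_{\pi\in\Pi_d}\ \bigcup_{s\in\mathcal S}\ \bigcup_{b\neq\pi(s)}\bigl(\mathcal U^\pi\cap Z^\pi_{s,b}\bigr).
\]
The union is finite, since there are $A^S$ deterministic policies and $S(A-1)$ choices of the pair $(s,b)$ for each. Under the hypothesis $m(s)\neq m(s')$, the preceding lemma (switch hypersurfaces have null measure) says each set in the union has Lebesgue measure zero. A finite union of null sets is null, so $\mathcal T$ is null.

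Two technical points remain. First, measurability of $\mathcal T$. It is closed, because $Q^\star_M$ and $V^\star_M$ are continuous in $P$ (by Berge's Theorem~\ref{thm:berge_maximum}, or directly as the fixed point of a contraction that depends continuously on $P$). Even without this, $\mathcal T$ sits inside a null set and so is null for the completed measure. Second, "Lebesgue measure" on $\Delta(\mathcal S)^{\mathcal S\times\mathcal A}$ means measure on the product of simplices parametrized in $\mathbb R^{(S-1)SA}$. In these coordinates the functions $g^\pi_{s,b}$ are still rational in the free coordinates, and the earlier lemmas are taken in this sense.

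The main obstacle is the preceding lemma itself, which requires $g^\pi_{s,b}\not\equiv 0$ whenever $\mathcal U^\pi\neq\varnothing$. Here it is simply invoked. The only work needed is to confirm that the policy chosen in the first step is deterministic, so that this lemma and Lemma~\ref{lem:switch-zero-iff} apply.
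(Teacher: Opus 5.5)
Your proposal is correct and follows the paper's own proof. You cover $\mathcal T$ by the sets $\mathcal U^\pi\cap Z^\pi_{s,b}$ over deterministic $\pi$ and $b\neq\pi(s)$, and then invoke the preceding switch-hypersurface lemma; your explicit choice of a greedy optimal $\pi$ with $\pi(s)=a$ (so that $Q^\pi_M=Q^\star_M$) just spells out the step the paper states tersely, and your remarks on the finite union, measurability, and simplex coordinates are harmless refinements.
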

\begin{proof}
    Optimal policies are w.l.o.g. deterministic. If $P\in\mathcal T$, there exists an optimal deterministic $\pi^\star$, a state $s$, and $a\neq \pi^\star(s)$ with $Q^{\pi^\star}_M(s,\pi^\star(s))=Q^{\pi^\star}_M(s,a)=V_M^\star(s)$, hence
    $P\in \mathcal U^{\pi^\star}\cap Z_{s,a}^{\pi^\star}$.
    Therefore
    \[
    \mathcal T \subseteq \bigcup_{\pi} \bigcup_{s} \bigcup_{a\neq \pi(s)} \left(\mathcal U^\pi\cap Z_{s,a}^\pi \right).
    \]
    Since we have a countable union of $0$-measure sets, we have that $\mathcal T$ has measure $0$.
\end{proof}

\end{document}

%% file: pramble.tex
\DeclareMathOperator*{\argmax}{arg\,max}
\DeclareMathOperator*{\argmin}{arg\,min}
\DeclareMathOperator*{\arginf}{arg\,inf}

\newcommand{\KL}{{\rm KL}}

\newcommand{\statespace}{{\cal S}}
\newcommand{\actionspace}{{\cal A}}

\newcommand{\refalgbyname}[2]{\hyperref[#1]{\texttt{\textbf{#2}}}}
\newcommand{\nas}{\refalgbyname{algo:nas}{NaS}}

\newcommand{\calm}{{\cal M}}
\newcommand{\calmalt}{{\cal M}'}
\newcommand{\alt}{{\rm Alt}}

\newtheorem{assumption}{Assumption}
\newtheorem{theorem}{Theorem}
\newtheorem{lemma}[theorem]{Lemma}
\newtheorem{proposition}[theorem]{Proposition}
\newtheorem{definition}[theorem]{Definition}
\newtheorem{remark}[theorem]{Remark}
\newtheorem{corollary}[theorem]{Corollary}